\newcommand{\R}{\mathbb R}
\newtheorem{theorem}{Theorem}
\newtheorem{example}{Example}
\newtheorem{lemma}{Lemma}
\newtheorem{proposition}{Proposition}
\newtheorem{remark}{Remark}
\newtheorem{corollary}{Corollary}
\newtheorem{definition}{Definition}
\newtheorem{claim}{Claim}
\newtheorem{fact}{Fact}
\newcommand{\codom}{\mathrm{codom}}
\begin{document}

\title{Universal Representation of Permutation-Invariant Functions on Vectors and Tensors}

\author[]{Puoya Tabaghi \thanks{\url{ptabaghi@ucsd.edu}}}
\author[]{Yusu Wang \thanks{\url{yusuwang@ucsd.edu}}}

\affil[]{Hal{\i}c{\i}o\u{g}lu Data Science Institute, University of California - San Diego }
\date{\today}
\maketitle

\begin{abstract}
A main object of our study is multiset functions --- that is, permutation-invariant functions over inputs of varying sizes. Deep Sets, proposed by \cite{zaheer2017deep}, provides a \emph{universal representation} for continuous multiset functions on scalars via a sum-decomposable model. Restricting the domain of the functions to finite multisets of $D$-dimensional vectors, Deep Sets also provides a \emph{universal approximation} that requires a latent space dimension of $O(N^D)$ --- where $N$ is an upper bound on the size of input multisets. In this paper, we strengthen this result by proving that universal representation is guaranteed for continuous and discontinuous multiset functions though a latent space dimension of $O(N^D)$. We then introduce \emph{identifiable} multisets for which we can uniquely label their elements using an identifier function, namely, finite-precision vectors are identifiable. Using our analysis on identifiable multisets, we prove that a sum-decomposable model for general continuous multiset functions only requires a latent dimension of $2DN$. We further show that both encoder and decoder functions of the model are continuous --- our main contribution to the existing work which lack such a guarantee. Also this provides a significant improvement over the aforementioned $O(N^D)$ bound which was derived for universal representation of continuous and discontinuous multiset functions. We then extend our results and provide special sum-decomposition structures to universally represent permutation-invariant tensor functions on identifiable tensors. These families of sum-decomposition models enables us to design deep network architectures and deploy them on a variety of learning tasks on sequences, images, and graphs. 
\end{abstract}

\section{Introduction} \label{sec:introduction}
There is a wide gamut of machine learning problems aiming at identifying an optimal function on unordered collection of entities, namely, sets and multisets. Set or audience expansion tasks in image tagging, computational advertisement, and astrophysicists~\citep{ntampaka2016dynamical,ravanbakhsh2016estimating}, parsing objects in a scene~\citep{eslami2016attend,kosiorek2018sequential}, population statistics~\citep{poczos2013distribution}, inference on point clouds~\citep{qi2017pointnet,qi2017pointnet_plus}, min-cut and routing on a graph, reinforcement learning~\citep{sunehag2017value}, and modelling interactions between objects in a set~\citep{lee2019set} are examples of such problems. Popular machine learning models are designed for ordered algebraic objects, namely, vectors, matrices, and tensors. To adapt these standard models to operate on multisets, we must enforce various permutation invariance properties~\citep{oliva2013distribution,szabo2016learning,muandet2013domain,muandet2012learning,shawe1993symmetries}. 
To characterize a general class of multiset (or permutation-invariant) functions, several authors have proposed sum-decomposition models~\citep{ravanbakhsh2016deep,zaheer2017deep}. Notably, Deep Sets provides a universal representation for continuous multiset functions on \emph{scalars}. This model is a form of Janossy pooling which is easy to implement and parallelize~\citep{murphy2018janossy}. At its core, it maps elements of the input multiset $X$ individually via $\phi$ and then aggregates them to \emph{uniquely} encode the input multiset, that is, $\Phi(X) = \sum_{x \in X} \phi(x) \in \mathbb{R}^M$ is the unique encoding for $X$ or is an \emph{injective} map. Injectivity is the most important property of the encoder $\Phi$ as it operates an intermediate feature extraction step by uniquely mapping multisets to vectors. Then, to represent a multiset function $f(X)$, we map the resulting feature $\Phi(X)$ to $f(X)$, that is, $f(X) = \rho \circ \Phi(X)$ where $\rho$ is a decoder that belongs to a rich class of unconstrained functions. The existence of a continuous sum-decomposable model --- continuous encoder $\Phi$ and decoder $\rho$ --- is guaranteed only if the dimension of the model’s intermediate features ($M$) is sufficiently large. If we lower this dimension, \cite{wagstaff2022universal} prove that there exists no continuous decoder $\rho$ such that $\rho \circ \Phi$ that can even approximate some multiset functions better than a naive constant baseline. Regarding multiset functions on \emph{vectors}, the best available result is given by~\cite{zaheer2017deep}, which only provides a \emph{universal approximation} for continuous multiset functions through analyzing their finite-order Taylor approximation. As our first contribution, we provide a universal representation, through the sum-decomposable model, for continuous and discontinuous multiset functions on vectors which is a generalization to the existing universal approximation results. It is important to note that all universal representation results are stronger than their of universal approximation counterparts as the former results imply the latter ones.

Beyond permutation-invariant functions on scalars and vectors, SignNet and BasisNet~\citep{lim2022sign} are neural network architectures, among other works \citep{dwivedi2020generalization,dwivedi2020benchmarking,dwivedi2021graph,beaini2021directional,kreuzer2021rethinking,mialon2021graphit,kim2022pure}, that provide sign and orthonormal basis invariances as they are displayed by eigenspaces~\citep{eastment1982cross,rustamov2007laplace,bro2008resolving,ovsjanikov2008global}. Laplacian eigenvectors capture connectivity, clusters, subgraph frequencies, help derive graph positional encodings to generalize Transformers to graphs and improve performance of Graph Neural Networks (GNNs) \citep{dwivedi2020benchmarking,dwivedi2021graph}, and other useful properties of a graph \citep{von2007tutorial,cvetkovic1997eigenspaces}. Under certain conditions, these network structures can universally approximate any continuous function with the desired invariances. Both networks utilize Invariant Graph Networks (IGNs)~\citep{maron2018invariant} to build permutation invariance or equivariance property for functions on matrices. IGN treats graphs (with nodes and edges) as \emph{tensors}. Its architecture involves permutation-invariant and equivariant \emph{linear layers} for tensor input and output data. As the tensor order goes to $O(N^4)$, it achieves the universality for graphs of size $N$ \citep{azizian2020expressive,maron2019universality,keriven2019universal}. 

The type of injective multiset functions as introduced earlier are useful in studying the separation power of Message-Passing Neural Networks and its relation to the Weisfeiler-Leman (WL) graph isomorphism test~\citep{xu2018powerful}. They are also used in showing the equivalence of high-order GNNs to high-order WL tests \citep{morris2019weisfeiler,maron2019provably}, and results related to geometric GNNs and WL tests \citep{hordan2023complete,joshi2023expressive,pozdnyakov2022incompleteness}. \cite{amir2023neural} give a theoretical analysis on the required latent dimension for nonpolynomial encoders --- namely, sigmoid, hyperbolic tangent, sinusoid ---  to arrive at an injective multiset function.

\paragraph{Contributions.} In this paper we mainly focus on the study of multivariate multiset functions, that is, functions on multisets that contain at most $N$ vectors of dimension $D$. In the case of $D=1$, this reduces to multiset functions on scalars. Our main contributions are as follows.
    \begin{enumerate}
        \item We propose extended versions of the sum-decomposition models of multiset functions on vectors~\citep{zaheer2017deep}. Multiset functions encompass permutation-invariant functions since they are invariant to the specific ordering of the input elements. We adopt the term multiset function to emphasize the fact that the number of input elements can vary --- which is not the case for permutation-invariant functions. As our first contribution, in \Cref{sec:warmup}, we present the universal representation for continuous and discontinuous multiset functions --- over $D$-dimensional vectors ---- via a sum-decomposable model; see \Cref{thm:multi_decomposition}. The latent dimension of this model is ${N+D \choose D} - 1$ where $N$ is the upper bound on the size of input multisets. For universal representation of \emph{continuous} multiset functions, we show that both encoder and decoder functions (of the sum-decomposable model) are also continuous; see \Cref{thm:multi_decomposition_continous}. In the case of scalar domain $D= 1$, this latent dimension coincides with the one in ~\citep{wagstaff2019limitations,wagstaff2022universal}, that is, ${N+1 \choose 1} - 1 = N$. \Cref{thm:multi_decomposition,thm:multi_decomposition_continous} are novel contributions to the existing universal approximation results for continuous multiset functions~\citep{zaheer2017deep,maron2019provably,segol2019universal}. Universal approximation results rely on finite-order Taylor approximation of continuous multiset functions. This technique does not work for (1) universal representation and (2) discontinuous multiset functions. As discussed next, we significantly lower this bound for representation continuous multiset functions.
        \item In \Cref{sec:multiset_functions_identifiable}, we put forward the notion of \emph{identifiable} multisets. These are multisets whose distinct elements can be uniquely labeled via a continuous functional, for example, multisets containing finite-precision vectors are identifiable via a linear functional. We then show that on identifiable multisets of $D$-dimensional vectors, the latent dimension of the sum-decomposable representations can be lowered to $2DN$ --- from the original ${N+D \choose D} - 1$. More importantly, through subsequent analysis on identifiable multisets, we show that universal representation of continuous multiset functions, where both encoder and decoder functions are continuous, is possible via latent dimension of $2DN$; see \Cref{thm:continuous_extension_headache}. The techniques used to derive this results are centered at the notion of an identifier function. This is different from the previous lines of work using polynomial and nonpolynomial-based encoders \citep{zaheer2017deep,dym2022low}. While our result in \Cref{thm:multi_decomposition_continous} is suboptimal compared to this new result (\Cref{thm:continuous_extension_headache}), we still include~\Cref{sec:warmup} as it obtains a better result compared to the existing work based on polynomial-based encoders (common in approximation approaches), which is of independent interest. In summary, the main contribution of our results to existing literature are (1) the lowest latent dimension bound, and (2) the continuity guarantee for the decoder function. 
        \item We finally provide universal representation for continuous and discontinuous permutation-invariant tensor functions of an arbitrary order. We obtain a nested sum-decomposable representation on \emph{only} for what we call \emph{identifiable tensors} --- similar to identifiable multisets. Depending on the particular choice of the identifier function, we then provide different bounds on the latent dimensions for the representation. This is similar to an existing decomposition result on permutation-equivariant functions on matrices (tensors of order two)~\citep{fereydounian2022exact}. In contrast, we propose a modified encoder function that (1) provides a reduced latent dimension --- $2DN$ compared to ${D \choose 2} N$, (2) allows for generalization of the sum-decomposition representation to tensors of arbitrary order, (3) is guaranteed to be injective.
    \end{enumerate}
\paragraph{More on related work.}
The most notable work on universal representation of nonlinear multiset function concerns scalar-valued domains~\citep{wagstaff2019limitations,wagstaff2022universal}. Much of the existing result in the literature concerns universal approximations for permutation-invariant and -equivariant functions. Sum-decomposition of multiset functions on multidimensional entities have been solely approached through the universal approximation power of polynomial functions~\citep{zaheer2017deep,segol2019universal}. \cite{wagstaff2022universal} thoroughly investigates the theoretical distinction between universal representation and approximation of multiset functions on \emph{scalars}; but this has remained an open question for multiset function on multivariate elements. Invariant and equivariant \emph{linear} functions have been thoroughly studied in the literature~\citep{maron2018invariant,ravanbakhsh2020universal}. In comparison, our nonlinear model generalizes the permutation-invariant linear layers utilized in IGNs \citep{maron2018invariant}, which, for universal approximation on $N$ points, require $O(N^N)$-sized intermediate tensors~\citep{ravanbakhsh2020universal}. An important class of permutation-compatible (invariant or equivariant) \emph{nonlinear} functions is GNN --- the primary iterative-based models for learning information over graphs. There has been a large body of work aimed at understanding the expressive power of GNNs \cite{maron2019universality,maron2019provably,keriven2019universal,garg2020generalization,azizian2020expressive,bevilacqua2021equivariant}.  To provide insight to the capability of GNNs in representing graph functions, \cite{fereydounian2022exact} introduce an algebraic formulation --- akin to the sum-decomposition model for multiset functions --- to represent permutation-equivariant \emph{nonlinear} functions on matrices in terms of composition of simple encoder and decoder functions. One can connect the notion of permutation-compatible functions (on $2$-tensors) to our proposed algebraic form of permutation-invariant functions on $k$-tensors. Though, by focusing on identifiable tensors, we lowered the latent dimension required for representing $2$-tensors to $O(DN)$ --- from to $O(D^2 N)$ in \citep{fereydounian2022exact} --- and guarantee the \emph{injectivity} of the encoding function.

{\bf Organization.} In \Cref{sec:multiset_functions}, we review the existing sum-decomposition results for multiset functions on scalars. Then, in~\Cref{sec:warmup}, we provide our universal representation results for multivariate multiset functions. In \Cref{sec:multiset_functions_identifiable}, we introduce identifiable multisets and show how they can be used to derived a lowered latent dimension bound for continuous sum-decomposition of continuous multiset functions. Finally, focusing on permutation invariance, we put forth a nested sum-decomposition model to represent invariant functions over $k$-tensors in \Cref{sec:tensors}. Identifiablity for tensors is the main concept necessary to establish the aforementioned decomposition models. We delegate all proofs, supplementary results and discussions to the Appendix.

{\bf Notations.} We denote the nonnegative reals by $\mathbb{R}_{+} = \{ x \in \mathbb{R}: x \geq 0 \}$. For any $N \in \mathbb{N}$, we let $[N] = \{1, \ldots, N \}$. The function $f$ maps elements from its domain to elements in its codomain, that is to say, $f:\mathrm{dom}(f) \rightarrow \codom(f)$ where $\mathrm{codom}(f) = \{ f(x): x \in \mathrm{dom}(f) \}$. Example of domains are $\mathbb{R}$, $\mathbb{R}^D$, $\mathbb{N}$, and $\mathbb{Q}$. We denote the collection of subsets of a domain $\mathbb{D}$ as $2^{\mathbb{D}}$. Let $\mathbb{D}$ be a domain and $f: \mathbb{D} \rightarrow \mathrm{codom}(f)$. We then let $f(\mathbb{D}_1) \stackrel{\mathrm{def}}{=} \{ f(x): x \in \mathbb{D}_1 \} \subseteq \mathrm{codom}(f)$ where $\mathbb{D}_1 \subseteq \mathbb{D}$. A multiset is a pair $(X , m)$ where $X$ is a set of objects and $m$ is a map from $X$ to cardinals (representing the multiplicity of each element in $X$). We simplify this notation by identifying multisets by \say{multiset $X$} or using double curly brackets, namely,  $X = \{\{ 1, 1, 2\}\}$ has three elements but $X = \{ 1, 1 , 2 \} = \{ 1, 2\}$ has two elements. For any domain $\mathbb{D}$ and multiset $X$, $X \subseteq \mathbb{D}$ means that the underlying set for $X$ (repetitive elements removed) is a subset of $\mathbb{D}$, and $|X|$ is the size of the multiset (repetitive elements included). We denote multisets (and sets) with $X$ and tensors (and matrices) with $T$. For $ N \in \mathbb{N}$ and domain $\mathbb{D}$, we let $\mathbb{X}_{\mathbb{D},N} = \{\mbox{multiset} \ X \subseteq  \mathbb{D}: |X| = N \}$, $\mathbb{X}_{\mathbb{D},S} = \{ \mbox{multiset} \ X \subseteq  \mathbb{D}: |X| \in S \}$ where $|\cdot|$ returns the cardinality of its input set (or multiset) and $S \subseteq \mathbb{N}$, namely,  $\mathbb{X}_{\mathbb{D},[N]} = \{ \mbox{multiset} \ X \subseteq  \mathbb{D}: 1 \leq |X| \leq N \}$. 

\section{Review of the Sum-decomposable Model for Multiset Functions on Scalars}\label{sec:multiset_functions}
    
Standard machine learning algorithms operate on data arranged in canonical ways, namely vectors, matrices, and tensors. However, in statistic estimation, set expansion, outlier detection \citep{zaheer2017deep}, and problems involving point clouds or group of atoms form a molecule \citep{wagstaff2022universal}, we often want to learn maps defined on an unordered collection of entities, that is, a set or a multiset. Throughout this paper, we treat functions defined on sets and multisets differently. In this paper, we use a (multi)set function over $\mathbb{D}$ to refer to a function whose domain consists of sub(multi)sets of a domain $\mathbb{D}$. That is, a multiset function assigns a value for every possible submultiset of the domain $\mathbb{D}$. A multiset function $f$ must be: $(1)$ invariant to the ordering of its input elements (permutation invariance), and $(2)$ well-defined on multisets of different sizes.

In general, if one wishes to model a multiset function, it is not immediately clear how to enforce that the given function satisfies condition (1), namely, permutation invariance to the ordering of elements in input multisets. A powerful approach to tackle this problem is to first find a complete representation of multiset functions by specific composition of \emph{unconstrained functions} --- which we refer to as \emph{encoder and decoder functions}. Beside providing a characterization of multiset functions, such decomposition is crucial in the learning setting because, for example, these unconstrained functions can then be modeled (and learned) by neural networks; see for example the popular Deep Sets architecture \citep{zaheer2017deep}.  A specific form of this composition is called \emph{sum-decomposable} representation. The following provides such a result for set functions defined on a countable domain. 

\begin{theorem}[\citealt{zaheer2017deep}]
	Let $f : 2^{\mathbb{D}} \rightarrow \codom(f)$ where $\mathbb{D}$ is a countable domain. Then, 
    \begin{equation}\label{eq:sum_decomposition}
        \forall X \subseteq \mathbb{D}: f(X) = \rho \circ \Phi(X), \ \ \Phi(X) = \sum_{x \in X} \phi(x),
    \end{equation}
    where $\phi:\mathbb{D} \rightarrow \mathrm{codom}(\phi)\subset \mathbb{R}$, $\rho: \mathrm{codom}(\Phi) \rightarrow \codom(f)$, and $ \mathrm{codom}(\Phi) = [0,1] \subset \mathbb{R}$. \label{thm:deep_set}
\end{theorem}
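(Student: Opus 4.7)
The plan is to produce an explicit injective, sum-decomposable encoder $\Phi : 2^{\mathbb{D}} \to [0,1]$, after which the decoder $\rho$ is forced on the image by $\rho(\Phi(X)) = f(X)$ and can be extended arbitrarily on the rest of $[0,1]$. Injectivity of $\Phi$ is the one property that matters; everything else is bookkeeping.

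Since $\mathbb{D}$ is countable, I would fix an enumeration $\mathbb{D} = \{d_1, d_2, \ldots\}$ and set
\[
\phi(d_i) = c^{-i}
\]
for a fixed integer base $c \geq 3$ (concretely, $c = 3$). Then $\Phi(X) = \sum_{d_i \in X} c^{-i}$ converges absolutely for every subset $X \subseteq \mathbb{D}$ (even infinite ones) and lies in $[0,(c-1)^{-1}] \subseteq [0,1]$, so $\Phi$ is well-defined with values in the declared codomain.

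The heart of the argument is injectivity. By construction $\Phi(X)$ is precisely the real number whose base-$c$ expansion has digit $\mathbf{1}[d_i \in X] \in \{0,1\}$ in position $i$. If $X_1 \neq X_2$, let $i^\ast$ be the smallest index where their indicator sequences disagree; cancelling common terms and bounding the remaining tail by the geometric sum $\sum_{i > i^\ast} c^{-i} = c^{-i^\ast}/(c-1)$ forces $1 \leq 1/(c-1)$, which fails for $c \geq 3$. Hence $\Phi(X_1) \neq \Phi(X_2)$, so $\rho(y) := f(\Phi^{-1}(y))$ is well-defined on $\Phi(2^{\mathbb{D}})$, and arbitrary extension to $[0,1] \setminus \Phi(2^{\mathbb{D}})$ yields the required $\rho : [0,1] \to \codom(f)$.

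The main (and essentially only) obstacle is this injectivity step. A naive base-$2$ choice $\phi(d_i) = 2^{-i}$ would land in $[0,1]$ but fails due to the well-known binary ambiguity: $\{d_1\}$ and $\{d_2, d_3, \ldots\}$ both encode to $1/2$. Moving to $c \geq 3$ while restricting digits to $\{0,1\}$ sidesteps this collision at the cost of $\Phi$ taking values in a Cantor-like subset of $[0,1]$, which is harmless since the theorem only requires $\Phi$ to take values in $[0,1]$, not to be surjective onto it.
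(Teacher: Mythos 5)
Your proposal is correct and is essentially the standard digit-encoding argument from \citet{zaheer2017deep} that this theorem cites (the paper itself offers no proof of it); your observation that base $2$ fails for infinite subsets and that a base $c\geq 3$ with digits restricted to $\{0,1\}$ restores injectivity is exactly the right fix, and the tail bound you give is valid. The only residual mismatch is with the statement itself, not your argument: under the paper's convention that $\codom(\Phi)$ denotes the image, the claim $\codom(\Phi)=[0,1]$ would require surjectivity, which neither your Cantor-like image nor the original construction provides, so the statement should read $\codom(\Phi)\subseteq[0,1]$.
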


\Cref{thm:deep_set} provides an algebraic construct for universal representation for set functions on countable sets. We use the term universal representation to distinguish it from the weaker universal approximation results in the literature.  This universal representation is obtained via the so-called sum-decomposable representation formally defined as follows: 
\begin{definition} \label{def:continuous_sum_decomposibility}
    A (multi)set function $f$ over $\mathbb{D}$ is sum-decomposable, or it has a sum-decomposable representation, if it can be written as $f(X) = \rho \circ \Phi(X)$ for any (multi)set $X \subseteq \mathbb{D}$, where $\Phi(X) = \sum_{x\in X} \phi(x)$. We refer to $\phi$, $\Phi$ and $\rho$ as the \emph{element-encoder}, \emph{(multi)set-encoder}, and \emph{decoder} functions, respectively. We also may call $\phi$ and $\Phi$ sometimes simply as encoder functions. Furthermore, suppose $\phi: \mathbb{D} \to \codom(\phi) \subseteq \mathbb{R}^M$, then we refer to $\mathbb{R}^M$ (which is the ambient space of $\codom(\phi)$) as the decomposition model's latent space, and say that $f$ is sum-decomposable via $\mathbb{R}^M$. The latent dimension of this sum-decomposition is $M$. A continuous multiset  function $f$ is continuously sum-decomposable if has a sum-decomposable representation where both the encoder and decoder functions, that is, $\phi$ (and thus $\Phi$) and $\rho$ are continuous in the entire ambient space of their respective domains.
\end{definition}

\Cref{thm:deep_set} states that a set function over a countable set is essentially sum-decomposable via $\mathbb{R}$ and the latent dimension is one.  Interestingly, it is shown in \citep{wagstaff2022universal} that set functions on an \emph{uncountable} domain $\mathbb{D}$ do not admit this sum-decomposable representation. Nevertheless, there is an extension of \Cref{thm:deep_set} to finite-sized multisets~\citep{wagstaff2022universal}. 

\begin{theorem}[\citealt{wagstaff2019limitations}]
    Let $N \in \mathbb{N}$, and $f : \mathbb{X}_{\mathbb{D}, [N]  } \rightarrow \R$ be a continuous multiset function where $\mathbb{D} = [0, 1]$. Then, it is continuously sum-decomposable (see \Cref{def:continuous_sum_decomposibility}) via $\R^{N}$ --- that is, the latent space is a subset of $\R^N$ --- and vice versa. \label{thm:continuously_sum_decomposable_1d}
\end{theorem}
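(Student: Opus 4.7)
The plan is to prove both directions. The reverse implication is immediate: if $f = \rho \circ \Phi$ with $\Phi(X) = \sum_{x\in X}\phi(x)$ and $\phi, \rho$ continuous, then $f$ is permutation-invariant by construction and continuous as a composition. For the forward direction I would use the shifted power-sum encoder
\[
\phi:[0,1]\to\R^N,\qquad \phi(x)=\bigl(x+1,\,(x+1)^2,\,\ldots,\,(x+1)^N\bigr),
\]
so that $\Phi(X)$ collects the power sums $q_k(X) = \sum_{x\in X}(x+1)^k$ of the shifted multiset $A(X) := \{\{x+1 : x\in X\}\} \subseteq [1,2]$. The shift into $[1,2]$ is essential; the unshifted encoder $(x,\ldots,x^N)$ fails to distinguish $\{\{1/2\}\}$ from $\{\{0,1/2\}\}$ because padding with zero is invisible to power sums.

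The first substantive step is to prove injectivity of $\Phi$ on $\mathbb{X}_{[0,1],[N]}$. Suppose $\Phi(X)=\Phi(Y)$ with $|X| = n_A \le n_B = |Y| \le N$, and set $A = A(X)$, $B = A(Y)$. By Newton's identities the first $n_A$ power sums determine the elementary symmetric polynomials $e_1,\ldots,e_{n_A}$ through the same universal formulas for both multisets, so $e_k(A) = e_k(B)$ for $k \le n_A$. Comparing Newton's identity at index $n_A+1$ --- which for $A$ has no $e_{n_A+1}$ term because $|A|=n_A$, but for $B$ contains an extra $(-1)^{n_A}(n_A+1)\,e_{n_A+1}(B)$ --- the equality $q_{n_A+1}(A)=q_{n_A+1}(B)$ forces $e_{n_A+1}(B)=0$. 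Since every element of $B$ lies in $[1,2]$, one has $e_{n_A+1}(B)\ge 1$ whenever $n_A<n_B$, a contradiction. Hence $n_A=n_B$, and equality of the first $n_A$ power sums then yields $A=B$ and $X=Y$.

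Finally I would construct the decoder. Topologize $\mathbb{X}_{[0,1],[N]}$ as the disjoint union $\bigsqcup_{n=1}^N [0,1]^n/S_n$, which is compact Hausdorff, with $f$ continuous on it by hypothesis. Since $\Phi$ is a continuous injection from a compact Hausdorff space into $\R^N$, it is a homeomorphism onto its image $K := \Phi(\mathbb{X}_{[0,1],[N]})$, and $K$ is compact, hence closed, in $\R^N$. Then $\rho_0 := f\circ\Phi^{-1}$ is continuous on $K$, and the Tietze extension theorem extends it to a continuous $\rho:\R^N\to\R$, yielding the representation $f = \rho\circ\Phi$. The main obstacle is really the injectivity across different sizes: with only $N$ latent coordinates there is no slot to record $|X|$ separately, and the shift into $[1,2]$ is the device that forces a nonzero leading elementary symmetric polynomial and thereby rules out size mismatches; the subsequent compact-to-Hausdorff plus Tietze step is routine.
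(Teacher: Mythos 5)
Your proof is correct. Both directions go through: the Newton's-identities argument at index $n_A+1$ does force $e_{n_A+1}(B)=0$, the shift into $[1,2]$ guarantees $e_{n_A+1}(B)\ge 1$ when $n_A<n_B$, and the compact-Hausdorff homeomorphism plus Tietze step is sound because $\mathbb{X}_{[0,1],[N]}$ under the matching metric is exactly the finite disjoint union $\bigsqcup_{n\le N}[0,1]^n/S_n$, which is compact. The paper itself only cites this theorem from \citet{wagstaff2019limitations}, but its own proofs of the multivariate analogues (\Cref{thm:multi_decomposition_continous,thm:continuous_decomp_4}) use the same two ingredients in a different form, and the comparison is instructive. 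For variable cardinality the paper picks an external point $x_\circ\notin\mathbb{D}$ and replaces $\phi$ by $\phi'(x)=\phi(x)-\phi(x_\circ)$, which amounts to padding every multiset up to size $N$ with copies of $x_\circ$; your shift $x\mapsto x+1$ achieves the same effect by arranging $\phi(-1)=0$ with $-1\notin[0,1]$, but your injectivity argument is self-contained (size detection via the nonvanishing of $e_{n_A+1}$) rather than a reduction to the fixed-size case. For continuity of the decoder the paper proves $\Phi^{-1}$ continuous by hand, via a root-perturbation lemma resting on the homeomorphism between polynomial coefficients and root multisets (\Cref{lem:param_root_cont}), and only then invokes continuous extension from the compact image; you shortcut this entirely with the standard fact that a continuous injection from a compact space into a Hausdorff space is a homeomorphism onto its image. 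That shortcut is cleaner and is available precisely because the domain is compact, whereas the paper's explicit root-continuity machinery is what it later needs in settings (e.g.\ the dense rational-valued multisets of \Cref{subsec:continuousdecoder}) where the relevant domain of $\Phi^{-1}$ is not compact.
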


Recall that $\mathbb{X}_{\mathbb{D}, [N]}$ is the collection of all multisets over $\mathbb{D}$ of cardinality at most $N$. Since $\mathbb{D}$ in the above theorem is $[0,1] \subset \mathbb{R}$, the result states that a continuous multiset function over scalars is continuously sum-decomposable via $\mathbb{R}^N$ where $N$ is the maximum cardinality of input multisets. The continuity of the decoder $\rho$ comes at the cost of increased latent space dimension; compare universal representation \Cref{thm:deep_set,thm:continuously_sum_decomposable_1d}. This latent dimension is tight in the worst case, that is, there does not exist a sum-decomposition via a latent space with dimension less than $N$~\citep{wagstaff2022universal}. Of course, in practice, for a specific multiset function at hand, there might exist a sum-decomposition with a much lower latent dimension. One might expect that the latent dimension can be reduced in the case of universal approximation. Interestingly, at least in the case of multiset functions over scalars, despite the reasonable intuition, universal approximation is not possible (for all multiset functions) if we lower the latent dimension from $N$ ~\citep{wagstaff2022universal}.  

\section{Warmup: Sum-decomposable Model for Multiset Functions on Vectors} \label{sec:warmup}
\Cref{thm:continuously_sum_decomposable_1d} concerns multiset functions operating on scalar-valued elements (that is, the input is a multiset with elements from $\mathbb{R}$). In practice we are often faced with applications on vector-valued multisets. For example, a multiset of $\leq N$ points in $\mathbb{R}^D$ can be represented as a multiset of cardinality $\leq N$ over $\mathbb{R}^D$; similarly, in the graph learning setting, we may have a set of $N$ nodes in a graph with $D$-dimensional node features. In what follows, we consider multiset functions over vectors in $\mathbb{R}^D$, that is, functions of the form $f: \mathbb{X}_{\mathbb{D}, [N]} \to \mathbb{R}$ where $\mathbb{D} \subset \mathbb{R}^D$. But for simplicity, we first consider functions over multisets of cardinally exactly $N$, that is, $f: \mathbb{X}_{\mathbb{D}, N} \to \mathbb{R}$. Our main result in this section is the following theorem: 
\begin{theorem}\label{thm:multi_decomposition_continous}
    A continuous multivariate multiset function $f:\mathbb{X}_{\mathbb{D},N} \rightarrow \codom(f) (\subseteq \R^n)$, over a multisets of $N$ elements in a compact set $\mathbb{D} \subseteq \mathbb{R}^D$, is continuously sum-decomposable via $\mathbb{R}^{{N+D \choose D}-1}$. That is, encoder $\phi$ is continuous over $\mathbb{D}$, and decoder $\rho$ is continuous over $\mathbb{R}^{{N+D \choose D}-1}$.
\end{theorem}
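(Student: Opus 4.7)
\medskip

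\textbf{Proof plan.} The plan is to realize the encoder by multivariate power sums (monomials of degree between $1$ and $N$), exploit compactness to invert on the image, and then extend the decoder to the ambient Euclidean space by the Tietze extension theorem. Concretely, for a multi-index $\alpha \in \mathbb{N}^D$ with $1 \le |\alpha| \le N$, set $\phi_\alpha(x) = x^\alpha := x_1^{\alpha_1} \cdots x_D^{\alpha_D}$, and let $\phi(x) = (\phi_\alpha(x))_{1 \le |\alpha| \le N}$. The number of such multi-indices is exactly $\binom{N+D}{D}-1$, matching the target latent dimension. Both $\phi$ and $\Phi(X) = \sum_{x \in X} \phi(x)$ are polynomial, hence continuous.

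The first (and main) step is to prove that $\Phi$ is \emph{injective} on $\mathbb{X}_{\mathbb{D},N}$. The coordinates of $\Phi(X)$ are precisely the multivariate power sums $p_\alpha(X) = \sum_{x \in X} x^\alpha$ of degree $1 \le |\alpha| \le N$. By the classical theorem on polarized power sums (a consequence of Weyl's first main theorem for the symmetric group acting on $(\mathbb{R}^D)^N$), these $p_\alpha$ with $|\alpha| \le N$ generate the full ring of $S_N$-invariant polynomials on $(\mathbb{R}^D)^N$. Consequently, every symmetric polynomial in the $N$ vector variables is determined by $\Phi(X)$, and since polynomials separate points in $\mathbb{R}^{DN}$, two $S_N$-orbits (i.e., two multisets of size $N$) with equal $\Phi$-values must coincide. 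I expect this invariant-theoretic input to be the technical heart of the argument; as a fallback one can prove it directly by induction on $N$ and $D$ using a univariate Newton-identity reduction (freezing $D-1$ coordinates and using a generic linear functional).

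Once injectivity is established, the second step is topological. Since $\mathbb{D}$ is compact, so is $\mathbb{D}^N$, and $\mathbb{X}_{\mathbb{D},N}$ inherits compactness as the quotient $\mathbb{D}^N/S_N$ under the continuous symmetrization map. The set encoder $\Phi$ descends to a continuous map on this compact Hausdorff space, and a continuous injection from a compact Hausdorff space into a Hausdorff space is a homeomorphism onto its image. Hence $\Phi(\mathbb{X}_{\mathbb{D},N})$ is a compact (in particular closed) subset of $\mathbb{R}^{\binom{N+D}{D}-1}$, and $\Phi^{-1}$ is continuous on this image. Defining $\tilde{\rho} := f \circ \Phi^{-1}$ on $\Phi(\mathbb{X}_{\mathbb{D},N})$ yields a continuous function that satisfies $\tilde{\rho} \circ \Phi = f$.

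The third step is to extend $\tilde{\rho}$ to a continuous function $\rho$ defined on all of $\mathbb{R}^{\binom{N+D}{D}-1}$. Because $\Phi(\mathbb{X}_{\mathbb{D},N})$ is a closed subset of $\mathbb{R}^{\binom{N+D}{D}-1}$ and $\tilde\rho$ is continuous on it, the Tietze extension theorem (applied coordinatewise to the components of $\tilde\rho \in \mathbb{R}^n$) produces a continuous extension $\rho$ to the entire ambient space. The resulting triple $(\phi,\Phi,\rho)$ then witnesses the continuous sum-decomposition of $f$ via $\mathbb{R}^{\binom{N+D}{D}-1}$, completing the proof. The chief obstacle, as noted, is the injectivity claim; everything after it is a standard compactness-plus-Tietze packaging.
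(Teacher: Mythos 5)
Your proposal is correct, and it reaches the same encoder (the monomials of degree $1$ through $N$, giving the multivariate power sums) as the paper, but it proves the two key facts by a genuinely different route. For injectivity, the paper does not invoke invariant theory: it reconstructs $X$ explicitly from $\Phi(X)$ by building the parameterized polynomial $p(t;z,\Phi(X))=\prod_{x\in X}(t-z^{\top}x)$ via Newton--Girard, choosing a ``separator'' direction $z^{*}$ for which the projections $z^{*\top}x$ of distinct elements stay distinct, and then recovering each coordinate of the points from directional derivatives of the sorted root vector. Your route instead cites the classical fact (Weyl polarization / the generation of the multisymmetric ring in characteristic zero by the power sums $p_\alpha$ with $|\alpha|\le N$) together with the standard observation that invariant polynomials separate orbits of a finite group (averaging an interpolating polynomial over $S_N$); this is shorter and non-constructive, and the degree bound $|\alpha|\le N$ is exactly the classical one, so the invoked theorem is legitimate --- though your stated fallback ``induction with a Newton-identity reduction'' is too sketchy to stand on its own, so the argument really does lean on that external input. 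For continuity of $\Phi^{-1}$, your observation that a continuous injection from the compact Hausdorff space $\mathbb{D}^N/S_N$ (whose quotient topology agrees with the matching-distance topology) is a homeomorphism onto its image is cleaner than the paper's approach, which proves compactness of $\mathbb{X}_{\mathbb{D},N}$ by an open-cover argument and continuity of $\Phi^{-1}$ by hand via continuity of polynomial roots in their coefficients. The final Tietze-extension step is identical in both. What the paper's longer argument buys is an explicit formula for $\Phi^{-1}$ (used later for the discontinuous case of \Cref{thm:multi_decomposition} and reused in \Cref{sec:multi_decomposition_continous}); what yours buys is brevity and a cleaner topological packaging.
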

The above theorem states that a continuous multiset function over multisets of $N$ number of vectors from $\mathbb{D}\subset \mathbb{R}^D$ is continuously sum-decomposable via a latent dimension of ${N+D \choose D}-1$. In the special case of $D=1$, this recovers the previous result for multiset functions over scalars in \Cref{thm:continuously_sum_decomposable_1d}. In \Cref{sec:multiset_functions_identifiable}, we give a stronger result with a much lower latent dimension. We nevertheless include this result because (1) it is obtained via a similar proof technique to \Cref{thm:continuously_sum_decomposable_1d} by using polynomial-based encoders; and (2) this is a novel result that arrives at the same latent dimension as the one reported in~\citep{zaheer2017deep} for the universal approximation of continuous multiset functions. The detailed proofs are given in \Cref{sec:multi_decomposition,sec:multi_decomposition_continous}. We provide a high level description here. 
    
In the remainder of this section, we fix $\mathbb{D} \subset \mathbb{R}^D$ to be a compact subset of $\mathbb{R}^D$. Following  the proof technique in \citep{zaheer2017deep}, to show the existence of a sum-decomposition of $f = \rho \circ \Phi$, we want to construct a multiset encoder $\Phi$ that is \emph{injective} over $\mathbb{X}_{\mathbb{D}, N}$. Once we have an injective encoder $\Phi$, we can then define $\rho = f \circ \Phi^{-1}$ over all admissible inputs, that is, $\mathrm{codom}(\Phi)$. By construction, the encoder $\Phi$ is continuous. The key challenge is to show that $\rho = f \circ \Phi^{-1}$ is not only well-defined but also continuous over the latent space $\mathrm{codom}(\Phi)$.

To construct an injective multiset function $\Phi(X) = \sum_{x \in X} \phi(x)$, we use permutation-invariant polynomials as in  \citep{maron2019provably,segol2019universal}. We express these polynomials as follows: 
\begin{equation}\label{eq:symm_polynomial}
    \forall X \in \mathbb{X}_{\mathbb{R}^D,N} : \ p( X ) = \mathrm{poly}(e_1( X ), \cdots,e_K( X)),
\end{equation}
where $e_k( X )=\sum_{x \in X} \prod_{d=1}^D x_{d}^{k_d}$ is a power-sum multi-symmetric polynomial, $k_1 \ldots k_D$ is the $D$-digit representation of $k \in [K]$ in base $N+1$, $K = {N+D \choose D}-1$, and $\mathrm{poly}$ is a polynomial function~\citep{rydh2007minimal}.  
\begin{remark}
    It is known that one can universally approximate continuous multivariate multiset functions over a compact set with a multiset polynomial in equation \eqref{eq:symm_polynomial}. Since there are $K = {N+D \choose D}-1$ power-sum multi-symmetric polynomial basis$\big(e_{k}(X) \big)_{k \in [K]}$, we can design an encoder $\phi$ to provide a universally approximate sum-decomposable model for multivariate continuous multiset functions via $\mathbb{R}^{{N+D \choose D}-1}$; see Theorem 9 in \citep{zaheer2017deep}.
\end{remark}

In \Cref{sec:multi_decomposition}, we first state \Cref{thm:multi_decomposition} that guarantees a universal representation (not universal approximation as in the above Remark) of \emph{any} multivariate $(D > 1)$ multiset functions --- continuous or discontinuous --- via the sum-decomposable model through $\mathbb{R}^{{N+D \choose D}-1}$. The resulting decoder $\rho$ constructed this way may not be continuous. Nevertheless, this already is a novel contribution to the existing literature. From a technical standpoint, \Cref{thm:multi_decomposition} is valuable as it does not rely on approximating the multiset function $f$ using a finite-order polynomial; but rather, it aims at showing $\Phi$ is injective through analyzing the parameterized roots of a class of multivariate polynomials. Based on \Cref{thm:multi_decomposition}, in \Cref{sec:multi_decomposition_continous}, we show that \emph{if $f$ is a continuous multiset function}, then its decoder $\rho = f \circ \Phi^{-1}$ is  continuous in the ambient space of $\mathrm{codom}(\Phi)$, that is, $\mathbb{R}^{{N+D \choose D}-1}$. The key idea is to prove that (1) $\Phi^{-1}$ is a continuous function on $\mathrm{codom}(\Phi)$ and (2) $\mathrm{codom}(\Phi)$ is a compact subset of $\mathbb{R}^{{N+D \choose D}-1}$.  This completes the proof of \Cref{thm:multi_decomposition_continous}. 
    
We can further generalize the results in \Cref{thm:multi_decomposition_continous,thm:multi_decomposition} to multisets of varying sizes.
\begin{theorem}\label{thm:continuous_decomp_4}
    \Cref{thm:multi_decomposition,thm:multi_decomposition_continous} are valid for multivariate multiset functions of at most $N$ elements from a compact subset $\mathbb{D} \subset \mathbb{R}^D$, that is, $\mathbb{X}_{\mathbb{D},[N]}$.
\end{theorem}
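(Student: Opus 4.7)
The plan is to deduce the varying-size statement from the fixed-size \Cref{thm:multi_decomposition,thm:multi_decomposition_continous} via a padding trick. Because the compact set $\mathbb{D}$ is a proper subset of $\mathbb{R}^D$, I would choose a dummy element $x^* \in \mathbb{R}^D \setminus \mathbb{D}$ at strictly positive distance from $\mathbb{D}$ and form the enlarged compact domain $\mathbb{D}^* = \mathbb{D} \cup \{x^*\}$. To each $X \in \mathbb{X}_{\mathbb{D},[N]}$ with $|X| = n$, associate the padded multiset $X^* \in \mathbb{X}_{\mathbb{D}^*, N}$ obtained by adjoining $N - n$ copies of $x^*$ to $X$. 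The map $X \mapsto X^*$ is a bijection from $\mathbb{X}_{\mathbb{D},[N]}$ onto $\mathbb{X}_{\mathbb{D}^*, N}$ minus the single all-$x^*$ multiset. Define $\bar f : \mathbb{X}_{\mathbb{D}^*, N} \to \codom(f)$ by $\bar f(X^*) = f(X)$, and set $\bar f$ arbitrarily (e.g.\ to zero) on the remaining all-$x^*$ multiset.

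\textbf{The key topological step.} The main obstacle is showing that $\bar f$ is continuous on $\mathbb{X}_{\mathbb{D}^*, N}$ whenever $f$ is continuous on $\mathbb{X}_{\mathbb{D}, [N]}$, so that \Cref{thm:multi_decomposition_continous} applies. Since $x^*$ is isolated in $\mathbb{D}^*$, the quotient topology on $\mathbb{X}_{\mathbb{D}^*, N}$ makes the subsets $A_m = \{Y \in \mathbb{X}_{\mathbb{D}^*, N} : Y \text{ contains exactly } m \text{ copies of } x^*\}$ both open and closed for $m = 0, 1, \ldots, N$; moreover, for each $m < N$, removing the $m$ copies of $x^*$ is a homeomorphism $A_m \to \mathbb{X}_{\mathbb{D}, N-m}$. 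Consequently, continuity of $\bar f$ on $\mathbb{X}_{\mathbb{D}^*, N}$ reduces to continuity of $f$ on each $\mathbb{X}_{\mathbb{D}, n}$ separately, which is exactly what continuity of $f$ on $\mathbb{X}_{\mathbb{D}, [N]}$ provides. This is precisely where the choice $x^* \notin \mathbb{D}$ is essential --- padding with an interior element of $\mathbb{D}$ would collapse distinct multisets of different sizes and destroy injectivity.

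\textbf{Transporting the decomposition back.} Applying \Cref{thm:multi_decomposition_continous} (or \Cref{thm:multi_decomposition} when $f$ is not assumed continuous) to $\bar f$ on $\mathbb{D}^*$ yields $\bar f = \bar \rho \circ \bar \Phi$ with $\bar \Phi(Y) = \sum_{y \in Y} \bar \phi(y)$, latent dimension $K = \binom{N+D}{D} - 1$, and $\bar \phi, \bar \rho$ continuous in the continuous case. Define $\phi(x) := \bar \phi(x) - \bar \phi(x^*)$ for $x \in \mathbb{D}$ and $\Phi(X) := \sum_{x \in X} \phi(x)$; a direct calculation gives $\bar \Phi(X^*) = \Phi(X) + N \bar \phi(x^*)$. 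Setting $\rho(z) := \bar \rho\bigl(z + N \bar \phi(x^*)\bigr)$ then yields $f = \rho \circ \Phi$, with $\phi$ and $\rho$ inheriting continuity from $\bar \phi$ and $\bar \rho$, and with $\Phi$ inheriting injectivity from $\bar \Phi$ via the bijection $X \leftrightarrow X^*$. The latent dimension is unchanged at $\binom{N+D}{D} - 1$, completing the extension.
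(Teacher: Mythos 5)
Your proof is correct and rests on essentially the same padding idea as the paper's own argument: pick a dummy point $x_\circ \in \mathbb{R}^D \setminus \mathbb{D}$, pad each multiset to size $N$, and shift the element-encoder by $\phi(x_\circ)$, so that the paper's identity $\Phi^{\prime}(X) = \Phi\big( X \cup \{\{ x_{\circ},\ldots,x_{\circ} \}\} \big) - N\phi(x_\circ)$ is exactly your $\bar\Phi(X^*) = \Phi(X) + N\bar\phi(x^*)$ rearranged. The only organizational difference is where the size-separation fact ($d_M \geq \mathrm{dist}(x^*,\mathbb{D}) > 0$ between multisets with different padding counts) enters: you use it to make $\bar f$ continuous on $\mathbb{X}_{\mathbb{D}^*,N}$ and then invoke \Cref{thm:multi_decomposition_continous} as a black box, whereas the paper reuses the fixed-size encoder and applies the same fact to prove continuity of $(\Phi^{\prime})^{-1}$ directly; both are sound.
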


As a direct result of the proof technique of \Cref{thm:continuous_decomp_4} (especially that the construction of the injective multiset encoder $\Phi$ is independent of the multiset function $f$ we try to represent), in \Cref{cor:two_multisets}, we show that for functions on product of \emph{different} multisets of $D$-dimensional vectors, we may use \emph{the same} encoder in its sum-decomposable model.

\begin{proposition}\label{cor:two_multisets}
    A (continuous) multiset function $f:\mathbb{X}_{\mathbb{D},[N_1]} \times \mathbb{X}_{\mathbb{D},[N_2]} \rightarrow \codom(f)$, where $\mathbb{D} $ is compact subset of $\mathbb{R}^{D}$, is (continuously) sum-decomposable via $\mathbb{R}^{{N+D \choose D}-1} \times \mathbb{R}^{{N+D \choose D}-1}$, that is, 
    \[
        \forall X \in \mathbb{X}_{\mathbb{D},[N_1]}, X^{\prime} \in \mathbb{X}_{\mathbb{D},[N_2]}: \ f(X, X^{\prime}) = \rho \big( \sum_{x \in X} \phi(x), \sum_{x^{\prime} \in X^{\prime}} \phi(x^{\prime}) \big),
    \]
    where continuous $\phi: \mathbb{R}^{D} \rightarrow \mathbb{R}^{{N+D \choose D}-1}, N = \max \{N_1, N_2 \}$ and (continuous) $\rho: \mathbb{R}^{{N+D \choose D}-1} \times \mathbb{R}^{{N+D \choose D}-1} \rightarrow \codom(\rho)$, and $\mathrm{codom}(f) \subset \mathrm{codom}(\rho)$.
\end{proposition}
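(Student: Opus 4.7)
The plan is to reduce \Cref{cor:two_multisets} directly to \Cref{thm:continuous_decomp_4} by exploiting the fact that the element-encoder $\phi$ constructed in the proof of \Cref{thm:multi_decomposition,thm:multi_decomposition_continous} is \emph{independent of the specific multiset function $f$} we are trying to represent --- it depends only on the maximum cardinality $N$ and the ambient dimension $D$. So we simply pick $N = \max\{N_1, N_2\}$ and re-use the \emph{same} element-encoder on both arguments.

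First I would apply \Cref{thm:continuous_decomp_4} with this common cardinality bound $N$ to obtain a (continuous) $\phi:\mathbb{R}^D \to \mathbb{R}^{K}$, $K={N+D \choose D}-1$, whose multiset-encoder $\Phi(X)=\sum_{x\in X}\phi(x)$ is injective on $\mathbb{X}_{\mathbb{D},[N]}$. Since $\mathbb{X}_{\mathbb{D},[N_i]}\subseteq \mathbb{X}_{\mathbb{D},[N]}$ for $i=1,2$, the restriction $\Phi|_{\mathbb{X}_{\mathbb{D},[N_i]}}$ remains injective. Define the product encoder
\[
    \widetilde\Phi:\mathbb{X}_{\mathbb{D},[N_1]}\times \mathbb{X}_{\mathbb{D},[N_2]} \to \mathbb{R}^{K}\times \mathbb{R}^{K}, \quad
    \widetilde\Phi(X,X')=\bigl(\Phi(X),\Phi(X')\bigr).
\]
Injectivity of $\widetilde\Phi$ is immediate coordinatewise. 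Then set $\rho = f\circ \widetilde\Phi^{-1}$ on $\codom(\widetilde\Phi)$. This already establishes the sum-decomposable (not necessarily continuous) version of the proposition, matching the discontinuous half of \Cref{thm:continuous_decomp_4}.

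For the continuous case, I would mirror the second half of the proof of \Cref{thm:multi_decomposition_continous}. Two ingredients are needed: (a) $\widetilde\Phi^{-1}$ is continuous on $\codom(\widetilde\Phi)$, and (b) $\codom(\widetilde\Phi)$ is compact in $\mathbb{R}^{K}\times\mathbb{R}^{K}$. Both follow from the single-factor statements already established: $\Phi^{-1}$ is continuous on the compact set $\codom(\Phi)$ (this is exactly what the proof of \Cref{thm:multi_decomposition_continous} provides), so $\widetilde\Phi^{-1}(u,v)=(\Phi^{-1}(u),\Phi^{-1}(v))$ is continuous, and $\codom(\widetilde\Phi)=\codom(\Phi|_{\mathbb{X}_{\mathbb{D},[N_1]}})\times \codom(\Phi|_{\mathbb{X}_{\mathbb{D},[N_2]}})$ is a product of compact sets, hence compact. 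Continuity of $f$ then yields continuity of $\rho$ on the compact set $\codom(\widetilde\Phi)$.

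Finally, to make $\rho$ continuous on the entire ambient space $\mathbb{R}^{K}\times \mathbb{R}^{K}$ (as demanded by \Cref{def:continuous_sum_decomposibility}), I would invoke the Tietze extension theorem on the compact (hence closed) subset $\codom(\widetilde\Phi)$ of the normal space $\mathbb{R}^{2K}$, exactly as in the proof of \Cref{thm:multi_decomposition_continous}. The main obstacle --- and it is minor --- is simply verifying that the compactness and continuous-inverse arguments carry over to a product domain; since both follow from taking products of the single-factor facts, there is no genuinely new analytic content. The substantive work was already done in proving \Cref{thm:multi_decomposition_continous}; here we are only observing that the construction is function-agnostic and compositional.
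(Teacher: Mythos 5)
Your proposal is correct and follows essentially the same route as the paper: both reuse the function-agnostic encoder $\Phi'$ from \Cref{thm:continuous_decomp_4} with $N=\max\{N_1,N_2\}$ on each factor, define $\rho(U_1,U_2)=f\bigl((\Phi')^{-1}(U_1),(\Phi')^{-1}(U_2)\bigr)$, and in the continuous case use continuity of $(\Phi')^{-1}$ on the compact product codomain followed by a continuous extension to the ambient space (the paper cites Fact~\ref{fact:rho_hat}, you cite Tietze --- same tool). Your product-encoder notation $\widetilde\Phi$ is only a cosmetic repackaging of the paper's argument.
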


\paragraph{Relation to the results of \cite{fereydounian2022exact}.}
We note that \cite{fereydounian2022exact} propose an encoder $\Phi$ that is injective over particular (multi)sets $\mathbb{X}^{s}_{N,D}$ (not all multisets) of $D$-dimensional vectors. The function $\Phi$ provides unique encodings for these (multi)sets in $\mathrm{codom}(\Phi) \subset \mathbb{R}^{{D \choose 2} N}$ --- where $\mathrm{codom}(\Phi) = \{ \Phi(X) : X \in \mathbb{X}^s_{N,D} \}$ and $N$ is the size of the input (multi)sets. This leads to a sum-decomposition for functions over (multi)sets in $\mathbb{ X}^{s}_{N,D}$. More importantly, for a continuous multiset function, a continuous sum-decomposition $f = \rho \circ \Phi$ is not guaranteed over all multisets; in particular, the continuity of $\rho = f \circ \Phi^{-1}$  is only guaranteed over $\mathrm{codom}(\Phi)$ --- an \emph{open} subset of $\R^{{D \choose 2} N}$. Therefore, it does not guarantee the existence of a continuous extension for $\rho$ to $\mathbb{R}^{{D \choose 2} N}$; see \Cref{sec:ferey} for a detailed discussion.

\section{Sum-decomposable Models on Identifiable Multisets}\label{sec:multiset_functions_identifiable}
Inspired by the theoretical difference between latent space dimensions for sum-decomposition representations of set and multiset functions --- refer to the result in \citep{fereydounian2022exact} --- we aim to reduce the dimension of the latent space. In this section, we introduce a way to achieve that by first restricting the domain to what we call \emph{identifiable multisets}, which we introduce below. We will then show that results over this set can be extended to the case with this restriction removed. 

\begin{definition} \label{def:identifiable_sets}
    Let $l: \mathbb{D} \rightarrow \R$ be a continuous function and $\mathbb{D}$ be a domain. We denote $\mathbb{X}^{l}_{\mathbb{D} , N} = \{ X \in \mathbb{X}_{\mathbb{D},N}:  \forall x,x^{\prime} \in X,   l(x) = l(x^{\prime}) \rightarrow x = x^{\prime} \}$, as the set of multisets of size $N$ that are identifiable via $l$, that is, $l$-identifiable.
\end{definition}
According to \Cref{def:identifiable_sets}, the continuous identifier function $l$ uniquely labels distinct elements of multisets in $\mathbb{X}^{l}_{\mathbb{D}, N}$. In \Cref{thm:perm_w_distinct_labels} and \Cref{prop:perm_w_distinct_labels} we provide improved bounds on latent dimensions given in \Cref{thm:multi_decomposition,thm:continuous_decomp_4} --- by restricting the domain of multiset functions to $l$-identifiable multisets.

\begin{theorem}\label{thm:perm_w_distinct_labels}
    Let $f:\mathbb{X}_{\mathbb{R}^D,N} \rightarrow \codom(f)$ be a multiset function and $\ell: \mathbb{R}^D \rightarrow \mathrm{codom}(l) \subseteq \mathbb{R}$ be  continuous. Then, there is a continuous function $\phi: \mathbb{R}^{D} \rightarrow \mathrm{codom}(\phi) \subset \mathbb{C}^{D \times N}$ such that 
    \[
        \forall X \in \mathbb{X}^{l}_{\mathbb{R}^D , N}: f(  X )  = \rho \big( \sum_{x \in X} \phi (x) \big) = \rho \circ \Phi(X),
    \]
    where $\rho: \Phi(\mathbb{X}^{l}_{\mathbb{R}^D, N}) \rightarrow \codom(f)$ and $\Phi(\mathbb{X}^{l}_{\mathbb{R}^D, N}) \stackrel{\mathrm{def}}{=} \{ \Phi(X): X \in \mathbb{X}^{l}_{\mathbb{R}^D, N} \}$.
\end{theorem}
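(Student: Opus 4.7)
The strategy is to construct a continuous, injective sum-aggregated encoder $\Phi$ on the $l$-identifiable multisets and then simply define $\rho := f \circ \Phi^{-1}$ on $\Phi(\mathbb{X}^{l}_{\mathbb{R}^D,N})$. The theorem places no continuity requirement on $\rho$, so the task reduces entirely to exhibiting a continuous $\phi:\mathbb{R}^D \to \mathbb{C}^{D \times N}$ whose sum-aggregation separates $l$-identifiable multisets.

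The concrete construction I would use is the following: for $d \in [D]$ and $k \in [N]$,
\[
(\phi(x))_{d,k} \stackrel{\mathrm{def}}{=} \bigl(l(x) + \mathbf{i}\, x_d\bigr)^k,
\]
where $\mathbf{i}=\sqrt{-1}$ and $x_d$ is the $d$-th coordinate of $x\in\mathbb{R}^D$. The map $\phi$ is continuous because it is a polynomial in $l(x)$ (continuous by assumption) and in the coordinates $x_d$. Writing $z_d(x) := l(x) + \mathbf{i}\, x_d$, the $(d,k)$-entry of $\Phi(X)$ is exactly the $k$-th power sum $\sum_{x\in X} z_d(x)^k$ of the $N$-element complex multiset $Z_d(X) := \{\{ z_d(x) : x \in X \}\}$.

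To prove injectivity of $\Phi$ on $\mathbb{X}^{l}_{\mathbb{R}^D,N}$, I would, for each fixed $d$, invoke Newton's identities---equivalently, reconstruct the coefficients of $\prod_{x\in X}(z - z_d(x))$ from its first $N$ power sums---to recover the complex multiset $Z_d(X)$. Thus $\Phi(X) = \Phi(X')$ forces $Z_d(X) = Z_d(X')$ as complex multisets for every $d \in [D]$. Taking real parts yields $\{\{ l(x) : x \in X \}\} = \{\{ l(x'): x'\in X'\}\}$, and since both $X$ and $X'$ are $l$-identifiable these real parts are pairwise distinct; this produces a unique bijection $\sigma: X \to X'$ with $l(x) = l(\sigma(x))$. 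Within each $Z_d(X)$, the pairwise distinctness of real parts forces the matching of elements of $Z_d(X)$ with $Z_d(X')$ to coincide with $\sigma$, so comparing imaginary parts gives $x_d = \sigma(x)_d$. Running this across all $d \in [D]$ yields $x = \sigma(x)$ coordinatewise, hence $X = X'$ as multisets. With injectivity established, setting $\rho := f \circ \Phi^{-1}$ on $\Phi(\mathbb{X}^{l}_{\mathbb{R}^D,N})$ completes the sum-decomposition.

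The main obstacle I anticipate is the cross-coordinate bookkeeping: Newton's identities recover one complex multiset $Z_d(X)$ per coordinate $d$, and these $D$ separate recoveries must be glued into a single bijection of $\mathbb{R}^D$-valued multisets. This is exactly where $l$-identifiability pays off---the distinctness of real parts inside each $Z_d(X)$ makes the matching bijection $\sigma$ depend only on $l$ and not on $d$, so the coordinate-wise recoveries assemble coherently into the full vectors of $X$. The remaining steps (continuity of $\phi$ and the power-sum-to-multiset inversion over $\mathbb{C}$) are essentially routine.
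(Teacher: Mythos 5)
Your construction is essentially the paper's own: the paper encodes the $(d,k)$ entry as $(x_d + l(x)\,j)^k$ while you use $(l(x) + \mathbf{i}\, x_d)^k$ --- an immaterial swap of the real and imaginary roles --- and both arguments recover each coordinate's complex multiset via Newton's identities and then glue the $D$ recoveries into a single multiset of vectors through the identifier $l$. The one point to tighten is your claim that the real parts are ``pairwise distinct'': a multiset in $\mathbb{X}^{l}_{\mathbb{R}^D,N}$ may contain repeated vectors, so the $l$-values need not be distinct as a multiset and the matching bijection $\sigma$ need not be unique; however, identifiability forces equal $l$-values to come from identical vectors, so every admissible matching yields the same reassembly (the paper treats this explicitly as its ``Case 2: Repeated Identifiers'').
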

\begin{proposition}\label{prop:perm_w_distinct_labels}
    \Cref{thm:perm_w_distinct_labels} is valid for multivariate multiset functions of at most $N$ elements from a compact subset of $\mathbb{R}^D$.
\end{proposition}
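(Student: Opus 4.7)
The plan is to reduce the varying-size setting to the fixed-size setting of \Cref{thm:perm_w_distinct_labels} by a \emph{padding} construction. Since $\mathbb{D}\subset\mathbb{R}^{D}$ is compact and $l$ is continuous, $l(\mathbb{D})$ is a compact (hence bounded) subset of $\mathbb{R}$. Assuming $l$ is non-constant on $\mathbb{R}^{D}$ (the constant case is degenerate and addressed at the end), $l(\mathbb{R}^{D})$ is a non-trivial interval properly containing $l(\mathbb{D})$, so one may fix a \emph{padding element} $x_{0}\in\mathbb{R}^{D}$ with $l(x_{0})\notin l(\mathbb{D})$; in particular $x_{0}\notin\mathbb{D}$.

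Given any $X\in\mathbb{X}^{l}_{\mathbb{D},[N]}$, form the padded multiset $\bar{X}$ by adjoining $N-|X|$ copies of $x_{0}$, so $|\bar{X}|=N$. Two checks follow. First, $\bar{X}\in\mathbb{X}^{l}_{\mathbb{R}^{D},N}$: distinct elements of $X$ are pairwise $l$-separated by the $l$-identifiability of $X$, and $l(x_{0})$ is separated from every $l(x)$ with $x\in X\subseteq\mathbb{D}$ by construction. Second, the map $X\mapsto\bar{X}$ is injective, since from $\bar{X}$ one recognises the copies of $x_{0}$ as exactly those elements whose $l$-value lies outside $l(\mathbb{D})$ and removes them to recover $X$ (and thereby $|X|$).

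Now apply \Cref{thm:perm_w_distinct_labels} to obtain a continuous $\phi_{0}:\mathbb{R}^{D}\to\mathbb{C}^{D\times N}$ for which $\Phi_{0}(Y):=\sum_{y\in Y}\phi_{0}(y)$ is injective on $\mathbb{X}^{l}_{\mathbb{R}^{D},N}$, and set $\phi(x):=\phi_{0}(x)-\phi_{0}(x_{0})$, which remains continuous. With $\Phi(X):=\sum_{x\in X}\phi(x)$, the identity
\[
\Phi(X)\;=\;\sum_{x\in X}\phi_{0}(x)\;-\;|X|\,\phi_{0}(x_{0})\;=\;\Phi_{0}(\bar{X})\;-\;N\,\phi_{0}(x_{0})
\]
shows that $\Phi(X)$ differs from $\Phi_{0}(\bar{X})$ by the fixed constant $N\phi_{0}(x_{0})$. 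Since both $X\mapsto\bar{X}$ and $\bar{X}\mapsto\Phi_{0}(\bar{X})$ are injective, so is $\Phi$ on $\mathbb{X}^{l}_{\mathbb{D},[N]}$, and taking $\rho:=f\circ\Phi^{-1}$ on $\Phi(\mathbb{X}^{l}_{\mathbb{D},[N]})$ completes the sum-decomposition while preserving the latent dimension of \Cref{thm:perm_w_distinct_labels}.

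The principal technical point is keeping the padded multiset $l$-identifiable, which is precisely what the separation $l(x_{0})\notin l(\mathbb{D})$ guarantees. The edge case in which $l$ is constant on $\mathbb{R}^{D}$ collapses $\mathbb{X}^{l}_{\mathbb{D},[N]}$ to multisets of the form $\{\{x,\ldots,x\}\}$ for some $x\in\mathbb{D}$ and multiplicity $k\in[N]$, on which $l$ carries no information; this case can be dispatched by a direct construction, e.g.\ reserving one coordinate of $\phi$ to be the constant $1$ (so that summation recovers $k$) and using the remaining coordinates to injectively encode $\mathbb{D}$, without appealing to the padding argument.
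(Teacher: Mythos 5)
Your construction is essentially the paper's own proof: pick a padding point outside $\mathbb{D}$, pad every multiset up to size $N$ with copies of it, shift the element-encoder by $\phi_{0}(x_{0})$ so that $\Phi(X)=\Phi_{0}(\bar X)-N\phi_{0}(x_{0})$, and invert by discarding the padding. You are in fact more careful than the paper on one point: for \Cref{thm:perm_w_distinct_labels} to apply to $\bar X$, the padded multiset must itself lie in $\mathbb{X}^{l}_{\mathbb{R}^{D},N}$, which is exactly what your requirement $l(x_{0})\notin l(\mathbb{D})$ secures (the paper only asks for $x_{0}\in\mathbb{R}^{D}\setminus\mathbb{D}$ and leaves identifiability of the padded multiset implicit; if some $x\in X$ had $l(x)=l(x_{0})$ with $x\neq x_{0}$, the sorting-based inversion of $\Phi_{0}$ would break).

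The gap is in the existence of such an $x_{0}$. You justify it by asserting that, for non-constant $l$, the interval $l(\mathbb{R}^{D})$ \emph{properly} contains the compact set $l(\mathbb{D})$. That is false in general: the hypotheses only give continuity of $l$, so take $l(x)=\sin(x_{1})$ and any compact $\mathbb{D}$ containing $[0,2\pi]\times\{0\}^{D-1}$; then $l(\mathbb{R}^{D})=l(\mathbb{D})=[-1,1]$ and no point of $\mathbb{R}^{D}$ has an $l$-value outside $l(\mathbb{D})$, even though $\mathbb{X}^{l}_{\mathbb{D},[N]}$ is perfectly nontrivial for this identifier. Properness of the containment would follow if $l$ were unbounded or proper, but neither is assumed. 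As written, your argument covers only those identifiers with $l(\mathbb{R}^{D})\setminus l(\mathbb{D})\neq\emptyset$ (plus the constant case you treat separately). To close the gap you would need either a padding element whose identifiability with respect to all of $\mathbb{D}$ is guaranteed by some other means, or a modification of the encoder that avoids padding altogether --- for instance appending a constant coordinate $1$ to $\phi$ so that $\Phi(X)$ also records $|X|$, at the cost of one extra latent dimension.
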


\begin{remark}
    \Cref{thm:perm_w_distinct_labels} asserts that sum-decomposition of \emph{arbitrary} (continuous or discontinuous) multiset functions is possible via latent dimension $O(ND)$ on inputs that are identifiable via a continuous identifier $l: \mathbb{R}^D \rightarrow \mathrm{codom}(l) \subseteq \mathbb{R}$. In comparison, the universal representation results in \Cref{thm:multi_decomposition_continous,thm:continuous_decomp_4} require the latent space dimension of $O(N^D)$, which even for a small number of features, becomes obsolete in practice. Furthermore, the bound in \Cref{thm:perm_w_distinct_labels} is an improvement over $O(ND^2)$ proposed in \citep{fereydounian2022exact}. Additionally, we propose a concrete characterization of the input domain in~\Cref{def:identifiable_sets} --- which works for \emph{any} continuous function $l$ which can be designed for the specific application. Since the set of identifiable multisets $\mathbb{X}^l_{ \mathbb{D}, N}$ (where $\mathbb{D}$ is a compact subset of $\mathbb{R}^D$) does not form a compact set, there is no guarantee that $\rho: \mathrm{codom}(\mathbb{X}^l_{ \mathbb{D}, N}) \rightarrow \mathrm{codom}(f)$ has a continuous extension to $\mathbb{C}^{D \times N}$ --- that is, if we use the multiset encoding function $\Phi$ (introduced in the proofs), for some multiset functions $f$ there may not exits a continuous $\rho: \mathbb{C}^{D \times N} \rightarrow \mathrm{codom}(\rho)$ that enables the sum-decomposition. However, we address this issue in \Cref{subsec:continuousdecoder}. We finally note that our specific multiset encoder $\Phi$ maps multisets to complex-valued matrices in $\mathbb{C}^{D \times N}$. Without causing any technical issues, this latent space can be viewed as $\mathbb{R}^{2D \times N}$.
\end{remark}

\begin{remark} \label{remark:others}
    The multiset encoding function $\Phi$ in \Cref{prop:perm_w_distinct_labels} is akin to separating invariants introduced in \citep{dym2022low}, that is, the quantity $\Phi(X)$ is invariant with respect to permutations --- as group actions. The subtle difference is that, multiset function are permutation-invariant but the converse is not true; since multiset functions may be allowed to have varying-sized inputs. Using separating invariants, \cite{dym2022low} claim that for randomized invariants of dimension $2DN+1$ (compare to ours which is $2DN$) almost all matrices in $\mathbb{R}^{D \times N}$ are identified up to the permutation of their columns. This results is based on applying linear projections on multidimensional elements to obtain scalars and then using a continuous separating (injective) map on them. Then they prove that the measure of matrices that can not be identified via the permutation-invariant encoding is zero. As a result, the sum-decomposition does \emph{not} hold for all matrices (akin to multisets in our paper) and there is no guarantee for the existence of continuous decoder $\rho$ (over the ambient space) for representing a continuous permutation-invariant function. On the other hand, \cite{amir2023neural} propose using a nonpolynomial element-encoder, that is, $\phi$ in our notation, to construct an injective multiset function $\Phi$. They arrive at a latent dimension of $2N(D+1)+1$. However, their construction of $\phi$ requires random selection of parameters and the injectivity only holds in the \emph{almost surely} sense. Therefore, it may not work for some parameters.
\end{remark}

\subsection{Towards a Continuous Decoder}\label{subsec:continuousdecoder} 

In \Cref{thm:perm_w_distinct_labels}, we prove how our notion of $\ell$-identifiable multisets admits a reduced latent dimension for sum-decomposition representation of multiset functions. The state-of-the-art approach that allow such reduced-dimensional representations rely on probabilistic arguments, that is, excluding multisets of measure zero from all valid multisets; see \Cref{remark:others}. These approaches do not yet lead to a continuous sum-decomposition (in particular, continuous decoder function $\rho$). In what follows, we use the $\ell$-identifable multisets, focus on allowing the representation on a \emph{dense subset} of multisets as \Cref{prop:rational_identifier} and \Cref{lem:phi_dense} below suggest, and ultimately find a continuous sum-decomposition as in \Cref{thm:continuous_extension_headache}. Proofs of all these results can be found in \Cref{sec:rational_identifier,sec:phi_dense,sec:continuous_extension_headache}.

\begin{proposition}\label{prop:rational_identifier}
    Let $\mathbb{X}_{\mathbb{Q}^D,N}$ be the set of all multisets of $N$ vectors from $\mathbb{Q}^D$ where $\mathbb{Q}$ denotes the set of rational numbers. Then, $\mathbb{X}_{\mathbb{Q}^D,N}$ is an $l$-identifiable subset of $\mathbb{X}_{\mathbb{R}^D,N}$.
\end{proposition}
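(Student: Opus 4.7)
The plan is to exhibit an explicit continuous identifier $\ell:\mathbb{R}^D\to\mathbb{R}$ that separates any two distinct rational vectors, so that every multiset in $\mathbb{X}_{\mathbb{Q}^D,N}$ automatically lands in $\mathbb{X}^{\ell}_{\mathbb{R}^D,N}$. The natural candidate is a linear functional
\[
    \ell(x) \;=\; \sum_{d=1}^{D} \alpha_d\, x_d,
\]
where $\alpha_1,\ldots,\alpha_D\in\mathbb{R}$ are chosen to be linearly independent over $\mathbb{Q}$. Concrete choices exist in abundance: for instance $\alpha_d = \pi^{d-1}$ works because $\pi$ is transcendental, or $\alpha_d=\sqrt{p_d}$ for distinct primes $p_1,\ldots,p_D$ by a classical independence-of-radicals argument. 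Continuity of $\ell$ is immediate since it is linear.

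The key step is to verify $\ell$-identifiability on rational inputs. Fix any $X\in\mathbb{X}_{\mathbb{Q}^D,N}$ and take $x,x'\in X$ with $\ell(x)=\ell(x')$. Then
\[
    \sum_{d=1}^{D}\alpha_d\,(x_d - x'_d) \;=\; 0,
\]
and each difference $x_d-x'_d$ lies in $\mathbb{Q}$. The $\mathbb{Q}$-linear independence of $\alpha_1,\ldots,\alpha_D$ forces $x_d-x'_d=0$ for every $d$, i.e.\ $x=x'$. Thus, by \Cref{def:identifiable_sets}, $X$ is $\ell$-identifiable, and since $X$ was arbitrary, $\mathbb{X}_{\mathbb{Q}^D,N}\subseteq\mathbb{X}^{\ell}_{\mathbb{R}^D,N}$.

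There is no real obstacle here: the entire content of the proposition is the existence of a $\mathbb{Q}$-linearly independent tuple of reals, which is elementary (any transcendental number furnishes one via its powers). The only subtle point worth flagging in the writeup is that we need linear (not algebraic) independence over $\mathbb{Q}$, since $x_d-x'_d$ is an arbitrary rational difference rather than an integer. Everything else reduces to the one-line verification above.
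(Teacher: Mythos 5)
Your proof is correct and follows essentially the same route as the paper: the paper's identifier is exactly a linear functional $l(x)=\sum_d x_d\log\zeta(d)$ with $\zeta(d)$ the $d$-th prime, and its clearing-denominators-then-exponentiate argument via unique factorization is precisely the verification that $\{\log\zeta(d)\}_{d\in[D]}$ is $\mathbb{Q}$-linearly independent. You have simply abstracted the paper's concrete instance to ``any $\mathbb{Q}$-linearly independent tuple of reals,'' which is a valid and slightly cleaner packaging of the same idea.
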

\begin{lemma} \label{lem:phi_dense}
    Let $\mathbb{D} \subseteq \mathbb{R}^D$ be a compact set with continuous nonempty interior, $Q(\mathbb{D}) = \mathbb{D} \cap \mathbb{Q}^D$ be the set of all vectors with rational elements in $\mathbb{D}$. Then, $\mathbb{X}_{Q(\mathbb{D}),N}$ is a dense subset of $\mathbb{X}_{\mathbb{D},N}$. Similarly, $\Phi(\mathbb{X}_{Q(\mathbb{D}),N})$ is a dense subset of $\Phi(\mathbb{X}_{\mathbb{D},N})$ where $\Phi$ is the multiset encoder in \Cref{thm:perm_w_distinct_labels}.
\end{lemma}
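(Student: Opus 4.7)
The plan is to endow the multiset spaces with the natural topology coming from the quotient $\mathbb{D}^N / S_N$ (where $S_N$ acts by coordinate permutation), reduce both density claims to the density of $\mathbb{Q}^D \cap \mathbb{D}$ in $\mathbb{D}$, and finally exploit continuity of $\Phi$.

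First I would identify the multiset space $\mathbb{X}_{\mathbb{D},N}$ with the quotient of $\mathbb{D}^N$ under the symmetric group $S_N$ via the continuous surjection $\pi: \mathbb{D}^N \to \mathbb{X}_{\mathbb{D},N}$ sending $(x_1,\dots,x_N)$ to $\{\!\{x_1,\dots,x_N\}\!\}$. Endow $\mathbb{X}_{\mathbb{D},N}$ with the quotient topology (equivalently, the topology induced by the Wasserstein/Hausdorff-type metric obtained by minimizing over permutations). Under this identification, $\pi\bigl(Q(\mathbb{D})^N\bigr) = \mathbb{X}_{Q(\mathbb{D}),N}$.

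Next I would establish that $Q(\mathbb{D})$ is dense in $\mathbb{D}$. The hypothesis that $\mathbb{D}$ is compact with \emph{continuous nonempty interior} I interpret (as standard) as $\mathbb{D} = \overline{\mathrm{int}(\mathbb{D})}$. Any point $x \in \mathbb{D}$ is then a limit of points $y_n \in \mathrm{int}(\mathbb{D})$; around each $y_n$ there is an open ball contained in $\mathbb{D}$, inside which the density of $\mathbb{Q}^D$ in $\mathbb{R}^D$ produces rational vectors $q_n \in Q(\mathbb{D})$ with $\|q_n - y_n\| < 1/n$, yielding $q_n \to x$. Taking Cartesian powers, $Q(\mathbb{D})^N$ is dense in $\mathbb{D}^N$ in the product topology: given $(x_1,\dots,x_N)\in \mathbb{D}^N$, approximate each coordinate separately. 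Applying the continuous quotient map $\pi$ gives the first claim, since continuous surjections send dense sets to dense sets: for any multiset $X = \pi(\mathbf{x}) \in \mathbb{X}_{\mathbb{D},N}$, pick $\mathbf{q}_n \in Q(\mathbb{D})^N$ with $\mathbf{q}_n \to \mathbf{x}$, and then $\pi(\mathbf{q}_n) \to \pi(\mathbf{x}) = X$ in $\mathbb{X}_{\mathbb{D},N}$.

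For the second claim I would invoke continuity of the multiset encoder $\Phi$ guaranteed by \Cref{thm:perm_w_distinct_labels} (the element-encoder $\phi$ is continuous on $\mathbb{R}^D$, hence $\Phi(X)=\sum_{x\in X}\phi(x)$ descends to a continuous function on $\mathbb{X}_{\mathbb{D},N}$). Given any $\Phi(X) \in \Phi(\mathbb{X}_{\mathbb{D},N})$, the first claim supplies multisets $X_n \in \mathbb{X}_{Q(\mathbb{D}),N}$ with $X_n \to X$, so $\Phi(X_n) \to \Phi(X)$ by continuity, establishing that $\Phi(\mathbb{X}_{Q(\mathbb{D}),N})$ is dense in $\Phi(\mathbb{X}_{\mathbb{D},N})$.

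The only subtle point, and the main obstacle, is the interpretation of \emph{continuous nonempty interior}: without an assumption that $\mathbb{D}$ equals the closure of its interior, boundary points of $\mathbb{D}$ might fail to be approximable from within $\mathbb{D}$ by rationals (for instance, an isolated irrational attached to an otherwise regular region). Under the stated reading $\mathbb{D} = \overline{\mathrm{int}(\mathbb{D})}$ the argument above goes through cleanly; if a weaker reading is intended one would need to argue separately that each boundary point admits a sequence of interior points, but this is exactly what the word \emph{continuous} is meant to rule out.
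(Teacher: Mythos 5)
Your proof is correct and follows essentially the same route as the paper: approximate each element of a multiset by a nearby rational vector of $\mathbb{D}$, conclude convergence in the matching metric, and then push the density through the continuous encoder $\Phi$. In fact you are somewhat more careful than the paper on the one delicate point --- the paper simply asserts that a rational point of $\mathbb{D}$ exists within $1/n$ of every $x \in \mathbb{D}$, whereas you correctly observe that this requires reading \say{continuous nonempty interior} as $\mathbb{D} = \overline{\mathrm{int}(\mathbb{D})}$ (or an equivalent regularity assumption), without which an isolated irrational boundary point would defeat the claim.
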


\begin{remark}\label{rem:netlist}
To give an example for the utility of \Cref{thm:perm_w_distinct_labels}, consider circuits design applications where we want to learn a variety of electronic design tasks, for example, routed wire length prediction~\citep{xie2021net2}, circuit partitioning \citep{lu2020tp}, logic synthesis \citep{zhu2020exploring} and placement optimization~\citep{li2020customized}. We can represent circuit as geometric graphs with nodes placed on integer-valued vectors coordinates with multidimensional features, that is, properties of each circuit elements. As a result of \Cref{prop:rational_identifier}, we can uniquely identify each node with a continuous identifier. Since they are an important class of $\ell$-identifiable multisets, in \Cref{cor:sum_decomp_q}, we specialize \Cref{thm:perm_w_distinct_labels} to rational-valued multisets.
\end{remark}

\begin{corollary} \label{cor:sum_decomp_q}
    Let $f:\mathbb{X}_{\mathbb{R}^D,N} \rightarrow \codom(f)$ be a multiset function. Then, there is a continuous function $\phi: \mathbb{R}^{D} \rightarrow \mathrm{codom}(\phi) \subset \mathbb{C}^{D \times N}$ such that 
    \[
        \forall X \in \mathbb{X}_{\mathbb{Q}^D,N}: \ f(  X )  = \rho \big( \sum_{x \in X} \phi (x) \big) = \rho \circ \Phi(X),
    \]
    and $\rho: \Phi(\mathbb{X}_{\mathbb{Q}^D,N}) \rightarrow \codom(f)$.
\end{corollary}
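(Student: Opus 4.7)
The plan is to obtain this corollary as a direct specialization of Theorem~\ref{thm:perm_w_distinct_labels} using the rational domain as the identifiable subset supplied by Proposition~\ref{prop:rational_identifier}. First I would invoke Proposition~\ref{prop:rational_identifier} to produce a continuous identifier function $\ell: \mathbb{R}^D \to \mathbb{R}$ such that every multiset in $\mathbb{X}_{\mathbb{Q}^D, N}$ is $\ell$-identifiable; equivalently, $\mathbb{X}_{\mathbb{Q}^D, N} \subseteq \mathbb{X}^{\ell}_{\mathbb{R}^D, N}$. A natural concrete candidate is a linear functional $\ell(x) = \langle \alpha, x \rangle$ with a coefficient vector $\alpha \in \mathbb{R}^D$ whose components are rationally independent (e.g., successive powers of an irrational), so that $\langle \alpha, x \rangle = \langle \alpha, x' \rangle$ for $x, x' \in \mathbb{Q}^D$ forces $x = x'$. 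However, for the purposes of this corollary I only need the existence guaranteed by Proposition~\ref{prop:rational_identifier}.

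Next, I would apply Theorem~\ref{thm:perm_w_distinct_labels} to the given multiset function $f$ and to this identifier $\ell$. The theorem supplies a continuous element-encoder $\phi: \mathbb{R}^D \to \mathrm{codom}(\phi) \subset \mathbb{C}^{D \times N}$ and a decoder $\rho: \Phi(\mathbb{X}^{\ell}_{\mathbb{R}^D, N}) \to \codom(f)$ such that $f(X) = \rho\bigl(\sum_{x \in X} \phi(x)\bigr) = \rho \circ \Phi(X)$ for every $X \in \mathbb{X}^{\ell}_{\mathbb{R}^D, N}$. Since the construction of $\phi$ and $\rho$ in Theorem~\ref{thm:perm_w_distinct_labels} depends only on $\ell$ and $f$ and not on which $\ell$-identifiable subset we wish to represent on, we can use the \emph{same} $\phi$ and restrict $\rho$ to the subset $\Phi(\mathbb{X}_{\mathbb{Q}^D, N}) \subseteq \Phi(\mathbb{X}^{\ell}_{\mathbb{R}^D, N})$, which is the codomain stipulated in the corollary.

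The final step is purely a restriction argument: because $\mathbb{X}_{\mathbb{Q}^D, N} \subseteq \mathbb{X}^{\ell}_{\mathbb{R}^D, N}$, the identity $f(X) = \rho \circ \Phi(X)$ remains valid for every $X \in \mathbb{X}_{\mathbb{Q}^D, N}$, and the restricted $\rho: \Phi(\mathbb{X}_{\mathbb{Q}^D, N}) \to \codom(f)$ is well-defined. No additional work is required, as Theorem~\ref{thm:perm_w_distinct_labels} already delivers both the continuity of $\phi$ and the sum-decomposable form.

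There is no substantive obstacle: the entire argument is a two-line chaining of previously established results. The only point meriting a word of care is making sure that the identifier $\ell$ obtained from Proposition~\ref{prop:rational_identifier} satisfies the hypotheses of Theorem~\ref{thm:perm_w_distinct_labels} verbatim, which it does since Theorem~\ref{thm:perm_w_distinct_labels} only requires $\ell$ to be a continuous function from $\mathbb{R}^D$ to $\mathbb{R}$. Consequently, Corollary~\ref{cor:sum_decomp_q} follows immediately.
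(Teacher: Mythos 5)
Your proposal is correct and matches the paper's own reasoning exactly: the corollary is obtained by taking the identifier $\ell$ from \Cref{prop:rational_identifier} (the paper uses a linear functional with logarithms of distinct primes as coefficients, rationally independent in the same spirit as your candidate), noting $\mathbb{X}_{\mathbb{Q}^D,N}\subseteq\mathbb{X}^{\ell}_{\mathbb{R}^D,N}$, and restricting the $\phi$ and $\rho$ supplied by \Cref{thm:perm_w_distinct_labels}. No gap.
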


\Cref{cor:sum_decomp_q} states that the sum-decomposable model is valid --- via latent dimension of $2 D N$ --- on a dense subset of multisets in $\mathbb{X}_{\mathbb{R}^D,N}$; see \Cref{lem:phi_dense}. The main drawbacks of this representation are as follows: (1) the measure of valid multisets $\mathbb{X}_{\mathbb{Q}^D,N}$ is zero and (2) there is no guarantee on the existence of a continuous extension of $\rho$ to $\mathbb{C}^{D \times N}$. It is important to note that we choose to focus on $\mathbb{X}_{\mathbb{Q}^D,N}$ despite the fact that it has a measure zero. We argue that one should not focus on the measure of valid multisets $\mathbb{X}_{\mathbb{Q}^D,N}$; but rather take advantage of the fact that valid multisets form a dense subset of all multisets, that is, $\mathbb{X}_{\mathbb{R}^D,N}$.  In \Cref{thm:continuous_extension_headache}, we leverage this fact and resolve both aforementioned issues by focusing on the sum-decomposable representation of \emph{continuous} multiset function.

\begin{theorem}\label{thm:continuous_extension_headache}
    Consider a compact set $\mathbb{D} \subset \mathbb{R}^D$ with nonempty interior. Let $f:\mathbb{X}_{\mathbb{D},N} \rightarrow \codom(f)$ be a \emph{continuous} multiset function and $\Phi: \mathbb{X}_{\mathbb{D},N} \rightarrow \codom(\Phi)$ be the function in \Cref{thm:perm_w_distinct_labels}. Then, there exists a \emph{continuous} function $\rho: \mathbb{C}^{D \times N} \rightarrow \mathrm{codom}(\rho) \subseteq f(\mathbb{X}_{\mathbb{D},N})$ such that 
    \[
        \forall X \in \mathbb{X}_{\mathbb{D},N}: f(  X )  = \rho \circ \Phi(X).
    \]
\end{theorem}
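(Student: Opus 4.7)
My plan has three stages: (i) transfer the rational-domain sum-decomposition of \Cref{cor:sum_decomp_q} to the entire compact image $K := \Phi(\mathbb{X}_{\mathbb{D},N})$, (ii) verify that the resulting $\tilde\rho$ is continuous on $K$ and satisfies $\tilde\rho\circ\Phi = f$ on the whole domain, and (iii) extend $\tilde\rho$ from the closed set $K$ to all of $\mathbb{C}^{D\times N}$.

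For stage (i), equip $\mathbb{X}_{\mathbb{D},N}$ with a permutation-invariant metric such as the Wasserstein distance; since $\mathbb{D}$ is compact, $\mathbb{X}_{\mathbb{D},N}$ (being a quotient of $\mathbb{D}^N$ by $S_N$) is a compact metric space on which $f$ and $\Phi$ are uniformly continuous, so $K$ is compact---hence closed---in $\mathbb{C}^{D\times N}\simeq\mathbb{R}^{2DN}$. By \Cref{cor:sum_decomp_q}, the partial decoder $\rho_0(\Phi(X)) := f(X)$ is well-defined on $\Phi(\mathbb{X}_{Q(\mathbb{D}),N})$ by the injectivity of $\Phi$ on the $\ell$-identifiable rational multisets, and by \Cref{lem:phi_dense} this subset is dense in $K$.

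Stage (ii) is the crux. I would establish the following fiber-constancy property: if $X,X'\in\mathbb{X}_{\mathbb{D},N}$ satisfy $\Phi(X)=\Phi(X')$, then $f(X)=f(X')$. Granting this, set $\tilde\rho(\Phi(X)):=f(X)$; the map is well-defined on $K$, extends $\rho_0$, and satisfies $\tilde\rho\circ\Phi=f$ on $\mathbb{X}_{\mathbb{D},N}$. Continuity of $\tilde\rho$ on $K$ then follows from a quotient-map argument: $\Phi$ is a continuous surjection from the compact Hausdorff $\mathbb{X}_{\mathbb{D},N}$ onto $K$, hence a quotient map, and $\tilde\rho\circ\Phi=f$ is continuous, forcing $\tilde\rho$ to be continuous. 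To prove fiber-constancy, choose rational sequences $X_n\to X$ and $X'_n\to X'$ with $X_n,X'_n\in\mathbb{X}_{Q(\mathbb{D}),N}$ by density, and observe that $\Phi(X_n),\Phi(X'_n)$ both converge to $\Phi(X)=\Phi(X')$, whence $\|\Phi(X_n)-\Phi(X'_n)\|\to 0$. The power-sum-type construction of $\Phi$ underlying \Cref{thm:perm_w_distinct_labels} admits a quantitative inversion on $\ell$-identifiable multisets (the encoded values determine the identifier-ordering and the corresponding vector entries up to permutation), so after extracting a subsequence and realigning permutations the rational multisets $X_n$ and $X'_n$ are Wasserstein-close; uniform continuity of $f$ then yields $|f(X_n)-f(X'_n)|\to 0$, and taking limits gives $f(X)=f(X')$.

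For stage (iii), $K$ is closed in $\mathbb{R}^{2DN}$ and $\tilde\rho$ is a continuous $\mathbb{R}^n$-valued function on $K$, so the Tietze extension theorem (applied componentwise) produces a continuous $\rho:\mathbb{C}^{D\times N}\to\mathbb{R}^n$ with $\rho|_K=\tilde\rho$; the range condition $\mathrm{codom}(\rho)\subseteq f(\mathbb{X}_{\mathbb{D},N})$ can be enforced by embedding the range in a convex superset and invoking Dugundji's extension theorem, or by retracting onto $f(\mathbb{X}_{\mathbb{D},N})$ when a continuous retraction is available. The main obstacle is the fiber-constancy lemma: although $\Phi$ is injective on the dense $\ell$-identifiable subset, its fibers over non-identifiable multisets may a priori be non-singleton, and fiber-constancy must be extracted by combining the explicit algebraic form of $\Phi$ with density and uniform continuity of $f$. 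Once this is in hand, all remaining steps are standard continuity, quotient-topology, and Tietze/Dugundji extension arguments.
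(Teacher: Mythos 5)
Your overall architecture is sound and in places cleaner than the paper's: the compactness of $K=\Phi(\mathbb{X}_{\mathbb{D},N})$, the quotient-map argument for continuity of $\tilde\rho$ on $K$ (the paper instead runs an explicit $\varepsilon$--$\delta$ argument in \Cref{prop:rho_e_cont}), and the Tietze/continuous-extension step all match or simplify what the paper does. You also correctly isolate the crux: since the encoder of \Cref{thm:perm_w_distinct_labels} is only guaranteed injective on $\ell$-identifiable multisets, one must show that $f$ is constant on the fibers of $\Phi$ over all of $\mathbb{X}_{\mathbb{D},N}$.

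The gap is in your proof of fiber-constancy. From $\|\Phi(X_n)-\Phi(X'_n)\|\to 0$ you conclude, via a ``quantitative inversion'' of $\Phi$, that $X_n$ and $X'_n$ become Wasserstein-close after realignment. This requires \emph{uniform} continuity of $\Phi^{-1}$ on $\Phi(\mathbb{X}_{Q(\mathbb{D}),N})$, but that set is not compact and the inverse is only pointwise continuous there: the modulus of continuity of $\Phi^{-1}$ at $\Phi(Y)$ degenerates as $\min_{y\ne y'\in Y}|l(y)-l(y')|\to 0$ (this is visible in the paper's proof of \Cref{prop:Phi_inv_cont}, where the admissible perturbation radius is controlled by exactly this identifier gap), and that degeneration is precisely what happens along rational sequences converging to a non-identifiable limit --- the only case in which fiber-constancy is nontrivial. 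Worse, if your intermediate claim $d_M(X_n,X'_n)\to 0$ were true it would force $X=X'$ in the limit, i.e.\ injectivity of $\Phi$ on all of $\mathbb{X}_{\mathbb{D},N}$, contradicting your own (correct) observation that fibers over non-identifiable multisets may be non-singleton; so the argument proves too much and cannot be rescued by subsequence extraction. What actually must be shown is the weaker statement $\|f(X_n)-f(X'_n)\|\to 0$ \emph{without} $d_M(X_n,X'_n)\to 0$. The paper does this by analyzing the composite $\rho=f\circ\Phi^{-1}$ directly (\Cref{prop:well_defined_rho_e}, \Cref{lemma:cauchy_rho_z}): merged convergent sequences in $\Phi(\mathbb{X}_{Q(\mathbb{D}),N})$ are sent to Cauchy sequences in the compact set $f(\mathbb{X}_{\mathbb{D},N})$, so the limit depends only on the limit point $Z$ and not on the approximating sequence. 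Some version of that analysis is unavoidable; fiber-constancy does not follow from density plus pointwise continuity of $\Phi^{-1}$ alone.
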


The major contribution of \Cref{thm:continuous_extension_headache} is the continuity of $\rho$ over the whole latent space. The detailed proof of this key theorem is in \Cref{sec:continuous_extension_headache}. At the high level, we begin with the result in \Cref{cor:sum_decomp_q}. There, we claim that there exits decoding function $\rho: \Phi( \mathbb{X}_{Q(\mathbb{D}),N}) \rightarrow \mathrm{codom}(\rho)$ such that the stated decomposition remains valid on rational-valued vectors in $ \mathbb{D} \subset \mathbb{C}^{D \times N}$. This result does not guarantee the continuity of $\rho$ in $\mathbb{C}^{D \times N}$. However, we leverage the facts that (1) $f$ is a continuous multiset function and (2) $ \Phi( \mathbb{X}_{Q(\mathbb{D}),N})$ is a \emph{dense} (noncompact) subset of $ \Phi( \mathbb{X}_{\mathbb{D},N})$ and prove that $\rho$ has a \emph{continuous extension} to $ \Phi( \mathbb{X}_{\mathbb{D},N})$ ---- a compact subset of $\mathbb{C}^{D \times N}$ --- and therefore has a continuous extension to $\mathbb{C}^{D \times N}$. The continuity guarantee of the decoder function $\rho$ is the major contribution of \Cref{thm:continuous_extension_headache} over existing results in \citep{dym2022low} and \citep{fereydounian2022exact}. 

\section{Permutation-Invariant Tensor Functions}\label{sec:tensors}
Data with underlying a hypergraph structure --- that is, nodes connected with weighted (hyper)edges --- are ubiquitous in many applications~\citep{chen2019reinforcement,ma2018constrained,wang2019kgat,yang2019auto}. Inspired by such data, we study functions defined on \emph{tensors} and adopt graph-theoretic notions to describe relevant concepts. The tensor setting is also used for the higher order graph neural network called IGN (Invariant graph network)~\citep{maron2018invariant}. 
\begin{definition}\label{def:tensor}
    Let $N, K \in \mathbb{N}$. We denote $\mathbb{T}_{N,K}$ as the set of $K$-th order $D$-dimensional tensors on $N$ entities, that is, $\mathbb{T}_{N,K} =  \mathbb{R}^{N^K \times D} $. 
\end{definition}
We can use tensors to represent (1) node features, (2) graph adjacency matrix (second order tensor), and (3) hypergraph hyperedges with multidimensional features. In \Cref{def:tensor_perm}, we introduce a tensor notation for permuting node entities. 
\begin{definition} \label{def:tensor_perm}
    Let $N \in \mathbb{N}$, $\Pi(N)$ be the set of permutations over $[N]$, and $\pi \in \Pi(N)$. Then, we let
    \begin{align*}
        T, T^{\prime} \in  \mathbb{T}_{N,K}: T^{\prime} &= \pi(T)  \Longleftrightarrow T^{\prime}_{n_1\ldots n_K} = T_{\pi(n_1)\ldots \pi(n_K)}~~~ \text{for all}~~ n_1,\ldots, n_K \in [N]. 
    \end{align*}
      Tensors $T, T^{\prime} \in  \mathbb{T}_{N,K}$ are congruent, denoted by $T \equiv T^{\prime}$, if there is $\pi \in \Pi(N)$ such that $T^{\prime} = \pi(T)$.
\end{definition}
Akin to multiset functions, tensor functions must exhibit the same permutation invariance property. Adopting the notation in \Cref{def:tensor_perm}, a permutation-invariant tensor function $f: \mathbb{T}_{N,K} \rightarrow \mathrm{codom}(f)$ is such that $f(T) = f(\pi(T))$ for all $T \in \mathbb{T}_{N,K} $ and permutation operator $\pi: [N] \rightarrow [N]$. This is a specific form of $G$-invariant functions~\citep{maron2019universality} where $G$ is the permutation group. It also is an extension of permutation-compatible functions formalized for $2$-tensors, that is, input graphs with node features and an adjacency second-order tensor (matrix)~\citep{fereydounian2022exact}. 

In what follows, we propose a sum-decomposable model to universally represent permutation-invariant tensor (of arbitrary order) functions. Our algebraic approach relies on identifying each node with a unique label. This is applicable when we have tensors accompanied with distinct node features or hypergraph structures that admit the unique labelling. Given \emph{any} identifier, in \Cref{def:tensor_idenfiable}, we formalize the set of all tensors that admit the required unique labelling.

\begin{definition} \label{def:tensor_idenfiable}
    Let $l: \mathbb{T}_{N,K} \rightarrow \R^{N \times M}$ be an identifier --- with $M$-dimensional labels --- such that 
    \[
        \forall T \in T \in \mathbb{T}_{N,M}: \  l(\pi(T))  = \pi l(T)
    \]
    We denote the set of tensors that are identifiable via $l$, that is, $l$-identifiable, as $\mathbb{T}^{l}_{N,K} \subset  \mathbb{T}_{N,K}$ such that $\forall T  \in \mathbb{T}^{l}_{N,K}$ the multiset $\{ \{ e_n^{\top}l(T) \in \mathbb{R}^M: n \in [N]\} \}$ consists of distinct elements where $e_n$ is the $n$-th standard basis of $\mathbb{R}^N$ and $n \in [N]$.
\end{definition}
In the first step of our approach,  given a tensor and an identifier, we first construct a \emph{set} that remains invariant with respect to the permutation of the node entities.
\begin{definition} \label{def:set_s_x_a}
Let $K,N \in \mathbb{N}$. For any $l$-identifiable tensor $T \in \mathbb{T}^{l}_{N,K}$, let $\alpha^K_{n_1 \ldots n_K}(T) = T_{n_1 \ldots n_K} \in \mathbb{R}^D$ for all $n_1, \ldots, n_K \in [N]$. Then, we define recursively that: 
\[
    \forall k \in K~\text{down to}~1, n_1, \ldots, n_{k-1} \in [N]: \ \alpha^{k-1}_{n_1 \ldots n_{k-1}}(T) = \{ \big( e_{n_{k}}^{\top}l(T), \alpha^{k}_{n_1 \ldots n_k}(T) \big): n_{k} \in [N]  \} .  
\]
We define the set $S(T) = \{ \big( e_{n_1}^{\top}l(T), \alpha^1_{n_1}(T) \big) : n_1 \in [N] \}$.
\end{definition}
\begin{proposition}\label{prop:hypersets}
    Let $K,N \in \mathbb{N}$ and $T, T^{\prime}  \in \mathbb{T}^{l}_{N,K}$. Then, we have $S(T) = S(T^{\prime})$ if and only if $T^{\prime} = \pi(T)$ for a permutation $\pi \in \Pi(N)$, that is, $T \equiv T^{\prime}$.
\end{proposition}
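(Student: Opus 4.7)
The plan is to prove both implications separately, leveraging the fact that $l$-identifiability guarantees the labels $\{e_n^\top l(T) : n \in [N]\}$ (and similarly for $T'$) are $N$ \emph{distinct} elements, so every occurrence of these labels acts as a unique tag inside the nested-set construction.

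For the (\emph{if}) direction, I would suppose $T' = \pi(T)$ for some $\pi \in \Pi(N)$. The equivariance of $l$ from \Cref{def:tensor_idenfiable} gives $e_n^\top l(T') = e_{\pi(n)}^\top l(T)$. I would then prove by downward induction on $k$, from $k=K$ down to $k=1$, the identity
\[
\alpha^k_{n_1 \ldots n_k}(T') \;=\; \alpha^k_{\pi(n_1) \ldots \pi(n_k)}(T) \qquad \forall\, n_1,\ldots,n_k \in [N].
\]
The base case $k=K$ is immediate from $T'_{n_1\ldots n_K} = T_{\pi(n_1)\ldots \pi(n_K)}$. The inductive step unfolds the recursion in \Cref{def:set_s_x_a} and reindexes the innermost summation variable via $n_k \mapsto \pi(n_k')$ (permissible because $\pi$ is a bijection on $[N]$), then applies the inductive hypothesis together with the label equivariance. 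Taking $k=1$ and the outermost set yields $S(T') = S(T)$.

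For the (\emph{only if}) direction, I would suppose $S(T) = S(T')$. Since the first coordinates of the pairs in $S(T)$ are precisely the $N$ distinct labels $\{e_n^\top l(T) : n \in [N]\}$, and likewise for $S(T')$, the set equality forces a unique bijection $\pi \in \Pi(N)$ with $e_n^\top l(T) = e_{\pi(n)}^\top l(T')$; matching the corresponding second coordinates gives $\alpha^1_n(T) = \alpha^1_{\pi(n)}(T')$. I would then induct on $k$ to propagate this correspondence, establishing
\[
\alpha^k_{n_1\ldots n_k}(T) \;=\; \alpha^k_{\pi(n_1)\ldots \pi(n_k)}(T') \qquad \forall\, n_1,\ldots,n_k \in [N],
\]
so that $k=K$ gives $T_{n_1\ldots n_K} = T'_{\pi(n_1)\ldots \pi(n_K)}$, i.e., $T \equiv T'$.

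The hard part will be the inductive step of the reverse direction: ensuring that the \emph{same} bijection $\pi$ extracted from the outermost layer $S(T) = S(T')$ also matches every nested layer, rather than some different permutation appearing at each depth. The observation that resolves this is that, by construction, each set $\alpha^{k-1}_{n_1\ldots n_{k-1}}(T)$ is a collection of $N$ pairs whose first coordinates are \emph{always} the same distinct labels $\{e_n^\top l(T) : n \in [N]\}$, independent of the specific outer indices $n_1,\ldots,n_{k-1}$. Consequently, matching first coordinates of $\alpha^{k-1}_{n_1\ldots n_{k-1}}(T)$ with those of $\alpha^{k-1}_{\pi(n_1)\ldots \pi(n_{k-1})}(T')$ rigidly forces the matching bijection to be $\pi$ itself, and the corresponding equality of second coordinates passes the induction cleanly to level $k$. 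Without the $l$-identifiability hypothesis, distinct pairs could share a first coordinate and be swapped freely, and the consistency of $\pi$ across levels would break down.
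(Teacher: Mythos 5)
Your proposal is correct and follows essentially the same route as the paper's proof: both directions are handled by induction through the nested levels of the $\alpha^k$ construction, with the $l$-identifiability hypothesis used exactly as you describe to force the \emph{same} permutation $\pi$ at every depth (this is the content of the paper's Lemma on $\alpha^k_{n_1\ldots n_k}(T) = \alpha^k_{\pi(n_1)\ldots\pi(n_k)}(T^{\prime})$). The only nitpick is terminological: the $\alpha$'s are nested \emph{sets}, not sums, so the reindexing in the forward direction is of a set's index rather than a ``summation variable,'' but the argument is unaffected.
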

\Cref{prop:hypersets} establishes a bijection between identifiable tensors $\mathbb{T}^{l}_{N,K}$ --- upto a permutation factor--- and sets in $S(\mathbb{T}^{l}_{N,K})$. In \Cref{thm:tensor}, we give an algebraic characterization of (nonlinear) permutation-invariant tensor functions with distinct node features, that is, the sum-decompoable model is valid \emph{only} on identifiable tensors.
\begin{theorem}\label{thm:tensor}
    Let $K,N \in \mathbb{N}$. Let $f: \mathbb{T}_{N,K} \rightarrow \mathrm{codom}(f)$ be a permutation-invariant tensor function. Then we have 
    \[
        \forall T \in \mathbb{T}^{l}_{N,K} : f(T) = \rho \Big( \sum_{n_1 \in [N]} \phi_{1} (e_{n_1}^{\top}l(T), \beta^1_{n_1}(T) ) \Big)
    \]
    where $l: \mathbb{T}_{N,K} \rightarrow \mathrm{codom}(l)$ is an identifier function, $\beta^{K}_{n_1 \ldots n_K}(T) = T_{n_1 \ldots n_{K}} \in \mathbb{R}^D$ for all $n_1, \ldots, n_K \in [N]$, and
    \[
        \forall k\in [K], n_1,\ldots, n_{k-1} \in [N]: \ \beta^{k-1}_{n_1 \ldots n_{k-1}}(T) = \sum_{n_{k} \in [N]} \phi_{k}(e_{n_{k}}^{\top}l(T), \beta^{k}_{n_1 \ldots n_k}(T) ),
    \]
    where $\phi_k $ is continuous over its compact domain and its codomain resides in $\mathbb{R}^{D_k}$ ($k \in [K]$), and
    \begin{enumerate}
        \item $D_k = 2 ( M + D_{k+1}) N$ if  $\mathrm{codom}(l) \subset \mathbb{Q}^{N \times M}$
        \item  $D_k = {N+ D_{k+1} \choose N  } -1 $ if $\mathrm{codom}(l) \subset \mathbb{R}^{N \times M}$
    \end{enumerate}
    for all $k \in [K-1]$ and $D_K = D$. The function $\rho$ is defined on $\mathbb{D} \subset \mathbb{R}^{D_1}$ where
    \[
        \mathbb{D} = \{ \sum_{n_1 \in [N]} \phi_{1} (e_{n_1}^{\top}l(T), \beta^1_{n_1}(T) ) : T \in \mathbb{T}^{l}_{N,K} \},
    \]
    and it is not guaranteed to have a continuous extension to $\mathbb{R}^{D_1}$.
\end{theorem}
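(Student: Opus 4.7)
The plan is to prove \Cref{thm:tensor} by reverse induction on the level $k$ from $K$ down to $1$, exploiting the fact that at each recursion level the hyperset $\alpha^{k-1}_{n_1\dots n_{k-1}}(T)$ is a multiset of $N$ pairs whose first coordinate (a row of the identifier $l(T)$) plays the role of a per-element label. This is precisely the setting in which the identifiable-multiset machinery from \Cref{thm:perm_w_distinct_labels}, or alternatively the general polynomial-based encoder of \Cref{thm:multi_decomposition_continous}, can be invoked to convert a hyperset into a single vector. First I would use \Cref{prop:hypersets} to observe that $T \mapsto S(T)$ is a bijection between congruence classes of $\mathbb{T}^{l}_{N,K}$ and their images, so the permutation-invariant $f$ descends to a well-defined function on $S(\mathbb{T}^{l}_{N,K})$. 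The task then reduces to producing a vector encoding on $\mathbb{T}^{l}_{N,K}$ whose fibers coincide with the congruence classes.

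The core inductive claim is: for each $k \in \{K,K-1,\dots,1\}$, there exists a continuous $\phi_k$ with codomain in $\mathbb{R}^{D_k}$ such that the assignment $\alpha^k_{n_1\dots n_k}(T) \mapsto \beta^k_{n_1\dots n_k}(T)$ is a well-defined injection on the hypersets arising from tensors $T \in \mathbb{T}^{l}_{N,K}$. The base case $k = K$ is immediate because $\alpha^K = \beta^K = T_{n_1\dots n_K}$. For the inductive step from $k$ to $k-1$, note that $\alpha^{k-1}_{n_1\dots n_{k-1}}(T)$ is a multiset of $N$ pairs $(e^{\top}_{n_k} l(T),\, \alpha^k_{n_1\dots n_k}(T))$ whose first coordinates are pairwise distinct by the $l$-identifiability of $T$, and whose second coordinates, by the inductive hypothesis, admit a unique vector representation $\beta^k_{n_1\dots n_k}(T) \in \mathbb{R}^{D_k}$. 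The resulting multiset of pairs lives in $\mathbb{R}^{M+D_k}$ and is $\ell$-identifiable with $\ell$ equal to the projection onto its first $M$ coordinates. Applying \Cref{thm:perm_w_distinct_labels} (in the rational-identifier case, yielding $D_{k-1} = 2(M+D_k)N$) or \Cref{thm:multi_decomposition_continous} (in the general real case, yielding the binomial bound) produces the desired continuous $\phi_k$ whose sum-encoding gives $\beta^{k-1}$ as an injection of $\alpha^{k-1}$.

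Applying this step one final time at $k=1$ produces $\beta^0(T) = \sum_{n_1}\phi_1(e^{\top}_{n_1}l(T),\, \beta^1_{n_1}(T))$, which is an injective encoding of $S(T)$, hence of the congruence class of $T$. I would then define $\rho$ on $\mathbb{D} = \{\beta^0(T) : T \in \mathbb{T}^{l}_{N,K}\} \subseteq \mathbb{R}^{D_1}$ by $\rho(\beta^0(T)) := f(T)$; this is well-defined because both $\beta^0$ and the permutation-invariant $f$ factor through the same congruence class. Exactly as in \Cref{thm:perm_w_distinct_labels}, the set $\mathbb{D}$ need not be compact (nor even open), so no continuous extension of $\rho$ to all of $\mathbb{R}^{D_1}$ is guaranteed, matching the theorem's caveat.

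The hardest step will be the inductive propagation of injectivity. Specifically, I need to confirm that (i) distinct hypersets $\alpha^{k-1}(T) \neq \alpha^{k-1}(T')$ yield \emph{distinct} multisets of pairs in $\mathbb{R}^{M+D_k}$ after substituting $\beta^k$ for $\alpha^k$, which requires that the inductive injection $\alpha^k \mapsto \beta^k$ is compatible across the various index tuples; and (ii) those multisets are $\ell$-identifiable in the sense of \Cref{def:identifiable_sets}, so that the hypotheses of \Cref{thm:perm_w_distinct_labels} are satisfied at every level. A secondary bookkeeping concern is the dimension recursion, since the two cases of the theorem ($\mathrm{codom}(l) \subset \mathbb{Q}^{N\times M}$ versus $\subset \mathbb{R}^{N\times M}$) arise from invoking \emph{different} sum-decomposition lemmas inside the same inductive machinery.
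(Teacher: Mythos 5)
Your proposal follows essentially the same route as the paper's proof: factor the permutation-invariant $f$ through $S(T)$ via \Cref{prop:hypersets}, build each $\phi_k$ so that the corresponding multiset encoder $\Phi_k$ is injective (invoking \Cref{thm:perm_w_distinct_labels} in the rational-label case and the polynomial encoder of \Cref{thm:multi_decomposition} in the real case), propagate injectivity down the levels by induction using the distinctness of the rows of $l(T)$, and define $\rho$ as $f$ composed with the inverse of the resulting injective encoding on $\mathbb{D}$. The inductive injectivity argument you flag as the hardest step is exactly the paper's Lemma on the $\beta^k$'s, and your handling of it is sound.
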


\section{Conclusion}
In this work, we provide several contributions regarding the universal representation theory of multiset functions and permutation-invariant tensor functions. We show that there exists a universal sum-decomposition model for multivariate multiset functions and provide the best available bound on the dimension of encoded multiset features. Our extensive analyses rely on the novel notion of $\ell$-identifiable multisets --- which allows us to uniquely label distinct elements of multisets. Our proposed decomposable model for permutation-invariant tensor functions generalizes the existing models for linear permutation invariant tensor functions used as the layers of IGNs. It is important to note that our universal representation (via sum-decomposables) is stronger than the concept of universal approximation. All these results lead to universal approximation results of multiset (or tensor) functions by sum-decomposables --- which suggest natural architectures for neural networks, similar to DeepSets. 

\bibliography{references}

\appendix

\newpage
\section{\Cref{thm:multi_decomposition} and Its Proof}\label{sec:multi_decomposition}   
    \begin{theorem}\label{thm:multi_decomposition}
    {\bf \emph{Any}} multivariate multiset function $f:\mathbb{X}_{\mathbb{R}^D,N} \rightarrow \codom(f)$ --- over a multisets of $N$ elements in $\mathbb{R}^D$ --- is sum-decomposable via $\mathbb{R}^{{N+D \choose D}-1}$, that is, 
    \[
        \forall X \in \mathbb{X}_{\mathbb{R}^D,N}: \ f(X) = \rho \circ \Phi( X ), \ \mbox{where} \ \Phi(X) \stackrel{\mathrm{def}}{=} \sum_{x \in X} \phi(x),
    \]
    where $\phi: \mathbb{R}^{D} \rightarrow \mathrm{codom}(\phi)\subseteq \mathbb{R}^{{N+D \choose D}-1}$ is a continuous function and $\rho: \mathrm{codom}(\Phi) \rightarrow \codom(f)$.
    \end{theorem}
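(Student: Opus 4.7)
The plan is to choose $\phi$ so that $\Phi$ is the tuple of power-sum multi-symmetric polynomials of total degree at most $N$, and then verify that this collection already separates multisets of cardinality $N$ in $\mathbb{R}^D$. Concretely, let $\mathcal{A} = \{\alpha \in \mathbb{Z}_{\geq 0}^D : 1 \leq |\alpha| \leq N\}$, so that $|\mathcal{A}| = \binom{N+D}{D} - 1$. I would take $\phi(x) = (x^\alpha)_{\alpha \in \mathcal{A}}$ with $x^\alpha := \prod_d x_d^{\alpha_d}$, which is continuous as each coordinate is a monomial, so $\Phi(X) = (e_\alpha(X))_{\alpha \in \mathcal{A}}$ with $e_\alpha(X) = \sum_{x \in X} x^\alpha$. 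If I can show $\Phi$ is injective on $\mathbb{X}_{\mathbb{R}^D, N}$, then setting $\rho := f \circ \Phi^{-1}$ on $\mathrm{codom}(\Phi)$ completes the sum-decomposition and the theorem follows; crucially, no continuity is required of $\rho$, which is exactly what lets the argument cover \emph{discontinuous} $f$ and go beyond the Taylor-based universal-approximation proofs.

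The main obstacle is the injectivity of $\Phi$, and I would attack it via a parameterized-root argument, as foreshadowed in the introduction. For an auxiliary parameter $t \in \mathbb{R}^D$ and $X = \{\{x_1, \ldots, x_N\}\}$, consider the monic polynomial
\[
    Q_X(s;t) \;=\; \prod_{i=1}^N \bigl(s - \langle t, x_i\rangle\bigr) \;\in\; \mathbb{R}[s].
\]
Its coefficients are the (signed) elementary symmetric polynomials in the scalars $\langle t, x_i\rangle$, and by Newton's identities they are polynomial functions of the first $N$ power sums
\[
    p_k(t) \;=\; \sum_{i=1}^N \langle t, x_i\rangle^k \;=\; \sum_{|\alpha|=k} \binom{k}{\alpha}\, t^\alpha\, e_\alpha(X), \qquad k = 1, \ldots, N.
\]
Each $p_k(t)$ is a polynomial in $t$ whose coefficients are entries of $\Phi(X)$, so $\Phi(X) = \Phi(Y)$ forces $Q_X(\,\cdot\,;t) \equiv Q_Y(\,\cdot\,;t)$ for every $t \in \mathbb{R}^D$, and consequently the multisets of roots $\{\{\langle t, x_i\rangle\}\}_i$ and $\{\{\langle t, y_j\rangle\}\}_j$ coincide pointwise in $t$.

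To upgrade this to $X = Y$, I would choose $t^\star \in \mathbb{R}^D$ outside the finite union of hyperplanes $\{t : \langle t, z - z'\rangle = 0\}$, taken over all pairs of distinct vectors $(z,z')$ drawn from the (finite) set-theoretic union of the supports of $X$ and $Y$. Each such hyperplane has Lebesgue measure zero, so a generic $t^\star$ suffices. For this $t^\star$, the functional $\langle t^\star, \cdot\rangle$ is injective on that common support, and matching multiplicities of each value in the shared root multiset forces $X$ and $Y$ to have exactly the same distinct elements with the same multiplicities, i.e.\ $X = Y$. With injectivity in hand, $\rho := f \circ \Phi^{-1}$ is unambiguously defined on $\mathrm{codom}(\Phi) \subseteq \mathbb{R}^{\binom{N+D}{D}-1}$ and $f = \rho \circ \Phi$ by construction. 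The delicate point is the injectivity reduction above: calibrating Newton's identities so that power sums of degree at most $N$ suffice (which is where the cardinality bound $N$ enters), and verifying that the multi-index count is $\binom{N+D}{D}-1$ rather than the naive $(N+1)^D-1$ that a literal base-$(N+1)$ enumeration would give.
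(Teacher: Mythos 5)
Your proposal is correct, and it shares the paper's skeleton --- the same monomial encoder indexed by $\{\alpha \in \mathbb{Z}_{\geq 0}^D : 1 \leq |\alpha| \leq N\}$, the same reduction via Newton's identities showing that $\Phi(X)$ determines the coefficients of $\prod_{x\in X}(s - \langle t, x\rangle)$ for every $t$, and hence that $\Phi(X)=\Phi(Y)$ forces the parameterized root multisets $\{\{\langle t, x\rangle : x\in X\}\}$ and $\{\{\langle t, y\rangle : y\in Y\}\}$ to coincide for all $t$. Where you genuinely diverge is the last step, which is the crux of the theorem. The paper recovers $X$ \emph{constructively from $\Phi(X)$ alone}: it picks a ``separator'' $z^*$ maximizing the number of distinct roots, shows the sorting permutation of $z^{\top}X$ is locally stable under perturbations $z^* + \delta e_d$, and reads off the $d$-th coordinates of the (consistently ordered) elements as directional derivatives of $\mathrm{sort}\circ\mathrm{roots}$; this yields an explicit formula for $\Phi^{-1}$ that the paper reuses later (e.g.\ in proving continuity of $\Phi^{-1}$ for \Cref{thm:multi_decomposition_continous}). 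You instead prove injectivity non-constructively: given two candidate preimages $X$ and $Y$, you choose a single $t^\star$ off the finite union of hyperplanes $(z-z')^{\perp}$ over distinct pairs in $\mathrm{supp}(X)\cup\mathrm{supp}(Y)$, so that $\langle t^\star,\cdot\rangle$ is injective on the joint support, and then match multiplicities in the common root multiset to conclude $X=Y$. This is shorter and avoids the sorting/perturbation machinery entirely, and it fully suffices for the theorem as stated since only well-definedness of $\rho = f\circ\Phi^{-1}$ on $\mathrm{codom}(\Phi)$ is needed; the trade-off is that your $t^\star$ depends on both $X$ and $Y$, so you do not obtain the explicit inversion procedure that the paper's separator-plus-directional-derivative construction provides. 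Your side remark on the dimension count ($\binom{N+D}{D}-1$ rather than $(N+1)^D-1$) is also correct and matches the paper's occupancy-problem computation in \Cref{lem:psi_phi}.
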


Note that compared to Theorem \ref{thm:multi_decomposition_continous}, the function $f$ in Theorem \ref{thm:multi_decomposition} is not necessarily continuous, and the decoder function $\rho$ is not necessarily continuous as well. 

    \subsection{Proof}
    Let $N, D \in \mathbb{N}$. We want to prove that for any multivariate multiset function $f:\mathbb{X}_{\mathbb{R}^D,N} \rightarrow \codom(f)$, there exists a sum-decomposition via $\mathbb{R}^{{N+D \choose D}-1}$.

    {\bf Trivial case of $\bf{N=1}$.} We define functions $\phi$ and $\rho$ as follows:
    \[
            \forall x \in \mathbb{R}^D: \phi(x) = x, \ \mbox{and} \ \rho(x) = f(\{\{ x \}\}),
    \]
    where $\mathrm{dom}(\phi) = \mathbb{R}^D$. Since $\Phi( \{\{ x\} \}) = \phi(x)$, $\mathrm{codom}(\rho) = \mathrm{codom}(\Phi) = \mathrm{codom}(\phi) = \mathbb{R}^D =\mathbb{R}^{{1+D \choose D} - 1}$, $\mathrm{codom}(\rho) = \mathrm{codom}(f)$, and $f(\{\{x \}\}) = \rho \circ \Phi(\{ \{ x \} \})$, we arrive at the theorem's statement for $N=1$. 
    \begin{remark}
        In our notation, depending on the context, $x_n$ can mean either (1) the $n$-th coordinate (element) of vector $x$ (say in $\mathbb{R}^D$) or (2) a vector indexed by $n$, for example, $x_1, \ldots, x_N \in \mathbb{R}^D$. In the latter case, we do emphasize the domain of the vector a priori, that is, $x_n \in \mathbb{R}^D$.
    \end{remark}
    {\bf General case of $\bf{N \geq 2}$.} We break down our approach into two steps: 
    \begin{enumerate}
        \item We show that there exists a function $\phi:\mathbb{R}^{D} \rightarrow \mathrm{codom}(\phi) \subseteq \mathbb{R}^{{N+D \choose D}-1}$ such that $\Phi(X) = \sum_{x \in X} \phi(x)$ is an injective multiset function, that is, $\Phi^{-1}$ is well-defined on $\codom(\Phi)$.
        \item Let $\rho = f \circ \Phi^{-1}$. This immediately proves $f = \rho \circ \Phi(X) =  \rho \big( \sum_{x \in X} \phi(x) \big)$.
    \end{enumerate}
    This is an extension to the existing univariate result (that is, $D = 1$); refer Theorem 2 in \citep{zaheer2017deep}. In the one-dimensional case, \cite{zaheer2017deep} prove that $\Phi$ is an invertible function by showing that, given $\Phi(X)$, one can construct a univariate polynomial $p(t; \Phi(X))$ whose roots are $X$, that is, $\Phi^{-1} \circ \Phi(X) = \mathrm{roots} \circ p(t; \Phi(X)) =  X$ where $\mathrm{roots}$ returns the multiset of roots of a polynomial equation. Moreover, the appropriate choice for the basis function $\phi$ --- which makes this analysis tractable --- gives a bound for the latent dimension, that is, dimension of the ambient vector space containing $\mathrm{codom}(\Phi)$.
    
    In our approach, we arrive at the appropriate choice for $\phi$ constructing a {\bf multivariate} polynomial whose parameterized roots are {\bf related} to $X$. In what follows, we $(1)$ introduce the basis function $\phi$, $(2)$ construct an appropriate multivariate polynomial $p(t;z , \Phi(X))$ --- parameterized by both $t \in \mathbb{R}$ and $z \in \mathbb{R}^D$ --- and $(3)$ extract $X$ from its {\bf parameterized} roots. In step $(3)$, we introduce novel techniques for analyzing parameterized multisets --- akin to computing directional derivatives for multivariate functions. We summarize these steps in \Cref{fig:proof1}.

    \begin{figure}[t!]
        \includegraphics[width=\linewidth]{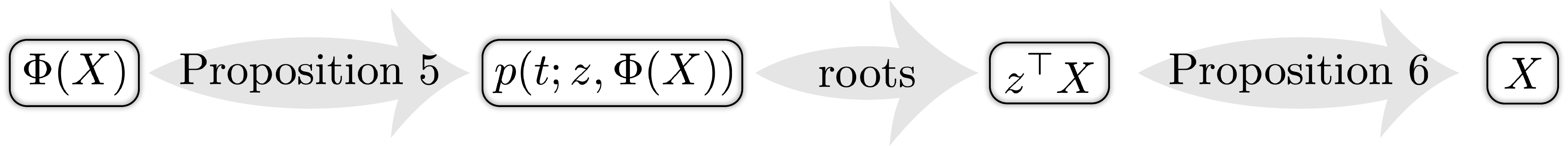}
        \caption{Proof sketch for the injectivity of $\Phi$.}
        \label{fig:proof1}
    \end{figure}
    
    The following definition introduces several frequently used functions in this proof.
    \begin{definition}\label{def:gap_sort_diam_unique}
        For any multiset of real scalars $X = \{ \{ x_n \in \R : n \in [N] \} \}$ where $N \geq 2$, we let
        \begin{align*}
            &\mathrm{gap}(X) = \min_{\substack{n, n^{\prime} \in [N] \\ x_n \neq x_n^{\prime} }} |x_n - x_{n^\prime}| , &&\mathrm{diam}(X) = \max_{\substack{n, n^{\prime} \in [N] \\ n \neq n^{\prime} }} |x_n - x_{n^\prime}|, \\
            &\mathrm{unique}(X) = \{x_n : n \in [N]  \},  &&\mathrm{sort} ( X) = \big( x_{\pi(n)} \big)_{n \in [N]} \in \R^N,
        \end{align*}
        where $\pi: [N] \rightarrow [N]$ is a permutation operator such that $x_{\pi(1)} \geq x_{\pi(2)} \geq \cdots \geq x_{\pi(N)}$. 
    \end{definition}
    \begin{remark}
        If $x_n, x_{n^{\prime}} \in X$ where $x_n = x_{n^{\prime}}$ for distinct $n, n^{\prime} \in [N]$, then the permutation operator $\pi$ in~\Cref{def:gap_sort_diam_unique} is not unique; but any such permutation $\pi$ results in the same sorted vector $( x_{\pi(n)})_{n \in [N]}$. Hence, $\mathrm{sort} (X)$ is well-defined for any multiset of real-valued scalars $X$.
    \end{remark}
    \begin{remark}
        Let $X$ be a multiset of real scalars. Then, $\mathrm{gap}(X)$ is well-defined only if the cardinality of $\mathrm{unique}(X) $ is strictly greater than one, that is, $|\mathrm{unique}(X)| > 1$. 
    \end{remark}
    We consider a class of multivariate polynomials paramterized with $t \in \R$ and $z \in \mathbb{R}^D$. In~\Cref{prop:construct_multinomial}, we introduce a function $\phi$ that enables us to construct each polynomial  --- in the aforementioned class --- using only $\Phi(X) = \sum_{x \in X} \phi(x)$. In other words, knowing $t$, $z$ and $\Phi(X)$, we can represent the polynomial  $\prod_{x \in X} (t-z^{\top}x)$. This allows us to write the polynomial $\prod_{x \in X} (t-z^{\top}x)$ as a function depending on variables $t, z$ and $\Phi(X)$, which we call $p(t; z, \Phi(X))$.
    \begin{proposition}\label{prop:construct_multinomial}
        Let $N, D \in \mathbb{N}$ and $\phi:\mathbb{R}^{D} \rightarrow \mathrm{codom}(\phi) \subseteq \mathbb{R}^{{N+D \choose D}-1}$ be the following continuous function:
        \[
            \forall x \in \mathbb{R}^D: \ \phi(x) = \big(  \prod_{d=1}^D x_{d}^{k_d} \big)_{k \in \mathcal{K}_{N}^D} \in \R^{{N+D\choose{D}}-1},
        \]
        where $k = (k_d)_{d \in [D]}$ is a $D$-tuple and $\mathcal{K}_N^D = \{ (k_d)_{d \in [D]}: k_1+\ldots+ k_D \in [N], k_1, \ldots, k_D \geq 0 \}$. Then, for all $X \in \mathbb{X}_{\mathbb{R}^D,N}$, $\Phi(X) = \sum_{x \in X} \phi(x)$ suffices to construct the following multivariate polynomial:
        \begin{equation} \label{eq:multinomial_polynomial_in_two_vs}
            \forall t \in \R, z \in \mathbb{R}^D: \  \prod_{x \in X} (t-z^{\top}x) = p\big(t; z, \Phi(X) \big).
        \end{equation}
    \end{proposition}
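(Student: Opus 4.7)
\textbf{Proof plan for \Cref{prop:construct_multinomial}.} The plan is to explicitly exhibit $\prod_{x \in X}(t - z^\top x)$ as a polynomial expression in $t$, $z$, and the coordinates of $\Phi(X)$, by routing through the power-sum / elementary symmetric polynomial identities. Let me first note that the continuity of the proposed $\phi$ is immediate: each coordinate is a monomial in $x \in \mathbb{R}^D$, hence continuous on $\mathbb{R}^D$. The dimension count is also standard: the number of $D$-tuples $(k_1,\ldots,k_D)$ of nonnegative integers with $k_1+\cdots+k_D \in [N]$ equals $\binom{N+D}{D}-1$ (subtract one for the excluded all-zero tuple), so the stated codomain is correct.

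For the core identity, I will first expand the product in $t$: for a fixed $z \in \mathbb{R}^D$,
\[
\prod_{x \in X}(t - z^\top x) \;=\; \sum_{k=0}^N (-1)^k\, e_k\!\bigl(\{\{z^\top x : x \in X\}\}\bigr)\, t^{N-k},
\]
where $e_k$ denotes the elementary symmetric polynomial of degree $k$ in the multiset $\{\{z^\top x : x \in X\}\}$ of $N$ scalars. Newton's identities then express each $e_k$, for $k \in [N]$, as a fixed universal polynomial in the power sums
\[
p_j(X, z) \;\stackrel{\mathrm{def}}{=}\; \sum_{x \in X}(z^\top x)^j, \qquad j=1,\ldots,N,
\]
and $e_0 = 1$. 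Hence it suffices to show that each $p_j(X,z)$ is a known function of $z$ and $\Phi(X)$.

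The final step is the multinomial expansion. For each $j \in [N]$,
\[
p_j(X,z) \;=\; \sum_{x \in X}(z^\top x)^j \;=\; \sum_{\substack{k_1+\cdots+k_D = j \\ k_d \geq 0}} \binom{j}{k_1,\ldots,k_D}\, z_1^{k_1}\cdots z_D^{k_D}\, \sum_{x \in X} x_1^{k_1}\cdots x_D^{k_D}.
\]
The inner sums $\sum_{x \in X} x_1^{k_1}\cdots x_D^{k_D}$ are, by the definition of $\phi$, exactly the coordinates of $\Phi(X) = \sum_{x\in X}\phi(x)$ indexed by the multi-index $(k_1,\ldots,k_D) \in \mathcal{K}_N^D$. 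Thus $p_j(X,z)$ is a polynomial in $z$ whose coefficients are linear in the entries of $\Phi(X)$. Substituting back through Newton's identities yields each $e_k$ as a polynomial in $z$ and $\Phi(X)$, and assembling these as coefficients of powers of $t$ produces the desired polynomial $p\bigl(t;z,\Phi(X)\bigr)$.

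I do not anticipate any genuine obstacle here; the argument is a bookkeeping exercise combining the multinomial theorem with Newton's identities, and the dimension $\binom{N+D}{D}-1$ arises precisely because those are the only monomials whose power sums across $X$ are needed to recover $p_j(X,z)$ for $j \leq N$. The mild point requiring care is to ensure that $\phi$ omits the constant $1$ coordinate (since $\sum_{x\in X} 1 = |X|$, which is fixed to $N$ on $\mathbb{X}_{\mathbb{R}^D, N}$ and hence carries no information), which is why $\mathcal{K}_N^D$ excludes the zero multi-index and gives the $-1$ in the dimension bound.
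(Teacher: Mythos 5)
Your proposal is correct and follows essentially the same route as the paper's proof: expand $\prod_{x\in X}(t-z^{\top}x)$ via elementary symmetric polynomials, recover those from the power sums $\sum_{x\in X}(z^{\top}x)^j$ using the Newton--Girard identities, and express each power sum as a linear functional of $\Phi(X)$ via the multinomial theorem (the paper's \Cref{lem:psi_phi}), with the same stars-and-bars count giving the latent dimension ${N+D\choose D}-1$.
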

    To show that $\Phi$ is an invertible function, we want to argue that the multiset $X$ can be uniquely recovered form the multivariate polynomial in equation \eqref{eq:multinomial_polynomial_in_two_vs}, that is, $p\big(t; z, \Phi(X) \big)$. Let us proceed with the following definitions. 
    \begin{definition}\label{def:roots_separators}
        For any $z \in \mathbb{R}^D$, multiset $X$ of at least two $D$-dimensional vectors, and multivariate polynomial $p\big(t;z, \Phi(X) \big)$ in the equation \eqref{eq:multinomial_polynomial_in_two_vs}, we formalize the following functions:
        \begin{itemize}
            \item $\mathrm{roots} \circ p\big(t;z, \Phi(X) \big) =  \{ \{ t : p\big(t;z, \Phi(X) \big) = 0 \} \} =\{ \{ z^{\top} x : x \in X \} \} \stackrel{\mathrm{def}}{=} z^{\top} X$
            \item $\mathrm{separators} \circ  p\big(t; z, \Phi(X) \big) = \mathrm{argmax}_{z \in \mathbb{R}^D} | \mathrm{unique} \circ \mathrm{roots} \circ p\big(t; z, \Phi(X) |$ ,
        \end{itemize}
        where $|\cdot|$ returns the cardinality of its input set.
    \end{definition}    
    \begin{definition} \label{def:directional_derivative}
        Let $X$ be a multiset of at least two $D$-dimensional vectors. If exists, the directional derivative of $\mathrm{sort} \big( z^{\top} X \big)$ --- where $z^{\top} X =  \{ \{ z^{\top} x: x \in X \} \}$ --- at $z \in \mathbb{R}^D$ in the direction of unit norm $v \in \mathbb{R}^D$ is given as follows:
        \begin{equation}\label{eq:x_pi_d}
            \nabla_{v} \mathrm{sort} \big( z^{\top} X \big) =\lim_{\delta \rightarrow 0}  \frac{1}{\delta} \Big( \mathrm{sort} \big( (z + \delta v)^{\top} X \big)  - \mathrm{sort}\big( z^{\top} X \big) \Big)  .
        \end{equation}    
    \end{definition}
    In~\Cref{prop:zXtoX}, we show how to retrieve $X$ from $\mathrm{roots} \circ p\big(t;z, \Phi(X) \big)$, that is, the parameterized multiset $z^{\top}X$.
    \begin{proposition} \label{prop:zXtoX}
        For any $z \in \mathbb{R}^D$, multiset $X = \{ \{ x_n \in \mathbb{R}^D : n \in [N] \} \}$ where $N \geq 2$, and the multivariate polynomial $p\big(t;z, \Phi(X) \big)$ in the equation \eqref{eq:multinomial_polynomial_in_two_vs}, we have
        \[
            \mathrm{separators} \circ  p\big(t; z, \Phi(X) \big) \neq \emptyset.
        \]
        Moreover, for any $z^* \in \mathrm{separators} \circ  p\big(t; z, \Phi(X) \big)$, the directional derivative of $\mathrm{sort}\circ \mathrm{roots} \circ p\big(t;z, \Phi(X) \big)$ is well-defined and we have:
        \[
            \forall d \in [D] : \nabla_{e_{d}} \mathrm{sort}\circ \mathrm{root} \circ p\big(t; z, \Phi(X) |_{z = z^*} = (e_d^{\top} x_{\pi_{z^*}(n)})_{n \in [N]},
        \]
        where $e_d \in \mathbb{R}^D$ is the $d$-th standard basis vector for $\mathbb{R}^D$ ($d \in [D]$), and $\pi_{z^*}: [N] \rightarrow [N]$ is a permutation operator that sorts the elements ${z^*}^{\top}X$ --- see~\Cref{def:gap_sort_diam_unique}. 
    \end{proposition}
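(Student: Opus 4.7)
The plan is to prove the two assertions in sequence: first nonemptiness of $\mathrm{separators}$, then the directional-derivative identity at any separator.

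For the first claim, I would use a generic-position argument. By definition $\mathrm{separators}$ is the argmax over $z \in \mathbb{R}^D$ of $|\mathrm{unique} \circ \mathrm{roots} \circ p(t; z, \Phi(X))|$, and by \Cref{prop:construct_multinomial} the roots of $p(t; z, \Phi(X))$ are exactly $z^\top X = \{\{z^\top x : x \in X\}\}$. The count $|\mathrm{unique}(z^\top X)|$ is bounded above by $|\mathrm{unique}(X)|$, with equality precisely when $z^\top(x - x') \neq 0$ for every pair of distinct elements $x, x' \in \mathrm{unique}(X)$. The failure locus is the finite union of hyperplanes $\bigcup_{x \neq x' \in \mathrm{unique}(X)} \{z \in \mathbb{R}^D : z^\top(x - x') = 0\}$, whose complement is open, dense, and nonempty; every $z^*$ in the complement attains the maximum and hence belongs to $\mathrm{separators}$. (When $|\mathrm{unique}(X)| = 1$, the maximum equals $1$ and is attained by every $z$.)

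For the second claim, I would exploit the positive gap at a separator $z^*$. When $|\mathrm{unique}(X)| = 1$, the result is immediate because both sides of the claimed identity collapse to $N$ copies of $e_d^\top x_1$. Otherwise $\mathrm{gap}(z^{*\top} X) > 0$ is well-defined. Fixing any sorting permutation $\pi_{z^*}$, I would distinguish two cases for consecutive sorted indices $n, n+1$. If $z^{*\top} x_{\pi_{z^*}(n)} > z^{*\top} x_{\pi_{z^*}(n+1)}$, the strict gap is at least $\mathrm{gap}(z^{*\top} X)$ and is robust under small perturbations $\delta e_d$. If instead $z^{*\top} x_{\pi_{z^*}(n)} = z^{*\top} x_{\pi_{z^*}(n+1)}$, then the separator property forces $x_{\pi_{z^*}(n)} = x_{\pi_{z^*}(n+1)}$, so the corresponding coordinates of $e_d^\top X$ also agree and the equality persists exactly. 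Hence for all sufficiently small $|\delta|$, $\pi_{z^*}$ remains a valid sort permutation of $(z^* + \delta e_d)^\top X$.

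With the permutation stabilized, the perturbed sort evaluates to
\[
    \mathrm{sort}\bigl((z^* + \delta e_d)^\top X\bigr) = \bigl(z^{*\top} x_{\pi_{z^*}(n)} + \delta\, e_d^\top x_{\pi_{z^*}(n)}\bigr)_{n \in [N]},
\]
so subtracting $\mathrm{sort}(z^{*\top} X)$, dividing by $\delta$, and sending $\delta \to 0$ yields the claimed directional-derivative identity, with identical left and right limits. The main subtlety I anticipate is the bookkeeping when $\pi_{z^*}$ is non-unique due to ties among repeated elements of $X$; the case analysis above shows that any admissible choice produces the same sorted vector and the same derivative, so the result is well-posed independent of that ambiguity.
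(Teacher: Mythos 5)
Your proposal is correct and follows essentially the same route as the paper's proof: the nonemptiness of $\mathrm{separators}$ via the complement of a finite union of hyperplanes $(x-x')^{\perp}$, and the derivative identity via showing the sorting permutation is stable under small perturbations $\delta e_d$ using the gap at a separator (the paper's Claim~\ref{claim:perturbed_permutation}), with the same case split on $|\mathrm{unique}(X)|=1$ and the same resolution of non-uniqueness of $\pi_{z^*}$ through ties corresponding to genuinely repeated elements. Your check on consecutive sorted indices is a slightly leaner phrasing of the paper's pairwise comparison, but the substance is identical.
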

    In summary, given $\Phi(X)  \in \mathbb{R}^{{N+D \choose D}-1}$, we can construct a multivariate polynomial $p\big(t;z, \Phi(X)\big)$ with parameterized roots $z^{\top} X$; see~\Cref{prop:construct_multinomial}. Then, we can pick a fixed vector $z^* \in \mathrm{separators} \circ  p\big(t; z, \Phi(X) \big) \neq \emptyset$; see~\Cref{def:roots_separators} and~\Cref{prop:zXtoX}. We then prove the following result:
    \[
        \forall d \in [D] : \nabla_{e_{d}} \mathrm{sort}\circ \mathrm{root} \circ p\big(t; z, \Phi(X) |_{z = z^*} = (e_d^{\top} x_{\pi_{z^*}(n)})_{n \in [N]},
    \]
    where $\nabla_{e_{d}}$ computes the directional derivative (see \Cref{def:directional_derivative}) in the direction of $e_d$ --- the $d$-th standard basis of $\mathbb{R}^D$ --- for $d \in [D]$, and $x_n \in \mathbb{R}^D$ is an element of $X$ indexed by $n$. We retrieve $X$ as follows:
    \begin{align*}
        \{ \{ (e_d^{\top} x_{\pi_{z^*}(n)})_{d \in [D]} \in \mathbb{R}^D :n \in [N] \} \} &=\{ \{ (e_d^{\top} x_{n} )_{d \in [D]} \in \mathbb{R}^D :n \in [N] \} \}  \\
        &= X.
    \end{align*}
    This result does not depend on the specific choices for the permutation operator $\pi_{z^*}$ and 
     $z^* \in \mathrm{separators} \circ  p\big(t; z, \Phi(X) \big)$. Therefore, $\Phi$ is an invertible multiset function, that is,
    \begin{equation}\label{eq:phi_inverse}
        \Phi^{-1} \circ \Phi(X) = \{ \{ \begin{pmatrix}
            \big( \nabla_{e_{1}} \mathrm{sort}\circ \mathrm{root} \circ p\big(t; z, \Phi(X) |_{z = z^*}\big)_{n} \\
            \vdots \\
            \big( \nabla_{e_{D}} \mathrm{sort}\circ \mathrm{root} \circ p\big(t; z, \Phi(X) |_{z = z^*} \big)_n
        \end{pmatrix}  \in \mathbb{R}^D :n \in [N] \} \}, 
    \end{equation}
    where $z^* \in \mathrm{separators} \circ  p\big(t; z, \Phi(X) \big)$, and subscript $n$ denotes the $n$-the element of the $N$-dimensional vectors. The function $\Phi^{-1}$ is only well-defined on $\codom(\Phi)$; see equation \eqref{eq:phi_inverse}. \\
    Now we let $\rho: \mathrm{codom}(\Phi) \rightarrow \mathrm{codom}(f)$ where
    \[
        \forall y \in \mathrm{codom}(\Phi) \subseteq \mathbb{R}^{{N+D \choose D}-1} : \rho (y) = f \circ \Phi^{-1}(y).
    \]
    This proves the sum-decomposition representation claim of the theorem, that is, $f = \rho \circ \Phi$. In \Cref{sec:proof_construct_multinomial,sec:zXtoX} we provide proofs of \Cref{prop:construct_multinomial,prop:zXtoX}. In \Cref{section:example}, we provide two illustrative examples on computing $\Phi^{-1}$.
\subsection{Proof of \Cref{prop:construct_multinomial}}\label{sec:proof_construct_multinomial}
    We expand the expression in equation \eqref{eq:multinomial_polynomial_in_two_vs} as follows:
    \begin{equation}\label{eq:multinomial_polynomial_in_two_vs_coeff}
        \forall t \in \R,z \in \mathbb{R}^D: \prod_{x \in X} (t-z^{\top}x) = t^N + \sum_{n \in [N]} (-1)^n a_n(z; X) t^{N-n} 
    \end{equation}
    where each coefficient $a_n(z; X)$ is determined using the Newton-Girard formulae~\citep{seroul2012programming}, that is, 
    \begin{equation}\label{eq:girard}
    a_n(z; X) = \frac{1}{n} \det \begin{pmatrix}
            E_1(z; X) & 1 &0 & \cdots & 0 \\
            E_2(z; X) & E_1(z; X) & 1 &  \cdots & 0 \\
            \vdots & \vdots & \vdots  & \cdots & \vdots \\
            E_{n}(z; X) & E_{n-1}(z; X) & E_{n-2}(z; X) &  \cdots & E_{1}(z; X) \\
    \end{pmatrix}
    \end{equation}
    for all $n \in [N]$ and $z \in \mathbb{R}^D$, and $E_n(z;X)  =\sum_{x \in X} (z^{\top}x)^{n} $. Therefore, each coefficient $a_n(z; X)$ is a polynomial function of $\{ E_n(z;X) \}_{n = 1}^{N}$ --- 
    moments of the parameterized multiset $\{ \{ z^{\top} x : x \in X \} \} \stackrel{\mathrm{def}}{=} z^{\top} X$. \Cref{lem:psi_phi} lets us relate each moment to the elementary symmetric polynomials.
    \begin{lemma} \label{lem:psi_phi}
        For any $k_1, \cdots, k_D \in \mathbb{N} \cup \{0\}$ and $n \in \mathbb{N}$, let
            \[
            {n \choose k_1, \ldots, k_D}^{\mathrm{ind}} = 
                \begin{cases}
                    \frac{n!}{k_1! \cdots k_D!} \ &\mbox{if} \ k_1+\cdots+k_D = n \\
                    0  \ &\mbox{otherwise}.
                \end{cases}
            \]
            Let $x,z \in \mathbb{R}^D$ and $n \in [N]$. Then, we have $(z^{\top}x)^n =  \langle \psi(z,n), \phi(x) \rangle $ such that
        \begin{equation}\label{eq:phi}
            \psi(z,n) = \Big( {n \choose k_1, \ldots, k_D}^{\mathrm{ind}} \prod_{d=1}^D z_{d}^{k_d} \Big)_{k \in \mathcal{K}_{N}^D} ,~~~\phi(x) = \big(  \prod_{d=1}^D x_{d}^{k_d} \big)_{k \in \mathcal{K}_{N}^D} \in \R^{{N+D\choose{D}}-1},
        \end{equation}
        where $k = (k_d)_{d \in [D]}$ and $\mathcal{K}_N^D = \{ (k_d)_{d \in [D]}: k_1+\ldots+ k_D \in [N], k_1, \ldots, k_D \geq 0 \}$.
    \end{lemma}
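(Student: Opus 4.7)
The plan is to recognize this as a direct restatement of the multinomial theorem combined with a bookkeeping trick to write the result as an inner product over a fixed index set. First I would expand $(z^\top x)^n = (z_1 x_1 + z_2 x_2 + \cdots + z_D x_D)^n$ by the multinomial theorem to obtain
\[
  (z^\top x)^n \;=\; \sum_{\substack{k_1,\ldots,k_D \geq 0 \\ k_1 + \cdots + k_D = n}} \binom{n}{k_1,\ldots,k_D} \prod_{d=1}^D (z_d x_d)^{k_d}.
\]
Then I would use $(z_d x_d)^{k_d} = z_d^{k_d} x_d^{k_d}$ to factor the summand into a part depending only on $z$ and a part depending only on $x$.

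Next I would extend the summation from the constrained simplex $\{k : \sum_d k_d = n\}$ to the full index set $\mathcal{K}_N^D$. This is precisely the role of the indicator-weighted coefficient ${n \choose k_1,\ldots,k_D}^{\mathrm{ind}}$: it equals the usual multinomial coefficient when $\sum_d k_d = n$ and vanishes otherwise, so the extra terms I introduce contribute zero. Since $n \in [N]$, every multi-index with $\sum_d k_d = n$ lies in $\mathcal{K}_N^D$ (which contains all tuples whose sum lies in $[N]$), so no nonzero term is lost. This gives
\[
  (z^\top x)^n \;=\; \sum_{k \in \mathcal{K}_N^D} \binom{n}{k_1,\ldots,k_D}^{\mathrm{ind}} \Big(\prod_{d=1}^D z_d^{k_d}\Big)\Big(\prod_{d=1}^D x_d^{k_d}\Big).
\]
Recognizing the two product factors as the $k$-th coordinates of $\psi(z,n)$ and $\phi(x)$ respectively yields $(z^\top x)^n = \langle \psi(z,n), \phi(x)\rangle$, as claimed.

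Finally I would verify the ambient dimension: the cardinality of $\mathcal{K}_N^D = \{k \in (\mathbb{N}\cup\{0\})^D : 1 \leq k_1 + \cdots + k_D \leq N\}$ is the number of nontrivial compositions up to order $N$, which by the standard stars-and-bars count equals $\binom{N+D}{D} - 1$, matching the stated codomain dimension of $\phi$. There is no substantive obstacle here: the argument is essentially the multinomial theorem plus the observation that the indicator coefficient lets us pad out the sum to a common index set without changing its value, which in turn makes $\phi$ independent of $n$ and thereby serves the role demanded in Proposition~\ref{prop:construct_multinomial}.
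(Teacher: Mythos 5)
Your proposal is correct and follows essentially the same route as the paper's proof: both apply the multinomial theorem, use the indicator-weighted coefficient to pad the sum out to the common index set $\mathcal{K}_N^D$, and count $|\mathcal{K}_N^D| = \binom{N+D}{D}-1$ via a stars-and-bars (occupancy) argument. Your treatment of the padding step is, if anything, slightly more explicit than the paper's.
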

    \begin{proof}
        Let $x,z \in \mathbb{R}^D$ and $n \in [N]$. Then, we have    
        \begin{align*}
            (z^{\top}x)^{n} = ( \sum_{d \in [D]} z_d x_d)^{n} = \sum_{k_1+\ldots+ k_D = n}  {n \choose k_1, \ldots, k_D}\prod_{d=1}^D z_{d}^{k_d}  \prod_{d=1}^D x_{d}^{k_d}  =  \langle  \psi( z,n ),\phi(x) \rangle 
        \end{align*}
        where $\phi(x)$ and $\psi(z,n)$ are given in equation \eqref{eq:phi}. Since the dimension of $\phi(x)$ --- the size of $\mathcal{K}_N^D$ --- is equivalent to the number of solutions to the following problem:
        \begin{equation} \label{eq:dimension_prob}
            k_1, \ldots, k_D \in \mathbb{N}\cup \{ 0 \}: 1 \leq \sum_{d = 1}^D k_d \leq N.
        \end{equation}
        We can transform the problem in equation \eqref{eq:dimension_prob} to the following form:
        \begin{equation} \label{eq:dimension_prob2}
            k_1, \ldots, k_D, k_{\circ} \in \mathbb{N}\cup \{ 0 \}, k_{\circ} \neq N: \sum_{d = 1}^D k_d + k_{\circ} = N.
        \end{equation}
        In the occupancy problem, we ask: \emph{how many ways can one distribute $N$ indistinguishable objects into $D+1$ distinguishable bins?} The number of nonnegative solutions are ${N+D\choose{D}}$; refer to \citep{feller1967introduction}, section 5. However, if $k_{\circ} = N$, then $k_1 = k_2 = \cdots = k_D = 0$ which is not allowed. If we exclude this case, we arrive at ${N+D\choose{D}}-1$ integer solutions for problems in equations \eqref{eq:dimension_prob} and \eqref{eq:dimension_prob2}
    \end{proof}
    Let us now prove the proposition's statement. Given $\Phi(X) = \sum_{x \in X} \phi(x)$, we compute
    \[
        \forall z \in \mathbb{R}^D, n \in [N]: E_n(z; X) = \sum_{x \in X} \langle z,x\rangle^n = \sum_{x \in X} \langle \psi(z,n), \phi(x) \rangle =  \langle \psi(z,n), \Phi(X) \rangle,
    \]
    that are, all parameterized moments required to construct $\prod_{x \in X} (t - x^{\top}z)$ --- refer to \Cref{lem:psi_phi}, and equation \eqref{eq:girard}. Therefore, we can uniquely identify the polynomial in equation \eqref{eq:multinomial_polynomial_in_two_vs} with only $\Phi(X)$.
    \subsection{Proof of \Cref{prop:zXtoX}}\label{sec:zXtoX}
    \begin{proposition}
        For any $z \in \mathbb{R}^D$, multiset $X = \{ \{ x_n \in \mathbb{R}^D : n \in [N] \} \}$ where $N \geq 2$, and the multivariate polynomial $p\big(t;z, \Phi(X) \big)$ in the equation \eqref{eq:multinomial_polynomial_in_two_vs}, we have
        \[
            \mathrm{separators} \circ  p\big(t; z, \Phi(X) \big) \neq \emptyset.
        \]
        Moreover, for any $z^* \in \mathrm{separators} \circ  p\big(t; z, \Phi(X) \big)$, the directional derivative of $\mathrm{sort}\circ \mathrm{roots} \circ p\big(t;z, \Phi(X) \big)$ is well-defined and we have:
        \[
            \forall d \in [D] : \nabla_{e_{d}} \mathrm{sort}\circ \mathrm{root} \circ p\big(t; z, \Phi(X) |_{z = z^*} = (e_d^{\top} x_{\pi_{z^*}(n)})_{n \in [N]},
        \]
        where $e_d \in \mathbb{R}^D$ is the $d$-th standard basis vector for $\mathbb{R}^D$ ($d \in [D]$), and $\pi_{z^*}: [N] \rightarrow [N]$ is a permutation operator that sorts the elements ${z^*}^{\top}X$ --- see~\Cref{def:gap_sort_diam_unique}. 
    \end{proposition}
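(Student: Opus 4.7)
}
The overall approach is to first exhibit an explicit separator, then use its ``strict separation'' property to argue that the sorting permutation of the roots stabilizes locally, which reduces the directional derivative to an elementary linear computation. For the non-emptiness of $\mathrm{separators} \circ p(t;z,\Phi(X))$, observe that the multiset of roots of $p(t;z,\Phi(X))$ is exactly $z^\top X = \{\{z^\top x : x \in X\}\}$, so the cardinality $|\mathrm{unique}(z^\top X)|$ is bounded above by $|\mathrm{unique}(X)|$. For any pair of distinct elements $x,x' \in \mathrm{unique}(X)$, the set $H_{x,x'} = \{z \in \mathbb{R}^D : z^\top(x-x') = 0\}$ is a proper linear hyperplane in $\mathbb{R}^D$. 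The finite union $\bigcup_{x \neq x'} H_{x,x'}$ has empty interior, so any $z^*$ in its complement satisfies $z^{*\top}x \neq z^{*\top}x'$ for every pair of distinct $x,x' \in \mathrm{unique}(X)$; hence $|\mathrm{unique}(z^{*\top}X)| = |\mathrm{unique}(X)|$, attaining the maximum, so $z^* \in \mathrm{separators} \circ p(t;z,\Phi(X))$.

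Next I fix any such $z^*$ and establish local stability of the sorting. The strict separation of distinct elements of $X$ by $z^*$ is an open condition: there exists $\varepsilon > 0$ such that for every $z$ with $\|z-z^*\| < \varepsilon$, the function $x \mapsto z^\top x$ still assigns distinct values to distinct members of $\mathrm{unique}(X)$. Consequently, ties in the multiset $z^\top X$ arise only from genuine duplicates in $X$. Choose any permutation $\pi_{z^*}:[N]\to[N]$ such that $z^{*\top} x_{\pi_{z^*}(1)} \ge \cdots \ge z^{*\top} x_{\pi_{z^*}(N)}$. I claim that for all $z$ in a possibly smaller neighborhood $U$ of $z^*$, the same permutation $\pi_{z^*}$ also sorts $z^\top X$, i.e.\ $\mathrm{sort}(z^\top X) = (z^\top x_{\pi_{z^*}(n)})_{n\in[N]}$. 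For indices $n,n'$ with $x_{\pi_{z^*}(n)} \neq x_{\pi_{z^*}(n')}$ this follows from strict inequality $z^{*\top} x_{\pi_{z^*}(n)} > z^{*\top} x_{\pi_{z^*}(n')}$ combined with continuity in $z$; for $x_{\pi_{z^*}(n)} = x_{\pi_{z^*}(n')}$ the two entries coincide for every $z$, so any ordering is consistent. Note that the right-hand expression $(e_d^\top x_{\pi_{z^*}(n)})_{n\in[N]}$ is independent of the arbitrary choice made among tied elements, since tied indices correspond to identical vectors in $X$.

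With this local stability in hand, the directional derivative in~\eqref{eq:x_pi_d} collapses cleanly. For any unit vector $v \in \mathbb{R}^D$ and any $\delta$ small enough that $z^* + \delta v \in U$, the previous step gives
\[
\mathrm{sort}\big((z^*+\delta v)^\top X\big) - \mathrm{sort}(z^{*\top} X)
= \bigl((z^* + \delta v)^\top x_{\pi_{z^*}(n)}\bigr)_{n\in[N]} - \bigl(z^{*\top} x_{\pi_{z^*}(n)}\bigr)_{n\in[N]}
= \delta\bigl(v^\top x_{\pi_{z^*}(n)}\bigr)_{n\in[N]}.
\]
Dividing by $\delta$ and taking the limit $\delta \to 0$ yields $\nabla_v \mathrm{sort}(z^\top X)\big|_{z=z^*} = (v^\top x_{\pi_{z^*}(n)})_{n \in [N]}$, which is in particular well-defined. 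Specializing to $v = e_d$ and recalling that $\mathrm{roots}\circ p(t;z,\Phi(X)) = z^\top X$ gives the claimed identity. The main technical delicacy I expect to handle carefully is the local stability argument in the presence of true duplicates in $X$, which is why I organize the claim around the equality of sorted tuples (which is unambiguous) rather than around the permutation $\pi_{z^*}$ itself (which is non-unique).
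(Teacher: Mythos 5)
Your proposal is correct and follows essentially the same route as the paper's proof: the non-emptiness of the separator set via the complement of a finite union of hyperplanes $(x-x')^\perp$, the local stability of the sorting permutation near a separator (which the paper makes quantitative via the bound $\delta < \mathrm{gap}({z^*}^\top X)/(2\,\mathrm{diam}(X))$ and its Claim on perturbed permutations, while you argue qualitatively from openness of strict inequalities), and the resulting collapse of the difference quotient to $(v^\top x_{\pi_{z^*}(n)})_{n\in[N]}$. The only organizational difference is that the paper treats $|\mathrm{unique}(X)|=1$ as a separate case, whereas your argument absorbs it uniformly; both are sound.
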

    For any $z \in \mathbb{R}^D$ and multiset $X = \{ \{ x_n \in \mathbb{R}^D : n \in [N] \} \}$ where $N \geq 2$, we have
    \[
        \mathrm{sort} \circ \mathrm{root} \circ p(t;z,X) = ( z^{\top} x_{\pi_{z}(n)})_{n \in [N]} \in \R^N
    \]
    where $\pi_{z}: [N] \rightarrow [N]$ is a permutation operator such that $z^{\top} x_{\pi_{z}(1)} \geq {z}^{\top} x_{\pi_{z}(2)} \geq \cdots \geq {z}^{\top} x_{\pi_{z}(N)}$; see \Cref{def:gap_sort_diam_unique}. Given such an ordered list, we want to retrieve the multiset $X$. {\bf If} the order of the elements of $X$ after sorting remains unchanged for a perturbed parameter $z + \delta e_d$ --- where $e_d \in \mathbb{R}^D$ is the $d$-th standard basis for $\mathbb{R}^D$ and small enough $\delta \in \R$, that is, $x_{\pi_{z+\delta e_d}(n)} = x_{\pi_{z}(n)}$ for all $n \in [N]$ and $d \in [D]$ --- then we have the following equality:
    \begin{align*} 
    \frac{1}{\delta} \Big( \mathrm{sort}( (z + \delta e_d)^{\top} X) -  \mathrm{sort}( z^{\top} X) \Big) &= \frac{1}{\delta}\Big( (z+\delta e_d)^{\top} x_{\pi_{z+\delta e_d}(n)} -  z^{\top} x_{\pi_{z}(n)} \Big)_{n \in [N]} \\
        &\stackrel{\text(a)}{=} \frac{1}{\delta}\Big( {z}^{\top} x_{\pi_{z}(n)}  + \delta e_d^{\top} x_{\pi_{z}(n)}  -  {z}^{\top} x_{\pi_{z}(n)} \Big)_{n \in [N]}\\
        &= \frac{1}{\delta}( \delta e_d^{\top} x_{\pi_{z}(n)})_{n \in [N]} =   ( e_d^{\top} x_{\pi_{z}(n)})_{n \in [N]},
    \end{align*}
    where $\text{(a)}$ is due to our assumption $x_{\pi_{z+\delta e_d}(n)} = x_{\pi_{z}(n)}$ for all $n \in [N]$ and $d \in [D]$. If this property holds true, we can compute the following limit:
    \begin{align}
        \lim_{\delta \rightarrow} \frac{1}{\delta} \Big(\mathrm{sort} \circ \mathrm{root} \circ p(t;z + \delta e_d,X) &-\mathrm{sort} \circ \mathrm{root} \circ p(t;z,X) \Big) \label{eq:limit_obs} \\
        &= \lim_{\delta \rightarrow} \frac{1}{\delta} \Big(\mathrm{sort} ((z + \delta e_d)^{\top} X) -\mathrm{sort} ( z^{\top} X) \Big), \nonumber
    \end{align}
    to retrieve the $d$-component of the elements in $X$ up to a fixed but unknown permutation $\pi_{z}$ that does not depend on $e_d$ --- that is, $( e_d^{\top} x_{\pi_{z}(n)})_{n \in [N]}$ --- for all $d \in [D]$. The limit in equation \eqref{eq:limit_obs} is well-defined and returns $( e_d^{\top} x_{\pi_{z}(n)})_{n \in [N]}$ if there exists a vector $z \in \mathbb{R}^D$ such that it admits a solution for the following feasibility problem:
    \[
        \mbox{find} \ \delta^* >0 \ \mbox{such that} \ x_{\pi_{z+\delta e_d}(n)} = x_{\pi_{z}(n)}, \ \ \mbox{for all } n \in [N], d \in [D], \delta \leq \delta^*.
    \]
    As we shall see, any vector $z^* \in \mathrm{separators} \circ  p\big(t; z, \Phi(X) \big)$ admits a solution to the aforementioned problem. To prove this result, we first need to derive the following property for the separators.
    \begin{lemma}\label{lem:separator}
        For any $z \in \mathbb{R}^D$, multiset $X$ of at least two $D$-dimensional vectors, and the multivariate polynomial $p\big(t;z, \Phi(X) \big)$ in the equation \eqref{eq:multinomial_polynomial_in_two_vs}, we have $\mathrm{separators} \circ  p\big(t; z, \Phi(X) \big)$ is a nonempty subset of $\mathbb{R}^D$ and for all $z^* \in \mathrm{separators} \circ  p\big(t; z, \Phi(X) \big)$, we have
        \begin{align*}
            | \mathrm{unique} \circ \mathrm{roots} \circ p\big(t; z^*, \Phi(X) \big)  &=  \mathrm{max}_{z \in \mathbb{R}^D} | \mathrm{unique} \circ \mathrm{roots} \circ p\big(t; z, \Phi(X) \big) \\
            &= |\mathrm{unique}(X)|.
        \end{align*}
    \end{lemma}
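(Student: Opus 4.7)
The plan is to establish two things: (a) that $|\mathrm{unique}(X)|$ is an upper bound on $|\mathrm{unique} \circ \mathrm{roots} \circ p(t;z,\Phi(X))|$ over all $z\in\mathbb{R}^D$, and (b) that this upper bound is attained on a generic choice of $z$. Together these imply that $\mathrm{separators} \circ p(t;z,\Phi(X))$ is nonempty (because the argmax is achieved) and that the maximum value equals $|\mathrm{unique}(X)|$.

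First I would invoke the identity $\mathrm{roots} \circ p(t;z,\Phi(X)) = z^{\top} X$ from \Cref{def:roots_separators}, so that as a set $\mathrm{unique} \circ \mathrm{roots} \circ p(t;z,\Phi(X)) = \{z^{\top} x : x \in \mathrm{unique}(X)\}$. The upper bound in (a) is then immediate: for any $z$, the map $x \mapsto z^{\top}x$ is a function from $\mathrm{unique}(X)$ to $\mathbb{R}$, and a function cannot produce more distinct images than it has distinct inputs, so $|\{z^{\top}x : x \in \mathrm{unique}(X)\}| \leq |\mathrm{unique}(X)|$.

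For (b), I would exhibit at least one $z^{*} \in \mathbb{R}^D$ at which $x \mapsto z^{*\top}x$ is injective on the finite set $\mathrm{unique}(X)$. Injectivity is equivalent to requiring $z^{*\top}(x - x') \neq 0$ for every pair of distinct $x, x' \in \mathrm{unique}(X)$. For each such (unordered) pair, the set of offending $z$ is the hyperplane $H_{x,x'} = \{z \in \mathbb{R}^D : z^{\top}(x-x') = 0\}$, which is a proper linear subspace of $\mathbb{R}^D$ because $x - x' \neq 0$. Since $\mathrm{unique}(X)$ is finite, there are only finitely many such pairs, and a finite union of proper subspaces of $\mathbb{R}^D$ has Lebesgue measure zero and therefore cannot equal $\mathbb{R}^D$. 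Picking any $z^{*}$ outside $\bigcup_{x \neq x'} H_{x,x'}$ gives a separator, establishing both the nonemptiness of the argmax and that the achieved maximum equals $|\mathrm{unique}(X)|$.

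The argument is essentially a linear-algebraic avoidance (or genericity) argument. The only subtlety worth flagging is to be careful not to conflate the \emph{multiset} $z^{\top}X$ with its underlying set when talking about $\mathrm{unique}$; the bound $|\mathrm{unique}(z^{\top}X)| \leq |\mathrm{unique}(X)|$ is really about sets, and attainment relies on $\mathrm{unique}(X)$ being finite so that the union of hyperplanes $H_{x,x'}$ is finite. I do not anticipate a serious obstacle here.
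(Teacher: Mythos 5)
Your proposal is correct and follows essentially the same route as the paper's proof: both establish the upper bound $|\mathrm{unique}(z^{\top}X)| \leq |\mathrm{unique}(X)|$ and then attain it by choosing $z^*$ outside the finite union of proper subspaces $(x-x')^{\perp}$ over distinct pairs $x, x' \in \mathrm{unique}(X)$, which cannot cover $\mathbb{R}^D$. The paper additionally separates out the trivial case $|\mathrm{unique}(X)|=1$, but your argument handles it implicitly since the union of hyperplanes is then empty.
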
    
    \begin{proof}
        If $|\mathrm{unique}(X)| = 1$, then $\mathrm{separators} \circ  p\big(t; z, \Phi(X) \big) = \mathbb{R}^D$ and the statement is trivial. Therefore, in what follows, we assume $|\mathrm{unique}(X)| > 1$. \\
        Let $X$ be a multiset of (at least two distinct) $D$-dimensional vectors and $\mathrm{roots} \circ p\big(t; z, \Phi(X) = z^{\top} X$, for all $z \in \mathbb{R}^D$. If $x,x^{\prime} \in X$ where $x \neq x^{\prime}$, then we have $z^{\top} x = z^{\top} x^{\prime}$ for $z \in (x - x^{\prime})^{\perp} \subset \mathbb{R}^D$. Therefore, we have 
        \[
            \forall z \in \mathbb{R}^D: | \mathrm{unique}(z^{\top} X)| \leq |\mathrm{unique}(X)|.
        \]
        We can prove the claim if we show $| \mathrm{unique}(z^{\top} X)|$ achieves its upper bound $|\mathrm{unique}(X)|$ over a subset of $\mathbb{R}^D$ --- namely, $\mathrm{separators} \circ  p\big(t; z, \Phi(X) \big)$.  \\
        Let $P_{x,x^{\prime}} = (x-x^{\prime})^{\perp}$ for distinct $x, x^{\prime} \in \mathrm{unique}(X)$ --- that is, $x \neq x^{\prime}$. By construction, $P_{x,x^{\prime}}$ is a $(D-1)$-dimensional subspace since $x \neq x^{\prime}$. Since $\mathrm{unique}(X)$ contains only distinct elements, we have
        \[
            \forall z \in P_{x,x^{\prime}} \Longleftrightarrow \langle z, x-x^{\prime} \rangle = 0,
        \]
        for all distinct $x , x^{\prime} \in \mathrm{unique}(X) $. We now construct the following set:
        \[
            P_{X} = \bigcup_{ \stackrel{x , x^{\prime} \in \mathrm{unique}(X)}{ x \neq x^{\prime}}} P_{x,x^{\prime}},
        \]
        which is a finite union of $(D-1)$-dimensional subspaces. Therefore, $P_{X}$ can not be equal to $\mathbb{R}^D$, that is, $\mathbb{R}^D \setminus P_{X}$ is a nonempty set. For any $z^* \in \mathbb{R}^D \setminus P_{X}$, we have
        \begin{align*}
            \forall x, x^{\prime} \in X, x \neq x^{\prime}: \langle z^* , x-x^{\prime} \rangle &= {z^*}^{\top}x-{z^*}^{\top} x^{\prime} \neq  0 \\
            \forall x, x^{\prime} \in X, x = x^{\prime}: \langle z^* , x-x^{\prime} \rangle &= {z^*}^{\top}x-{z^*}^{\top} x^{\prime} =  0.
        \end{align*}
        Hence, we have $| \mathrm{unique}({z^*}^{\top} X)| = |\mathrm{unique}(X)|$ for all $z^* \in \mathrm{separators} \circ  p\big(t; z, \Phi(X) \big)$ where $\mathrm{separators} \circ  p\big(t; z, \Phi(X) \big) = \mathbb{R}^D \setminus P_{X}$  --- a nonempty subset of $\mathbb{R}^D$.
    \end{proof}
    As a result of \Cref{lem:separator}, we have
    \[
        \forall  z^* \in \mathrm{separators} \circ  p\big(t; z, \Phi(X) \big) : | \mathrm{unique}({z^*}^{\top} X)| = |\mathrm{unique}(X)|,
    \]
    that is, repeated (or distinct) elements in ${z^*}^{\top} X$ correspond to identical (or distinct) elements in $X$. We now want to show that the following directional derivative is well-defined:
    \[
        \nabla_{v} \mathrm{sort} \big( z^{\top} X \big)|_{z = z^*} =\lim_{\delta \rightarrow 0}  \frac{1}{\delta} \Big( \mathrm{sort} \big( (z^* + \delta v)^{\top} X \big)  - \mathrm{sort}\big( {z^*}^{\top} X \big) \Big)  
    \]
    for all $z^* \in \mathrm{separators} \circ  p\big(t; z, \Phi(X) \big)$ and unit norm vector $v \in \mathbb{R}^D$. \\
    We break down the rest of the proof in two cases. \\
    {\bf{Case 1:} $\mathbf{|\mathrm{\bf unique}(X)| > 1}$.} \\
    \textbf{Limiting behavior of $(z^* + \delta v)^{\top} X $ as $\delta \rightarrow 0$.} \\
    Let $x, x^{\prime}$ be two distinct elements in $\mathrm{unique}(X)$, that is, $\| x -x^{\prime} \|_2 > 0$. If $z^* \in \mathrm{separators} \circ  p\big(t; z, \Phi(X) \big)$, then we have $|{z^*}^{\top} x - {z^*}^{\top} x^{\prime}| =\varepsilon > 0$; see \Cref{lem:separator}. Let $z^*_v(\delta) = z^* + \delta v$ where $v \in \mathbb{R}^D$ is a unit norm vector and $\varepsilon = \mathrm{gap}( {z^*}^{\top} X) >0 $ --- which is well-defined since $|\mathrm{unique}(X)| > 1$. Then, we have
    \begin{align*}
        \forall \ \mbox{distinct} \ x , x^{\prime} \in \mathrm{unique}(X),\delta < \frac{\varepsilon}{2\mathrm{diam}(X)} : &\| z^*_v(\delta)^{\top}(x - x^{\prime}) \|_2 = \| (z^* + \delta v)^{\top}(x - x^{\prime}) \|_2 \\
        &\stackrel{\text{(a)}}{\geq} \| {z^*}^{\top}(x - x^{\prime}) \|_2 - \delta \|v^{\top}(x - x^{\prime}) \|_2 \\
        &\stackrel{\text{(b)}}{>} \varepsilon - \frac{\varepsilon}{2\mathrm{diam}(X)}\| x - x^{\prime} \|_2 \\
        &\stackrel{\text{(c)}}{\geq} \varepsilon - \frac{\varepsilon}{2}  = \frac{\varepsilon}{2}  > 0,
    \end{align*}
    where $\text{(a)}$ is due to the triangle inequality, $\text{(b)}$ is due to $|{z^*}^{\top} x - {z^*}^{\top} x^{\prime}| =\varepsilon$ and $\delta < \frac{\varepsilon}{2\mathrm{diam}(X)}$, and $\text{(c)}$ is due to $\| x - x^{\prime} \|_2 \leq \mathrm{diam}(X)$. Therefore, the vector $z^*_v(\delta)$ separates distinct elements of $X$ in $z^*_v(\delta)^{\top}X$ --- for all unit norm vectors $v \in \mathbb{R}^D$ and $\delta < \frac{\varepsilon}{2\mathrm{diam}(X)}$. On the other hand, if $x, x^{\prime}$ are two identical elements in $X$, then we have $z^*_v(\delta)^{\top}x = z^*_v(\delta)^{\top} x^{\prime}$ --- that is, the repeated elements in $X$ correspond to the repeated elements in $z^*_v(\delta)^{\top}X$. Therefore, we have $|\mathrm{unique}(z^*_v(\delta)^{\top}X)| = |\mathrm{unique}(X)|$, or equivalently $z^*_v(\delta)^{\top} \in \mathrm{separators} \circ p(t;z,X)$. \\
    \textbf{Directional derivative of $\mathrm{sort} \circ \mathrm{root} \circ p(t;z,X)$ at $z = z^*$.} \\
    Let $z^* \in \mathrm{separators} \circ  p\big(t; z, \Phi(X) \big) $. Then, we have
    \[
        \mathrm{sort} \circ \mathrm{root} \circ p(t;z^{*},X) = \big( {z^*}^{\top} x_{\pi_{z^{*}}(n)}\big)_{n\in [N]} \in \R^N,
    \]
    where $\pi_{z^*}: [N] \rightarrow [N]$ is a permutation operator such that ${z^*}^{\top} x_{\pi_{z^*}(1)} \geq {z^*}^{\top} x_{\pi_{z^*}(2)} \geq \cdots \geq {z^*}^{\top} x_{\pi_{z^*}(N)}$. The repeated elements in $X$ do not change the value of the output of the sort function as they correspond to the repeated elements in ${z^*}^{\top} X$. In other words, $\pi_{z^*}$ is not necessarily unique; but our results do not depend on the specific choice of the permutation operator. The minimum distance between distinct elements of ${z^*}^{\top} X$ is $\varepsilon = \mathrm{gap}( {z^*}^{\top} X) > 0$.
    If $z^*_v(\delta)$ is the perturbed version of $z^*$ in direction of $v$ such that $\| z^* - z^*_{v}(\delta) \|_2  = \delta < \frac{\varepsilon}{2\mathrm{diam}(X)}$, then $z^*_v(\delta) \in \mathrm{separators} \circ p(t;z,X)$ --- see our discussion in the previous paragraph.  \\
    \begin{claim}\label{claim:perturbed_permutation}
        The following equality holds true:
    \[
        \forall n \in [N]: x_{\pi_{z^*_v(\delta)}(n)}  =   x_{\pi_{z^*}(n)},
    \]
    for any unit norm vector $v \in \mathbb{R}^D$ and $\delta < \frac{\varepsilon}{2\mathrm{diam}(X)}$, and any permutation operator 
    $\pi_{z^*_v(\delta)}: [N] \rightarrow [N]$ such that $z^*_v(\delta)^{\top} x_{\pi_{z^*_v(\delta)}(1)} \geq z^*_v(\delta)^{\top} x_{\pi_{z^*_v(\delta)}(2)} \geq \cdots \geq z^*_v(\delta)^{\top} x_{\pi_{z^*_v(\delta)}(N)}$.
    \end{claim}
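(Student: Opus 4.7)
The plan is to show that the perturbation $\delta v$ is too small to reorder the \emph{distinct} scalar values in ${z^*}^{\top} X$, so the only freedom left in sorting comes from indices that correspond to equal vector elements of $X$; and for such indices the equality $x_{\pi(n)} = x_{\pi'(n)}$ holds trivially regardless of the choice of $\pi$.

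First, I would isolate the block structure induced by $z^*$. Because $z^* \in \mathrm{separators} \circ p(t; z, \Phi(X))$, \Cref{lem:separator} gives $|\mathrm{unique}({z^*}^{\top} X)| = |\mathrm{unique}(X)|$, so ${z^*}^{\top} x_i = {z^*}^{\top} x_j$ if and only if $x_i = x_j$ as vectors in $\mathbb{R}^D$. This partitions $[N]$ into blocks of indices sharing a common vector value, and any valid sorting permutation $\pi_{z^*}$ must rank these blocks in decreasing order of their common scalar value while permuting indices \emph{within} a block arbitrarily. The preceding paragraph already shows $z^*_v(\delta) \in \mathrm{separators} \circ p(t; z, \Phi(X))$, so the identical block decomposition (with the same vector labels) holds for $z^*_v(\delta)^{\top} X$ as well.

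Next, I would verify that the inter-block ordering is preserved. For indices $n, n' \in [N]$ with $x_{\pi_{z^*}(n)} \neq x_{\pi_{z^*}(n')}$, the gap bound gives $|{z^*}^{\top}(x_{\pi_{z^*}(n)} - x_{\pi_{z^*}(n')})| \geq \varepsilon$, whereas the perturbation contribution satisfies
\[
|\delta\, v^{\top}(x_{\pi_{z^*}(n)} - x_{\pi_{z^*}(n')})| \leq \delta \, \|x_{\pi_{z^*}(n)} - x_{\pi_{z^*}(n')}\|_2 \leq \delta \cdot \mathrm{diam}(X) < \frac{\varepsilon}{2},
\]
using Cauchy--Schwarz, $\|v\|_2 = 1$, and the hypothesis $\delta < \varepsilon / (2\,\mathrm{diam}(X))$. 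By the reverse triangle inequality, the sign of $z^*_v(\delta)^{\top}(x_{\pi_{z^*}(n)} - x_{\pi_{z^*}(n')})$ agrees with the sign of ${z^*}^{\top}(x_{\pi_{z^*}(n)} - x_{\pi_{z^*}(n')})$, so the strict decreasing order of distinct block values is identical under $z^*$ and under $z^*_v(\delta)$.

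Combining the two observations, any valid sort $\pi_{z^*_v(\delta)}$ must assign the same blocks to the same position ranges as $\pi_{z^*}$ does; within each block all $x$-values coincide, so $x_{\pi_{z^*_v(\delta)}(n)} = x_{\pi_{z^*}(n)}$ for every $n \in [N]$ irrespective of the within-block shuffle selected by $\pi_{z^*_v(\delta)}$. The main subtlety I anticipate is the bookkeeping of non-uniqueness of $\pi_{z^*}$ and $\pi_{z^*_v(\delta)}$ when $X$ has repeated elements; phrasing the argument in terms of equivalence-class blocks rather than individual indices side-steps this cleanly, after which the perturbation estimate is routine.
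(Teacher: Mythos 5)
Your proposal is correct and uses essentially the same argument as the paper: the identical Cauchy--Schwarz estimate comparing the gap $\varepsilon$ of ${z^*}^{\top}X$ against the perturbation bound $\delta\,\mathrm{diam}(X) < \varepsilon/2$ shows that distinct elements cannot swap order. Your framing in terms of equivalence-class blocks is a slightly more explicit treatment of the ``any permutation $\pi_{z^*_v(\delta)}$'' quantifier, which the paper handles by showing $\pi_{z^*}$ itself remains a valid sorting permutation and treating within-tie swaps as harmless.
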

    \begin{proof}
        Consider $i,j \in [N]$ where $i > j$. If $x_{\pi_{z^*}(j)} = x_{\pi_{z^*}(i)}$, then we have $z^*_v(\delta)^{\top} x_{\pi_{z^*}(j)} \geq z^*_v(\delta)^{\top} x_{\pi_{z^*}(i)}$ --- as both terms are equal to each other. On the other hand, if  $x_{\pi_{z^*}(j)} \neq x_{\pi_{z^*}(i)}$, then we have
    \begin{align*}
        z^*_v(\delta)^{\top} x_{\pi_{z^*}(j)} - z^*_v(\delta)^{\top} x_{\pi_{z^*}(i)} &= (z^* + \delta v)^{\top} (x_{\pi_{z^*}(j)} -  x_{\pi_{z^*}(i)}) \\
        &\stackrel{\text{(a)}}{\geq} {z^*}^{\top}(x_{\pi_{z^*}(j)} -  x_{\pi_{z^*}(i)}) - \delta \| x_{\pi_{z^*}(j)} -  x_{\pi_{z^*}(i)} \|_2 \\
        &\stackrel{\text{(b)}}{\geq}  \varepsilon - \delta \mathrm{diam}(X) > \varepsilon - \frac{\varepsilon}{2} = \frac{\varepsilon}{2} >0,
    \end{align*}
    where $\text{(a)}$ is due to Cauchy–Schwarz inequality, and $\text{(b)}$ is due to $\| x_{\pi_{z^*}(j)} -  x_{\pi_{z^*}(i)} \|_2 \leq \mathrm{diam}(X)$ and $\delta < \frac{\varepsilon}{2\mathrm{diam}(X)}$. Therefore, the permutation $\pi_{z^*}$ also sorts the elements of $z^*_v(\delta)^{\top} X$, that is, $x_{\pi_{z^*_v(\delta)}(n)}  =   x_{\pi_{z^*}(n)}$, for all $n \in [N]$.
    \end{proof}
    Finally, for all $d \in [D]$, we have
    \begin{align*} \label{eq:approximation_sort}
     \nabla_{e_{d}} \mathrm{sort}\circ \mathrm{root} \circ p\big(t; z, \Phi(X) |_{z = z^*} &\stackrel{\text{(a)}}{=} \lim_{\delta \rightarrow 0}\frac{1}{\delta} \Big( \mathrm{sort}( (z^* + \delta e_d)^{\top} X) -  \mathrm{sort}( {z^*}^{\top} X) \Big) \\
    &\stackrel{\text{(b)}}{=} \lim_{\delta \rightarrow 0}\frac{1}{\delta}\Big( (z^*+\delta e_d)^{\top} x_{\pi_{z^*+\delta e_d}(n)} -  {z^*}^{\top} x_{\pi_{z^*}(n)} \Big)_{n \in [N]} \\
    &\stackrel{\text{(c)}}{=} \lim_{\delta \rightarrow 0}\frac{1}{\delta}\Big( (z^*+\delta e_d)^{\top} x_{\pi_{z^*}(n)} -  {z^*}^{\top} x_{\pi_{z^*}(n)} \Big)_{n \in [N]} \\
        &= \lim_{\delta \rightarrow 0} \frac{1}{\delta}( \delta e_d^{\top} x_{\pi_{z^*}(n)})_{n \in [N]} =   ( e_d^{\top} x_{\pi_{z^*}(n)})_{n \in [N]}.
    \end{align*}
    where  $\text{(a)}$ is due to the definition of the directional derivation, $\text{(b)}$ follows from the definition of permutation operator in $\mathrm{sort}$ function, and $\text{(c)}$ follows from Claim~\ref{claim:perturbed_permutation}. \\
    {\bf{Case 2:} $\mathbf{|\mathrm{\bf unique}(X)| = 1}$.} \\
This directional derivative is well-defined if $|\mathrm{unique}(X)| = 1$, that is,
    \begin{align*}
        \nabla_{v} \mathrm{sort} \big( z^{\top} X \big)|_{z = z^*} &=\lim_{\delta \rightarrow 0}  \frac{1}{\delta} \Big( \mathrm{sort} \big( (z^* + \delta v)^{\top} X \big)  - \mathrm{sort}\big( {z^*}^{\top} X \big) \Big)   \\
        &=\lim_{\delta \rightarrow 0}  \frac{1}{\delta} \Big( \big( (z^* + \delta v)^{\top} x_{\pi_1(n)} \big)_{n \in [N]}  - \big( {z^*}^{\top} x_{\pi_2(n)} \big)_{n\in [N]} \Big)  
        &=   v^{\top} x 1,
    \end{align*}
    where $\pi_1, \pi_2 : [N] \rightarrow [N]$ are two permutation operators, and $X = \{ \{ x_n : n \in [N] \} \}$ and $x_n = x$ for all $n \in [N]$, and $1 \in \R^N$ is the vector of all ones.  Therefore, for all $z^* \in \mathrm{separators} \circ  p\big(t; z, \Phi(X) \big) =\mathbb{R}^D$. And we have
    \begin{align*} 
     \forall d \in [D]: \nabla_{e_{d}} \mathrm{sort}\circ \mathrm{root} \circ p\big(t; z, \Phi(X) |_{z = z^*} =   e_d^{\top} x 1.
    \end{align*}
    This readily proves the proposition's statement.
    \subsection{Two Illustrative examples}\label{section:example}
    \begin{example}[Repeated Roots]\label{ex:multivarite_sum_decomposition_repeated_roots}
        Let $N = D = 2$, and $\Phi(X) = \begin{pmatrix}
                2 & 0 & 2 & 0 & 0
            \end{pmatrix}^{\top} \in \mathbb{R}^{{N+D \choose D} -1} = \R^5$ for a multiset $X$. The goal is to recover $X$.
        In the proof of \Cref{prop:construct_multinomial}, \Cref{lem:psi_phi} relates parameterized moments of the multivariate polynomial $p(t;z, \Phi(X))$ to  $\Phi(X)$ using the following functions:
        \[
            \forall z = (z_1, z_2)^{\top} \in \R^2 : \ \psi( z,1) = \begin{pmatrix}
                z_1 & z_2 & 0 & 0 & 0
            \end{pmatrix}^{\top}, \ \psi(z,2) = \begin{pmatrix}
                0 & 0 & z_1^2 & 2 z_1 z_2 & z_2^2
            \end{pmatrix}^{\top}.
        \]
        Since $E_n(z,X) = \langle \psi(z,n), \Phi(X) \rangle$, for $n \in [2]$, then we have $E_1(z,X) = 2z_1$ and $ E_2(z,X) = 2z_1^2$. We now can use Girard's formula (see equation \eqref{eq:girard}):
        \[
            a_1(z;X) = E_1(z,X) , a_2(z;X) = \frac{1}{2} \mathrm{det} \begin{pmatrix}
                E_1(z,X) & 1 \\
                E_2(z,X) & E_1(z,X),
            \end{pmatrix}.
        \]
        to compute the coefficients of the multivariate polynomial as $a_1(z;X) = 2 z_1$ and $a_2(z;X) = z_1^2$. We arrive at the following multivariate polynomial:
        \[
            p\big(t; z, \Phi(X)) = t^2 - a_1(z;X) t + a_2(z;X) =t^2 - 2 z_1 t + z_1^2 = (t-z_1)^2,
        \]
        and $\mathrm{roots} \circ p\big(t; z, \Phi(X) = z^{\top} X = \{ \{ z_1, z_1 \} \}$, for all $z \in \R^2$. Since  $\mathrm{unique}(z^{\top} X) = 1$ --- $\forall z \in \R^2$ ---- then we have $\mathrm{separators} \circ p\big(t; z, \Phi(X) = \R^2$. Let $z^* = (1,1)^{\top} \in \R^2$ be a separator vector. Therefore, we have $\mathrm{sort}(z^{\top}X)|_{z = z^*} = (1,1)^{\top}$. We also have
        \begin{align*}
            \mathrm{sort}(z+\delta e_1)^{\top}X|_{z = z^*} = (1+\delta,1+\delta)^{\top}, \ \ \mathrm{sort}(z+\delta e_2)^{\top}X|_{z = z^*} = (1,1)^{\top}.
        \end{align*}
        for all $\delta >0$. These quantities let us compute the directional derivatives in \Cref{prop:zXtoX} as follows:
        \[
            (e_1^{\top} x_{\pi_{z^*}}(n))_{n \in [2]} = (1,1)^{\top},  \ (e_2^{\top} x_{\pi_{z^*}}(n))_{n \in [2]} = (0,0)^{\top},
        \]
        see equation \eqref{eq:x_pi_d}. Finally, we arrive at $X = \Phi^{-1} \circ \Phi(X) = \{ \{ (1,0) , (1,0) \} \}$.
    \end{example} 
    \begin{example}[Unique Roots]\label{ex:multivarite_sum_decomposition_unique_roots}
        Let $N = D = 2$, and $\Phi(X) = \begin{pmatrix}
                -2 & 1 & 10 & -7 & 5
            \end{pmatrix}^{\top} \in \mathbb{R}^{{N+D \choose D} -1} = \R^5$ 
        for a multiset $X$. The goal is to recover $X$. Since $E_n(z,X) = \langle \psi(z,n), \Phi(X) \rangle$, for $n \in [2]$, then we have $E_1(z,X) = -2 z_1+z_2$ and $ E_2(z,X) = 10 z_1^2 -14 z_1 z_2+5 z_2^2$. We now can use Girard's formula; see  \Cref{lem:psi_phi} and equation \eqref{eq:girard}):
        \[
            a_1(z;X) = E_1(z,X) , a_2(z;X) = \frac{1}{2} \mathrm{det} \begin{pmatrix}
                E_1(z,X) & 1 \\
                E_2(z,X) & E_1(z,X),
            \end{pmatrix}.
        \]
        to compute the coefficients of the multivariate polynomial as $a_1(z;X) = -2 z_1+z_2$ and $a_2(z;X) = -3 z_1^2-2 z_2^2 + 5 z_1 z_2$. We then have the following multivariate polynomial:
        \[
            p\big(t; z, \Phi(X)  =t^2 +(2 z_1-z_2) t -3 z_1^2-2 z_2^2 + 5 z_1 z_2 .
        \]
        To compute the roots of $p\big(t; z, \Phi(X)$, we use the quadratic formula. The discriminant is given as follows:
        \[
            \Delta(z, X) = a_1(z;X)^2 - 4 a_2(z;X) = 16 z_1^2+ 9 z_2^2 - 24 z_1 z_2 = (4 z_1 - 3 z_2 )^2.
        \]
        The parametric roots are $r_1(z, X)= \frac{1}{2}(-a_1(z;X) + \sqrt{\Delta(z, X)}) = z_1 - z_2$ and $r_1(z, X)= \frac{1}{2}(-a_1(z;X) - \sqrt{\Delta(z, X)}) = -3 z_1 +2 z_2$, that is, $\mathrm{roots} \circ p\big(t; z, \Phi(X) = z^{\top} X= \{ \{ z_1 - z_2,-3 z_1 +2 z_2 \} \}$, for all $z \in \R^2$. Since  $\mathrm{unique}(z^{\top} X) = 2$ --- $\forall z \in \R^2 \setminus \{ z \in \R^2: z_1 - z_2 = -3 z_1 +2 z_2\} $ ---- then we have $\mathrm{separators} \circ p\big(t; z, \Phi(X) = \{ z \in \R^2: z_1  \neq \frac{3}{4} z_2 \}$. Let $z^* = (1,1)^{\top} \in \R^2$ be a separator vector. Therefore, we have $\mathrm{sort}(z^{\top}X)|_{z = z^*} = (0,-1)^{\top}$.
        \begin{align*}
            \mathrm{sort}(z+\delta e_1)^{\top}X|_{z = z^*} = (\delta,-1-3 \delta)^{\top}, \ \ \mathrm{sort}(z+\delta e_2)^{\top}X|_{z = z^*} = (-\delta,-1 + 2\delta)^{\top}.
        \end{align*}
        for all $0< \delta < \frac{1}{3}$. Now we can compute the directional derivatives in \Cref{prop:zXtoX} as follows:
        \[
            (e_1^{\top} x_{\pi_{z^*}}(n))_{n \in [2]} = (1,-3)^{\top},  \ (e_2^{\top} x_{\pi_{z^*}}(n))_{n \in [2]} = (-1,2)^{\top},
        \]
        see equation \eqref{eq:x_pi_d}. Finally, we arrive at $X = \Phi^{-1} \circ \Phi(X) = \{ \{ (1,-1) , (-3,2) \} \}$.
    \end{example}

\newpage

\section{Proof of \Cref{thm:multi_decomposition_continous}} \label{sec:multi_decomposition_continous}
The function $f:\mathbb{X}_{\mathbb{D},N} \rightarrow \codom(f)$ is continuous over its domain, that is, $\rho \circ \Phi$ is continuous over $\mathbb{X}_{\mathbb{D},N}$, and we have $\rho  = f \circ \Phi^{-1}$; see \Cref{thm:multi_decomposition} and its proof for the definition of $\Phi$ and its inverse. Before proceeding with the proof, let us introduce the following set.
\begin{definition}\label{def:phi_x_n_b}
    For any mutiset function $\Phi$, we let $\Phi(\mathbb{X}_{\mathbb{D},N} ) \stackrel{\mathrm{def}}{=} \{ \Phi(X):  X \in \mathbb{X}_{\mathbb{D},N}\}$.
\end{definition}
With the notation in \Cref{def:phi_x_n_b}, $\rho= f \circ \Phi^{-1}$ is a map from $\Phi(\mathbb{X}_{\mathbb{D},N} ) $ to $\codom(f)$. Using \Cref{lem:continuous_Phi,lem:compact_set_of_sets} and Fact \ref{fact:cont_compact}, we show first that $\Phi(\mathbb{X}_{\mathbb{D},N} )$ is a compact set.
\begin{lemma}\label{lem:continuous_Phi}
    $\Phi:\mathbb{X}_{\mathbb{D},N} \rightarrow \Phi(\mathbb{X}_{\mathbb{D},N} )$ is a continuous and injective function.
\end{lemma}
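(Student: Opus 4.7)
}
The plan is to handle the two claims separately, as the injectivity is already a consequence of what has been proved, and the continuity is essentially a quotient-topology argument.

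\emph{Injectivity.} This is immediate from the construction in the proof of \Cref{thm:multi_decomposition}. There we exhibited an explicit recipe (via the polynomial $p(t;z,\Phi(X))$, its separators, and directional derivatives of $\mathrm{sort}\circ\mathrm{roots}$) that recovers $X$ uniquely from $\Phi(X)$; equivalently, $\Phi^{-1}$ is well defined on $\Phi(\mathbb{X}_{\mathbb{D},N})$. Hence the map $\Phi:\mathbb{X}_{\mathbb{D},N}\to \Phi(\mathbb{X}_{\mathbb{D},N})$ is injective, and I would just cite \Cref{thm:multi_decomposition} here.

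\emph{Continuity.} The key step is to fix a topology on $\mathbb{X}_{\mathbb{D},N}$. The natural one is the quotient topology obtained from $\mathbb{D}^N$ under the action of the symmetric group $S_N$: a multiset $X = \{\!\{x_1,\ldots,x_N\}\!\}$ is identified with the equivalence class $[(x_1,\ldots,x_N)]\in \mathbb{D}^N/S_N$, and $q:\mathbb{D}^N\to \mathbb{D}^N/S_N$ denotes the quotient map. Define the lifted map
\begin{equation*}
\widetilde{\Phi}:\mathbb{D}^N\to \mathbb{R}^{{N+D\choose D}-1},\qquad \widetilde{\Phi}(x_1,\ldots,x_N)=\sum_{n\in[N]}\phi(x_n).
\end{equation*}
Since $\phi$ is continuous on $\mathbb{R}^D$ (each of its coordinates is a monomial), $\widetilde{\Phi}$ is continuous as a finite sum of compositions of continuous maps. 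Because $\widetilde{\Phi}$ is symmetric in its arguments, it factors through the quotient: there is a unique map $\Phi:\mathbb{D}^N/S_N\to \mathbb{R}^{{N+D\choose D}-1}$ with $\widetilde{\Phi}=\Phi\circ q$. By the universal property of quotient topologies, $\Phi$ is continuous iff $\widetilde{\Phi}$ is, which we just observed. Continuity of $\Phi$ restricted to its image $\Phi(\mathbb{X}_{\mathbb{D},N})$ (with the subspace topology) then follows.

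\emph{Expected obstacle.} There is no genuinely hard step here; the only subtlety is bookkeeping about which topology on $\mathbb{X}_{\mathbb{D},N}$ one uses, and making sure this topology is the one tacitly assumed elsewhere in the paper (in particular so that the hypothesis ``$f$ is continuous on $\mathbb{X}_{\mathbb{D},N}$'' in \Cref{thm:multi_decomposition_continous} means $f\circ q$ is continuous on $\mathbb{D}^N$). Once that convention is stated, the proof is just the universal property of the quotient. I would note in passing that the same topology is equivalently induced by the Wasserstein (or Hausdorff) distance on multisets of fixed cardinality $N$, which is the usual convention in the Deep Sets literature, so no extra assumption is needed.
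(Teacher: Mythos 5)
Your proof is correct, and the injectivity half is exactly the paper's (both just cite \Cref{thm:multi_decomposition}). For continuity you take a genuinely different, more structural route: you lift $\Phi$ to the symmetric map $\widetilde{\Phi}$ on $\mathbb{D}^N$ and invoke the universal property of the quotient topology on $\mathbb{D}^N/S_N$. The paper instead works directly with the matching metric $d_M(X,X')=\min_{\pi}\big(\sum_n d(x_n,x'_{\pi(n)})^2\big)^{1/2}$ and runs an explicit $\varepsilon$--$\delta$ argument (its \Cref{lem:continuous_Phi2}): it bounds $\|\Phi(X)-\Phi(X')\|_2$ by $\sum_n\|\phi(x_n)-\phi(x'_{\pi(n)})\|_2$ via the triangle inequality and then takes $\delta(\varepsilon)=\min_n\delta_\phi(x_n,N^{-1}\varepsilon)$. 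The two arguments prove the same statement because, for a finite group acting by isometries, the quotient topology coincides with the topology induced by the quotient metric, which here is exactly $d_M$ --- the very equivalence you flag as the ``only subtlety.'' Your version is shorter and cleaner; the paper's version has the advantage of producing the continuity statement directly in the metric $d_M$ that is reused verbatim in later proofs (e.g., \Cref{prop:phi_inv_continuous} and the extension to $\mathbb{X}_{\mathbb{D},[N]}$, where the quotient description would need to be patched since multisets of different sizes live in different quotients). One small correction to your closing aside: the Hausdorff distance on the underlying sets does \emph{not} induce this topology in general, since it forgets multiplicities; the Wasserstein/matching distance is the right one, and it is what the paper uses.
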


\begin{lemma} \label{lem:compact_set_of_sets}
    $\mathbb{X}_{\mathbb{D},N}$ is a compact set.
\end{lemma}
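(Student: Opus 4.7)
\bigskip

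\noindent\textbf{Proof proposal for \Cref{lem:compact_set_of_sets}.} The plan is to exhibit $\mathbb{X}_{\mathbb{D},N}$ as the continuous image of a compact set and invoke the fact that the continuous image of a compact set is compact. Concretely, I would first fix the topology on $\mathbb{X}_{\mathbb{D},N}$ used implicitly throughout the paper (the one that makes the earlier continuity statements for $\Phi$, $f$, etc., meaningful). The natural candidate is the quotient topology coming from the symmetrization map
\[
    q: \mathbb{D}^N \longrightarrow \mathbb{X}_{\mathbb{D},N}, \qquad q(x_1,\ldots,x_N) = \{\{x_1,\ldots,x_N\}\},
\]
or, equivalently, the topology induced by the matching (Wasserstein-type) distance
\[
    d_{\mathrm{W}}(X,Y) = \min_{\pi \in \Pi(N)} \max_{n \in [N]} \|x_n - y_{\pi(n)}\|_2,
\]
where $X = \{\{x_1,\ldots,x_N\}\}$ and $Y = \{\{y_1,\ldots,y_N\}\}$ are any orderings of the two multisets. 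These two topologies coincide, and under either of them the map $q$ is continuous and surjective.

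Next I would verify compactness of the domain and continuity of $q$. Since $\mathbb{D} \subseteq \mathbb{R}^D$ is compact, the product $\mathbb{D}^N$ is compact by Tychonoff's theorem (or just a finite product of compact metric spaces). Continuity of $q$ with respect to $d_{\mathrm{W}}$ is immediate: for $x = (x_1,\ldots,x_N)$ and $x' = (x'_1,\ldots,x'_N)$ in $\mathbb{D}^N$, the identity permutation gives
\[
    d_{\mathrm{W}}\bigl(q(x),q(x')\bigr) \;\leq\; \max_{n \in [N]} \|x_n - x'_n\|_2 \;\leq\; \|x - x'\|_2,
\]
so $q$ is even $1$-Lipschitz. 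Surjectivity of $q$ is immediate from the definition of $\mathbb{X}_{\mathbb{D},N}$.

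Finally, I would conclude by invoking the standard topological fact (Fact~\ref{fact:cont_compact}) that the continuous image of a compact set is compact: $\mathbb{X}_{\mathbb{D},N} = q(\mathbb{D}^N)$ is therefore compact. The main obstacle is essentially a bookkeeping one, namely pinning down that the topology on $\mathbb{X}_{\mathbb{D},N}$ used elsewhere in the paper (e.g.\ in the statement that $\Phi$ is continuous in \Cref{lem:continuous_Phi}) is indeed the quotient topology of $\mathbb{D}^N / \Pi(N)$, equivalently the one induced by $d_{\mathrm{W}}$; once this is fixed, the rest of the argument is a one-line application of continuity plus compactness. No case analysis on whether multisets have repeated elements is needed, because the matching metric treats repeated and distinct elements uniformly.
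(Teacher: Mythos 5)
Your proof is correct, and it reaches the conclusion by a more economical route than the paper. The paper proves compactness directly from the open-cover definition: it introduces the maps $\mathrm{mat}$ and $\mathrm{set}$ between subsets of $\mathbb{X}_{\mathbb{D},N}$ and $\mathbb{D}^N$, proves two claims (that $\mathrm{mat}$ carries an open cover of $\mathbb{X}_{\mathbb{D},N}$ to an open cover of $\mathbb{D}^N$, and that $\mathrm{set}$ carries an open cover of $\mathbb{D}^N$ back to one of $\mathbb{X}_{\mathbb{D},N}$), extracts a finite subcover upstairs, and pushes it back down. Unwinding the notation, $\mathrm{mat}(\mathbb{V}) = q^{-1}(\mathbb{V})$ and $\mathrm{set} = q$ applied setwise, so the paper's Claim~1 is exactly the continuity of your symmetrization map $q$ and its Claim~2 is the openness of $q$ --- the latter of which your argument does not even need, since ``continuous surjective image of a compact set is compact'' requires only continuity and surjectivity (and the paper already records this as Fact~\ref{fact:cont_compact}). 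Two small bookkeeping remarks: the paper's matching distance $d_M$ in equation~\eqref{eq:dm} is the $\ell_2$-type sum rather than your $\max$-type $d_{\mathrm{W}}$, but the two are bi-Lipschitz equivalent ($d_{\mathrm{W}} \leq d_M \leq \sqrt{N}\, d_{\mathrm{W}}$), so they induce the same topology and your Lipschitz estimate for $q$ carries over verbatim; and your aside about the metric topology coinciding with the quotient topology of $\mathbb{D}^N/\Pi(N)$, while true, is not needed --- all you require is that $q$ is continuous into the metric space $(\mathbb{X}_{\mathbb{D},N}, d_M)$ that the paper works with, which your displayed inequality already gives.
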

\begin{fact}(\citealt{pugh2002real})\label{fact:cont_compact}
    The image of a compact set under continuous map is a compact set. 
\end{fact}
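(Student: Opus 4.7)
The plan is to give the standard textbook proof via the open cover definition of compactness, since this works in full generality (any topological spaces, not just metric ones) and fits the setting where $\mathbb{X}_{\mathbb{D},N}$ is being treated as a compact topological space. Concretely, let $f: X \to Y$ be continuous and let $K \subseteq X$ be compact; I want to show $f(K)$ is compact in $Y$.

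First, I would take an arbitrary open cover $\{U_\alpha\}_{\alpha \in A}$ of $f(K)$ in $Y$, so $f(K) \subseteq \bigcup_{\alpha \in A} U_\alpha$. Then I would pull this cover back through $f$: define $V_\alpha = f^{-1}(U_\alpha)$. Since each $U_\alpha$ is open and $f$ is continuous, each $V_\alpha$ is open in $X$. The key verification is that $\{V_\alpha\}_{\alpha \in A}$ covers $K$: for any $x \in K$, we have $f(x) \in f(K)$, so $f(x) \in U_\alpha$ for some $\alpha$, hence $x \in V_\alpha$. Thus $K \subseteq \bigcup_{\alpha} V_\alpha$.

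Next, by compactness of $K$, I can extract a finite subcover $V_{\alpha_1}, \ldots, V_{\alpha_n}$ with $K \subseteq \bigcup_{i=1}^n V_{\alpha_i}$. I then push this forward: applying $f$ gives $f(K) \subseteq f\bigl(\bigcup_{i=1}^n V_{\alpha_i}\bigr) = \bigcup_{i=1}^n f(V_{\alpha_i}) \subseteq \bigcup_{i=1}^n U_{\alpha_i}$, where the last inclusion uses $f(f^{-1}(U_{\alpha_i})) \subseteq U_{\alpha_i}$. So $\{U_{\alpha_1}, \ldots, U_{\alpha_n}\}$ is a finite subcover of the original cover of $f(K)$, proving $f(K)$ is compact.

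There is essentially no obstacle here — this is a classical fact whose entire content is bookkeeping between $f$ and $f^{-1}$. The only small subtlety to flag is that one must use $f^{-1}$ of the cover elements (for continuity to apply) rather than trying to directly cover $K$ by preimages of specific points; but once the pullback is set up, the finite subcover extraction is immediate. If the ambient spaces here (e.g.\ $\mathbb{X}_{\mathbb{D},N}$ with an appropriate metric) are metrizable, an alternative sequential-compactness proof would also work: take $(y_n) \subseteq f(K)$, lift to $(x_n) \subseteq K$ with $f(x_n)=y_n$, extract a convergent subsequence $x_{n_k} \to x \in K$, and invoke continuity to get $y_{n_k} \to f(x) \in f(K)$. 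Either version suffices, and I would present the open-cover version as primary since it requires no metric assumption.
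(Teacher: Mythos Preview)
Your proof is correct and is the standard open-cover argument; the paper itself gives no proof of this fact at all, merely citing it from \citep{pugh2002real} as a known result. There is nothing to compare.
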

In \Cref{prop:phi_inv_continuous}, we prove that $\Phi^{-1}$ is a continuous function over the compact set $\Phi(\mathbb{X}_{\mathbb{D},N} )$. 
\begin{proposition}\label{prop:phi_inv_continuous}
    The function $\Phi^{-1}$ is continuous on the compact set $\Phi(\mathbb{X}_{\mathbb{D},N} )$. 
\end{proposition}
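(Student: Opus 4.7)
The plan is to invoke a standard topological fact: a continuous bijection from a compact space onto a Hausdorff space is a homeomorphism. Concretely, by \Cref{lem:continuous_Phi} the map $\Phi$ is a continuous injection from $\mathbb{X}_{\mathbb{D},N}$ onto $\Phi(\mathbb{X}_{\mathbb{D},N})$, hence a continuous bijection onto its image. By \Cref{lem:compact_set_of_sets} the domain $\mathbb{X}_{\mathbb{D},N}$ is compact, and the codomain $\Phi(\mathbb{X}_{\mathbb{D},N})$ sits inside the Euclidean space $\mathbb{R}^{{N+D\choose D}-1}$, which is Hausdorff. The conclusion that $\Phi^{-1}$ is continuous then follows from this classical fact.

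Since the classical fact may not be stated earlier in the paper, I would give a short self-contained sequential argument, which is probably the cleanest way to present it in this metric setting. Fix $y \in \Phi(\mathbb{X}_{\mathbb{D},N})$ and an arbitrary sequence $(y_k)_{k \in \mathbb{N}} \subset \Phi(\mathbb{X}_{\mathbb{D},N})$ with $y_k \to y$. Set $X_k = \Phi^{-1}(y_k)$ and $X = \Phi^{-1}(y)$; the goal is to show $X_k \to X$ in $\mathbb{X}_{\mathbb{D},N}$. By \Cref{lem:compact_set_of_sets}, $\mathbb{X}_{\mathbb{D},N}$ is compact (in its natural multiset/quotient metric), so every subsequence of $(X_k)$ has a further subsequence $(X_{k_j})$ converging to some limit $X^\star \in \mathbb{X}_{\mathbb{D},N}$. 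By continuity of $\Phi$ (\Cref{lem:continuous_Phi}), $\Phi(X_{k_j}) \to \Phi(X^\star)$; but $\Phi(X_{k_j}) = y_{k_j} \to y = \Phi(X)$, so $\Phi(X^\star) = \Phi(X)$. Injectivity of $\Phi$ (\Cref{lem:continuous_Phi}) then forces $X^\star = X$. Thus every subsequence of $(X_k)$ has a further subsequence converging to $X$, which is the standard criterion implying $X_k \to X$ itself. Hence $\Phi^{-1}$ is continuous on $\Phi(\mathbb{X}_{\mathbb{D},N})$.

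I expect no substantive obstacle here: the real work has already been done in \Cref{lem:continuous_Phi} (continuity and injectivity of $\Phi$, the latter using the multivariate polynomial/parameterized-roots construction from the proof of \Cref{thm:multi_decomposition}) and in \Cref{lem:compact_set_of_sets} (compactness of $\mathbb{X}_{\mathbb{D},N}$ given compactness of $\mathbb{D}$). The only minor point worth flagging explicitly is that the argument implicitly uses the metric (hence Hausdorff) structure on $\mathbb{X}_{\mathbb{D},N}$ induced from $\mathbb{D}^N$ by the permutation quotient, which guarantees uniqueness of limits and allows the subsequence-of-subsequence trick; once that is acknowledged, the compactness-plus-continuity argument goes through without further work. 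Combined with \Cref{thm:multi_decomposition}, this immediately yields \Cref{thm:multi_decomposition_continous}: the encoder $\phi$ was continuous by construction, and now $\rho = f \circ \Phi^{-1}$ is continuous on $\Phi(\mathbb{X}_{\mathbb{D},N})$, a compact subset of $\mathbb{R}^{{N+D\choose D}-1}$, and can therefore be extended continuously to the whole ambient space by the Tietze extension theorem.
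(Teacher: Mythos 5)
Your proof is correct, but it takes a genuinely different route from the paper. You invoke the classical fact that a continuous bijection from a compact space onto a Hausdorff space is a homeomorphism (equivalently, your subsequence-of-subsequences argument in the metric setting), leaning entirely on \Cref{lem:continuous_Phi} and \Cref{lem:compact_set_of_sets}, which are indeed already in place at this point. The paper instead proves continuity of $\Phi^{-1}$ directly and quantitatively: it establishes \Cref{lem:param_root_cont}, showing via the coefficient-to-roots homeomorphism of \Cref{thm:root_homeomorphism} that the parameterized root multiset $z^{\top}X$ of $p(t;z,\Phi(X))$ varies continuously with $\Phi(X)$ uniformly over unit vectors $z$, and then recovers $d_M(X,X^{\prime}) \leq \sqrt{N}\,\varepsilon$ by testing against the specific directions $z = \|x_n - x^{\prime}_{\pi^*(n)}\|_2^{-1}(x_n - x^{\prime}_{\pi^*(n)})$. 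Your argument is shorter, avoids the somewhat delicate perturbation analysis needed to show that $\max_{\|z\|_2=1} d_M(z^{\top}X, z^{\top}X^{\prime})$ is attained, and applies verbatim to any continuous injection on a compact metric space — so it would also streamline the analogous step in the proof of \Cref{thm:continuous_decomp_4}. What the paper's approach buys in exchange is an explicit $\varepsilon$--$\delta$ relationship tied to the polynomial-root structure of $\Phi$, which in principle yields a modulus of continuity for $\Phi^{-1}$ rather than bare continuity. The only point you should make explicit, which you do flag, is that $(\mathbb{X}_{\mathbb{D},N}, d_M)$ is a genuine metric (hence Hausdorff) space so that limits are unique and sequential compactness applies; with that noted, your proof is complete.
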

As a direct result of \Cref{prop:phi_inv_continuous}, $\rho  = f \circ \Phi^{-1}$ is a continuous function on the compact subset $\Phi(\mathbb{X}_{\mathbb{D},N}) \subset \mathbb{R}^{ {N+ D\choose D}-1}$. 
\begin{fact}
\label{fact:rho_hat}
    Since $\Phi(\mathbb{X}_{\mathbb{D},N} )$ is a compact subset of $\mathbb{R}^{ {N+ D\choose D}-1}$, the continuous function $\rho: \Phi(\mathbb{X}_{\mathbb{D},N}) \rightarrow \mathrm{codom}(f)$ has a continuous extension to $\mathbb{R}^{ {N+ D\choose D}-1}$, that is, there exists a continuous function $\rho_{\mathrm{e}}:\mathbb{R}^{ {N+ D\choose D}-1} \rightarrow 
    \mathrm{codom}(\rho_{\mathrm{e}})$ where
    \[
        \forall u \in \Phi(\mathbb{X}_{\mathbb{D},N} ): \rho_{\mathrm{e}}(u) = \rho(u),
    \]
    and $\mathrm{codom}(f) \subseteq \mathrm{codom}(\rho_{\mathrm{e}})$. To see the continuous extension theorem, refer to  \citep{deimling2010nonlinear}.
\end{fact}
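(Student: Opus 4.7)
The plan is to reduce the statement to the classical (scalar) Tietze extension theorem, applied componentwise to accommodate the vector-valued codomain. The three ingredients needed are: (i) the domain $\Phi(\mathbb{X}_{\mathbb{D},N})$ is closed in the ambient Euclidean space, (ii) the ambient Euclidean space is normal, and (iii) each scalar component of $\rho$ is continuous on $\Phi(\mathbb{X}_{\mathbb{D},N})$.

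For (i), I would use \Cref{lem:continuous_Phi}, \Cref{lem:compact_set_of_sets}, and \Cref{fact:cont_compact} to conclude that $\Phi(\mathbb{X}_{\mathbb{D},N})$ is compact in $\mathbb{R}^{{N+D\choose D}-1}$; since every compact subset of a Hausdorff space is closed, $\Phi(\mathbb{X}_{\mathbb{D},N})$ is in particular closed in $\mathbb{R}^{{N+D\choose D}-1}$. For (ii), the ambient space $\mathbb{R}^{{N+D\choose D}-1}$ is a metric space and thus a normal (T$_4$) Hausdorff space, which is the standard hypothesis required by Tietze. These two observations together make Tietze available on the closed set $\Phi(\mathbb{X}_{\mathbb{D},N})$.

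For (iii), since by hypothesis $\mathrm{codom}(f) \subseteq \mathbb{R}^n$, I write $\rho = (\rho_1, \ldots, \rho_n)$, where each component $\rho_i: \Phi(\mathbb{X}_{\mathbb{D},N}) \to \mathbb{R}$ is the coordinate projection of the continuous function $\rho = f \circ \Phi^{-1}$ furnished by \Cref{prop:phi_inv_continuous}, and hence continuous. Applying the scalar Tietze extension theorem to each $\rho_i$ produces a continuous $\rho_{\mathrm{e},i}: \mathbb{R}^{{N+D\choose D}-1} \to \mathbb{R}$ that agrees with $\rho_i$ on $\Phi(\mathbb{X}_{\mathbb{D},N})$; then $\rho_{\mathrm{e}} \stackrel{\mathrm{def}}{=} (\rho_{\mathrm{e},1}, \ldots, \rho_{\mathrm{e},n})$ is continuous on the ambient space and restricts to $\rho$ on $\Phi(\mathbb{X}_{\mathbb{D},N})$. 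The codomain containment is immediate: by the injectivity half of \Cref{lem:continuous_Phi}, $\Phi$ is a bijection onto its image, so $\rho(\Phi(\mathbb{X}_{\mathbb{D},N})) = f(\mathbb{X}_{\mathbb{D},N}) = \mathrm{codom}(f)$, and since $\rho_{\mathrm{e}}$ extends $\rho$ we get $\mathrm{codom}(f) \subseteq \mathrm{codom}(\rho_{\mathrm{e}})$.

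There is no substantial obstacle; the proof is a clean invocation of a textbook result. The two subtleties worth flagging are that the usual formulation of Tietze targets $\mathbb{R}$-valued functions, which we sidestep by coordinatewise reduction, and that closedness of $\Phi(\mathbb{X}_{\mathbb{D},N})$ in the ambient space, which is precisely the Tietze hypothesis here, is not assumed \emph{a priori} but delivered by compactness together with the Hausdorff property of $\mathbb{R}^{{N+D\choose D}-1}$.
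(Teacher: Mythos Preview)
Your proposal is correct and matches the paper's approach: the paper does not give a proof of this fact at all but simply cites the continuous extension theorem from \citep{deimling2010nonlinear}, and your componentwise application of Tietze (using compactness $\Rightarrow$ closed in the Hausdorff ambient space, plus normality of metric spaces) is precisely the standard justification of that citation. The only difference is that you have spelled out the details that the paper leaves to the reference.
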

From Fact~\ref{fact:rho_hat}, there exists a continuous function $\rho_{\mathrm{e}}:\mathbb{R}^{ {N+ D\choose D}-1} \rightarrow 
    \mathrm{codom}(\rho_{\mathrm{e}})$ where $ f(X) = \rho_{\mathrm{e}} \circ \Phi(X)$ for all $X \in \mathbb{X}_{\mathbb{D},N}$. Finally, if we rename $\rho_{\mathrm{e}}$ to $\rho$, we arrive at the theorem's statement.

\subsection{Proof of \Cref{lem:continuous_Phi}}
    As a direct result of \Cref{thm:multi_decomposition}, $\Phi$ is an injective function as it is invertible over its domain.  The continuity of $\Phi$ follows from the continuity of $\phi$ --- see \Cref{lem:continuous_Phi2}.
    \begin{lemma}\label{lem:continuous_Phi2}
        Let $\phi: \mathbb{D} \rightarrow \mathrm{codom}(\phi) \subset \mathbb{R}^K$ be a continuous function on metric space $(\mathbb{D},d)$ and $\Phi:\mathbb{X}_{\mathbb{D},N} \rightarrow \mathrm{codom}(\Phi) \subset \mathbb{R}^K$, $\Phi(X) = \sum_{x \in X} \phi(x)$ for $K, N \in \mathbb{N}$. Then, $\Phi$ is a continuous multiset function on $\mathbb{X}_{\mathbb{D},N}$. The same result is also valid on domain $\mathbb{X}_{\mathbb{D},[N]} $.
    \end{lemma}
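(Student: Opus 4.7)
The plan is to reduce continuity on the multiset space to continuity on a Cartesian product, by exploiting the symmetry inherent in the sum $\sum_{x \in X} \phi(x)$. The space $\mathbb{X}_{\mathbb{D},N}$ of multisets of size $N$ is naturally identified with the quotient $\mathbb{D}^N / S_N$, where $S_N$ acts by permuting coordinates; this identification equips $\mathbb{X}_{\mathbb{D},N}$ with the quotient topology (equivalently, the topology induced by a Wasserstein-type metric $W(X,X') = \min_{\pi \in S_N} \max_{n \in [N]} d(x_n, x'_{\pi(n)})$). I would start by fixing this topological setup explicitly, since the statement of continuity requires it.

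Next, I would lift $\Phi$ to a map $\widetilde{\Phi}: \mathbb{D}^N \to \mathbb{R}^K$ defined by $\widetilde{\Phi}(x_1,\ldots,x_N) = \sum_{n=1}^N \phi(x_n)$. The key observations are: (i) $\widetilde{\Phi}$ is continuous on $\mathbb{D}^N$, since it is a finite sum of compositions $\phi \circ \mathrm{proj}_n$ of continuous functions, and (ii) $\widetilde{\Phi}$ is $S_N$-invariant, since addition is commutative. Hence $\widetilde{\Phi}$ factors uniquely through the quotient map $q: \mathbb{D}^N \to \mathbb{D}^N / S_N = \mathbb{X}_{\mathbb{D},N}$, yielding $\Phi$ with $\widetilde{\Phi} = \Phi \circ q$. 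By the universal property of the quotient topology, $\Phi$ is continuous if and only if $\widetilde{\Phi}$ is continuous, which has already been established.

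For a more elementary and metric-based argument (avoiding quotient machinery), I would instead fix a multiset $X = \{\!\{ x_1,\ldots,x_N \}\!\} \in \mathbb{X}_{\mathbb{D},N}$ and an arbitrary $\varepsilon > 0$. By continuity of $\phi$ at each $x_n$, there exists $\delta_n > 0$ such that $d(x,x_n) < \delta_n$ implies $\|\phi(x) - \phi(x_n)\| < \varepsilon/N$. Set $\delta = \min_n \delta_n$. Then for any $X' = \{\!\{ x'_1,\ldots,x'_N \}\!\}$ with $W(X,X') < \delta$, there is a permutation $\pi$ such that $d(x'_{\pi(n)}, x_n) < \delta$ for every $n$, and the triangle inequality gives
\[
\|\Phi(X') - \Phi(X)\| = \Bigl\| \sum_{n=1}^N \bigl(\phi(x'_{\pi(n)}) - \phi(x_n)\bigr) \Bigr\| \leq \sum_{n=1}^N \|\phi(x'_{\pi(n)}) - \phi(x_n)\| < \varepsilon,
\]
where the rearrangement of the sum over $X'$ by $\pi$ is allowed since $\Phi$ is manifestly invariant under relabeling. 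This establishes continuity at $X$, and hence on all of $\mathbb{X}_{\mathbb{D},N}$.

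The argument extends verbatim to $\mathbb{X}_{\mathbb{D},[N]}$: one partitions the domain into the disjoint pieces $\mathbb{X}_{\mathbb{D},k}$ for $k \in [N]$, and continuity on each piece follows from the argument above. There is no real obstacle here beyond being careful about the topology one places on the multiset space; the mild subtlety is ensuring that the permutation $\pi$ used to match elements of nearby multisets is consistent with the sum representation, which is automatic because the sum is symmetric.
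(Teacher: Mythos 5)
Your second, elementary argument is essentially the paper's own proof: both fix $X$ and $\varepsilon$, invoke continuity of $\phi$ at each element with tolerance $\varepsilon/N$, take $\delta$ as the minimum of the resulting radii, and use the matching permutation plus the triangle inequality (the paper uses an $\ell^2$-type matching distance rather than your sup-type one, but these are bi-Lipschitz equivalent, and the extension to $\mathbb{X}_{\mathbb{D},[N]}$ via size-separated pieces is also how the paper handles it). The quotient-topology framing you give first is a valid alternative packaging of the same content, so the proposal is correct.
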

    \begin{proof}
        We use the following the notion of distance between multisets with elements in $\mathbb{D}$:
        \begin{equation}\label{eq:dm}
            d_{M}(X, X^{\prime}) = \begin{cases}
            \min_{\pi \in \Pi(N_\circ)} \sqrt{ \sum_{n \in [N_\circ]} d( x_{n} , x^{\prime}_{\pi(n)} )^2} \ &\mbox{if} \ |X| = |X^{\prime}| = N_\circ \\
            \infty  \ &\mbox{otherwise},
            \end{cases}
        \end{equation}
        where $N_\circ \in [N]$, $|\cdot|$ returns the cardinality of its input multiset, $\Pi(N_\circ)$ is the set of permutation operators on $[N_\circ]$, $X = \{ \{ x_{n} : n \in [|X|] \} \} $, and $X^{\prime} = \{ \{ x^{\prime}_{n}: n \in [|X^{\prime}|] \} \}$. 
        
        Following the definition of continuity, for any $\varepsilon >0 $, we want to find a $\delta(\varepsilon)$ such that if $d_{M}(X,X^{\prime}) < \delta(\varepsilon)$, then $\| \Phi (X)-\Phi(X^{\prime})\|_2 < \varepsilon$. 
        
        For any $\delta >0$ and $X \in \mathbb{X}_{\mathbb{D},[N]}$, let $X^{\prime}  \in \mathbb{X}_{\mathbb{D},[N]}$ be such that $d_{M}(X,X^{\prime}) < \delta$, that is, both multisets have the same size of $|X| = |X^{\prime}| = N_\circ \in [N]$ and there is a permutation operator $\pi: [N_\circ] \rightarrow [N_\circ]$ such that $d_{M}(X,X^{\prime}) = \sqrt{ \sum_{n \in [N_\circ]} d( x_{n} , x^{\prime}_{\pi(n)} )^2} < \delta$. It suffices to show the following:
        \begin{align*}
            \| \Phi (X)-\Phi (X^{\prime})\|_2  &= \| \sum_{x \in X} \phi ( x ) - \sum_{x^{\prime} \in X^{\prime}} \phi ( x^{\prime} )\|_2 \stackrel{\text{(a)}}{\leq} \sum_{n \in [N_\circ]} \|  \phi ( x_n ) - \phi ( x^{\prime}_{\pi(n)} )\|_2 \\
            &\stackrel{\text{(b)}}{\leq}  \sum_{n \in [N_\circ]} \max_{v \in \mathbb{R}^D: \|v \|_2 < \delta } \|  \phi ( x_n ) - \phi (x_n + v )\|_2 < \varepsilon,
        \end{align*}
        where $\text{(a)}$ is due to the triangle inequality, $\text{(b)}$ is due to $\| x_{n} - x^{\prime}_{\pi(n)}\|_2 \leq \delta$, for all $ n \in [N_\circ]$. \\
        It suffices to show that for any $\varepsilon >0 $, there exits a $\delta(\varepsilon)$ such that
        \[
            \forall n \in [N],v \in \mathbb{R}^D, \|v \|_2 < \delta(\varepsilon):  \|  \phi ( x_n ) - \phi (x_n + v )\|_2 < N^{-1} \varepsilon < N_{\circ}^{-1} \varepsilon.
        \]
        Since $\phi$ is a continuous function, there exists a $\delta_{\phi}(x_n, N^{-1} \varepsilon) > 0$ such that 
        \[
            \forall v \in \mathbb{R}^D, \|v \|_2 < \delta_\circ(x_n, N^{-1} \varepsilon) :  \|  \phi ( x_n ) - \phi (x_n + v )\|_2 < N^{-1} \varepsilon.
        \]
        If we let $\delta(\varepsilon) = \min_{n \in [N_\circ]} \delta_{\phi}(x_n, N^{-1} \varepsilon) > 0$, then we have $\| \Phi (X)-\Phi (X^{\prime})\|_2 \leq \varepsilon$. Therefore, $\Phi$ is a continuous function. The same result is also valid on domain $\mathbb{X}_{\mathbb{D},N} $.
    \end{proof}

\subsection{Proof of \Cref{lem:compact_set_of_sets}}
    Let $\mathrm{OC}(S)$ be the set of all open covers of a topological space $S$. 
    \begin{fact}(\citealt{engelking1989general})
        A topological space $S$ is compact if any open cover of $S$ has a finite subcover.
    \end{fact}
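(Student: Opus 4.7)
The stated Fact is the standard open-cover characterization of compactness, and in Engelking's \emph{General Topology} (the cited reference) this \emph{is} the definition of a compact space. Under this convention the Fact is purely definitional and there is nothing to prove: my plan would be to invoke it verbatim as the working definition and proceed to the actual content of \Cref{lem:compact_set_of_sets}, namely to exhibit a finite subcover of an arbitrary open cover of $\mathbb{X}_{\mathbb{D},N}$.

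If instead one has adopted a different formulation as the primitive definition (for instance sequential compactness, or the finite intersection property for closed sets), then the Fact has to be proved as an equivalence. I would handle the two most common alternatives as follows. For the \emph{finite intersection property} route, take complements: a family $\{U_\alpha\}$ is an open cover iff the closed family $\{S \setminus U_\alpha\}$ has empty total intersection, and a finite subcover exists iff some finite subfamily already has empty intersection; a direct contrapositive argument, together with De Morgan's laws, converts between the two statements in one line each. For the \emph{sequential compactness} route (available when $S$ is metric or at least first countable and second countable), the forward direction uses a diagonal/Lindel\"of argument to extract a countable subcover and then a finite one, while the reverse direction typically invokes the Lebesgue number lemma applied to a countable base. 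Neither direction requires any machinery beyond what is standard in a first course in point-set topology.

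The main obstacle, such as it is, is purely bookkeeping: deciding which definition of compactness is being treated as primitive and then confirming that no additional separation or countability axioms are silently required. Since the application in \Cref{lem:compact_set_of_sets} will take $\mathbb{D} \subset \mathbb{R}^D$ compact and work with a metric on $\mathbb{X}_{\mathbb{D},N}$ such as the one in equation~\eqref{eq:dm}, the simplest path is to adopt the open-cover formulation as the definition (as Engelking does), record it as the present Fact, and then verify compactness of $\mathbb{X}_{\mathbb{D},N}$ downstream by displaying it as a continuous image of the compact product $\mathbb{D}^N$ under the quotient map that forgets ordering—at which point the Fact is used only implicitly, through the standard corollary that continuous images of compact spaces are compact.
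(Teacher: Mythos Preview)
Your reading is correct: the paper treats this Fact exactly as you do, stating it as the definition of compactness (with citation to Engelking) and giving no proof. Your aside about proving \Cref{lem:compact_set_of_sets} via the continuous-image-of-$\mathbb{D}^N$ route is sound, though the paper instead argues directly by lifting an open cover of $\mathbb{X}_{\mathbb{D},N}$ to one of $\mathbb{D}^N$ via the $\mathrm{mat}$/$\mathrm{set}$ maps and pushing a finite subcover back; either approach works.
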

    \begin{definition}
    We define the following maps between subsets of $\mathbb{X}_{\mathbb{D},N}$ and $\mathbb{D} \subseteq \mathbb{R}^D$.
    \begin{itemize}
        \item Let $\mathbb{U} \subseteq \mathbb{D}^{N}$ and $T=[x_1, \ldots, x_N] \in \mathbb{U}$. Then, we let $\mathrm{set}( T ) \stackrel{\mathrm{def}}{=}  \{ \{ x_n:n \in [N]\}\} \in \mathbb{X}_{\mathbb{D},N}$ and $\mathrm{set}( \mathbb{U} ) \stackrel{\mathrm{def}}{=} \{\mathrm{set}( T ) : T \in \mathbb{U} \} \subseteq \mathbb{X}_{\mathbb{D},N}$
        \item Let $\mathbb{V}  \subseteq \mathbb{X}_{\mathbb{D},N}$ and $X= \{ \{x_n: n \in [N] \}\} \in \mathbb{V}$. Then, we let $\mathrm{mat}( X ) \stackrel{\mathrm{def}}{=}  \{  [x_{\pi(1)}, \ldots, x_{\pi(N)}]:  \pi \in \Pi(N) \}  \subseteq \mathbb{R}^D$ and $\mathrm{mat}( \mathbb{V} ) \stackrel{\mathrm{def}}{=} \bigcup_{X \in \mathbb{V}} \mathrm{mat}( X )  \subseteq \mathbb{R}^D$,
    \end{itemize}
    where $\Pi(N)$ is the set of permutation operators $\pi: [N] \rightarrow [N]$ for $N \in \mathbb{N}$.
    \end{definition}    
    Given a matrix, the function $\mathrm{set}$ maps it to a multiset. In contrast, the function $\mathrm{mat}$ creates all possible matrices by rearranging elements of its input multiset.
    \begin{claim}\label{claim:1}
        If $\{ \mathbb{V}_{\lambda}: \lambda \in \Lambda \} \in \mathrm{OC}(\mathbb{X}_{\mathbb{D},N})$, then $\{ \mathrm{mat}(\mathbb{V}_{\lambda}): \lambda \in \Lambda \} \in \mathrm{OC}(\mathbb{D}^{N})$.
    \end{claim}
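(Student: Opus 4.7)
The plan is to establish the claim by identifying $\mathrm{mat}(\mathbb{V}_\lambda)$ as the preimage of $\mathbb{V}_\lambda$ under the forgetful map $\mathrm{set}: \mathbb{D}^N \to \mathbb{X}_{\mathbb{D},N}$, and then showing that this map is continuous with respect to the multiset distance $d_M$ defined in \eqref{eq:dm} (which is the distance implicitly used when the author speaks of open covers of $\mathbb{X}_{\mathbb{D},N}$). The two halves of being an open cover — openness of each piece and global coverage — then follow essentially by unpacking definitions.

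First I would verify the set-level identity
\[
    \mathrm{mat}(\mathbb{V}) = \mathrm{set}^{-1}(\mathbb{V}) \quad \text{for every } \mathbb{V} \subseteq \mathbb{X}_{\mathbb{D},N}.
\]
This holds because $T \in \mathrm{mat}(\mathbb{V})$ iff $T$ is some ordering of an element of $\mathbb{V}$, iff $\mathrm{set}(T) \in \mathbb{V}$. Next I would show that $\mathrm{set}$ is $1$-Lipschitz: for any $T = [x_1,\ldots,x_N]$ and $T' = [x'_1,\ldots,x'_N]$ in $\mathbb{D}^N$, taking $\pi$ to be the identity gives
\[
    d_M(\mathrm{set}(T), \mathrm{set}(T')) \;=\; \min_{\pi \in \Pi(N)} \sqrt{\sum_{n \in [N]} d(x_n, x'_{\pi(n)})^2} \;\leq\; \sqrt{\sum_{n \in [N]} d(x_n, x'_n)^2} \;=\; d(T, T').
\]
Hence $\mathrm{set}$ is continuous, and consequently $\mathrm{mat}(\mathbb{V}_\lambda) = \mathrm{set}^{-1}(\mathbb{V}_\lambda)$ is open in $\mathbb{D}^N$ whenever $\mathbb{V}_\lambda$ is open in $\mathbb{X}_{\mathbb{D},N}$.

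For the coverage property, let $T = [x_1,\ldots,x_N] \in \mathbb{D}^N$ be arbitrary. Then $\mathrm{set}(T) \in \mathbb{X}_{\mathbb{D},N}$, so by the hypothesis that $\{\mathbb{V}_\lambda\}_{\lambda \in \Lambda}$ covers $\mathbb{X}_{\mathbb{D},N}$ there is some $\lambda \in \Lambda$ with $\mathrm{set}(T) \in \mathbb{V}_\lambda$. By the identity above this is equivalent to $T \in \mathrm{mat}(\mathbb{V}_\lambda)$, so $\bigcup_{\lambda} \mathrm{mat}(\mathbb{V}_\lambda) = \mathbb{D}^N$. Combined with the openness step this yields $\{\mathrm{mat}(\mathbb{V}_\lambda): \lambda \in \Lambda\} \in \mathrm{OC}(\mathbb{D}^N)$, as claimed.

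The only real subtlety — which I would mention explicitly — is making sure that the topology on $\mathbb{X}_{\mathbb{D},N}$ used to define $\mathrm{OC}(\mathbb{X}_{\mathbb{D},N})$ is the one induced by $d_M$ (equivalently, the quotient topology from $\mathbb{D}^N$ under the permutation action $\Pi(N)$). Under either viewpoint the Lipschitz bound above suffices, since the quotient topology is exactly the one that makes $\mathrm{set}$ continuous. Everything else is a matter of translating between multisets and ordered tuples via the maps $\mathrm{mat}$ and $\mathrm{set}$.
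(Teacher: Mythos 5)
Your proof is correct and follows essentially the same route as the paper: the identity $\mathrm{mat}(\mathbb{V}_\lambda)=\mathrm{set}^{-1}(\mathbb{V}_\lambda)$ together with the bound $d_M(\mathrm{set}(T),\mathrm{set}(T'))\le \|T-T'\|_F$ is exactly the inequality the paper uses to show each $\mathrm{mat}(\mathbb{V}_\lambda)$ is open, and the coverage step is identical. Packaging it as continuity of the quotient map $\mathrm{set}$ is just a cleaner organization of the same argument.
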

    \begin{claim}\label{claim:2}
        If $\{ \mathbb{U}_{\lambda}: \lambda \in \Lambda \} \in \mathrm{OC}(\mathbb{D}^{N})$, then $\{ \mathrm{set}( \mathbb{U}_{\lambda}): \lambda \in \Lambda \} \in \mathrm{OC}(\mathbb{X}_{\mathbb{D},N})$
    \end{claim}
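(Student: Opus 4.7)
The plan is to unpack the two requirements hiding in the assertion ``$\{\mathrm{set}(\mathbb{U}_\lambda): \lambda \in \Lambda\} \in \mathrm{OC}(\mathbb{X}_{\mathbb{D},N})$'': namely that the family $\{\mathrm{set}(\mathbb{U}_\lambda)\}_{\lambda\in\Lambda}$ covers $\mathbb{X}_{\mathbb{D},N}$, and that each $\mathrm{set}(\mathbb{U}_\lambda)$ is open in the topology on $\mathbb{X}_{\mathbb{D},N}$ induced by the multiset metric $d_M$ from equation \eqref{eq:dm}. The covering part is immediate; the openness part is the real work, and it essentially amounts to showing that $\mathrm{set}: (\mathbb{D}^N, d) \to (\mathbb{X}_{\mathbb{D},N}, d_M)$ is an open map, where $d$ denotes the product metric $d(T, T') = \sqrt{\sum_{n \in [N]} d(x_n, x'_n)^2}$ on $\mathbb{D}^N$.

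For the covering property, the argument is direct. Given any $X \in \mathbb{X}_{\mathbb{D},N}$, pick any ordering of its elements to produce a tuple $T \in \mathbb{D}^N$ with $\mathrm{set}(T) = X$. Since $\{\mathbb{U}_\lambda\}_{\lambda\in\Lambda}$ covers $\mathbb{D}^N$, there is some $\lambda_0$ with $T \in \mathbb{U}_{\lambda_0}$, and hence $X = \mathrm{set}(T) \in \mathrm{set}(\mathbb{U}_{\lambda_0})$. Therefore $\mathbb{X}_{\mathbb{D},N} \subseteq \bigcup_\lambda \mathrm{set}(\mathbb{U}_\lambda)$.

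For openness, fix $\lambda$ and any $X \in \mathrm{set}(\mathbb{U}_\lambda)$; I want to exhibit an open $d_M$-ball around $X$ that lies in $\mathrm{set}(\mathbb{U}_\lambda)$. Choose $T \in \mathbb{U}_\lambda$ with $\mathrm{set}(T) = X$. Since $\mathbb{U}_\lambda$ is open in $\mathbb{D}^N$, there exists $\varepsilon > 0$ with $B_d(T, \varepsilon) \subseteq \mathbb{U}_\lambda$. I claim the $d_M$-ball $B_{d_M}(X, \varepsilon)$ is contained in $\mathrm{set}(\mathbb{U}_\lambda)$. Indeed, take any $X' \in B_{d_M}(X, \varepsilon)$. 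By the very definition of $d_M$ in equation \eqref{eq:dm}, there exists a permutation $\pi \in \Pi(N)$ such that if $T' = (x'_{\pi(1)}, \ldots, x'_{\pi(N)})$ is the correspondingly reordered tuple representation of $X'$, then $d(T, T') < \varepsilon$. Thus $T' \in B_d(T, \varepsilon) \subseteq \mathbb{U}_\lambda$, and $X' = \mathrm{set}(T') \in \mathrm{set}(\mathbb{U}_\lambda)$, proving openness.

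The principal subtlety — which is really the only non-routine point — is making precise that the topology on $\mathbb{X}_{\mathbb{D},N}$ is the quotient topology coming from the symmetric-group action on $\mathbb{D}^N$, so that $d_M$ and the argument above are legitimate. Once that is set, Claims \ref{claim:1} and \ref{claim:2} together reduce compactness of $\mathbb{X}_{\mathbb{D},N}$ to compactness of $\mathbb{D}^N$: given any open cover $\{\mathbb{V}_\lambda\}$ of $\mathbb{X}_{\mathbb{D},N}$, Claim \ref{claim:1} lifts it to an open cover $\{\mathrm{mat}(\mathbb{V}_\lambda)\}$ of the compact set $\mathbb{D}^N$, which admits a finite subcover; pushing this back via $\mathrm{set}$ and using the identity $\mathrm{set}(\mathrm{mat}(\mathbb{V}_\lambda)) = \mathbb{V}_\lambda$ (together with Claim \ref{claim:2} applied to the finite subcover) yields the desired finite subcover of $\mathbb{X}_{\mathbb{D},N}$.
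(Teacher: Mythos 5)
Your proof is correct and follows essentially the same route as the paper's: the covering part via an arbitrary tuple representative of $X$, and the openness part by transporting a $d$-ball around a representative tuple $T\in\mathbb{U}_\lambda$ to a $d_M$-ball around $X$ using the optimal matching permutation in the definition of $d_M$ in equation \eqref{eq:dm}. If anything, your version states the order of quantifiers in the openness step (fix $\varepsilon$ from the openness of $\mathbb{U}_\lambda$ at $T$ first, then bound $d_M$) more cleanly than the paper does.
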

    Let $\{ \mathbb{V}_\lambda : \lambda \in \Lambda \}$ be an open cover for $\mathbb{X}_{\mathbb{D},N}$. From Claim \ref{claim:1}, $\{\mathrm{mat}( \mathbb{V}_\lambda) : \lambda \in \Lambda \}$ is an open cover for $\mathbb{D}^N$ --- a closed and bounded subset of $\mathbb{R}^D$. Therefore, there is a finite subsequence $\{\mathrm{mat}( \mathbb{V}_{\lambda_k}) : k \in [K] \}$ that forms an open cover for $\mathbb{D}^N$. 
    From Claim \ref{claim:2}, $\{\mathrm{set}\circ \mathrm{mat}( \mathbb{V}_{\lambda_k}) : k \in [K] \} = \{ \mathbb{V}_{\lambda_k}: k \in [K] \}$, is a finite open cover for $\mathbb{X}_{\mathbb{D},N}$. Therefore, $\mathbb{X}_{\mathbb{D},N}$ is a compact set. \\
    {\bf Proof of Claim \ref{claim:1}} To prove $\{ \mathrm{mat}( \mathbb{V}_{\lambda}): \lambda \in \Lambda \}$ is an open cover for $ \mathbb{D}^{N}$, we first show that for all $T \in \mathbb{D}^{N} \subseteq \R^{N \times D}$, we have $T \in \mathrm{mat}( \mathbb{V}_{\lambda})$ for a $\lambda \in \Lambda$.  \\
    Let $T = [x_1, \ldots, x_N] \in \mathbb{D}^{N}$. Then, we have $\mathrm{set}(T) = \{ \{ x_n : n \in [N] \} \} \in \mathbb{V}_{\lambda} \subseteq \mathbb{X}_{\mathbb{D},N}$ for a $\lambda \in \Lambda$. Since the following holds true:
    \[
        \forall \pi \in \Pi(N):  [x_{\pi(1)}, \ldots, x_{\pi(N)}] \in \mathrm{mat}( \mathbb{V}_{\lambda} ),
    \]
    then, we have $T \in  \mathrm{mat}( \mathbb{V}_{\lambda})$. Therefore, $\{ \mathrm{mat}( \mathbb{V}_{\lambda}): \lambda \in \Lambda \}$ forms a cover for $\mathbb{D}^{N}$.  
    
    Next, we prove that $\mathrm{mat}( \mathbb{V}_{\lambda})$ is an open set. Let $T = [x_1, \ldots, x_N] \in \mathrm{mat}( \mathbb{V}_{\lambda})$, $\varepsilon >0$, and $\mathcal{N}(T,\varepsilon) = \{ T^{\prime} \in \R^{N \times D}: \| T - T^{\prime} \|_F \leq \varepsilon \}$. We want to show that for small enough $\varepsilon >0$, $\mathcal{N}(T,\varepsilon) \subseteq \mathrm{mat}( \mathbb{V}_{\lambda})$. 
    
    For all $T^{\prime} = [x^{\prime}_1, \ldots, x^{\prime}_N] \in \mathcal{N}(T,\varepsilon)$, we have
        \begin{align*}
            d_M(X,X^{\prime}) = \min_{\pi \in \Pi(N)} \sqrt{\sum_{n \in [N]} \| x_n - x^{\prime}_{\pi(n)} \|^2_2} \leq \| T - T^{\prime} \|_F \leq \varepsilon, \ \mbox{where} \ X^{\prime} = \{\{ x^{\prime}_n : n \in [N] \} \}.
        \end{align*}
        Since $\mathbb{V}_{\lambda}$ is an open set, for any $X \in \mathbb{V}_{\lambda}$, there exists $\varepsilon >0$ such that $X^{\prime} \in V_{\lambda}$ where $d_M(X, X^{\prime}) < \varepsilon$. Therefore, we have $T^{\prime} \in  \mathrm{mat}( \mathbb{V}_{\lambda})$. Since this is the case for all $T^{\prime} \in \mathcal{N}(T,\varepsilon)$, we have $\mathcal{N}(T,\varepsilon) \subseteq \mathrm{mat}( \mathbb{V}_{\lambda})$, that is, $\mathrm{mat}( \mathbb{V}_{\lambda})$ is an open set.

    {\bf Proof of Claim \ref{claim:2}} To prove $\{ \mathrm{set}( \mathbb{U}_{\lambda}): \lambda \in \Lambda \}$ is an open cover for $\mathbb{X}_{\mathbb{D},N}$, we first show that for all $X \in \mathbb{X}_{\mathbb{D},N}$, we have $X \in \mathrm{set}( \mathbb{U}_{\lambda})$ for a $\lambda \in \Lambda$.  
    
    Let $X = \{ \{ x_n : n \in [N] \} \}  \in \mathbb{X}_{\mathbb{D},N}$. Since $T_{\pi} = [x_{\pi(1)}, \ldots, x_{\pi(N)}] \in \mathbb{D}^{N}$ --- for all $\pi \in \Pi(N)$ --- we have $T_{\pi} \in \mathbb{U}_{\lambda}$ for a $\lambda \in \Lambda$. Therefore, we have $\mathrm{set}(T_\pi) = \{ \{ x_{\pi(n)}: n \in [N] \} \} = X \in \mathrm{set}(\mathbb{U}_{\lambda}) $. This proves that $\{\mathrm{set}(\mathbb{U}_{\lambda}): \lambda \in \Lambda \} $ is a cover for $\mathbb{X}_{\mathbb{D},N}$.  
    
    We now prove that $\mathrm{set}(\mathbb{U}_{\lambda})$ is an open set. Let $X =\{ \{ x_n: n \in [N] \} \} \in \mathrm{set}(\mathbb{U}_{\lambda})$, $\varepsilon >0$,  $\mathcal{N}(X,\varepsilon) = \{ X^{\prime} \in \mathbb{X}_{\mathbb{R}^D,N}: d_M(X,X^{\prime}) \leq \varepsilon \}$, and $T = [x_1, \ldots, x_N]$. We want to show that for small enough $\varepsilon >0$, $\mathcal{N}(X,\varepsilon) \subseteq \mathrm{set}( \mathbb{U}_{\lambda})$. 
    
    For all $X^{\prime} =\{ \{ x^{\prime}_n: n \in [N] \} \} \in \mathcal{N}(X,\varepsilon)$, we have
    \[
        \| T- T^{\prime}_{\pi}\|_F = d_M(X,X^{\prime}) \leq \varepsilon \ \mbox{where} \ T^{\prime}_{\pi} = [ x^{\prime}_{\pi (1)}, \ldots, x^{\prime}_{\pi(N)}  ],
    \]
    for a permutation operator $\pi : [N ] \rightarrow [N]$ that best match elements of $X$ and $X^{\prime}$. Since $\mathbb{U}_{\lambda}$ is an open subset, there exists  $\varepsilon >0$ such that $T_{\pi} \in \mathbb{U}_{\lambda}$. Therefore, we have $X^{\prime} = \mathrm{set}(T^{\prime}_{\pi}) \in \mathrm{set}( \mathbb{U}_\lambda)$. Since this is the case for all $X^{\prime} \in \mathcal{N}(X,\varepsilon)$, we have $\mathcal{N}(X,\varepsilon) \subseteq \mathrm{set}( \mathbb{U}_{\lambda})$, that is, $\mathrm{set}( \mathbb{U}_{\lambda})$ is an open set.
\subsection{Proof of \Cref{prop:phi_inv_continuous}}
By definition of continuity, we want to show that, for any $\varepsilon >0$ and $X \in \mathbb{X}_{\mathbb{D},N}$, there exists $\delta_f(\varepsilon) >0$ such that
\begin{align*}
    \forall X^{\prime} \in \mathbb{X}_{\mathbb{D},N}, \| \Phi(X) - \Phi(X^{\prime})\|_2 < \delta_f(\varepsilon)&: d_{M}( \Phi^{-1} \circ \Phi(X) - \Phi^{-1} \circ \Phi(X^{\prime}) ) < \varepsilon \\
    &: d_{M}(X  , X^{\prime} ) < \varepsilon,
\end{align*}
where $d_M$ is given in equation \eqref{eq:dm}. We use the result in \Cref{lem:param_root_cont} to establish the continuity of $\Phi^{-1}$ over $\Phi(\mathbb{X}_{\mathbb{D},N})$.
\begin{lemma} \label{lem:param_root_cont}
    Let $X \in \mathbb{X}_{\mathbb{D},N}$. The parameterized multiset that consists of the root of the polynomial $p\big(t; z, \Phi(X))$ in equation \eqref{eq:multinomial_polynomial_in_two_vs} (that is, $z^{\top}X$) varies continuously with $\Phi(X)$. More precisely, for all $\varepsilon >0$, there exists $\delta(\varepsilon )>0$ such that
    \begin{align*}
    \forall X^{\prime} \in \mathbb{X}_{\mathbb{D},N}, \| \Phi(X) - \Phi(X^{\prime} )\|_2 < \delta(\varepsilon) : \max_{z \in \mathbb{R}^D: \| z \|_2 = 1} d_M(z^{\top}X, z^{\top}X^{\prime}) < \varepsilon.
\end{align*}
\end{lemma}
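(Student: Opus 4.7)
The plan is to reduce the stability of the roots to the stability of the coefficients of $p(t;z,\Phi(X))$ viewed as a monic polynomial in $t$, and then invoke the classical fact that the roots of a monic polynomial depend continuously on its coefficients as a multiset. The key subtlety that must be handled carefully is that the bound on $d_M(z^{\top}X, z^{\top}X')$ is required to be \emph{uniform} in the unit vector $z \in \mathbb{R}^D$.

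First, I would recall from the proof of \Cref{prop:construct_multinomial} that the coefficients $a_n(z;X)$ of $p(t;z,\Phi(X))$ arise from $\Phi(X)$ through the Newton--Girard formula applied to the parameterized moments $E_n(z;X) = \langle \psi(z,n), \Phi(X) \rangle$. Hence each $a_n(z;X)$ is a polynomial function of the pair $\bigl(z, \Phi(X)\bigr)$, and in particular jointly continuous. The unit sphere $S^{D-1} = \{z \in \mathbb{R}^D : \|z\|_2 = 1\}$ is compact, and by \Cref{lem:continuous_Phi} and \Cref{lem:compact_set_of_sets} the image $\Phi(\mathbb{X}_{\mathbb{D},N})$ is a compact subset of $\mathbb{R}^{\binom{N+D}{D}-1}$. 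Heine's theorem then gives uniform continuity of each $a_n$ on $S^{D-1} \times \Phi(\mathbb{X}_{\mathbb{D},N})$, so for any $\eta > 0$ there exists $\delta_1(\eta) > 0$ such that
\[
    \|\Phi(X) - \Phi(X')\|_2 < \delta_1(\eta) \; \Longrightarrow \; \max_{\|z\|_2 = 1}\, \max_{n \in [N]} \, \bigl| a_n(z;X) - a_n(z;X') \bigr| < \eta.
\]

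Next, I would invoke a classical continuity-of-roots result (Ostrowski-type, see e.g.~Theorem~1.3.1 in Rahman--Schmeisser): two monic polynomials of the same degree $N$ whose roots are \emph{a priori} bounded by a common constant $R$, and whose coefficients are $\eta$-close, have multisets of roots that are $\varepsilon(\eta,N,R)$-close under some matching permutation, with $\varepsilon \to 0$ as $\eta \to 0$. In our setting, for every unit vector $z$ and every multiset $X \subseteq \mathbb{D}$, each root $z^{\top}x$ of $p(t;z,\Phi(X))$ lies in $[-R,R]$, where $R = \sup_{x \in \mathbb{D}} \|x\|_2 < \infty$ by compactness of $\mathbb{D}$; crucially this bound is independent of both $z$ and $X$. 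Hence the Ostrowski modulus applies uniformly: for any $\varepsilon>0$ there is an $\eta(\varepsilon)>0$ such that whenever the coefficients of $p(t;z,\Phi(X))$ and $p(t;z,\Phi(X'))$ differ by at most $\eta(\varepsilon)$, the multisets $z^{\top}X$ and $z^{\top}X'$ satisfy $d_M(z^{\top}X, z^{\top}X') < \varepsilon$. Combining this with the previous step, the choice $\delta(\varepsilon) = \delta_1\bigl(\eta(\varepsilon)\bigr)$ yields the lemma.

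The main obstacle is the uniformity in $z$: without the joint compactness of $S^{D-1}$ and $\Phi(\mathbb{X}_{\mathbb{D},N})$, the continuity of the $a_n$'s would only give a pointwise-in-$z$ bound, and a uniform bound would fail in general. A secondary technical subtlety is that the classical root-to-coefficient modulus is only H\"older (with exponent $1/N$) rather than Lipschitz, but since the lemma only asks for the existence of some $\delta(\varepsilon)$ this causes no difficulty; the rate is absorbed into the function $\eta \mapsto \varepsilon$ above.
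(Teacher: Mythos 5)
Your proof is correct, and its skeleton --- express the coefficients of $p(t;z,\Phi(X))$ as continuous functions of $(z,\Phi(X))$ via the moments $E_n(z;X)=\langle\psi(z,n),\Phi(X)\rangle$ and Newton--Girard, then pass from coefficient perturbations to root perturbations --- is the same as the paper's. Where you genuinely diverge is in how uniformity over the unit sphere is secured. The paper invokes the purely qualitative homeomorphism between coefficients and root multisets (\citealt{curgus2006roots}), which yields only a pointwise-in-$z$ modulus $\delta(\varepsilon,z)$, and then reduces to a single worst-case direction $z^*\in\mathrm{argmax}_{\|z\|_2=1}\,d_M(z^{\top}X,z^{\top}X^{\prime})$ whose existence is established by a continuity-plus-compactness (extreme value) argument; since that $z^*$ depends on $X^{\prime}$, the resulting $\delta(\varepsilon,z^*)$ is not manifestly a single $\delta(\varepsilon)$ valid for all $X^{\prime}$, a quantifier issue the paper leaves implicit. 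Your route sidesteps this: uniform continuity of the polynomial maps $a_n$ on the compact product $S^{D-1}\times\Phi(\mathbb{X}_{\mathbb{D},N})$ gives a coefficient bound uniform in $z$, and the a priori root bound $R=\sup_{x\in\mathbb{D}}\|x\|_2$ (valid for every unit $z$ and every $X\subseteq\mathbb{D}$) lets you apply a quantitative Ostrowski-type root-perturbation estimate whose modulus depends only on $N$ and $R$. The price is that you must import the quantitative (H\"older-$1/N$) form of continuity of roots rather than the qualitative homeomorphism statement; in exchange you obtain a $\delta(\varepsilon)$ that is uniform in $X$, $X^{\prime}$, and $z$ simultaneously, which is cleaner and in fact slightly stronger than what the lemma asks for.
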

Let $\varepsilon > 0$, $X = \{ \{ x_n : n \in [N] \} \}$ and $X^{\prime} = \{ \{ x^{]\prime}_n : n \in [N] \} \} \in \mathbb{X}_{\mathbb{D},N}$. From \Cref{lem:param_root_cont}, if $\| \Phi(X) - \Phi(X^{\prime})\|_2 < \delta(\varepsilon)$, then we have
\[
   \forall z \in \mathbb{R}^D, \| z \|_2 = 1: d_M(z^{\top}X, z^{\top}X^{\prime}) = \sqrt{ \sum_{n \in [N]} |z^{\top}x_n - z^{\top} x^{\prime}_{\pi^*(n)}|^2 } < \varepsilon
\]
for a permutation operator $\pi^*: [N] \rightarrow [N]$. Then, we have
\[
    \forall z \in \mathbb{R}^D, \| z \|_2 = 1, n \in [N]:  |z^{\top}x_n - z^{\top} x^{\prime}_{\pi^*(n)}|^2  < \varepsilon.
\]
If $x_n -x^{\prime}_{\pi^*(n)} \neq 0$ and $z = \| x_n - x^{\prime}_{\pi^*(n)}\|_2^{-1}(x_n -x^{\prime}_{\pi^*(n)}) $, then we have arrive at $ \|  x_n - x^{\prime}_{\pi^*(n)} \|_2  < \varepsilon$, where $n \in [N]$. If $x_n -x^{\prime}_{\pi^*(n)} = 0$, then $\|  x_n - x^{\prime}_{\pi^*(n)} \|_2  < \varepsilon$ is trivially the case. Therefore, we have
\[
    d_{M}(X, X^{\prime}) = \min_{\pi \in \Pi(N)}\sqrt{\sum_{n \in [N]} \| x_n - x^{\prime}_{\pi(n)} \|_2^2} \leq \sqrt{N} \varepsilon,
\]
where $\Pi(N)$ is the set of permutation operators on $[N]$. Finally, we establish the continuity of $\Phi^{-1}$ on $\Phi(\mathbb{X}_{\mathbb{D},N})$ by letting $\delta_f(\varepsilon) = \delta(\frac{\varepsilon}{\sqrt{N}})$, that is,
\begin{align*}
    \forall X^{\prime} \in \mathbb{X}_{\mathbb{D},N}, \| \Phi(X) - \Phi(X^{\prime})\|_2 < \delta_f(\varepsilon)  &: \max_{z \in \mathbb{R}^D: \| z \|_2 = 1} d_M(z^{\top}X, z^{\top}X^{\prime}) < \frac{\varepsilon}{\sqrt{N}} \\
    &: d_{M}(X, X^{\prime}) < \varepsilon.
\end{align*}
{\bf Proof of~\Cref{lem:param_root_cont}.}
    We construct the the polynomial $p\big(t; z, \Phi(X))$ in equation \eqref{eq:multinomial_polynomial_in_two_vs}, that is,
    \begin{align*}
        \forall t \in \R,z \in \mathbb{R}^D: p\big(t; z, \Phi(X)) = t^N + \sum_{n \in [N]} (-1)^n a_n(z; X) t^{N-n}  
    \end{align*}
    by first computing the following parameterized moments:
    \[
        \forall n \in [N], z \in \mathbb{R}^D: E_n(z,X) = \langle \psi(z,n), \Phi(X) \rangle.
    \]
    \begin{fact}\label{fact:1}
        For a fixed $z \in \mathbb{R}^D$ and $n \in [N]$, $E_n(z,X)$ is a linear function of $\Phi(X)$. Furthermore, $E_n(z,X)$ is a continuous functions of $(z, \Phi(X))$.
    \end{fact}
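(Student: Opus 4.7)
\textbf{Proof plan for Fact \ref{fact:1}.} The plan is to read off both claims directly from the identity $E_n(z,X) = \langle \psi(z,n), \Phi(X) \rangle$ established in Lemma \ref{lem:psi_phi}, together with the explicit polynomial formula
\[
    \psi(z,n) = \Bigl( {n \choose k_1, \ldots, k_D}^{\mathrm{ind}} \prod_{d=1}^D z_{d}^{k_d} \Bigr)_{k \in \mathcal{K}_{N}^D}.
\]
This identity is the whole content one needs; the rest is structural.

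For the linearity claim, I would fix $z \in \mathbb{R}^D$ and $n \in [N]$ and observe that $\psi(z,n)$ is then a fixed vector in $\mathbb{R}^{{N+D \choose D}-1}$. The map $u \mapsto \langle \psi(z,n), u \rangle$ is a (bounded) linear functional by definition of the Euclidean inner product, and $E_n(z,X)$ is the evaluation of this functional at $u = \Phi(X)$. Hence $E_n(z,\,\cdot\,)$ is linear in $\Phi(X)$, as asserted.

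For the joint continuity claim, I would argue that the map $\Psi:(z,u) \mapsto \langle \psi(z,n), u \rangle$ is continuous on $\mathbb{R}^D \times \mathbb{R}^{{N+D \choose D}-1}$, and then specialize at $u = \Phi(X)$. To see continuity of $\Psi$: each coordinate of $\psi(z,n)$ is a monomial $\prod_d z_d^{k_d}$ scaled by a fixed multinomial coefficient, hence a continuous (in fact polynomial) function of $z$, so $z \mapsto \psi(z,n)$ is continuous; the Euclidean inner product $(v,u) \mapsto \langle v, u \rangle$ is bilinear on a finite-dimensional space and hence jointly continuous; composing these two continuous maps via $(z,u) \mapsto (\psi(z,n), u) \mapsto \langle \psi(z,n), u \rangle$ yields continuity of $\Psi$. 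Evaluating the second argument at $\Phi(X)$ then gives continuity of $E_n$ as a function of $(z, \Phi(X))$.

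There is no substantive obstacle here: the lemma is essentially bookkeeping on top of Lemma \ref{lem:psi_phi}. The only point to be careful about is that continuity is being claimed jointly in $(z,\Phi(X))$ rather than only separately; this is why I explicitly invoke joint continuity of the bilinear inner product, rather than just continuity in each argument. No compactness, no use of the combinatorial structure of $\mathcal{K}_N^D$, and no reliance on the injectivity of $\Phi$ is needed.
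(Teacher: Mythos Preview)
Your proposal is correct and matches the paper's approach: the paper states Fact~\ref{fact:1} immediately after displaying the identity $E_n(z,X) = \langle \psi(z,n), \Phi(X) \rangle$ and treats both claims as self-evident consequences of it, without further argument. Your write-up simply makes explicit the (trivial) reasoning the paper leaves implicit---linearity from the inner product with a fixed vector, and joint continuity from the polynomial dependence of $\psi(z,n)$ on $z$ together with bilinearity of $\langle\cdot,\cdot\rangle$.
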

    The coefficients of $p\big(t; z, \Phi(X))$ are polynomial functions of the moments $\big( E_n(z,X) \big)_{n \in [N]}$; see the Newton-Girard equation \eqref{eq:girard}. 
    \begin{fact}\label{fact:2}
        The coefficients of the polynomial $p\big(t; z, \Phi(X))$ in equation \eqref{eq:multinomial_polynomial_in_two_vs} vary continuously with the moments $\big( E_n(z,X) \big)_{n \in [N]}$.
    \end{fact}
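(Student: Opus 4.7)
The plan is to appeal directly to the Newton--Girard identity already recalled in equation \eqref{eq:girard}. That identity expresses each coefficient $a_n(z;X)$ as $\tfrac{1}{n}$ times the determinant of an $n \times n$ matrix whose entries are drawn from $\{E_1(z;X), \ldots, E_n(z;X)\} \cup \{0, 1\}$. The determinant map on $\mathbb{R}^{n \times n}$ is a polynomial of degree $n$ in the matrix entries, so $a_n(z;X)$ is a polynomial --- hence continuous --- function of the tuple $(E_1(z;X), \ldots, E_n(z;X)) \in \mathbb{R}^n$. Assembling the $a_n$, the assignment $(E_1, \ldots, E_N) \mapsto (a_1, \ldots, a_N)$ is continuous as a map $\mathbb{R}^N \to \mathbb{R}^N$, since each component is.

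From here, Fact \ref{fact:2} follows: equation \eqref{eq:multinomial_polynomial_in_two_vs_coeff} writes $p(t; z, \Phi(X))$ as $t^N + \sum_{n \in [N]} (-1)^n a_n(z;X)\, t^{N-n}$, so the entire coefficient tuple of $p$ depends continuously on the moments $(E_n(z,X))_{n \in [N]}$. Evaluation of a polynomial at any fixed $t$ is itself continuous in its coefficients, so the pointwise value $p(t; z, \Phi(X))$ also varies continuously with the moments, which is exactly the claim.

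There is essentially no technical obstacle here; the one small item to double-check is that the matrix displayed in equation \eqref{eq:girard} really draws its entries from $\{E_1(z;X), \ldots, E_n(z;X)\} \cup \{0, 1\}$, which is immediate by inspection of the displayed block structure. The fact then reduces to two standard ingredients: (i) polynomial maps on finite-dimensional Euclidean space are continuous, and (ii) the Newton--Girard identity writes each $a_n$ as a polynomial in the first $n$ moments. Both are invoked without further work, and no compactness, uniformity, or approximation argument is required for the statement as phrased.
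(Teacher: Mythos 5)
Your proposal is correct and is essentially the paper's own argument: the paper justifies Fact \ref{fact:2} by noting that the Newton--Girard formula \eqref{eq:girard} expresses each coefficient $a_n(z;X)$ as a polynomial function of the moments $\big(E_n(z,X)\big)_{n \in [N]}$, hence continuous. Your added detail that the determinant is a polynomial in the matrix entries is a harmless elaboration of the same point.
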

    Therefore, the coefficients of the polynomial $p\big(t; z, \Phi(X))$ in equation \eqref{eq:multinomial_polynomial_in_two_vs} vary continuously with $(z,\Phi(X))$; see Facts \ref{fact:1} and \ref{fact:2}.    
    \begin{theorem}\citep{curgus2006roots}
        The function $f : \mathbb{C}^N \rightarrow \mathbb{C}^N$, which associates every $a = (a_n)_{n \in [N]} \in \mathbb{C}^N$ to the multiset of roots,  $f(a) \in \mathbb{C}^N$, of the monic polynomial formed using a as the coefficient, i.e., $t^N + a_1 t^{N-1} +\cdots +(-1)^{N-1}a_{N-1} x + (-1)^{N} a_N$, is a homeomorphism. \label{thm:root_homeomorphism}
    \end{theorem}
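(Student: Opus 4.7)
}
The plan is to verify the three conditions for a homeomorphism: bijectivity, continuity of the coefficients-to-roots map $f$, and continuity of its inverse, where the codomain $\mathbb{C}^N$ is interpreted as the space of multisets of $N$ complex numbers, topologized as the quotient $\mathbb{C}^N/S_N$ (equivalently, metrized by the matching metric $d_M$ from equation \eqref{eq:dm} with $\mathbb{D} = \mathbb{C}$). By the universal property of the quotient, a map into this space is continuous iff, locally, one can choose a continuous selection of representatives.

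First, I would establish bijectivity. The fundamental theorem of algebra guarantees that every monic polynomial of degree $N$ has exactly $N$ roots in $\mathbb{C}$ counted with multiplicity, which determines the multiset $f(a)$ uniquely. Conversely, a multiset $\{\{r_1, \ldots, r_N\}\}$ determines the monic polynomial $\prod_{i \in [N]} (t - r_i)$ uniquely, so $f$ is a bijection onto $\mathbb{C}^N / S_N$.

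Second, continuity of $f^{-1}$ is the easy direction: Vieta's formulas express each coefficient $a_k$ as $(-1)^k$ times the $k$-th elementary symmetric polynomial $e_k(r_1, \ldots, r_N)$, which is a polynomial and hence continuous in $(r_1, \ldots, r_N)$. Since $e_k$ is symmetric, it descends to a continuous function on the quotient $\mathbb{C}^N/S_N$; stacking these components gives continuity of $f^{-1}$.

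The main obstacle, and the bulk of the work, is continuity of $f$ itself: showing that roots depend continuously on coefficients in the quotient topology. My plan is to apply Rouch\'e's theorem. Fix $a \in \mathbb{C}^N$ with polynomial $p_a$ having distinct roots $\lambda_1, \ldots, \lambda_m$ of multiplicities $\mu_1, \ldots, \mu_m$ with $\sum \mu_j = N$. Given $\varepsilon > 0$, choose $0 < \varepsilon' \le \varepsilon$ small enough that the closed disks $\overline{B(\lambda_j, \varepsilon')}$ are pairwise disjoint. Since $p_a$ has no zeros on $\Gamma = \bigcup_j \partial B(\lambda_j, \varepsilon')$ and $\Gamma$ is compact, $\eta := \min_{t \in \Gamma} |p_a(t)| > 0$. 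On any bounded set, the coefficients-to-polynomial-values map is uniformly continuous, so there is $\delta > 0$ such that whenever $\|a' - a\|_2 < \delta$ we have $|p_{a'}(t) - p_a(t)| < \eta \le |p_a(t)|$ for every $t \in \Gamma$. Rouch\'e then yields that $p_{a'}$ has exactly $\mu_j$ roots in $B(\lambda_j, \varepsilon')$ for each $j$, accounting for all $N$ of its roots. Pairing each root of $p_{a'}$ in $B(\lambda_j, \varepsilon')$ with a copy of $\lambda_j$ in the multiset $f(a)$ produces a matching witnessing $d_M(f(a'), f(a)) \le \sqrt{N}\, \varepsilon' \le \sqrt{N}\,\varepsilon$. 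The hardest part of the argument is exactly this multiplicity bookkeeping: ensuring that Rouch\'e's count with multiplicity translates to a valid matching in the quotient metric, which is why I separate distinct roots via the $\varepsilon'$-disks before perturbing. Once continuity of $f$ is established, together with continuity of $f^{-1}$ and bijectivity, $f$ is a homeomorphism.
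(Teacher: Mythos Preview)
The paper does not prove this theorem; it is quoted as a known result from \citep{curgus2006roots} and invoked as a black box inside the proof of \Cref{lem:param_root_cont}. Your argument is a correct and standard self-contained proof: bijectivity via the fundamental theorem of algebra and Vieta, continuity of $f^{-1}$ because the elementary symmetric polynomials are polynomial maps that descend to the quotient $\mathbb{C}^N/S_N$, and continuity of $f$ via Rouch\'e's theorem with the disjoint-disk multiplicity bookkeeping you describe. The only cosmetic point is that the theorem statement writes the codomain as $\mathbb{C}^N$ while, as you correctly note, it must be read as the symmetric product $\mathbb{C}^N/S_N$ (equivalently, multisets with the matching metric $d_M$); your proof is written for the correct target space.
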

    From \Cref{thm:root_homeomorphism}, Facts \ref{fact:1}, and \ref{fact:2}, the parameterized root multiset of $p\big(t; z, \Phi(X))$ (that is, $z^{\top} X$) vary continuously with $(z, \Phi(X))$. Therefore, for all $X \in \mathbb{X}_{\mathbb{D},N}$, $z \in \mathbb{R}^D$ and $ \varepsilon >0$, there exists $\delta(\varepsilon,z) > 0$ such 
    \[
        \forall X^{\prime}\in \mathbb{X}_{\mathbb{D},N}, \| \Phi(X) - \Phi(X^{\prime} )\|_2  < \delta(\varepsilon,z)  : d_{M}(z^{\top} X, z^{\top} X^{\prime}) < \varepsilon.
    \]
    We may fix the norm of the vector $z$ to one, since by definition of $d_M$ in equation \eqref{eq:dm}, we have
    \[
        \forall \alpha \in \mathbb{R}: d_{M}( (\alpha z)^{\top} X, (\alpha z)^{\top}X^{\prime}) = |\alpha| d_{M}(z^{\top} X, z^{\top}X^{\prime}.
    \]
    After this normalization, for all $\varepsilon >0$, we have
    \[
        \forall X^{\prime}\in \mathbb{X}_{\mathbb{D},N}, z\in \mathbb{R}^D, \| z \|_2 = 1, \| \Phi(X) - \Phi(X^{\prime} )\|_2  < \delta(\varepsilon,z)  : d_{M}(z^{\top} X, z^{\top}X^{\prime}) < \varepsilon.
    \]
    Let $z^* \in \mathrm{argmax}_{z\in \mathbb{R}^D: \| z \|_2 = 1} d_{M}(z^{\top} X, z^{\top} X^{\prime})$. Then, we have
    \[
        \forall X^{\prime} \in \mathbb{X}_{\mathbb{D},N}, \| \Phi(X) - \Phi(X^{\prime} )\|_2  < \delta(\varepsilon,z^*)  :  \mathrm{max}_{z\in \mathbb{R}^D: \| z \|_2 = 1} d_{M}(z^{\top} X, z^{\top} X^{\prime}) < \varepsilon,
    \]
    which proves the statement if $z^*$ exists. Therefore, we need to prove the existence of $z^*$.  \\
    The set $\{ z\in \mathbb{R}^D: \| z \|_2 = 1 \}$ is compact. If we prove that $d_{M}(z^{\top} X, z^{\top} X^{\prime})$ is a continuous function of $z$, then by the extreme-value theorem \citep{stein2010complex}, $z^*$ does exist. To this end, we show that $d^2_{M}(z^{\top} X, z^{\top} X^{\prime})$ (and hence $d_{M}(z^{\top} X, z^{\top} X^{\prime})$) is continuous. We use the following first-order perturbation analysis:
    \begin{align*}
        d^2_{M}((z+\mathrm{dz})^{\top} X, (z+\mathrm{dz})^{\top} X^{\prime}) = \sum_{n \in [N]} |(z+\mathrm{dz})^{\top}x_n - (z+\mathrm{dz})^{\top}x^{\prime}_{\pi_{z+\mathrm{dz}}(n)} |^2  
    \end{align*}
    where $\pi_{z+\mathrm{dz}}:[N] \rightarrow [N]$ is a permutation operator that best matches elements of perturbed multisets $(z+\mathrm{dz})^{\top} X$ and $ (z+\mathrm{dz})^{\top} X^{\prime}$. Let $X^{\prime \prime} = \{ \{ x_n - x^{\prime}_{{\pi_z}(n)}: n \in [N] \} \}$. As we discussed in the proof of \Cref{prop:zXtoX}, if $\| \mathrm{dz} \|_2 < \frac{\mathrm{gap}(z^{\top}X^{\prime \prime})}{\mathrm{diam}(\mathbb{D})}$ --- $\mathrm{gap}(z^{\top}X^{\prime \prime}) \neq 0$ since $X \neq X^{\prime}$ ---- then $x^{\prime}_{{\pi_z}(n) }= x^{\prime}_{{\pi_{z+\mathrm{dz}}}(n)}$ for all $n \in [N]$. Therefore, we have
    \begin{align*}
        d^2_{M}((z+\mathrm{dz})^{\top} X, (z+\mathrm{dz})^{\top} X^{\prime}) = d^2_{M}(z^{\top} X, z^{\top} X^{\prime}) + O(\| \mathrm{dz}\|_2^2),
    \end{align*}
    that is, $d_{M}(z^{\top} X, z^{\top} X^{\prime})$ is a continuous function of $z$. This concludes the proof. 

\newpage

\section{Proof of \Cref{thm:continuous_decomp_4}}
\subsection{Extension of \Cref{thm:multi_decomposition}}
    Let $\mathbb{D}$ be a compact subset of $\mathbb{R}^D$, that is, compact $\mathbb{D} \neq \mathbb{R}^D$. The encoding function $\Phi(X) = \sum_{x \in X} \phi(x)$ --- where $\phi: \mathbb{D} \rightarrow \mathrm{codom}(\phi)$ --- is an injective map over multisets with exactly $N$ elements, that is, $\Phi^{-1} \circ \Phi(X) = X$ where $X \in \mathbb{X}_{\mathbb{D},N}$. To extend the result to multisets of variable sizes, we follow the proof sketch for the one-dimensional case \citep{wagstaff2019limitations}. Let $x_{\circ} \in \mathbb{R}^{D} \setminus \mathbb{D}$. Then, we define $\phi^{\prime}(x) = \phi(x) - \phi(x_{\circ})$. For a multiset $X \in \mathbb{X}_{\mathbb{D},[N]}$ with $|X| \leq N$ elements, we have
    \begin{align*}
        \forall X \in \mathbb{X}_{\mathbb{D},[N]}: \Phi^{\prime}(X) &= \sum_{x \in X} \phi^{\prime}(x) = \sum_{x \in X}  \phi(x) - |X| \phi(x_{\circ} ) \\
        &= \Phi( X \cup \{ \{ \underbrace{x_{\circ} , \ldots, x_{\circ} }_{N-|X|} \} \} ) - N \phi(x_{\circ} ) \\
        &= \Phi( X \cup \{ \{ \underbrace{x_{\circ} , \ldots, x_{\circ} }_{N-|X|} \} \} )+\mathrm{const}
    \end{align*}
    where $\mathrm{const} = -N \phi(x_\circ) $. Since $\Phi$ is injective over $\mathbb{X}_{\mathbb{D},N}$, $\Phi^{\prime}$ is an injective map. That is to say,
    \begin{align*}
         \forall X \in \mathbb{X}_{\mathbb{D},[N]}: \Big( \Phi^{-1} \circ (\Phi^{\prime} (X)  - \mathrm{const} \big) \Big) \cap \mathbb{D}  &= \Big( \Phi^{-1} \circ  \Phi ( X \cup \{ \{ \underbrace{x_{\circ} , \ldots, x_{\circ} }_{N-|X|} \} \} )  \Big) \cap \mathbb{D}   \\
         &= ( X \cup \{ \{ \underbrace{x_{\circ} , \ldots, x_{\circ} }_{N-|X|} \} \} )  \cap \mathbb{D} \\
         &= X.
    \end{align*}
    Therefore, we have ${\Phi^{\prime}}^{-1}(U) = \Phi^{-1} \big( U  - \mathrm{const} \big) \cap \mathbb{D}$ for all $U \in \Phi^{\prime}(\mathbb{X}_{\mathbb{D},[N]}) = \{ \Phi^{\prime}(X): X \in \mathbb{X}_{\mathbb{D},[N]} \}$. If we define $\rho = f \circ (\Phi^{\prime})^{-1}$ where $\mathrm{dom}(\rho) = \Phi^{\prime}(\mathbb{X}_{\mathbb{D},[N]})$, then we have $f(X) = \rho \circ \Phi^{\prime}(X)$ for all $X \in  \mathbb{X}_{\mathbb{D},[N]}$. We arrive at the theorem's exact statement by renaming $\Phi^{\prime}$ to $\Phi$.
\subsection{Extension of \Cref{thm:multi_decomposition_continous}}
    Let $\mathbb{D}$ be a compact subset of $\mathbb{R}^D$, that is, compact $\mathbb{D} \neq \mathbb{R}^D$. In \Cref{lem:compact_set_of_sets}, we prove that $\mathbb{X}_{\mathbb{D},n}$ is a compact set, for all $n \in \mathbb{N}$. Since $\mathbb{X}_{\mathbb{D},[N]}$ is a finite union of compact sets, that is, $\mathbb{X}_{\mathbb{D},[N]} = \bigcup_{n = 1}^{N} \mathbb{X}_{\mathbb{D},n}$, itself is a compact set~\citep{sutherland2009introduction}. Since $\Phi^{\prime}$ is a continuous map (see \Cref{lem:continuous_Phi2}), $\Phi^{\prime}(\mathbb{X}_{\mathbb{D},[N]})$ is also a compact set \citep{pugh2002real}. 
    
    Now let us show that ${\Phi^{\prime}}^{-1}$ is a continuous map over compact set $\mathrm{codom}(\Phi^{\prime})=\Phi^{\prime}(\mathbb{X}_{\mathbb{D},[N]})$. We have to show that for all $\varepsilon >0$ and all $X, X^{\prime} \in \mathbb{X}_{\mathbb{D},[N]}$ such that $\| \Phi^{\prime}(X) - \Phi^{\prime}(X^{\prime})\|_2 < \delta(\varepsilon)$ we have $d_M( {\Phi^{\prime}}^{-1} \circ \Phi^{\prime}(X) ,  {\Phi^{\prime}}^{-1} \circ \Phi^{\prime}(X^{\prime}) ) < \varepsilon$ where $\delta(\varepsilon) >0$ and $d_M$ is the matching distance between multisets, that is,
    \[
        d_M(X, X^{\prime}) = \begin{cases}
        \min_{ \text{bijection} \ \pi : [N_{\circ}] \rightarrow [N_{\circ}] } \sqrt{\sum_{n \in [N_{\circ}]} \| x_n  - x^{\prime}_{\pi(n)}  \|_2^2} & \mbox{if} \ \ |X| = | X^{\prime} | = N_{\circ} \\
        \infty & \mbox{if}  \ \ |X| \neq | X^{\prime}|,
        \end{cases} 
    \]
    where $X = \{ \{ x_n: n \in [N_{\circ}] \} \}$, $X^{\prime} = \{ \{ x^{\prime}_n: n \in [N_{\circ}] \} \}$, $N_{\circ} \in [N]$. On the other hand, we have ${\Phi^{\prime}}^{-1}(U) = \Phi^{-1} \big( U  - \mathrm{const} \big) \cap \mathbb{D}$ for all $U \in \Phi^{\prime}(\mathbb{X}_{\mathbb{D},[N]})$ where $\Phi^{-1}$ is a continuous function; see \Cref{prop:phi_inv_continuous}.
    
    Consider the continuous function $\Psi(U) = \Phi^{-1} \big( U  - \mathrm{const} \big)$ where $U \in \Phi^{\prime}(\mathbb{X}_{\mathbb{D},[N]})$. By definition of continuity, for all $\varepsilon >0$ and all $X, X^{\prime} \in \mathbb{X}_{\mathbb{D},[N]}$ such that $\| \Phi^{\prime}(X) - \Phi^{\prime}(X^{\prime})\|_2 < \delta(\varepsilon)$ we have $d_M( \Psi \circ \Phi^{\prime}(X) , \Psi \circ \Phi^{\prime}(X^{\prime}) ) < \varepsilon$ where $\delta(\varepsilon) >0$. 
    Since we have,
    \begin{align*}
        \Psi \circ \Phi^{\prime}(X) &= X \cup \{ \{ \underbrace{x_{\circ} , \ldots, x_{\circ} }_{N-|X|} \} \}  \\
        \Psi \circ \Phi^{\prime}(X^{\prime}) &=  X^{\prime} \cup \{ \{ \underbrace{x_{\circ} , \ldots, x_{\circ} }_{N-|X^{\prime}|} \} \} ,
    \end{align*}
    we can simplify $d_M( \Psi \circ \Phi^{\prime}(X) , \Psi \circ \Phi^{\prime}(X^{\prime}) ) < \varepsilon$ as follows:
    \[
        d_M( X \cup \{ \{ \underbrace{x_{\circ} , \ldots, x_{\circ} }_{N-|X|} \} \} , X^{\prime} \cup \{ \{ \underbrace{x_{\circ} , \ldots, x_{\circ} }_{N-|X^{\prime}|} \} \} ) < \varepsilon.
    \]
    If $X$ and $X^{\prime}$ have different number of elements in $\mathbb{D}$, then we have $\varepsilon >  \inf_{x \in \mathbb{D}} \|  x-  x_\circ \|_2$.  Let $\varepsilon_\circ >0 $ be such that $\varepsilon_\circ < \inf_{x \in \mathbb{D}} \|  x-  x_\circ \|_2$. If we pick $0< \varepsilon < \varepsilon_\circ$, then $X$ and $X^{\prime}$  have the same number of elements in $\mathbb{D}$ and 
    \begin{align*}
        d_M( (\Phi^{\prime})^{-1} \circ \Phi^{\prime}(X)  , (\Phi^{\prime})^{-1} \circ \Phi^{\prime}(X^{\prime})) &= d_M( \Psi \circ \Phi^{\prime}(X) \cap \mathbb{D}  , \Psi \circ \Phi^{\prime}(X^{\prime})\cap \mathbb{D} ) \\
        &= d_M( \Psi \circ \Phi^{\prime}(X)  , \Psi \circ \Phi^{\prime}(X^{\prime}) ) < \varepsilon
    \end{align*}
     That is, $(\Phi^{\prime})^{-1}$ is a continuous function over $\Phi^{\prime}(\mathbb{X}_{\mathbb{D},[N]})$. Therefore, $\rho = f \circ (\Phi^\prime)^{-1}$ is a continuous function on compact set $\Phi^{\prime}(\mathbb{X}_{\mathbb{D},[N]}) \subset \mathbb{R}^{ {N+D \choose D}-1}$, and it has a continuous extension to $\mathbb{R}^{ {N+D \choose D}-1}$; refer to Fact \ref{fact:rho_hat}. We arrive at the theorem's statement by renaming $\Phi^{\prime}$ to $\Phi$.

 \newpage   
\section{Proof of \Cref{cor:two_multisets}}
    
Let $\Phi^{\prime}: \mathbb{X}_{\mathbb{D},[N]} \rightarrow \mathrm{codom}(\Phi^{\prime})$ where $N = \max \{N_1, N_2 \}$, $\Phi^{\prime}(X) = \sum_{x \in X} \phi^{\prime}(x)$, and $\phi^{\prime}$ is given in the proof of \Cref{thm:continuous_decomp_4}. The function $\Phi^{\prime}$ is injective on $\mathbb{X}_{\mathbb{D},[N]}$ and $(\Phi^{\prime})^{-1}$ is continuous on compact set $\Phi^{\prime}(\mathbb{X}_{\mathbb{D},[N]})$. Since $\mathbb{X}_{\mathbb{D},[N_1]}$ and $\mathbb{X}_{\mathbb{D},[N_2]}$ are compact subsets of $\mathbb{X}_{\mathbb{D},[N]} \subseteq \mathbb{R}^{{N+D \choose D}-1}$, the function $\Phi^{\prime}$ is injective on $\Phi^{\prime}(\mathbb{X}_{\mathbb{D},[N_1]}) $ and $\Phi^{\prime}(\mathbb{X}_{\mathbb{D},[N_2]}) $  
and $(\Phi^{\prime})^{-1}$ is continuous on both domains. We define the following function:
\[
    \forall U_1 \in \Phi^{\prime}(\mathbb{X}_{\mathbb{D},[N_1]}) , U_2 \in \Phi^{\prime}(\mathbb{X}_{\mathbb{D},[N_2]}) :
    \rho(U_1, U_2) = f \big(  (\Phi^{\prime})^{-1}(U_1), (\Phi^{\prime})^{-1}(U_2) \big).
\]

If $f$ is a continuous multiset function, $\rho$ (defined above) is a continuous function on its compact domain $\Phi^{\prime}(\mathbb{X}_{\mathbb{D},[N_1]}) \times \Phi^{\prime}(\mathbb{X}_{\mathbb{D},[N_2]})$ as it is the composition of continuous functions. Therefore, it has a continuous extension to $\mathbb{R}^{{N+D \choose D}-1} \times \mathbb{R}^{{N+D \choose D}-1}$; refer to Fact \ref{fact:rho_hat}.

\newpage
\section{Proof of \Cref{thm:perm_w_distinct_labels}}
    We define $\phi: \mathbb{R}^{D} \rightarrow \mathrm{codom}(\phi) \subset \mathbb{C}^{D\times N}$ as follows: 
    \begin{equation} \label{eq:phi_l}
         \forall x \in \mathbb{R}^{D}: \phi ( x ) = \begin{pmatrix}
            r(x) & r(x)^{\odot 2} & \cdots & r(x)^{\odot N}
        \end{pmatrix} \in \mathbb{C}^{D \times N},
    \end{equation}
    where $r(x)=  x+ 1 l(x)j$, $1 \in \mathbb{R}^{D}$ is a vector of all ones, $l:\mathbb{R}^D \rightarrow \R$ is a continuous function, $j=\sqrt{-1}$, and $\odot $ computes elementwise exponents. 
    \begin{fact} \label{fact:phi_contu}
        The function $\phi$ is continuous.
    \end{fact}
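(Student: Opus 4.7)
The plan is to verify continuity of $\phi$ by decomposing it as a composition of elementary continuous operations, identifying $\mathbb{C}^{D\times N}$ with $\mathbb{R}^{2DN}$ (equipped with the usual Euclidean topology) so that ordinary multivariable continuity applies. The only nontrivial hypothesis available is that the identifier $l:\mathbb{R}^D\to\mathbb{R}$ is continuous; everything else is purely algebraic, so no real obstacle is anticipated.

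First I would establish continuity of the intermediate map $r:\mathbb{R}^D\to\mathbb{C}^D$, $r(x)=x+\mathbf{1}\, l(x)\, j$. For each coordinate $d\in[D]$, the component $r_d(x)=x_d+l(x)\,j$ has real part $x_d$ (a continuous projection) and imaginary part $l(x)$ (continuous by hypothesis). Since a complex-valued function is continuous iff its real and imaginary parts are continuous, each $r_d$ is continuous, and hence $r$ is continuous as a map into $\mathbb{C}^D$.

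Next I would handle the elementwise power maps. For each fixed $k\in[N]$, the map $P_k:\mathbb{C}^D\to\mathbb{C}^D$, $z\mapsto z^{\odot k}$, is continuous: coordinatewise, $P_{k,d}(z)=z_d^k$ is a polynomial in the real and imaginary parts of $z_d$ (via the binomial expansion of $(\mathrm{Re}\,z_d + j\,\mathrm{Im}\,z_d)^k$), and polynomials are continuous. Composition then yields that $x\mapsto r(x)^{\odot k}=P_k(r(x))$ is continuous from $\mathbb{R}^D$ to $\mathbb{C}^D$ for every $k\in[N]$.

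Finally, $\phi(x)$ is obtained by horizontally concatenating the $N$ column vectors $r(x)^{\odot 1},\ldots,r(x)^{\odot N}$ into a $D\times N$ complex matrix. The concatenation map $\mathbb{C}^D\times\cdots\times\mathbb{C}^D\to\mathbb{C}^{D\times N}$ is a linear isomorphism (in fact an isometry under the Frobenius norm paired with the product Euclidean norm) and is therefore continuous. Since $\phi$ is the composition of the continuous product map $x\mapsto\bigl(r(x)^{\odot 1},\ldots,r(x)^{\odot N}\bigr)$ with this continuous concatenation, $\phi:\mathbb{R}^D\to\mathbb{C}^{D\times N}$ is continuous, proving the fact.
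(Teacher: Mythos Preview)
Your proof is correct. The paper itself states this as a bare ``Fact'' with no accompanying proof, so there is nothing to compare against; your argument supplies exactly the routine verification the authors left implicit, namely that $r$ is continuous (from continuity of $l$), elementwise powers are polynomial hence continuous, and concatenation is linear, so the composition $\phi$ is continuous.
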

    \begin{lemma} \label{lem:Phi_inj}
    Let $\phi$ be the function defined in equation \eqref{eq:phi_l}. Then, the function $\Phi ( X ) = \sum_{x \in X } \phi ( x )$ is injective on $\mathbb{X}^{l}_{\mathbb{R}^D,N}$.
    \end{lemma}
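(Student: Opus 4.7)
The plan is to exhibit an explicit recovery procedure that inverts $\Phi$ on $\mathbb{X}^{l}_{\mathbb{R}^D,N}$. The key structural observation is that the $(d,n)$-entry of $\Phi(X)$ equals $\sum_{x\in X}(x_d + l(x)j)^n$. That is, reading across the $d$-th row gives, for $n=1,\ldots,N$, the first $N$ power sums of the multiset of complex numbers
\[
    M_d \;\stackrel{\mathrm{def}}{=}\; \{\{\, x_d + l(x)j \;:\; x\in X\,\}\} \;\subset\; \mathbb{C}.
\]
Since $|M_d|=|X|=N$, the Newton--Girard identities (cf.\ equation~\eqref{eq:girard}) convert these $N$ power sums into the $N$ elementary symmetric functions, which in turn uniquely determine the monic polynomial whose roots (with multiplicity) are the elements of $M_d$. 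Hence each $M_d$ is recoverable from $\Phi(X)$, for every $d\in[D]$.

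The next step is to re-assemble the coordinate multisets $M_1,\ldots,M_D$ into the single vector multiset $X$. For any $z = a+bj\in M_d$, the real part $a$ equals $x_d$ and the imaginary part $b$ equals $l(x)$ for some $x\in X$. The $l$-identifiability of $X$ (\Cref{def:identifiable_sets}) means that distinct elements of $X$ produce distinct values of $l$, so the imaginary part $l(x)$ serves as a canonical label: for each value $b$ appearing among the imaginary parts, there is a unique $x_b\in X$ with $l(x_b)=b$, and for each coordinate $d$ there is a unique real part $a_d$ paired with $b$ inside $M_d$. Collecting these coordinates yields $x_b = (a_1,\ldots,a_D)$, and its multiplicity in $X$ equals the common multiplicity with which $b$ appears as an imaginary part in any (equivalently, every) $M_d$.

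The main subtlety to guard against is that two distinct vectors in $X$ could accidentally collide in some $M_d$, which would wreck the matching-by-label step. This is precisely where $l$-identifiability is essential: if $x\neq x'$ in $X$, then $l(x)\neq l(x')$, so $x_d + l(x)j \neq x'_d + l(x')j$ in $\mathbb{C}$ regardless of how $x_d,x'_d$ relate, and conversely equal elements of $X$ produce equal complex numbers and so contribute honestly to multiplicities in $M_d$. Thus the reconstruction map $\Phi(X)\mapsto X$ described above is well-defined on $\Phi(\mathbb{X}^{l}_{\mathbb{R}^D,N})$, proving injectivity. Continuity of $\phi$ (Fact~\ref{fact:phi_contu}) is not needed for this step; it is recorded separately for use in the subsequent continuity arguments of \Cref{thm:perm_w_distinct_labels}.
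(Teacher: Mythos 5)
Your proposal is correct and follows essentially the same route as the paper: recover each coordinate multiset $\{\{x_d + l(x)j : x\in X\}\}$ from its first $N$ power sums via Newton--Girard (the paper packages this as $\Phi_{\mathrm{deep}}^{-1}$ applied row-wise), then use the imaginary parts $l(x)$ as labels to align the $D$ rows, with $l$-identifiability ruling out harmful collisions. The only cosmetic difference is that the paper implements the label-alignment step by sorting on imaginary parts (its $\mathrm{sort}$/$\mathrm{sortvec}$ functions, with a case analysis for repeated identifiers), whereas you match by label value directly; both are sound.
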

    Let $\Phi( \mathbb{X}^{l}_{\mathbb{R}^D,N} ) \stackrel{\mathrm{def}}{=} \{ \Phi(X) : X \in \mathbb{X}^{l}_{\mathbb{R}^D,N} \}$. From \Cref{lem:Phi_inj}, there exists an inverse function $\Phi^{-1}: \Phi( \mathbb{X}^{l}_{\mathbb{R}^D,N} )  \rightarrow \mathbb{X}^{l}_{\mathbb{R}^D,N}$, that is, $\Phi^{-1} \circ \Phi(X) = X$ for all $X \in \mathbb{X}^{l}_{\mathbb{R}^D,N}$. We construct $\rho: \Phi(\mathbb{X}^{l}_{\mathbb{R}^D,N}) \rightarrow \codom(f)$ as $\rho = f \circ \Phi^{-1}$. This completes the proof as follows:
    \[
        \forall X \in \mathbb{X}^{l}_{\mathbb{R}^D,N}: \rho \circ \Phi(X) = f \circ \Phi^{-1} \circ \Phi(X) = f(X).
    \]
   
    \subsection{Proof of \Cref{lem:Phi_inj}}
    From equation \eqref{eq:phi_l}, we have
    \begin{equation}\label{eq:Phi_l}
        \forall X \in \mathbb{X}^{l}_{\mathbb{R}^D,N}: \ \Phi(X) = \sum_{x \in X} \begin{pmatrix} 
            r(x) & r(x)^{\odot 2}& \cdots & r(x)^{\odot N}
        \end{pmatrix}  \in \mathbb{C}^{D \times N}.
    \end{equation}
    \begin{definition}\label{def:deep_phi}
        Let $\Phi_{\mathrm{deep}}^{-1}$ be the continuous function introduced in Deep Sets paper \citep{zaheer2017deep}, viz., $\Phi_{\mathrm{deep}}^{-1} \circ \Phi_{\mathrm{deep}} = X $ where $\Phi_{\mathrm{deep}}(X) = ( \sum_{x \in X} x, \ldots, \sum_{x \in X} x^N)$, where $X \in \mathbb{X}_{\mathbb{C},N}$ is a multiset of $N$ scalars in $\mathbb{C}$. With slight abuse of notation, we generalize this definition to the following row-wise function:
        \[
            \forall X_1,\ldots, X_D \in \mathbb{X}_{\mathbb{C},N}: \ \Phi_{\mathrm{deep}}^{-1}( \begin{pmatrix}
                    \Phi_{\mathrm{deep}}(X_1) \\
                    \vdots \\
                    \Phi_{\mathrm{deep}}(X_D)
            \end{pmatrix}) = \begin{pmatrix}
                    \Phi_{\mathrm{deep}}^{-1} \circ \Phi_{\mathrm{deep}}(X_1) \\
                    \vdots \\
                    \Phi_{\mathrm{deep}}^{-1}\circ  \Phi_{\mathrm{deep}}(X_D)
            \end{pmatrix} = \begin{pmatrix}
                    X_1 \\
                    \vdots \\
                    X_D
            \end{pmatrix}
        \]
    \end{definition}
    \begin{definition}\label{def:sort}
        Let $X = \{ \{ x_n \in \mathbb{C}: n \in [N] \} \} \in \mathbb{X}_{\mathbb{C},N}$ be a multiset of $N$ complex-valued elements. We then define the function $\mathrm{sort}$ as follows:
        \[
        \mathrm{sort}( X ) = \big( \mathrm{Re}( x_{\pi(n)} ) \big)_{n \in [N]} \in \R^N,
        \]
        where $\pi: [N] \rightarrow [N]$ is any permutation operator such that $\mathrm{Im}(x_{\pi(1)}) \leq \cdots \leq \mathrm{Im}( x_{\pi(N)})$.
    \end{definition}
    \begin{definition}\label{def:sort_vec}
        Let $X_1,\ldots, X_D \in \mathbb{X}_{\mathbb{C},N}$ be multisets of $N$ complex-valued elements. We then define the function $\mathrm{sortvec}$ as follows:
            \[
            \mathrm{sortvec}( \begin{pmatrix}
                    X_1 \\
                    \vdots \\
                    X_D
            \end{pmatrix} ) = \{\{ \begin{pmatrix}
                    e_{n}^{\top}\mathrm{sort}( X_1 ) \\ \vdots \\ e_{n}^{\top}\mathrm{sort}( X_D )
                    \end{pmatrix} \in \mathbb{R}^D: n \in [N] \} \} \in \mathbb{X}_{\mathbb{R}^D,N},
        \]
        where $e_n \in \R^N$ is the $n$-th standard basis vector for $\R^N$.
    \end{definition}
    \begin{remark}
        Permutation operators in \Cref{def:sort,def:sort_vec} may not be unique. This happens if the input multiset has at least two elements with nonunique imaginary parts. If this is the case, functions $\mathrm{sort}$ and $\mathrm{sortvec}$ are both ill-defined. In what follows, we show that for certain inputs of interest both functions are indeed well-defined.
    \end{remark}

    Let $\Psi: \Phi(\mathbb{X}^{l}_{\mathbb{R}^D,N}) \rightarrow \mathbb{X}^{l}_{\mathbb{R}^D ,N}$ where $\Psi = \mathrm{sortvec} \circ \Phi_{\mathrm{deep}}^{-1}$. Then, we have
    \begin{align*}
        \forall X \in \mathbb{X}^{l}_{\mathbb{R}^D,N}: &\Psi \circ \Phi(X) = \mathrm{sortvec} \circ \Phi_{\mathrm{deep}}^{-1} \circ \Phi(X) \\
        &\stackrel{\text{(a)}}{=}  \mathrm{sortvec} \circ \Phi_{\mathrm{deep}}^{-1} \big( \sum_{x \in X} \begin{pmatrix}
            (x+1l(x) j ) &  (x+1l(x) j )^{\odot 2} & \cdots & (x+1l(x) j )^{\odot N} 
        \end{pmatrix} \big) \\
        &\stackrel{\text{(b)}}{=} \mathrm{sortvec}\circ \Phi_{\mathrm{deep}}^{-1} \big(
        \begin{pmatrix}
        \sum_{x \in X} e_{1}^{\top}x+l(x) j & \cdots & \sum_{x \in X} (e_{1}^{\top}x+l(x) j)^{N} \\
                    \vdots \\
        \sum_{x \in X} e_{D}^{\top}x+l(x) j & \cdots & \sum_{x \in X} (e_{D}^{\top}x+l(x) j)^{N}
        \end{pmatrix}   \big) \\
        &\stackrel{\text{(c)}}{=} \mathrm{sortvec}\circ \Phi_{\mathrm{deep}}^{-1} \big(
        \begin{pmatrix}
        \Phi_{\mathrm{deep}}  ( \{ \{ e_{1}^{\top}x+l(x) j: x\in X \} \} ) \\
                    \vdots \\
        \Phi_{\mathrm{deep}} ( \{ \{ e_{D}^{\top}x+l(x) j : x\in X  \} \} ) 
        \end{pmatrix}   \big) \\
        &\stackrel{\text{(d)}}{=} \mathrm{sortvec} \big( \begin{pmatrix}
        \{\{ e_1^{\top}x+l(x)j : x \in X \}\} \\
                    \vdots \\
        \{\{ e_D^{\top}x+l(x)j : x \in X \}\}
        \end{pmatrix}   \big).
    \end{align*}
    where $\text{(a)}$ is due to equations \eqref{eq:Phi_l} and \eqref{eq:phi_l}, $\text{(b)}$ follows from  explicitly writing the elements of $\Phi(X)$, $\text{(c)}$ follows from the definition of $\Phi_{\mathrm{deep}}$ (see \Cref{def:deep_phi}), and finally $\text{(d)}$ is due to the fact that we allow $\Phi_{\mathrm{deep}}^{-1}$ to operate elementwise.
    
    {\bf Case 1 (Distinct Identifiers).} Let $X = \{ \{ x_n : n \in [N] \} \}$. If all elements of $l(X) = \{ \{ l(x): x \in X \}\}$ are unique, then we have
    \[
        \forall d \in [D]: \mathrm{sort}(\{\{ e_d^{\top}x+l(x)j : x \in X \}\} ) =  \big(  e_d^{\top}x_{\pi(n)}  \big)_{n \in [N]} \in \R^N
    \]
    where $\pi: [N] \rightarrow [N]$ is the permutation operator such that $l(x_{\pi(1)}) < \cdots < l( x_{\pi(N)})$. Then, we have
    \[
        \Psi \circ \Phi(X) =  \{\{ \begin{pmatrix}
         e_1^{\top}x_{\pi(n)}  \\
                    \vdots \\
         e_D^{\top}x_{\pi(n)} 
        \end{pmatrix} : n \in [N]\}\} =  \{\{x_{\pi(n)}  : n \in [N]\}\}= X.
    \]
    {\bf Case 2 (Repeated Identifiers).}
    If $l(X)$ has repeated elements, then there exists at least two distinct permutation operators $\pi$ and $\pi^{\prime}$ ($\pi \neq \pi^{\prime}$) that sort the elements of $l(X)$, that is,
    \begin{align*}
        l(x_{\pi(1)}) &\leq l(x_{\pi(2)}) \leq  \cdots \leq l(x_{\pi(N)}) \\
        l(x_{\pi^\prime(1)}) &\leq l(x_{\pi^{\prime}(2)}) \leq  \cdots \leq l(x_{\pi^{\prime}(N)}).
    \end{align*}
    In this case, we have $l(x_{\pi(n)}) = l(x_{\pi^\prime(n)})$ for all $n \in [N]$  --- even though $\pi(n) \neq \pi^\prime(n)$ for some $n \in [N]$. From \Cref{def:identifiable_sets}, since $l(x_{\pi(n)}) = l(x_{\pi^\prime(n)})$, we have $x_{\pi^\prime(n)} = x_{\pi(n)}$ where $n \in [N]$. Consequently, we have
    \begin{align*}
        \forall d \in [D]: \mathrm{sort}(\{\{ e_d^{\top}x+l(x)j : x \in X \}\} ) &=  \big(  e_d^{\top}x_{\pi(n)}  \big)_{n \in [N]} \\
        &=  \big(  e_d^{\top}x_{\pi^{\prime}(n)}  \big)_{n \in [N]} \in \R^N.
    \end{align*}
    Therefore, even though there are multiple permutation operators that $\mathrm{sort}$s the elements of $\{\{ e_d^{\top}x+l(x)j : x \in X \}\}$, the output of the $\mathrm{sort}$ function remains unchanged, that is, $\mathrm{sort}$ is a well-defined function for any element of $X \in \mathbb{X}^{l}_{\mathbb{R}^D,N}$. Consequently, $\mathrm{sortvec}$ is well-defined on $\mathbb{X}^{l}_{\mathbb{R}^D,N}$ and we have
    \begin{align*}
        \forall X \in \mathbb{X}^{l}_{\mathbb{R}^D,N}: \Psi \circ \Phi(X) &=  \{\{ \begin{pmatrix}
         e_1^{\top}x_{\pi_1(n)}  \\
                    \vdots \\
         e_D^{\top}x_{\pi_D(n)} 
        \end{pmatrix} : n \in [N]\}\} \\
        &\stackrel{\text{(a)}}{=}  \{\{ \begin{pmatrix}
         e_1^{\top}x_{\pi_1(n)}  \\
                    \vdots \\
         e_D^{\top}x_{\pi_1(n)} 
        \end{pmatrix} : n \in [N]\}\} = X,
    \end{align*}
    where $\pi_d$ is a permutation operator that sorts the elements of $\{\{ e_d^{\top}x+l(x)j : x \in X \}\}$ --- for all $d \in [D]$ --- and $\text{(a)}$ is due to $x_{\pi_{i}(n)} = x_{\pi_{j}(n)}$ for all $i, j \in [D]$ and $n \in [N]$. Therefore, we have
    \[
        \forall X \in \mathbb{X}^{l}_{\mathbb{R}^D,N} = \Psi \circ \Phi(X)=\mathrm{sortvec}\circ \Phi_{\mathrm{deep}}^{-1} \circ \Phi(X) = X,
    \]
    that is, $\Psi = \mathrm{sortvec}\circ \Phi_{\mathrm{deep}}^{-1}$ is well-defined on $\Phi(\mathbb{X}^{l}_{\mathbb{R}^D,N})$ and $\Psi = \Phi^{-1}: \Phi(\mathbb{X}^{l}_{\mathbb{R}^D,N}) \rightarrow \mathbb{X}^{l}_{\mathbb{R}^D,N}$. This proves that $\Phi$ is an injective function on $\mathbb{X}^{l}_{\mathbb{R}^D,N}$.

\newpage
\section{Proof of \Cref{prop:perm_w_distinct_labels}}
    The proof is similar to that of \Cref{thm:continuous_decomp_4}. Let $\mathbb{D}$ be a compact subset of $\mathbb{R}^D$, that is, $\mathbb{D} \neq \mathbb{R}^D$. 
    
    The encoding function $\phi: \mathbb{D} \rightarrow \mathrm{codom}(\phi)$ (defined in the proof \Cref{thm:perm_w_distinct_labels}) such that $\Phi(X) = \sum_{x \in X} \phi(x)$ is an injective map over multisets with exactly $N$ elements, that is, $\Phi^{-1} \circ \Phi(X) = X$ where $X \in \mathbb{X}^{l}_{\mathbb{D},N}$ and $l: \mathbb{D} \rightarrow \mathbb{R}$ is the continuous identifier function. Let $x_{\circ} \in \mathbb{R}^{D} \setminus \mathbb{D}$. Then, we define $\phi^{\prime}(x) = \phi(x) - \phi(x_{\circ})$. For a multiset $X \in \mathbb{X}^{l}_{\mathbb{D},[N]}$ with $|X| \leq N$ elements, we have
    \begin{align*}
        \forall X \in \mathbb{X}^{l}_{\mathbb{D},[N]}: \Phi^{\prime}(X) &= \sum_{x \in X} \phi^{\prime}(x) = \sum_{x \in X}  \phi(x) - |X| \phi(x_{\circ} ) \\
        &= \Phi( X \cup \{ \{ \underbrace{x_{\circ} , \ldots, x_{\circ} }_{N-|X|} \} \} ) - N \phi(x_{\circ} ) \\
        &= \Phi( X \cup \{ \{ \underbrace{x_{\circ} , \ldots, x_{\circ} }_{N-|X|} \} \} )+\mathrm{const}
    \end{align*}
where $\mathrm{const} = -N \phi(x_\circ) $. Since $\Phi$ is injective over $\mathbb{X}_{\mathbb{D},N}$, $\Phi^{\prime}$ is an injective map. That is to say,
    \begin{align*}
         \forall X \in \mathbb{X}^{l}_{\mathbb{D},[N]}: \Big( \Phi^{-1} \circ (\Phi^{\prime} (X)  - \mathrm{const} \big) \Big) \cap \mathbb{D}  &= \Big( \Phi^{-1} \circ  \Phi ( X \cup \{ \{ \underbrace{x_{\circ} , \ldots, x_{\circ} }_{N-|X|} \} \} )  \Big) \cap \mathbb{D}   \\
         &= ( X \cup \{ \{ \underbrace{x_{\circ} , \ldots, x_{\circ} }_{N-|X|} \} \} )  \cap \mathbb{D} \\
         &= X.
    \end{align*}
Therefore, we have ${\Phi^{\prime}}^{-1}(U) = \Phi^{-1} \big( U  - \mathrm{const} \big) \cap \mathbb{D}$ for all $U \in \Phi^{\prime}(\mathbb{X}^{l}_{\mathbb{D},[N]}) = \{ \Phi^{\prime}(X): X \in \mathbb{X}^{l}_{\mathbb{D},[N]} \}$. If we define $\rho = f \circ (\Phi^{\prime})^{-1}$ where $\mathrm{dom}(\rho) = \Phi^{\prime}(\mathbb{X}^{l}_{\mathbb{D},[N]})$, then we have $f(X) = \rho \circ \Phi^{\prime}(X)$ for all $X \in  \mathbb{X}^{l}_{\mathbb{D},[N]}$. We arrive at the exact form of sum-decomposition by renaming $\Phi^{\prime}$ to $\Phi$.

\newpage
\section{Proof of \Cref{prop:rational_identifier}}\label{sec:rational_identifier}
Let $X \in \mathbb{X}_{\mathbb{Q}^D,N}$. For any rational-valued vectors $x, x^{\prime} \in X$ such that $l(x) = l(x^{\prime})$, we have
\begin{equation} \label{eq:zeta_bijection}
    \mathrm{const} \sum_{d \in [D]} ( x_d - x_d^{\prime}) \log{\zeta(d)} = 0, 
\end{equation}
where $ x_d$ and $x_d^{\prime}$ are $d$-th elements of $x$ and $x^{\prime}$, and $\mathrm{const} \in \mathbb{N}$ is such that $y_d  \stackrel{\mathrm{def}}{=} \mathrm{const} (x_d - x_d^{\prime}) \in \mathbb{Z}$  --- for all $d \in [D]$. From equation \eqref{eq:zeta_bijection}, we have
\[
    \sum_{d \in [D]} y_d \log{\zeta(d)} = 0 \ \rightarrow \ \prod_{d \in [D]} \zeta(d)^{y_d} = 1.
\]
Therefore, we have
\begin{equation}\label{eq:zeta_bijection_2}
    \prod_{\substack{d \in [D] \\ y_d > 0}} \zeta(d)^{y_d} = \prod_{\substack{d \in [D] \\ -y_d > 0}} \zeta(d)^{-y_d}  = n \in \mathbb{N}
\end{equation}
Both sides of equation \eqref{eq:zeta_bijection_2} are prime number decompositions of an integer $n \in \mathbb{N}$ with completely exclusive set of prime numbers. Therefore, we have  $n = 1$ which results in $y_d =  \mathrm{const} (x_d - x_d^{\prime}) = 0$ for all $d \in [D]$, that is, $x = x^{\prime}$. This proves the following result:
\[
    \forall x , x^{\prime} \in X \big(\in \mathbb{X}_{\mathbb{Q}^D,N} \big): \ l(x) = l(x^{\prime})  \longrightarrow  \ x = x^{\prime}.
\]
Finally, since $l$ is a continuous linear function on $\mathbb{R}^D$, $\mathbb{X}_{\mathbb{Q}^D,N}$ is an $l$-identifiable set. 

\newpage
\section{Proof of \Cref{lem:phi_dense}}\label{sec:phi_dense}
    We need to show for any $X \in \mathbb{X}_{\mathbb{D},N}$, there exists a sequence $\{ X_n \in \mathbb{X}_{Q(\mathbb{D}),N} : n \in \mathbb{N}\}$ such that $\lim_{n \rightarrow \infty} \Phi(X_n) = \Phi(X)$. From \Cref{lem:continuous_Phi2}, $\Phi$ is a continuous map. Therefore, we simply need to prove the following property:
    \[
        \forall X \in \mathbb{X}_{\mathbb{D},N} : \lim_{n \rightarrow \infty} X_n = X,
    \]
    where $X_n \in \mathbb{X}_{Q(\mathbb{D}),N}$ for all $n \in \mathbb{N}$. By definition, we want show $\forall \varepsilon >0, \exists N(\varepsilon) \in \mathbb{N}$ such that
    \[
        \forall n \geq N(\varepsilon): d_M ( X_n^m , X) < \varepsilon.
    \]
    Let $\mathcal{N}_n(x) = \{ y \in Q(\mathbb{D}): \| x - y\|_2 \leq \frac{1}{n} \}$ be a bounded set centered at $x \in \mathbb{D}$ and $n \in \mathbb{N}$. It is important to note that $\mathcal{N}_n(x) $ is a nonempty set for all $n \in \mathbb{N}$, that is, the intersection of $Q(\mathbb{D})$ with the nonempty interior of $\mathbb{D}$ is nonempty because $Q(\mathbb{D})$ is a dense subset of $\mathbb{D}$. We let $q_n(x)$ be {\bf any} random point in $\mathcal{N}_n(x) $. Then, for any $X \in \mathbb{X}_{\mathbb{D},N}$, we let $X_n = \{ \{ q_n(x): x \in X \} \} \in \mathbb{X}_{Q(\mathbb{D}),N}$. By construction, we have
    \[
        d_M (X_n, X) \leq \sqrt{N \max_{x \in \mathbb{D}} \| x - q_n(x) \|_2^2} \leq  \sqrt{N} n^{-1}.
    \]
    If we let $N(\varepsilon)  = \lfloor \frac{\sqrt{N}}{\varepsilon} \rfloor + 1$, then $d_M (X_n, X) \leq \varepsilon$ for all $n \geq N(\varepsilon)$. Therefore, we have $\lim_{n \rightarrow \infty}d_M (X_n, X) = 0 $, that is, $\lim_{n \rightarrow \infty} \mathcal{X}_n^m =  X$. Any realization of the random process $\{ X_n  \}_{n \in \mathbb{N}}$ forms a sequence in $\mathbb{X}_{Q(\mathbb{D}),N}$ that converges to $X$, that is, $\mathbb{X}_{Q(\mathbb{D}),N}$ is a dense subset of $\mathbb{X}_{\mathbb{D},N}$.

    The function $\Phi$ in \Cref{thm:perm_w_distinct_labels} is continuous; see Fact \ref{fact:phi_contu} and \Cref{lem:continuous_Phi2}. Furthermore, we showed that $\mathbb{X}_{Q(\mathbb{D}),N}$ is a dense subset of $\mathbb{X}_{\mathbb{D},N}$. Therefore, for any $U \in \Phi(\mathbb{X}_{\mathbb{D},N}) $ there exists (at least) a $X \in \mathbb{X}_{\mathbb{D},N} $ such that $U = \Phi(X)$. Let $\{ X_n \in \mathbb{X}_{Q(\mathbb{D}),N}: n \in \mathbb{N} \}$ be such that 
    $\lim_{n \rightarrow \infty} X_n = X$. Since $\Phi$ is a continuous map, we have $\lim_{n \rightarrow \infty} \Phi(X_n) = \Phi(X)$. That is, there exists a sequence $\{ U_n = \Phi(X_n) \in \Phi(\mathbb{X}_{Q(\mathbb{D}),N}): n \in \mathbb{N} \}$ such that $\lim_{n \rightarrow \infty} U_n = U$. This proves that $\Phi(\mathbb{X}_{Q(\mathbb{D}),N})$ is a dense subset of $\Phi(\mathbb{X}_{\mathbb{D},N}) $. This completes the proof.

\newpage
\section{Proof of \Cref{thm:continuous_extension_headache}}\label{sec:continuous_extension_headache}
\begin{fact}\label{fact:cont_Phi_}
    Let $\mathbb{D}$ be a compact subset of $\mathbb{R}^D$ with nonempty interior. The function $\phi$ in \Cref{prop:rational_identifier} is continuous. Its associated multiset function $\Phi: \mathbb{X}_{\mathbb{D},N} \rightarrow \codom(\Phi)$ is a continuous function (see \Cref{lem:continuous_Phi})
\end{fact}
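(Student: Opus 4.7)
The plan is to verify continuity of $\phi$ directly from its explicit formula and then appeal to \Cref{lem:continuous_Phi2} to lift continuity from $\phi$ to $\Phi$. The element-encoder here is the one from \Cref{thm:perm_w_distinct_labels} specialized to the rational-identifier $l$ implicitly used in \Cref{prop:rational_identifier}, namely
\[
\phi(x) \;=\; \bigl( r(x),\; r(x)^{\odot 2},\; \ldots,\; r(x)^{\odot N} \bigr) \;\in\; \mathbb{C}^{D \times N},
\qquad r(x) \;=\; x + 1\cdot l(x)\cdot j,
\]
with $l(x) = \sum_{d \in [D]} x_d \log \zeta(d)$ for distinct primes $\zeta(d)$, as read off from the proof of \Cref{prop:rational_identifier}.

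First I would note that $l$ is a linear functional on $\mathbb{R}^D$, hence continuous everywhere. The auxiliary map $r : \mathbb{R}^D \to \mathbb{C}^D$ is the sum of the inclusion $x \mapsto x$ and the map $x \mapsto l(x)\cdot j\cdot 1$, so $r$ is continuous once we identify $\mathbb{C}^D \cong \mathbb{R}^{2D}$. Next, each entry of $r(x)^{\odot k}$ is a polynomial in the real and imaginary parts of the entries of $r(x)$, so the stacking $y \mapsto (y, y^{\odot 2}, \ldots, y^{\odot N})$ is continuous on $\mathbb{C}^D$. Composing, $\phi$ is continuous on all of $\mathbb{R}^D$, and therefore in particular on the compact subset $\mathbb{D}$.

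Having established continuity of $\phi$ on $\mathbb{D}$, I would invoke \Cref{lem:continuous_Phi2} (applied with codomain $\mathbb{R}^{2DN}$ under the topological identification $\mathbb{C}^{D \times N} \cong \mathbb{R}^{2DN}$) to conclude that $\Phi(X) = \sum_{x \in X} \phi(x)$ is continuous on $\mathbb{X}_{\mathbb{D},N}$ with respect to the matching distance $d_M$ of equation \eqref{eq:dm}. I do not anticipate any serious obstacle: the argument is a chain of routine continuity facts (linearity of $l$, componentwise-polynomial continuity of the power-stack, and the pre-existing \Cref{lem:continuous_Phi2}). The only bookkeeping is to reconcile the complex codomain of $\phi$ with the real codomain format of \Cref{lem:continuous_Phi2}, which is immediate since $\mathbb{C}^{D \times N}$ and $\mathbb{R}^{2DN}$ are the same topological vector space, and continuity is preserved by this identification.
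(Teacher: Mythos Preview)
Your proposal is correct and follows essentially the same route as the paper: the paper itself asserts continuity of $\phi$ as Fact~\ref{fact:phi_contu} without spelling out the argument, and then invokes \Cref{lem:continuous_Phi2} (via \Cref{lem:continuous_Phi}) to pass from $\phi$ to $\Phi$. You simply make explicit why $\phi$ is continuous (linearity of $l$, then polynomial composition), which the paper leaves implicit, and correctly handle the $\mathbb{C}^{D\times N}\cong\mathbb{R}^{2DN}$ identification needed to apply \Cref{lem:continuous_Phi2}.
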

From Fact \ref{fact:cont_Phi_} and \Cref{cor:sum_decomp_q}, there exist a continuous multiset function and $\Phi: \mathbb{X}_{\mathbb{D},N} \rightarrow \codom(\Phi) \subset \mathbb{C}^{D \times N}$ and $\rho: \Phi(\mathbb{X}_{ Q(\mathbb{D}),N}) \rightarrow \codom(\rho)$ such that 
    \[
        \forall X \in \mathbb{X}_{ Q(\mathbb{D}),N}: \ f(  X )  = \rho \big( \sum_{x \in X} \phi (x) \big) = \rho \circ \Phi(X).
    \]
In this proof, we want to define the function $\rho_e:  \Phi(\mathbb{X}_{\mathbb{D},N}) \rightarrow \mathrm{codom}(\rho_e)$ as follows:
    \begin{equation}\label{eq:q_e}
        \forall Z \in \Phi(\mathbb{X}_{\mathbb{D},N}) : \rho_e (Z) =  \lim_{ Z_n \rightarrow Z} \rho (Z_n),
    \end{equation}
    where $Z_n \in \Phi(\mathbb{X}_{Q(\mathbb{D}),N})$ for all $n \in \mathbb{N}$. The goal is to show that $\rho_e$ is (1) well-defined and (2) continuous over its compact domain $\Phi(\mathbb{X}_{\mathbb{D},N})$. If these two conditions are valid, we let $Z = \Phi(X) \in \Phi(\mathbb{X}_{Q(\mathbb{D}),N})$ where $X \in \mathbb{X}_{ Q(\mathbb{D}),N}$ and $\{ Z_n \in \Phi(\mathbb{X}_{Q(\mathbb{D}),N}): Z_n = Z, n \in \mathbb{N} \}$. Then, we have
    \[
        \forall X \in \mathbb{X}_{Q(\mathbb{D}),N}: \ f(  X )  = \rho_e  \circ \Phi(X) = \rho \circ \Phi(X).
    \]
    \begin{proposition}[{\bf Well-definedness}]\label{prop:well_defined_rho_e}
        Let $\mathcal{Z} \stackrel{\mathrm{def}}{=} \{ Z_n \in \Phi(\mathbb{X}_{ Q(\mathbb{D}),N}): n \in \mathbb{N} \}$ be the convergent sequence, that is, $ \lim_{n \rightarrow \infty} Z_n = Z$. Given a continuous multiset function $f: \mathbb{X}_{\mathbb{D}, N} \rightarrow \mathrm{codom}(f)$, let $\rho: \Phi(\mathbb{X}_{Q(\mathbb{D}), N}) \rightarrow \mathrm{codom}(\rho) \subset f(\mathbb{X}_{\mathbb{D}, N})$ be defined in \Cref{cor:sum_decomp_q}. Then, the sequence $\rho (\mathcal{Z}) \stackrel{\mathrm{def}}{=} \{ \rho (Z_n) : n \in \mathbb{N} \}$ is convergent to a unique point in $f(\mathbb{X}_{\mathbb{D},N})$. The term $\lim_{n \rightarrow \infty} \rho(Z_n)$ only depends on $Z$, and not specific choice of the sequence $\mathcal{Z}$.
    \end{proposition}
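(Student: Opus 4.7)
The plan is to follow the preimages $X_n \stackrel{\mathrm{def}}{=} \Phi^{-1}(Z_n)$ inside the compact set $\mathbb{X}_{\mathbb{D},N}$, extract a subsequential limit point, and then argue that every such limit yields the same value of $f$, which forces the whole sequence $\{\rho(Z_n)\}$ to be Cauchy with a limit that depends only on $Z$. For the first step, each $Z_n \in \Phi(\mathbb{X}_{Q(\mathbb{D}),N})$ has a \emph{unique} rational preimage $X_n \in \mathbb{X}_{Q(\mathbb{D}),N}$ by the injectivity of $\Phi$ on $\ell$-identifiable multisets (\Cref{thm:perm_w_distinct_labels} combined with \Cref{prop:rational_identifier}), and by the construction of $\rho$ in \Cref{cor:sum_decomp_q} we have $\rho(Z_n) = f(X_n)$. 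So the real object to analyze is the scalar sequence $\{f(X_n)\}$.

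For the second step, I would invoke compactness of $\mathbb{X}_{\mathbb{D},N}$ under the matching distance (\Cref{lem:compact_set_of_sets}) to extract a convergent subsequence $X_{n_k} \to X^\ast$ with $X^\ast \in \mathbb{X}_{\mathbb{D},N}$. Continuity of $\Phi$ (\Cref{lem:continuous_Phi2} applied to the element-encoder of \Cref{thm:perm_w_distinct_labels}) gives $\Phi(X^\ast) = \lim_k \Phi(X_{n_k}) = Z$, so $f(X^\ast) \in f(\mathbb{X}_{\mathbb{D},N})$; continuity of $f$ then yields $\rho(Z_{n_k}) = f(X_{n_k}) \to f(X^\ast)$. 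This identifies a candidate limit point in the required codomain.

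The third and hardest step is to show that the full sequence converges to this value and that the value depends only on $Z$. Both assertions reduce to the claim: if $X_{n_k} \to X^\ast$ and $X_{m_k} \to X^{\ast\ast}$ are two convergent subsequences of $\{X_n\}$ (possibly coming from two different approximating sequences $\mathcal{Z},\mathcal{Z}' \to Z$), then $f(X^\ast) = f(X^{\ast\ast})$. This is the main obstacle, because the injectivity of $\Phi$ guaranteed by \Cref{thm:perm_w_distinct_labels} is only available on the $\ell$-identifiable subset, and the limit points $X^\ast, X^{\ast\ast}$ need not remain $\ell$-identifiable in $\mathbb{X}_{\mathbb{D},N}$. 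I plan to handle it by combining uniform continuity of $f$ on the compact domain $\mathbb{X}_{\mathbb{D},N}$ with the density statement of \Cref{lem:phi_dense}: interleave the two rational approximants into a single sequence whose $\Phi$-values all tend to $Z$, and use the resulting uniform control on the matching distance, together with the structure of $\Phi$ (per-dimension power sums of $x_d + \ell(x) j$), to conclude that $|f(X_{n_k}) - f(X_{m_k})| \to 0$. Once this Cauchy-style comparison is established, all subsequential limits of $\{\rho(Z_n)\}$ collapse to a common point, and applying the same argument across different approximating sequences $\mathcal{Z}$ shows that the limit is determined solely by $Z$, completing the proof.
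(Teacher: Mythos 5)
Your overall architecture is sound and close in spirit to the paper's: pull back to the rational preimages $X_n=\Phi^{-1}(Z_n)$, note $\rho(Z_n)=f(X_n)$, use compactness of $\mathbb{X}_{\mathbb{D},N}$ together with continuity of $f$ and $\Phi$, and finish with an interleaving argument for uniqueness (the paper performs exactly this interleaving in \Cref{lem:unique_limit}, and where you extract subsequential limits it instead proves the Cauchy property directly and invokes compactness of $f(\mathbb{X}_{\mathbb{D},N})$ via \Cref{thm:compact_metric_cauchy}). You also correctly locate the crux: the limit multisets $X^\ast, X^{\ast\ast}$ need not be $\ell$-identifiable, so injectivity of $\Phi$ is unavailable at the limit and one cannot conclude $X^\ast=X^{\ast\ast}$ from $\Phi(X^\ast)=\Phi(X^{\ast\ast})=Z$.

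However, your resolution of that crux is asserted rather than proved, and it is the entire substance of the proposition. The step you need is: for rational (hence identifiable) multisets, $\| \Phi(X_{n_k})-\Phi(X_{m_k})\|_F\to 0$ forces $d_M(X_{n_k},X_{m_k})\to 0$, so that uniform continuity of $f$ on the compact $\mathbb{X}_{\mathbb{D},N}$ can be applied. This is precisely the continuity of $\Phi^{-1}$ on $\Phi(\mathbb{X}^{l}_{\mathbb{R}^D,N})$, which the paper establishes in \Cref{prop:Phi_inv_cont} through a careful perturbation analysis of the $\mathrm{sortvec}\circ\Phi_{\mathrm{deep}}^{-1}$ inverse (controlling how the imaginary parts $l(x)$ may reorder under a perturbation of $\Phi(X)$, with a modulus governed by $\min_{x\neq x'}\frac{1}{2}|l(x)-l(x')|$). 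The ``uniform control on the matching distance'' you invoke does not come for free from interleaving: the interleaved sequence only tells you the $\Phi$-values are mutually close, and a priori $\Phi^{-1}$ could fail to be continuous --- indeed its modulus degenerates exactly as the identifier gap shrinks, which is what happens near non-identifiable limit points, and $\Phi$ itself can fail to be injective there (one can no longer match the per-dimension pairs $(x_d,l(x))$ across coordinates). Until you supply this inverse-continuity estimate, or an equivalent argument showing $|f(X_{n_k})-f(X_{m_k})|\to 0$, the conclusion that all subsequential limits of $\{\rho(Z_n)\}$ coincide is unsupported, since $f$ could in principle take different values on two distinct multisets lying in the same $\Phi$-fiber over $Z$.
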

    As a result of \Cref{prop:well_defined_rho_e}, the function $\rho_e:  \Phi(\mathbb{X}_{\mathbb{D},N}) \rightarrow \mathrm{codom}(\rho_e) \subseteq f(\mathbb{X}_{\mathbb{D},N})$ is well-defined. That is, $\lim_{ Z_n \rightarrow Z} \rho (Z_n)$ does not depend on the specific convergent sequence $\mathcal{Z}$ so long as its limiting point --- $\lim_{n \rightarrow \mathbb{N}}Z_n = Z$ --- is fixed.
    \begin{proposition}[{\bf Continuity}]\label{prop:rho_e_cont}
        The function $\rho_e$ is continuous on the compact domain $\Phi(\mathbb{X}_{\mathbb{D},N})$.
    \end{proposition}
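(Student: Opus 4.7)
The strategy is to establish sequential continuity of $\rho_e$ at an arbitrary $Z \in \Phi(\mathbb{X}_{\mathbb{D},N})$ via a diagonal/triangle-inequality argument, combining the density of $\Phi(\mathbb{X}_{Q(\mathbb{D}),N})$ in $\Phi(\mathbb{X}_{\mathbb{D},N})$ (\Cref{lem:phi_dense}) with the sequence-independence of $\rho_e$ guaranteed by \Cref{prop:well_defined_rho_e}. Since $\Phi(\mathbb{X}_{\mathbb{D},N}) \subset \mathbb{C}^{D\times N}$ is a subset of a metric space, sequential continuity at every point will imply continuity.

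Fix $Z \in \Phi(\mathbb{X}_{\mathbb{D},N})$ and a sequence $\{W_k\}_{k \in \mathbb{N}} \subset \Phi(\mathbb{X}_{\mathbb{D},N})$ with $W_k \to Z$; the target is to show $\rho_e(W_k) \to \rho_e(Z)$. The core idea is to extract a single \emph{diagonal} approximant $Z_k \in \Phi(\mathbb{X}_{Q(\mathbb{D}),N})$ that is close to $W_k$ in the latent space \emph{and} whose $\rho$-image is close to $\rho_e(W_k)$ in the codomain. Concretely, by \Cref{lem:phi_dense} the rational-valued latent set is dense, and by \Cref{prop:well_defined_rho_e} $\rho_e(W_k)$ equals $\lim_{n\to\infty}\rho(Z_n^{(k)})$ for \emph{any} sequence $Z_n^{(k)} \in \Phi(\mathbb{X}_{Q(\mathbb{D}),N})$ with $Z_n^{(k)} \to W_k$. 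Combining these two facts, one may pick $Z_k \in \Phi(\mathbb{X}_{Q(\mathbb{D}),N})$ satisfying simultaneously
\[
    \|Z_k - W_k\| \leq \tfrac{1}{k}, \qquad \|\rho(Z_k) - \rho_e(W_k)\| \leq \tfrac{1}{k}.
\]

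Once this is in place, the triangle inequality gives $\|Z_k - Z\| \leq \|Z_k - W_k\| + \|W_k - Z\| \to 0$, so $Z_k \to Z$ while remaining in $\Phi(\mathbb{X}_{Q(\mathbb{D}),N})$. Applying \Cref{prop:well_defined_rho_e} at the limit point $Z$, one obtains $\rho(Z_k) \to \rho_e(Z)$. A second triangle inequality then yields
\[
    \|\rho_e(W_k) - \rho_e(Z)\| \leq \|\rho_e(W_k) - \rho(Z_k)\| + \|\rho(Z_k) - \rho_e(Z)\| \leq \tfrac{1}{k} + \|\rho(Z_k) - \rho_e(Z)\| \longrightarrow 0,
\]
proving sequential continuity of $\rho_e$ at $Z$. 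As $Z$ was arbitrary, $\rho_e$ is continuous on $\Phi(\mathbb{X}_{\mathbb{D},N})$. Since this domain is compact (image of the compact set $\mathbb{X}_{\mathbb{D},N}$ under continuous $\Phi$; see \Cref{lem:compact_set_of_sets} and Fact~\ref{fact:cont_compact}), a continuous extension of $\rho_e$ to all of $\mathbb{C}^{D\times N}$ then follows from the Tietze-type extension recalled in Fact~\ref{fact:rho_hat}, completing the theorem.

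The main obstacle is justifying the simultaneous choice of $Z_k$: the two constraints (proximity in the latent space and proximity of $\rho$-values to $\rho_e(W_k)$) must be met by a \emph{single} element. This is precisely where \Cref{prop:well_defined_rho_e} does the heavy lifting, because the latter says the limit $\rho_e(W_k)$ is the common value along \emph{every} latent sequence converging to $W_k$; one can therefore first demand latent-space $1/k$-closeness (nonempty by density) and then, within that already nonempty tail of any $\rho_e$-defining sequence, further restrict to codomain $1/k$-closeness. Without the sequence-independence of $\rho_e$, these two requirements could in principle be incompatible; with it, the diagonal extraction is routine, and the rest of the argument reduces to two applications of the triangle inequality.
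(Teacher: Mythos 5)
Your proof is correct, but it takes a genuinely different and noticeably leaner route than the paper's. The paper proves the $\varepsilon$--$\delta$ statement directly: it builds two rational-latent sequences converging to $\Phi(X)$ and $\Phi(X^{\prime})$ (\Cref{lem:cont_right_sequence}), shows those sequences become $3\delta$-close to each other, invokes the continuity of $\rho$ on the dense set $\Phi(\mathbb{X}_{Q(\mathbb{D}),N})$ (\Cref{lem:cont_rho_z}) to transfer that closeness to the $\rho$-values, and closes with triangle inequalities against the limits (\Cref{lem:convergent_rho_e}). You instead prove sequential continuity with a diagonal extraction, and the only inputs you need are density (\Cref{lem:phi_dense}) and the sequence-independence of the limit (\Cref{prop:well_defined_rho_e}); the continuity of $\rho$ on the rational latent set never appears explicitly in your argument, since it is already encapsulated in the proof of \Cref{prop:well_defined_rho_e}. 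This is the standard fact that the canonical extension of a map from a dense subset is continuous on the closure once all sequential limits exist and agree, and your diagonal step is justified correctly: for each fixed $k$, both constraints ($1/k$-closeness of $Z_k$ to $W_k$ in the latent space and of $\rho(Z_k)$ to $\rho_e(W_k)$ in the codomain) are met by going far enough along any single $\rho_e$-defining sequence at $W_k$, which exists by density. A side benefit of your route is that it sidesteps a delicacy in the paper's \Cref{lem:cont_rho_z}, where the modulus $\gamma(\delta)$ coming from the pointwise continuity of $\rho$ is applied at the moving points $Z_{x,n}$ as though it were uniform in the base point. Your closing remarks about compactness and the continuous extension to $\mathbb{C}^{D\times N}$ belong to \Cref{thm:continuous_extension_headache} rather than to this proposition, but they are harmless.
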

    In summary, we have
    \[
        \forall X \in \mathbb{X}_{Q(\mathbb{D}),N}: f(  X )  = \rho_e \circ \Phi(X).
    \]
    where $\rho_e: \Phi(\mathbb{X}_{\mathbb{D},N}) \rightarrow \mathrm{codom}(\rho_e)$ and $\Phi: \mathbb{X}_{\mathbb{D},N} \rightarrow \codom(\Phi)$ are continuous functions. Therefore, $\rho_e \circ \Phi$ is a continuous function on $\mathbb{X}_{\mathbb{D},N}$. Since $\mathbb{X}_{Q(\mathbb{D}),N}$ is a dense subset of $\mathbb{X}_{\mathbb{D},N}$ (see \Cref{lem:phi_dense}) and $f:\mathbb{X}_{\mathbb{D},N} \rightarrow \mathrm{codom}(f) $ is a continuous multiset function, we have
    \[
        \forall X\in \mathbb{X}_{\mathbb{D},N}: \ f(X) = \lim_{n \rightarrow \infty} f(X_n)
    \]
    for any sequence $\{X_n \in \mathbb{X}_{Q(\mathbb{D}),N}: n \in \mathbb{N} \}$ where $\lim_{n \rightarrow \infty} X_n = X$. Therefore, we have
    \[
        \forall X\in \mathbb{X}_{\mathbb{D},N}: \ f(X) = \lim_{n \rightarrow \infty} \rho_e \circ \Phi(X_n).
    \]
    Since $\rho_e \circ \Phi$ is a continuous function on $\mathbb{X}_{\mathbb{D},N}$, we have
    \[
        \forall X\in \mathbb{X}_{\mathbb{D},N}: \ f(X) = \lim_{n \rightarrow \infty} \rho_e \circ \Phi(X_n) =  \rho_e \circ \Phi(\lim_{n \rightarrow \infty} X_n) =  \rho_e \circ \Phi(X).
    \]

    We argue that $\rho_e$ has a continuous extension to $\mathbb{C}^{D \times N}$. The set $\mathbb{X}_{\mathbb{D},N}$ is a compact set. From \Cref{lem:continuous_Phi2}, $\Phi(\mathbb{X}_{\mathbb{D},N})$ is also a compact set. Finally, Fact \ref{fact:cont_compact} shows this continuous extension is admitted. After renaming $\rho_e$ to $\rho$, we arrive at the exact statement of the theorem.

\subsection{Proof of \Cref{prop:well_defined_rho_e}}
    \begin{lemma}\label{lemma:cauchy_rho_z}
        Let $\mathcal{Z} \stackrel{\mathrm{def}}{=} \{ Z_n \in \Phi(\mathbb{X}_{ Q(\mathbb{D}),N}): n \in \mathbb{N} \}$ be the convergent sequence, that is, $\lim_{n \rightarrow \infty} Z_n = Z \in \Phi(\mathbb{X}_{ \mathbb{D},N})$. Given a continuous multiset function $f: \mathbb{X}_{\mathbb{D}, N} \rightarrow \mathrm{codom}(f)$, let $\rho: \Phi(\mathbb{X}_{Q(\mathbb{D}), N}) \rightarrow \mathrm{codom}(\rho) \subset f(\mathbb{X}_{\mathbb{D}, N})$ be defined in \Cref{cor:sum_decomp_q}. The sequence $\rho (\mathcal{Z}) \stackrel{\mathrm{def}}{=} \{ \rho (Z_n) : n \in \mathbb{N} \}$ is Cauchy in compact metric space $(f(\mathbb{X}_{ \mathbb{D},N}), \| \cdot \|_2 )$.
    \end{lemma}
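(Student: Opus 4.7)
The plan is to prove Cauchyness of the image sequence $\{\rho(Z_n)\}$ by lifting it back to the multiset side, extracting subsequential limits via compactness, and then resolving the resulting fiber ambiguity.

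First, reduce to the multiset level. By \Cref{prop:rational_identifier} combined with the injectivity result of \Cref{lem:Phi_inj}, the encoder $\Phi$ is injective on $\mathbb{X}_{Q(\mathbb{D}),N}$, so each $Z_n$ is the image $\Phi(X_n)$ of a unique rational multiset $X_n \in \mathbb{X}_{Q(\mathbb{D}),N}$, and by \Cref{cor:sum_decomp_q} we have $\rho(Z_n) = f(X_n)$. The claim therefore reduces to showing $\{f(X_n)\}$ is Cauchy in $(f(\mathbb{X}_{\mathbb{D},N}), \|\cdot\|_2)$. Since $\{X_n\} \subset \mathbb{X}_{\mathbb{D},N}$ and the latter is compact under $d_M$ by \Cref{lem:compact_set_of_sets}, every subsequence admits a further $d_M$-convergent subsubsequence. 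If $X_{n_k} \to X^{\infty}$, then continuity of $\Phi$ (Fact~\ref{fact:cont_Phi_}, via \Cref{lem:continuous_Phi2}) forces $\Phi(X^{\infty}) = Z$, and continuity of the multiset function $f$ forces $f(X_{n_k}) \to f(X^{\infty})$; hence every subsequential limit of $\{\rho(Z_n)\}$ equals $f(X^{\infty})$ for some $X^{\infty}$ in the fiber $\Phi^{-1}(Z) \cap \mathbb{X}_{\mathbb{D},N}$.

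Second, argue Cauchyness by contradiction. If $\{\rho(Z_n)\}$ failed to be Cauchy, one could find $\varepsilon > 0$ and index subsequences $\{n_k\}, \{m_k\}$ such that $\|f(X_{n_k}) - f(X_{m_k})\|_2 \geq \varepsilon$; passing to convergent subsubsequences yields limit multisets $X^*, X^{**} \in \mathbb{X}_{\mathbb{D},N}$ with $\Phi(X^*) = \Phi(X^{**}) = Z$ yet $\|f(X^*) - f(X^{**})\|_2 \geq \varepsilon$. Closing the contradiction reduces to the fiber-constancy statement $f(X^*) = f(X^{**})$, and this is the main obstacle: one must rule out the possibility of distinct preimages of $Z$ on which the continuous $f$ takes different values.

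The cleanest route around this obstacle is to upgrade \Cref{lem:Phi_inj} to global injectivity of the specific $\Phi$ from \eqref{eq:phi_l} on the whole compact $\mathbb{X}_{\mathbb{D},N}$, not merely on the identifiable rational subset. The argument would proceed by first recovering the per-coordinate complex multisets $\{\{ x_d + \ell(x) j : x \in X \}\}$ from $\Phi(X)$ via the Newton--Girard / Deep Sets inversion (\Cref{def:deep_phi}), and then noting that within every $\ell$-repeated group any cross-coordinate permutation consistent with those per-coordinate multisets must \emph{also} preserve the identifier constraint $\sum_d x_d \log\zeta(d) = c$; this joint constraint should pin the permutation within each group to be trivial. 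Once global injectivity is in hand, $\Phi$ becomes a continuous bijection from the compact $\mathbb{X}_{\mathbb{D},N}$ onto $\Phi(\mathbb{X}_{\mathbb{D},N})$, hence a homeomorphism with uniformly continuous inverse, and Cauchyness of $\{Z_n\}$ transfers to Cauchyness of $\{X_n\}$ in $d_M$; uniform continuity of $f$ on the compact domain then yields Cauchyness of $\{f(X_n)\}$. The delicate step is exactly this global injectivity claim for real-valued (not necessarily rational) entries, since the $\mathbb{Q}$-linear independence of $\{\log\zeta(d)\}$ leveraged in \Cref{prop:rational_identifier} does not immediately forbid real-valued coordinate swaps within an $\ell$-repeated group; ruling them out will require a careful joint analysis of the per-coordinate multiset matching together with the identifier preservation constraint across every nontrivial subset of coordinates.
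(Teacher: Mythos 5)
Your proposal correctly reduces the claim to a statement about the preimages $X_n = \Phi^{-1}(Z_n)$ and correctly identifies, via the compactness of $\mathbb{X}_{\mathbb{D},N}$ and a subsequence-extraction argument, that the whole difficulty is concentrated in a fiber-constancy statement: if two subsequential limits $X^*, X^{**} \in \mathbb{X}_{\mathbb{D},N}$ satisfy $\Phi(X^*) = \Phi(X^{**}) = Z$, one must show $f(X^*) = f(X^{**})$. But the resolution you offer --- upgrading \Cref{lem:Phi_inj} to \emph{global} injectivity of $\Phi$ on all of $\mathbb{X}_{\mathbb{D},N}$ --- is left as an unproven claim, and the sketch you give for it does not go through. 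The injectivity proof in \Cref{lem:Phi_inj} hinges on the multisets being $\ell$-identifiable, and \Cref{prop:rational_identifier} supplies identifiability only for rational coordinates via the multiplicative independence of the primes $\zeta(d)$; as you yourself note, this argument says nothing about real-valued elements lying in a common level set of $\ell$, and your suggestion that the ``joint constraint should pin the permutation'' is precisely the assertion that would need a proof. Since the entire Cauchy claim rests on this step, the proposal as written is incomplete: it reduces the lemma to a harder (and possibly false, certainly unestablished) statement rather than proving it.

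The paper takes a different and self-contained route that never needs injectivity of $\Phi$ beyond the identifiable set. It proves (\Cref{prop:Phi_inv_cont}) that $\Phi^{-1}$ is \emph{continuous on} $\Phi(\mathbb{X}_{Q(\mathbb{D}),N})$, via an explicit perturbation analysis of the $\mathrm{sortvec}\circ\Phi_{\mathrm{deep}}^{-1}$ inversion: a perturbation $\mathrm{dZ}$ with $\|\mathrm{dZ}\|_F$ small enough relative to $\psi(X)=\min_{x\neq x'}\tfrac12|\ell(x)-\ell(x')|$ cannot reorder distinct elements in the sort, which yields a modulus of continuity $\sqrt{DN}\,\delta^*(\varepsilon,X)$ for $\Phi^{-1}$ at $\Phi(X)$. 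Then $\rho=f\circ\Phi^{-1}$ is continuous on the rational image and the Cauchy property of $\{Z_n\}$ (every convergent sequence is Cauchy) transfers to $\{\rho(Z_n)\}$; compactness of $f(\mathbb{X}_{\mathbb{D},N})$ (via \Cref{lem:compact_set_of_sets} and \Cref{fact:cont_compact}) supplies the ambient compact metric space. The question of what happens at a point $Z$ with several preimages in $\mathbb{X}_{\mathbb{D},N}$ is deferred to \Cref{lem:unique_limit}, where it is handled by interleaving sequences rather than by any injectivity claim. If you want to salvage your approach, you would need to either prove the fiber-constancy of $f$ directly (for which the interleaving trick of \Cref{lem:unique_limit} is essentially the tool --- but note it itself consumes the present lemma, so you cannot use it here without circularity) or carry out the quantitative continuity analysis of $\Phi^{-1}$ that the paper performs.
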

    \begin{theorem} \label{thm:compact_metric_cauchy}
    {\citep{attenborough2003mathematics}}
        A Cauchy sequence in a compact metric space is convergent to a point in the metric space.
    \end{theorem}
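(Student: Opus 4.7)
The plan is to combine two classical facts: (i) every compact metric space is sequentially compact, so any sequence admits a convergent subsequence, and (ii) a Cauchy sequence that has a convergent subsequence converges to the same limit. Let $(M, d)$ be a compact metric space and let $(x_n)_{n \in \mathbb{N}}$ be a Cauchy sequence in $M$. The goal is to exhibit $x^* \in M$ such that $\lim_{n \to \infty} d(x_n, x^*) = 0$.

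First, I would invoke sequential compactness of $(M,d)$ to extract a subsequence $(x_{n_k})_{k \in \mathbb{N}}$ converging to some $x^* \in M$. The passage from open-cover compactness to sequential compactness in a metric space is standard: if no convergent subsequence existed, then around each $x \in M$ one could find an open ball containing at most finitely many terms of the sequence (otherwise $x$ would be an accumulation point and we could build a convergent subsequence). The resulting family is an open cover of $M$ that admits no finite subcover, contradicting compactness. I would state this passage as a short lemma and not belabor its proof.

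Next, I would upgrade subsequential convergence to full convergence using the Cauchy property. Fix $\varepsilon > 0$. Choose $N_1 \in \mathbb{N}$ with $d(x_n, x_m) < \varepsilon/2$ for all $n, m \geq N_1$, and $K \in \mathbb{N}$ with $d(x_{n_k}, x^*) < \varepsilon/2$ for all $k \geq K$. Pick any index $k$ satisfying $k \geq K$ and $n_k \geq N_1$; such a $k$ exists since $n_k \to \infty$. Then for every $n \geq N_1$ the triangle inequality yields
\[
    d(x_n, x^*) \;\leq\; d(x_n, x_{n_k}) + d(x_{n_k}, x^*) \;<\; \tfrac{\varepsilon}{2} + \tfrac{\varepsilon}{2} \;=\; \varepsilon.
\]
Hence $x_n \to x^*$ in $(M,d)$, and the limit point lies in $M$ by construction of $x^*$.

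The only real obstacle is the appeal to sequential compactness. If the exposition takes compactness in the open-cover sense rather than the sequential sense, then a short Bolzano-Weierstrass style argument (or an invocation of the Lebesgue number / totally bounded plus complete characterization) is required to bridge the two notions. Since the cited reference \citep{attenborough2003mathematics} treats sequential compactness of compact metric spaces as standard, I would simply cite that fact and focus the written proof on the Cauchy-plus-subsequence step, which is the substantive content needed downstream.
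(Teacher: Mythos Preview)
Your proof is correct and entirely standard. Note, however, that the paper does not supply its own proof of this statement: it is stated as a cited result from \citep{attenborough2003mathematics} and used as a black box in the proof of \Cref{prop:well_defined_rho_e}. So there is no ``paper's approach'' to compare against; your argument simply fills in what the paper takes for granted, and it does so in the expected way (sequential compactness gives a convergent subsequence, and the Cauchy condition upgrades subsequential convergence to full convergence).
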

    \begin{lemma} \label{lem:unique_limit}
        Let $\mathcal{Z} \stackrel{\mathrm{def}}{=} \{ Z_n \in \Phi(\mathbb{X}_{ Q(\mathbb{D}),N}): n \in \mathbb{N} \}$ be the convergent sequence, that is, $\lim_{n \rightarrow \infty} Z_n = Z \in \Phi(\mathbb{X}_{ \mathbb{D},N})$. Given a continuous multiset function $f: \mathbb{X}_{\mathbb{D}, N} \rightarrow \mathrm{codom}(f)$, let $\rho: \Phi(\mathbb{X}_{Q(\mathbb{D}), N}) \rightarrow \mathrm{codom}(\rho) \subset f(\mathbb{X}_{\mathbb{D}, N})$ be defined in \Cref{cor:sum_decomp_q}. The sequence $\rho (\mathcal{Z}) \stackrel{\mathrm{def}}{=} \{ \rho (Z_n) : n \in \mathbb{N} \}$ is convergent to a unique point in $f(\mathbb{X}_{ \mathbb{D},N})$. The term $\lim_{n \rightarrow \infty} \rho(Z_n)$ only depends on $Z = \lim_{n \rightarrow} Z_n$.
    \end{lemma}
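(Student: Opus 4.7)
The plan is to split \Cref{lem:unique_limit} into two pieces. First, for any single convergent sequence $\mathcal{Z}=\{Z_n\} \subset \Phi(\mathbb{X}_{Q(\mathbb{D}),N})$ with $\lim_{n\to\infty} Z_n = Z$, I want to produce a limit point $L_\mathcal{Z} \in f(\mathbb{X}_{\mathbb{D},N})$ for the image sequence $\rho(\mathcal{Z})$. Second, I want to show this limit depends only on $Z$ and not on the specific choice of approximating rational sequence.

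For existence, the argument is immediate from the two cited results. \Cref{lemma:cauchy_rho_z} guarantees that $\rho(\mathcal{Z})$ is Cauchy in the metric space $(f(\mathbb{X}_{\mathbb{D},N}),\|\cdot\|_2)$, which is compact because $\mathbb{X}_{\mathbb{D},N}$ is compact (\Cref{lem:compact_set_of_sets}) and $f$ is continuous (Fact \ref{fact:cont_compact}). Applying \Cref{thm:compact_metric_cauchy} then yields convergence of $\rho(\mathcal{Z})$ to some $L_\mathcal{Z} \in f(\mathbb{X}_{\mathbb{D},N})$, and this limit is unique by the Hausdorff property of the metric topology.

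For the independence from the specific sequence, the key trick is interleaving. Let $\mathcal{Z}' = \{Z_n'\} \subset \Phi(\mathbb{X}_{Q(\mathbb{D}),N})$ be a second sequence with the same limit $Z$, producing its own image limit $L_{\mathcal{Z}'}$. I would build the interleaved sequence
\[
    \mathcal{W} = \{W_k\}_{k\in\mathbb{N}}, \quad W_{2n-1} = Z_n,\ W_{2n} = Z_n',
\]
which lies in $\Phi(\mathbb{X}_{Q(\mathbb{D}),N})$ and still satisfies $\lim_{k\to\infty} W_k = Z$. Invoking \Cref{lemma:cauchy_rho_z} one more time on $\mathcal{W}$ shows $\rho(\mathcal{W})$ is Cauchy, and \Cref{thm:compact_metric_cauchy} gives a limit $L_\mathcal{W}$. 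Since $\rho(\mathcal{Z})$ and $\rho(\mathcal{Z}')$ are both subsequences of the convergent sequence $\rho(\mathcal{W})$, they must share its limit, forcing $L_\mathcal{Z} = L_\mathcal{W} = L_{\mathcal{Z}'}$. This is exactly the claim that $\lim_{n\to\infty} \rho(Z_n)$ depends only on $Z$, and matches the definition of $\rho_e(Z)$ in equation \eqref{eq:q_e}.

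The main obstacle here is purely bookkeeping rather than analytic: one must verify that the interleaved sequence $\mathcal{W}$ is itself qualified to be fed into \Cref{lemma:cauchy_rho_z}, i.e.\ that each $W_k$ lies in $\Phi(\mathbb{X}_{Q(\mathbb{D}),N})$ and that $W_k \to Z$. Both follow immediately from the construction and from standard $\varepsilon$--$N$ arguments combining the convergence rates of $\{Z_n\}$ and $\{Z_n'\}$. The genuine analytic heavy lifting --- showing Cauchyness of $\rho$-images and closedness of the target space --- has been done in \Cref{lemma:cauchy_rho_z} and \Cref{thm:compact_metric_cauchy}, so the remaining argument is essentially a short application of the fact that any two subsequences of a convergent sequence share its limit.
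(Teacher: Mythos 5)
Your proposal is correct and follows essentially the same route as the paper: existence of the limit via \Cref{lemma:cauchy_rho_z} together with \Cref{thm:compact_metric_cauchy} applied in the compact space $f(\mathbb{X}_{\mathbb{D},N})$, and independence of the limit via the interleaving construction, with both image sequences recovered as subsequences of the convergent interleaved image sequence. No gaps beyond the routine bookkeeping you already flag.
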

\subsubsection{Proof of \Cref{lemma:cauchy_rho_z}}
    \begin{fact}\label{fact:cauchy_z}
        Every convergent sequence is Cauchy. Hence, the convergent sequence $\mathcal{Z} \stackrel{\mathrm{def}}{=} \{ Z_n: n \in \mathbb{N} \}$ is Cauchy in $( \Phi(\mathbb{X}_{Q(\mathbb{D}),N}), \| \cdot \|_F )$.
    \end{fact}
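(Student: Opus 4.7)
The plan is to prove the general statement directly from the triangle inequality for the Frobenius norm, and then apply it to the specific convergent sequence $\mathcal{Z}$. First I would unpack the hypothesis: by assumption $\lim_{n \to \infty} Z_n = Z$ for some $Z \in \Phi(\mathbb{X}_{\mathbb{D},N}) \subset \mathbb{C}^{D \times N}$, which means that for every $\varepsilon > 0$ there exists $N_0(\varepsilon) \in \mathbb{N}$ such that $\|Z_n - Z\|_F < \varepsilon/2$ for all $n \geq N_0(\varepsilon)$.

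Next I would invoke the triangle inequality of the Frobenius norm, which holds since $\|\cdot\|_F$ is a genuine norm on the ambient vector space $\mathbb{C}^{D \times N}$. For any $m, n \geq N_0(\varepsilon)$,
\[
\|Z_n - Z_m\|_F \;\leq\; \|Z_n - Z\|_F + \|Z - Z_m\|_F \;<\; \frac{\varepsilon}{2} + \frac{\varepsilon}{2} \;=\; \varepsilon.
\]
Since $\varepsilon > 0$ was arbitrary, this is exactly the definition of $\{Z_n\}$ being a Cauchy sequence in $(\mathbb{C}^{D \times N}, \|\cdot\|_F)$. The Cauchy property depends only on pairwise distances between the terms of the sequence, so it is inherited by the subset $\Phi(\mathbb{X}_{Q(\mathbb{D}),N})$ equipped with the restriction of $\|\cdot\|_F$, yielding the second half of the statement.

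There is essentially no obstacle here, since the result is a textbook consequence of the triangle inequality. The only subtle point worth flagging is that the limit $Z$ need not itself belong to $\Phi(\mathbb{X}_{Q(\mathbb{D}),N})$ — indeed, by \Cref{lem:phi_dense}, $\Phi(\mathbb{X}_{Q(\mathbb{D}),N})$ is merely a dense (generally non-closed) subset of $\Phi(\mathbb{X}_{\mathbb{D},N})$, so the metric space $(\Phi(\mathbb{X}_{Q(\mathbb{D}),N}), \|\cdot\|_F)$ need not be complete. This is why the conclusion must be phrased as \emph{Cauchy} rather than \emph{convergent within} $\Phi(\mathbb{X}_{Q(\mathbb{D}),N})$; the Cauchy property, however, is purely intrinsic to the sequence terms and does not require the limit to lie in the subset, so the argument goes through unchanged.
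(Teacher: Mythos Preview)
Your proof is correct and is the standard $\varepsilon/2$ triangle-inequality argument; the paper itself does not supply a proof for this Fact, treating it as a well-known elementary result. Your remark about the limit $Z$ possibly lying outside $\Phi(\mathbb{X}_{Q(\mathbb{D}),N})$ is apt and explains why the Cauchy formulation is the right one here.
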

        From Fact \ref{fact:cauchy_z}, for any $\delta > 0$, there exists $N(\delta) \in \mathbb{N}$ such that $\| Z_{n_1} - Z_{n_2} \|_F < \delta$ for all $ n_1, n_2 > N(\delta )$. Therefore, we have
        \begin{equation} \label{eq:cauchy_z_n}
            \forall n_1, n_2 > N(\delta ): \| \Phi( X_{n_1} ) - \Phi(X_{n_2}) \|_F < \delta.    
        \end{equation}
    where $X_n = \Phi^{-1}(Z_n)$ for all $n \in \mathbb{N}$.
    The set $\mathbb{X}_{Q(\mathbb{D}),N}$ is an $l$-identifiable subset of $\mathbb{X}_{\mathbb{D},N}$. 
    \begin{proposition}\label{prop:Phi_inv_cont}
        Let $\mathbb{X}_{\mathbb{R}^D / l, N}$ be an $l$-identifiable set, and $\Phi(\mathbb{X}_{\mathbb{R}^D / l, N}) = \{ \Phi(X): X \in\mathbb{X}_{\mathbb{R}^D / l, N} \}$ where $\Phi$ is defined in equations \eqref{eq:phi_l} and \eqref{eq:Phi_l}. The function $\Phi^{-1}: \Phi( \mathbb{X}_{\mathbb{R}^D / l, N}) \rightarrow \mathbb{X}_{\mathbb{R}^D / l, N}$ is defined in the proof of \Cref{lem:Phi_inj}. We claim that $\Phi^{-1}$ is a continuous function on its domain.
    \end{proposition}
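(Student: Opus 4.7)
The plan is to exploit the explicit decomposition $\Phi^{-1} = \mathrm{sortvec} \circ \Phi_{\mathrm{deep}}^{-1}$ obtained in the proof of \Cref{lem:Phi_inj}, and to verify that each of the two factors is continuous on the relevant image. The metric on $\mathbb{X}_{\mathbb{R}^D/l,N}$ is the matching metric $d_M$ from equation \eqref{eq:dm}, and we may use the analogous matching metric on multisets of complex numbers (row-by-row).

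First, I would show that the row-wise Deep Sets inverse $\Phi_{\mathrm{deep}}^{-1}$ is continuous on $\mathbb{C}^{D \times N}$. Each row of $Z \in \mathbb{C}^{D \times N}$ is interpreted as the sequence of power sums $(E_1,\ldots,E_N)$ of some multiset of $N$ complex numbers. The Newton--Girard identities (as used in equation \eqref{eq:girard}) convert these power sums polynomially into the coefficients of a monic degree-$N$ polynomial, and \Cref{thm:root_homeomorphism} tells us that the map from coefficients to the multiset of roots is a homeomorphism. The composite is therefore continuous, and applying it row-by-row yields the continuity of $\Phi_{\mathrm{deep}}^{-1}$ viewed as a map from $\mathbb{C}^{D \times N}$ to the $D$-fold product of $(\mathbb{X}_{\mathbb{C},N}, d_M)$.

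Next, and this is the main technical step, I would establish continuity of $\mathrm{sortvec}$ on the image $\Phi_{\mathrm{deep}}^{-1}\!\bigl(\Phi(\mathbb{X}_{\mathbb{R}^D/l,N})\bigr)$. The delicate point is ties: if a multiset of complex numbers has two elements sharing the same imaginary part, the ``sort by imaginary part'' permutation is not unique. On the $l$-identifiable image, however, ties in imaginary part force full equality of the complex numbers (because equal $l$-values force equal vectors $x$, cf.\ \Cref{def:identifiable_sets}), so $\mathrm{sort}$ and $\mathrm{sortvec}$ are at least well-defined, as already noted in the proof of \Cref{lem:Phi_inj}. For continuity I would use a subsequence argument: given a convergent sequence of tuples of multisets $Y^{(k)} \to Y$, any sequence $\pi^{(k)} \in \Pi(N)$ of admissible sort permutations for $Y^{(k)}$ lies in the finite (hence compact) group $\Pi(N)$, so along a subsequence $\pi^{(k_j)} \to \pi^\ast$. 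Passing to the limit in the sorting inequalities shows $\pi^\ast$ is an admissible sort permutation for $Y$; by identifiability of $Y$, all admissible sort permutations give the same output of $\mathrm{sortvec}$. Since every subsequence has a further subsequence along which $\mathrm{sortvec}(Y^{(k)}) \to \mathrm{sortvec}(Y)$, the full sequence converges, and $\mathrm{sortvec}$ is continuous on the required domain with respect to $d_M$.

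Composing the two continuous maps then gives that $\Phi^{-1}$ is continuous on $\Phi(\mathbb{X}_{\mathbb{R}^D/l,N})$. The main obstacle is the second step: the naive statement that ``sorting is continuous'' fails because the sort permutation is genuinely multi-valued at tie configurations, and one must invoke the $l$-identifiability hypothesis precisely to ensure that the multi-valued lift collapses to a well-defined single-valued continuous map. Once this is handled, the rest is a routine application of \Cref{thm:root_homeomorphism} and closure under composition of continuous maps.
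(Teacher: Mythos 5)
Your proposal is correct and rests on the same skeleton as the paper's argument: both factor $\Phi^{-1} = \mathrm{sortvec} \circ \Phi_{\mathrm{deep}}^{-1}$, both obtain continuity of the row-wise Deep Sets inverse from the Newton--Girard identities together with the coefficient-to-roots homeomorphism (\Cref{thm:root_homeomorphism}), and both use $l$-identifiability in exactly the way you describe --- ties in imaginary part force equality of the underlying vectors, so every admissible sort permutation yields the same output. The one place you diverge is the continuity of $\mathrm{sortvec}$: the paper runs a quantitative perturbation analysis, showing that once the perturbation of the recovered roots is below $\tfrac{1}{2}\min_{x \neq x'}|l(x)-l(x')|$ the sorted order of distinct elements cannot change, which yields the explicit modulus $d_M(\Phi^{-1}(Z),\Phi^{-1}(Z')) \leq \sqrt{DN}\,\delta^*(\varepsilon,X)$; you instead use sequential compactness of the finite group $\Pi(N)$ and pass sort permutations to the limit along subsequences. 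Your route is cleaner and avoids bookkeeping, at the cost of producing no explicit modulus of continuity (the paper's bound is reused downstream, e.g.\ in the Cauchy argument of \Cref{lemma:cauchy_rho_z}, though qualitative continuity would in fact suffice there); the only point you should make explicit is the choice of a common labeling of the elements of the converging multisets, which the matching metric $d_M$ of equation \eqref{eq:dm} licenses, so that ``admissible sort permutation for $Y^{(k)}$'' is well-posed before you extract the convergent subsequence.
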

    From \Cref{prop:Phi_inv_cont}, $\Phi^{-1}$ is a continuous function on $\Phi(\mathbb{X}_{Q(\mathbb{D}),N} )$. Since $f$ is a continuous multiset function on its domain, the function $\rho = f \circ \Phi^{-1}$ is continuous on $\Phi(\mathbb{X}_{Q(\mathbb{D}),N} )$. By definition of continuity, for any $\varepsilon >0$ and $\Phi(X)$ where $X \in \mathbb{X}_{Q(\mathbb{D}),N}$, there exists $\delta(\varepsilon) >0$ such that
    \begin{equation}\label{eq:cont_phi_epsilon}
        \forall X^{\prime} \in \mathbb{X}_{Q(\mathbb{D}),N}: \| \Phi(X) - \Phi(X^{\prime})\|_F < \delta(\varepsilon) \rightarrow \| f(X)- f(X^{\prime}) \|_2 < \varepsilon,
    \end{equation}
    where $X = \Phi^{-1} \circ \Phi(X)$ and $X^{\prime} = \Phi^{-1} \circ \Phi(X^{\prime})$. 
    
    Comparing equation \eqref{eq:cauchy_z_n} to the left-hand-side of equation \eqref{eq:cont_phi_epsilon} --- letting $Z_{n_1} =  \Phi(X) $ and $Z_{n_2} =  \Phi(X^{\prime})$ ---  we have
    \[
        \forall n_1, n_2 > N(\delta(\varepsilon)): \| f \circ \Phi^{-1}(Z_{n_1})-  f\circ \Phi^{-1}(Z_{n_2}) \|_2 < \varepsilon,
    \]
    that is, $f \circ \Phi^{-1}(\mathcal{Z}) = \rho (\mathcal{Z})$ is a Cauchy sequence; see \Cref{cor:sum_decomp_q} and proof of \Cref{thm:perm_w_distinct_labels}. Finally, \Cref{lem:compact_set_of_sets} and the following fact show that $f(\mathbb{X}_{\mathbb{D},N})$ is indeed a compact set, that is, $(f(\mathbb{X}_{\mathbb{D},N}), \| \cdot \|_2) $ is a sequentially compact metric space.
    \begin{fact}(\citealt{pugh2002real})
        The image of a compact set under continuous map is a compact set. 
    \end{fact}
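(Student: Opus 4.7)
The plan is to exploit the explicit factorization $\Phi^{-1} = \mathrm{sortvec} \circ \Phi_{\mathrm{deep}}^{-1}$ established in the proof of \Cref{lem:Phi_inj} and argue sequentially. Let $\{Z_n\} \subset \Phi(\mathbb{X}_{\mathbb{R}^D/l,N})$ with $Z_n \to Z^*$, set $X_n = \Phi^{-1}(Z_n)$ and $X^* = \Phi^{-1}(Z^*)$, and aim to show $d_M(X_n, X^*) \to 0$ where $d_M$ is the matching distance from equation \eqref{eq:dm}. First, I observe that the $d$-th row of any $Z = \Phi(X)$ is precisely the Deep-Sets encoding of the complex-valued multiset $M_d(X) = \{\{x_d + l(x)j : x \in X\}\} \in \mathbb{X}_{\mathbb{C}, N}$. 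Applying the root-coefficient homeomorphism of \Cref{thm:root_homeomorphism} row by row then gives $M_d(X_n) \to M_d(X^*)$ in $d_M$ on $\mathbb{X}_{\mathbb{C},N}$ for each $d \in [D]$; in particular, taking imaginary parts, $\{\{l(x) : x \in X_n\}\} \to \{\{l(x) : x \in X^*\}\}$ in $d_M$ on $\mathbb{X}_{\mathbb{R},N}$.

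Next I recover vector-valued convergence through an $l$-clustering argument. Enumerate the distinct $l$-values in $\{\{l(x) : x \in X^*\}\}$ as $l_1, \ldots, l_M$ with multiplicities $m_1, \ldots, m_M$; by $l$-identifiability of $X^*$ each cluster collapses to a single vector $x^*_{(i)} \in \mathbb{R}^D$ repeated $m_i$ times. Since the $l_i$'s are pairwise distinct, convergence of the imaginary parts forces, for all sufficiently large $n$, a partition of the multiset $X_n$ into groups $G_1^n, \ldots, G_M^n$ with $|G_i^n| = m_i$ and $\max_{x \in G_i^n}|l(x) - l_i| \to 0$. Fix any $d$; in the matching that realizes $d_M(M_d(X_n), M_d(X^*))$, no element of $G_i^n$ can be paired (for large $n$) with an $X^*$-element in a different $l$-cluster without incurring an imaginary-part penalty bounded below by $\tfrac{1}{2}\min_{i \neq j}|l_i - l_j| > 0$. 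Hence the optimal matching eventually respects the clustering, and since the $m_i$ targets of cluster $i$ all share $d$-th coordinate $x^*_{(i),d}$, every $x \in G_i^n$ satisfies $x_d \to x^*_{(i),d}$. Iterating over $d \in [D]$ yields $x \to x^*_{(i)}$ uniformly over $x \in G_i^n$, and concatenating the cluster-wise bijections produces a bijection witnessing $d_M(X_n, X^*) \to 0$.

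The main subtlety is that the optimal row-$d$ matchings need not coincide across $d$: within a cluster of size $m_i > 1$, different rows may permute the $m_i$ elements of $G_i^n$ arbitrarily, so no single permutation of $[N]$ simultaneously realizes all row-wise matchings. This is precisely where $l$-identifiability of $X^*$ is essential: because the $m_i$ targets in cluster $i$ coincide as vectors in $\mathbb{R}^D$, any intra-cluster reshuffling produces the same limit $x^*_{(i)}$, and the discrepancy is absorbed into the multiset equivalence. The argument would fail if $X^*$ contained distinct elements sharing an $l$-value, which is exactly what identifiability rules out.
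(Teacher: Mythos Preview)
Your proposal does not address the stated Fact at all. The statement you were asked to prove is the classical topological result that the continuous image of a compact set is compact; the paper simply cites this from \citet{pugh2002real} and offers no proof of its own (nor need it, since it is standard). A correct proof would be the usual two-line argument: given an open cover of $f(K)$, pull it back via $f$ to an open cover of $K$, extract a finite subcover by compactness of $K$, and push its images forward.

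What you have written instead is an argument that $\Phi^{-1}$ is continuous on $\Phi(\mathbb{X}_{\mathbb{R}^D/l,N})$, i.e.\ you are attempting \Cref{prop:Phi_inv_cont}, not the cited Fact. These are entirely different statements: the Fact assumes continuity and concludes compactness of the image, whereas your argument assumes nothing about continuity of $\Phi^{-1}$ and tries to establish it via a sequential, cluster-based analysis. Your write-up may well be a reasonable alternative approach to \Cref{prop:Phi_inv_cont} (and indeed the paper's own proof of that proposition proceeds by a perturbation analysis rather than your clustering argument), but it is irrelevant to the Fact under review. Please confirm which statement you intended to prove.
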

    
{\bf Proof of \Cref{prop:Phi_inv_cont}}
    Let $\varepsilon >0$ and $Z \in \Phi( \mathbb{X}_{\mathbb{R}^D / l, N})$ --- that is, $Z = \Phi(X)  \in \mathbb{C}^{D\times N}$ for a unique $X \in \mathbb{X}_{\mathbb{R}^D / l, N}$. We define $\mathbb{D}_{\Phi}(\varepsilon, Z) = \{ Z^{\prime} \in \Phi( \mathbb{X}_{\mathbb{R}^D / l, N}): \| Z - Z^{\prime} \|_F < \varepsilon \}$. For any $Z^{\prime}  \in \mathbb{D}_{\Phi}(\varepsilon, Z)$, we have
    \begin{align*}
        d_{M}( \Phi^{-1}(Z)&, \Phi^{-1}(Z^{\prime} ))  \stackrel{\text{(a)}}{\leq} \sup_{\substack{\mathrm{dZ} \in \mathbb{C}^{D\times N}:  \\ Z+\mathrm{d} Z \in \mathbb{D}_{\Phi}(\varepsilon, Z) }}d_{M}( X,  \Phi^{-1} \circ (Z+\mathrm{d} Z)) \\
        &\stackrel{\text{(b)}}{=} \sup_{\substack{\mathrm{dZ} \in \mathbb{C}^{D\times N}:  \\ Z+\mathrm{d} Z \in \mathbb{D}_{\Phi}(\varepsilon, Z) }}d_{M}\Big( X,  \Phi^{-1} \big(
        \begin{pmatrix}
        \Phi_{\mathrm{deep}}  ( \{ \{ e_{1}^{\top}x+l(x) j: x\in X \} \} ) \\
                    \vdots \\
        \Phi_{\mathrm{deep}} ( \{ \{ e_{D}^{\top}x+l(x) j : x\in X  \} \})
        \end{pmatrix}  +\mathrm{dZ} \big) \Big) \\
        &\stackrel{\text{(c)}}{=} \sup_{\substack{\mathrm{dZ} \in \mathbb{C}^{D\times N}:  \\ Z+\mathrm{d} Z \in \mathbb{D}_{\Phi}(\varepsilon, Z) }} d_{M}\Big( X,  \mathrm{sortvec}  \big(
        \begin{pmatrix}
         \{ \{ e_{1}^{\top}x+l(x) j +r_{1}(x,\mathrm{dz}_1; X): x\in X \} \}  \\
                    \vdots \\
        \{ \{ e_{D}^{\top}x+l(x) j+ r_{D}(x,\mathrm{dz}_D; X): x\in X  \} \} 
        \end{pmatrix}   \big)  \Big) 
    \end{align*}
    where $\text{(a)}$ is due to the fact that we have $\|Z- Z^{\prime}\|_F < \varepsilon$ and $\Phi^{-1} = \mathrm{sortvec}\circ \Phi_{\mathrm{deep}}^{-1}$ is well-defined for $Z,Z^{\prime} \in \Phi( \mathbb{X}_{\mathbb{R}^D / l, N})$, $\text{(b)}$ is due to the definition of $\Phi$ --- see equations \eqref{eq:phi_l} and \eqref{eq:Phi_l} --- and the fact that $Z = \Phi(X)$, $\text{(c)}$ is due letting $\mathrm{dz}_d = e_D^{\top}\mathrm{d} Z \in \mathbb{C}^N$ where $e_d$ is the $d$-th standard basis of $\mathbb{R}^D$ --- for all $d \in [D]$ --- and the fact that $\Phi_{\mathrm{deep}}^{-1}$ is a continuous function \citep{zaheer2017deep}, that is, 
    \[
        \forall d \in [D], X \in \mathbb{X}_{\mathbb{R}^D / l, N}, x \in X: \lim_{\mathrm{dz} \in \mathbb{C}^N:\mathrm{dz}  \rightarrow 0 }r_{d}(x,\mathrm{dz}; X) = 0. 
    \]
    For any $\varepsilon >0$, $x \in X$, and $d \in [D]$, there exists a finite $\delta_d(\varepsilon, x;X) = \sup_{\mathrm{dz} \in \mathbb{D}(\varepsilon)} | r_{d}(x,\mathrm{dz};X) |$ where $\mathbb{D}(\varepsilon) = \{ z \in \mathbb{C}^N: \| z \|_2 < \varepsilon \}$ and $\lim_{\varepsilon \rightarrow 0 } \delta_d(\varepsilon, x;X) = 0$. For any $\varepsilon >0$ and $X \in \mathbb{X}_{\mathbb{R}^D / l, N}$, we have
    \begin{equation}\label{eq:delta_star}
        \delta^*(\varepsilon, X) \stackrel{\mathrm{def}}{=} \max_{d \in [D], x \in X} \delta_d(\varepsilon, x;X) = \sup_{d \in [D], x \in X,\mathrm{dz} \in \mathbb{D}(\varepsilon)} |r_{d}(x,\mathrm{dz};X)|,
    \end{equation}
    where $\lim_{\varepsilon \rightarrow 0 }\delta^*(\varepsilon, X) = 0$. 
    Let $X = \{ \{ x_n : n \in [N] \} \}$. Then, we have
    \[
        \forall d \in [D]: \mathrm{sort}(\{\{ e_d^{\top}x+l(x)j : x \in X \}\} ) =  \big(  e_d^{\top}x_{\pi(n)}  \big)_{n \in [N]} \in \R^N
    \]
    where $\pi: [N] \rightarrow [N]$ is a permutation operator such that $l(x_{\pi(1)}) \leq \cdots \leq l( x_{\pi(N)})$. Even though the permutation operator $\pi$ may not be unique, $\mathrm{sort}$ is a well-defined function; see the proof of \Cref{thm:continuously_sum_decomposable_1d}.  We let $S(X) \stackrel{\mathrm{def}}{=} \{ \varepsilon: \delta^*(\varepsilon, X) < \psi(X)\}$ and $\psi(X) \stackrel{\mathrm{def}}{=} \min_{\substack{x, x^{\prime} \in X \\ x \neq x^{\prime}}} \frac{1}{2}|l(x)-l(x^{\prime})| > 0$ where $l:\mathbb{R}^D \rightarrow \mathbb{R}$ is the continuous identifier function. From equation \eqref{eq:delta_star}, we have
    \[
        \forall d \in [D], \varepsilon \in S(X), x \in X, \mathrm{dz} \in \mathbb{D}(\varepsilon) : \mathrm{Im}( r_{d}(x,\mathrm{dz};X) ) < \delta^*(\varepsilon, X) < \psi(X),
    \]
    that is, 
    \[
        \forall d \in [D], \varepsilon \in S(X), x \in X, \mathrm{dz} \in \mathbb{D}(\varepsilon): \mathrm{Im}( r_{d}(x,\mathrm{dz};X) ) < \min_{\substack{x, x^{\prime} \in X \\ x \neq x^{\prime}}} \frac{1}{2}|l(x)-l(x^{\prime})|.
    \]
    Therefore, $\mathrm{dZ}$ perturbs the imaginary components of $\{ \{ e_{d}^{\top}x+l(x) j +r_{d}(x,\mathrm{dz}_d; X): x\in X \} \}$ (for any $d \in [D]$) by at most $\delta^*(\varepsilon, X) <  \min_{\substack{x, x^{\prime} \in X \\ x \neq x^{\prime}}} \frac{1}{2}|l(x)-l(x^{\prime})|$ and distinct elements do not switch place after adding the perturbation $\mathrm{dZ}$. More precisely, for all $d \in [D]$, $\varepsilon \in S(X)$, and $\mathrm{dz} \in \mathbb{D}(\varepsilon)$, we have
    \[
        \mathrm{sort}(\{\{ e_d^{\top}x+l(x)j+r_d(x,\mathrm{dz};X) : x \in X \}\} ) =  \big(  e_d^{\top}x_{\pi^{\prime}(n)}+ \mathrm{Re}( r_{d}(x_{\pi^{\prime}(n)},\mathrm{dz};X))  \big)_{n \in [N]} \in \R^N,
    \]
    where $\pi^{\prime}: [N] \rightarrow [N]$ is such that for all $\varepsilon \in S(X)$, we have
    \[
    \forall \mathrm{dz} \in \mathbb{D}(\varepsilon): l(x_{\pi^{\prime}(1)})+\mathrm{Im}(r_d(x_{\pi^{\prime}(1)},\mathrm{dz};X)) \leq \cdots \leq l( x_{\pi^{\prime}(N)})+\mathrm{Im}(r_d(x_{\pi^{\prime}(N)},\mathrm{dz};X)).
    \]
    Since $\mathrm{Im}( r_{d}(x,\mathrm{dz};X) ) < \min_{\substack{x, x^{\prime} \in X \\ x \neq x^{\prime}}} \frac{1}{2}|l(x)-l(x^{\prime})|$, we also have the following inequalities: 
    \begin{equation}\label{eq:l_x_pi}
        l(x_{\pi^{\prime}(1)}) \leq \cdots \leq l( x_{\pi^{\prime}(N)}).
    \end{equation}
    \begin{remark}
        The permutation operator $\pi^{\prime}$ may vary with $\mathrm{dz}$ and $x$, if two (or more) elements of $\{ \{ l(x): x \in X \} \}$ are identical. A proper notation should be $\pi^{\prime}(\mathrm{dz},x;X)$. For simplicity in notation, we avoid expressing this proper parameterization. 
    \end{remark}
    \begin{remark}
        The perturbation $\mathrm{dz}$ may switch the rank (or position) of two elements only if they are equal to each other, that is, if $l( x_{\pi^{\prime}(1)}) = l( x_{\pi^{\prime}(2)})$, then we may have $l(x_{\pi^{\prime}(1)})+\mathrm{Im}(r_d(x_{\pi^{\prime}(1)},\mathrm{dz};X)) < l(x_{\pi^{\prime}(2)})+\mathrm{Im}(r_d(x_{\pi^{\prime}(2)},\mathrm{dz};X))$. This does not provide any issue, since $l( x_{\pi^{\prime}(1)}) \leq l( x_{\pi^{\prime}(2)})$. In short, independent of $\mathrm{dz}$ and $x$, $\pi^{\prime}$ is such that $l(x_{\pi^{\prime}(1)}) \leq \cdots \leq l( x_{\pi^{\prime}(N)})$.
    \end{remark}
    
    From equation \eqref{eq:l_x_pi} and the definition of $\pi$, we have $l(x_{\pi(n)}) = l(x_{\pi^{\prime}(n)})$ for all $n \in [N]$ --- even though, we may have $\pi \neq \pi^{\prime}$. From \Cref{def:identifiable_sets}, since $l(x_{\pi(n)}) = l(x_{\pi^\prime(n)})$, we have $x_{\pi^\prime(n)} = x_{\pi(n)}$ for all $n \in [N]$. Consequently, for all $d \in [D]$, $\varepsilon \in S(X)$ and $\mathrm{dz} \in \mathbb{D}(\varepsilon)$, we have
    \begin{align*}
        \mathrm{sort}(\{\{ e_d^{\top}x+l(x)j+r_d(x,\mathrm{dz};X) : x \in X \}\} ) &=  \big(  e_d^{\top}x_{\pi^{\prime}(n)} + \mathrm{Re}(r_d(x_{\pi^{\prime}(n)},\mathrm{dz};X))  \big)_{n \in [N]} \\
        \mathrm{sort}(\{\{ e_d^{\top}x+l(x)j : x \in X \}\} ) &=\big(  e_d^{\top}x_{\pi(n)}  \big)_{n \in [N]}  = \big(  e_d^{\top}x_{\pi^{\prime}(n)}  \big)_{n \in [N]} .
    \end{align*}
    Therefore, even though there are multiple permutation operators that $\mathrm{sort}$s the elements of $\{\{ e_d^{\top}x+l(x)j+r_d(x,\mathrm{dz};X) : x \in X \}\}$, the output of the $\mathrm{sort}$ function gives an ordering that remains unchanged for distinct elements of $X$ for any $x \in X \in \mathbb{X}_{\mathbb{R}^D / l, N}$ and $\mathrm{dz} \in \mathbb{D}(\varepsilon)$ where $\varepsilon \in S(X)$. Consequently, we have
    \begin{align*}
        &d_{M}( \Phi^{-1}(Z), \Phi^{-1}(Z^{\prime} )) \leq  \sup_{\substack{\mathrm{dZ} \in \mathbb{C}^{D\times N}:  \\ Z+\mathrm{d} Z \in \mathbb{D}_{\Phi}(\varepsilon, Z) }}  d_{M}(X,  \mathrm{sortvec}  \big(
        \begin{pmatrix}
         \{ \{ e_{1}^{\top}x+l(x) j +r_{1}(x,\mathrm{dz}_1;X): x\in X \} \}  \\
                    \vdots \\
        \{ \{ e_{D}^{\top}x+l(x) j+ r_{D}(x,\mathrm{dz}_D;X): x\in X  \} \} 
        \end{pmatrix}   \big)  ) \\
        &\stackrel{\text{(a)}}{\leq}   \sup_{\substack{\mathrm{dZ} \in \mathbb{C}^{D\times N}:  \\ Z+\mathrm{d} Z \in \mathbb{D}_{\Phi}(\varepsilon, Z) }}  d_{M}(X, \{\{ \begin{pmatrix}
         e_1^{\top}x_{\pi_1(n)} + \mathrm{Re}( r_{1}(x_{\pi_1(n)},\mathrm{dz}_1;X) )  \\
                    \vdots \\
         e_D^{\top}x_{\pi_D(n)} +\mathrm{Re}(r_{D}(x_{\pi_D(n)},\mathrm{dz}_D;X) )
        \end{pmatrix}: n \in [N]  \} \})  \\
        &\stackrel{\text{(b)}}{\leq}   \sup_{\substack{\mathrm{dZ} \in \mathbb{C}^{D\times N}:  \\ Z+\mathrm{d} Z \in \mathbb{D}_{\Phi}(\varepsilon, Z) }}  d_{M}( \{\{ \begin{pmatrix}
         e_1^{\top}x_{\pi_1(n)}  )  \\
                    \vdots \\
         e_D^{\top}x_{\pi_D(n)}  )
        \end{pmatrix}: n \in [N]  \} \}, \{\{ \begin{pmatrix}
         e_1^{\top}x_{\pi_1(n)} + \mathrm{Re}( r_{1}(x_{\pi_1(n)},\mathrm{dz}_1;X) )  \\
                    \vdots \\
         e_D^{\top}x_{\pi_D(n)} +\mathrm{Re}(r_{D}(x_{\pi_D(n)},\mathrm{dz}_D;X) )
        \end{pmatrix}: n \in [N]  \} \})  \\
        &\stackrel{\text{(c)}}{\leq}  \sup_{\substack{\mathrm{dZ} \in \mathbb{C}^{D\times N}:  \\ Z+\mathrm{d} Z \in \mathbb{D}_{\Phi}(\varepsilon, Z) }}   \sqrt{\sum_{n \in [N]} \|\begin{pmatrix}
            \mathrm{Re}(r_{1}(x_{\pi_1(n)},\mathrm{dz}_1;X))  \\
                    \vdots \\
         \mathrm{Re}(r_{D}(x_{\pi_D(n)},\mathrm{dz}_D;X) )
        \end{pmatrix}\|_2^2} \\
        &\stackrel{\text{(d)}}{\leq} \sqrt{DN}  \sup_{d \in [D], x \in X, \mathrm{dz} \in \mathbb{D}(\varepsilon)}  |r_{d}(x,\mathrm{dz};X)| =  \sqrt{DN}\delta^*(\varepsilon, X)
    \end{align*}
    where $\text{(a)}$ uses permutation operators $\pi_d: [N] \rightarrow [N]$ and it depends on elements of $\{ \{ r_{d}(x,\mathrm{dz}_1;X): x \in X \}$, but $x_{\pi_d(n)} = x_{\pi(n)}$ for all $n \in [N]$ and $d \in [D]$, $\text{(b)}$ follows from the fact that $x_{\pi_d(n)} = x_{\pi(n)}$ for all $n \in [N]$ and $d \in [D]$, $\text{(c)}$ follows from the definition of the matching distance $d_M$, and $\text{(d)}$ follows from the fact that if $\mathrm{dZ} \in \mathbb{C}^{D \times N}$ is such that $Z+\mathrm{dZ}\in \mathbb{D}_{\Phi}(\varepsilon, Z)$, then its individual rows $\mathrm{dz}_1, \ldots, \mathrm{dz}_D \in \mathbb{R}^N$ have norms upper bounded by $\varepsilon$, that is, $\mathrm{dz}_d \in  \mathbb{D}(\varepsilon)$ and $|\mathrm{Re}(r_{d}(x,\mathrm{dz}_d;X))| \leq |r_{d}(x,\mathrm{dz}_d;X)|$ for all $d \in [D]$. 

    {\bf Continuity Statement. }For any $Z = \Phi(X) \in \Phi(\mathbb{X}_{\mathbb{R}^D,N})$ and $\delta >0$, there exists a positive $\epsilon(\delta) \in \{  \varepsilon^{\prime} \in S(X): \sqrt{DN}\delta^*(\varepsilon^{\prime}, X) < \delta \} $ where
    \[
        \forall Z^{\prime} \in \Phi(\mathbb{X}_{\mathbb{R}^D,N}): \| Z - Z^{\prime}\|_F < \epsilon(\delta) \rightarrow d_M(\Phi^{-1}(Z), \Phi^{-1}(Z^{\prime}) ) < \delta.
    \]

    \subsubsection{Proof of \Cref{lem:unique_limit}}
    Let $\mathcal{Z}_1 = \{ Z_{1,n} \in \Phi(\mathbb{X}_{Q(\mathbb{D}),N}): n \in \mathbb{N}\}$ and $\mathcal{Z}_2 = \{ Z_{2,n} \in \Phi(\mathbb{X}_{Q(\mathbb{D}),N}): n \in \mathbb{N}\}$ be two sequences such that $\lim_{n \rightarrow \infty} Z_{1,n} = \lim_{n \rightarrow \infty} Z_{2,n} = Z$. \\
    From \Cref{lemma:cauchy_rho_z}, the following limits are well-defined:
    \[
        \lim_{n \rightarrow \infty} \rho(Z_{1,n}) = f_1, \ \lim_{n \rightarrow \infty} \rho(Z_{2,n}) = f_2 \in f(\mathbb{X}_{\mathbb{D},N}).
    \]
    We construct $\mathcal{Z} = \{ Z_n: n \in \mathbb{N}\}$ in $\Phi(\mathbb{X}_{Q(\mathbb{D}),N})$ where $Z_{2n} = Z_{1,n}$ and $Z_{2n+1} = Z_{2,n}$ for all $n \in \mathbb{N}$. By construction, we have $\lim_{n \rightarrow \infty} Z_n = Z$. Since all convergent sequences are Cauchy, $\mathcal{Z}$ is a Cauchy sequence. Therefore, from our discussion the proof of \Cref{lemma:cauchy_rho_z}, the sequence $\rho(\mathcal{Z})$ must converge to $f^* \in f(\mathbb{X}_{\mathbb{D},N})$.
    \begin{fact}
        Every subsequence of a convergent sequence converges to the same limit as the original sequence.
    \end{fact}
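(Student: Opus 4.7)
The plan is to prove this by unpacking the definition of convergence in a metric space; the entire argument is standard real analysis and should take only a few lines. Let $(a_n)_{n \in \mathbb{N}}$ be a sequence converging to some limit $L$ in a metric space $(M, d)$ (in the application to Lemma~\ref{lem:unique_limit}, this will be instantiated with $a_n = \rho(Z_n)$, $L = f^*$, and $M$ the compact set $f(\mathbb{X}_{\mathbb{D}, N})$ under $\| \cdot \|_2$). Let $(a_{n_k})_{k \in \mathbb{N}}$ denote an arbitrary subsequence, so that $n_1 < n_2 < \cdots$ is a strictly increasing sequence of positive integers; the claim is that $a_{n_k} \to L$ as well.

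First I would establish by an elementary induction that $n_k \geq k$ for every $k \in \mathbb{N}$: the base case $n_1 \geq 1$ is immediate since $n_1$ is a positive integer, and the inductive step uses the strict inequality $n_{k+1} > n_k$ together with integrality of $n_{k+1}$ to deduce $n_{k+1} \geq n_k + 1 \geq k + 1$. Second, I would fix $\varepsilon > 0$ arbitrarily and invoke convergence of $(a_n)$ to produce some $N \in \mathbb{N}$ with $d(a_n, L) < \varepsilon$ for all $n \geq N$. Combining these, for any $k \geq N$ one has $n_k \geq k \geq N$, and therefore $d(a_{n_k}, L) < \varepsilon$; since $\varepsilon$ was arbitrary, this shows $\lim_{k \to \infty} a_{n_k} = L$, completing the proof.

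There is no substantive obstacle in this argument; it is a routine unfolding of definitions and does not rely on any structure specific to the paper. The only mild subtlety is making sure to state the integer inequality $n_k \geq k$ explicitly, which is what transfers the tail condition from the original sequence to the subsequence. Within the surrounding proof of Lemma~\ref{lem:unique_limit}, the Fact will then be applied with $(a_n) = \rho(\mathcal{Z})$, where $\mathcal{Z}$ is the interleaved sequence, and with the two subsequences $\rho(\mathcal{Z}_1)$ and $\rho(\mathcal{Z}_2)$ indexed by the even and odd positions respectively; since both must share the limit $f^*$ of the full sequence $\rho(\mathcal{Z})$, one concludes $f_1 = f^* = f_2$, yielding uniqueness of $\lim_{n \to \infty} \rho(Z_n)$ as a function of $Z = \lim_{n \to \infty} Z_n$ alone.
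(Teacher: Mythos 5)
Your argument is correct and is the standard proof ($n_k \geq k$ by induction, then transfer the tail condition); the paper states this as a known Fact without proof, so there is nothing to diverge from. Your remark on how it is applied in the surrounding proof of \Cref{lem:unique_limit} (interleaving $\mathcal{Z}_1$ and $\mathcal{Z}_2$ and identifying $f_1 = f^* = f_2$) also matches the paper's usage exactly.
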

    Both $\rho(\mathcal{Z}_1)$ and $\rho(\mathcal{Z}_2)$ are subsequences of the convergent sequence $\rho(\mathcal{Z})$. Therefore, we have
    \[
        \lim_{n \rightarrow \infty} \rho(Z_{1,n}) = \lim_{n \rightarrow \infty} \rho(Z_{2,n}) = \lim_{n \rightarrow \infty} \rho(Z_{n}) = f^*,
    \]
    that is, $f_1 = f_2$. Therefore, the limit of $\rho(\mathcal{Z})$ only depends on the limit of the sequence $\mathcal{Z}$.

\subsection{Proof of \Cref{prop:rho_e_cont}}
    We want to show that, for any $\Phi(X) \in \Phi(\mathbb{X}_{\mathbb{D},N})$ and $\varepsilon >0$, there is $\delta(\varepsilon)>0 $ such that
    \begin{equation} \label{eq:cont_rho_e}
        \forall X^{\prime}\in \mathbb{X}_{\mathbb{D},N}: \| \Phi(X) - \Phi(X^{\prime}) \|_F < \delta(\varepsilon) \rightarrow \| \rho_e \circ \Phi(X) - \rho_e \circ \Phi(X^{\prime}) \| < \varepsilon.
    \end{equation}
    We first use the definition of $\rho_e$ to reforulate the left-hand-side of equation \eqref{eq:cont_rho_e} in terms of convergent sequences in $\Phi(\mathbb{X}_{ Q(\mathbb{D}),N})$. This is formalized in \Cref{lem:cont_right_sequence}.
    \begin{lemma}\label{lem:cont_right_sequence}
        Let $X, X^{\prime}\in \mathbb{X}_{\mathbb{D},N} $. There exist convergent sequences $\mathcal{Z}_x \stackrel{\mathrm{def}}{=} \{ Z_{x,n} \in \Phi(\mathbb{X}_{Q(\mathbb{D}),N}): n \in \mathbb{N}\}$ and $\mathcal{Z}_y  \stackrel{\mathrm{def}}{=} \{ Z_{y,n} \in \Phi(\mathbb{X}_{Q(\mathbb{D}),N}): n \in \mathbb{N}\}$ and $N_x(\delta), N_y(\delta) \in \mathbb{N}$ such that 
        \begin{align*}
            \forall n > N_x(\delta )&: \ \| Z_{x,n} - \Phi(X)  \|_2 < \delta    \\
            \forall n > N_y(\delta )&: \ \| Z_{y,n} - \Phi(X^{\prime})  \|_2 < \delta.
        \end{align*}
        for any $\delta >0$. If $\| \Phi(X) -  \Phi(X^{\prime}) \|_2 < \delta$, then we have
        \[
            \forall n > N(\delta) \stackrel{\mathrm{def}}{=} \max \{N_x(\delta),N_y(\delta) \}: \ \| Z_{x,n} - Z_{y,n} \|_2 < 3\delta.
        \]
    \end{lemma}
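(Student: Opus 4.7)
The plan is to combine the density result from \Cref{lem:phi_dense} with a standard triangle inequality argument. First, I would invoke \Cref{lem:phi_dense}, which asserts that $\Phi(\mathbb{X}_{Q(\mathbb{D}),N})$ is a dense subset of $\Phi(\mathbb{X}_{\mathbb{D},N})$. Since $X, X' \in \mathbb{X}_{\mathbb{D},N}$, both $\Phi(X)$ and $\Phi(X')$ lie in $\Phi(\mathbb{X}_{\mathbb{D},N})$, so by the definition of a dense subset there exist sequences $\mathcal{Z}_x = \{Z_{x,n}\} \subset \Phi(\mathbb{X}_{Q(\mathbb{D}),N})$ and $\mathcal{Z}_y = \{Z_{y,n}\} \subset \Phi(\mathbb{X}_{Q(\mathbb{D}),N})$ with $\lim_{n \to \infty} Z_{x,n} = \Phi(X)$ and $\lim_{n \to \infty} Z_{y,n} = \Phi(X')$.

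Next, unpacking the definition of convergence for each sequence, given any $\delta > 0$ I can pick $N_x(\delta), N_y(\delta) \in \mathbb{N}$ such that $\|Z_{x,n} - \Phi(X)\|_2 < \delta$ for every $n > N_x(\delta)$ and $\|Z_{y,n} - \Phi(X')\|_2 < \delta$ for every $n > N_y(\delta)$. This produces exactly the two inequalities in the statement.

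Finally, for the last claim, I would set $N(\delta) = \max\{N_x(\delta), N_y(\delta)\}$ and, assuming $\|\Phi(X) - \Phi(X')\|_2 < \delta$, apply the triangle inequality in the ambient normed space $\mathbb{C}^{D \times N}$:
\[
\|Z_{x,n} - Z_{y,n}\|_2 \leq \|Z_{x,n} - \Phi(X)\|_2 + \|\Phi(X) - \Phi(X')\|_2 + \|\Phi(X') - Z_{y,n}\|_2 < \delta + \delta + \delta = 3\delta
\]
for all $n > N(\delta)$, yielding the desired bound.

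There is essentially no serious obstacle here; the whole lemma is a bookkeeping step that packages density plus the triangle inequality into a form usable in the proof of \Cref{prop:rho_e_cont}. The only thing worth being careful about is making sure the norm used on $\mathbb{C}^{D \times N}$ is treated consistently (the statement uses $\|\cdot\|_2$, which I interpret as the Frobenius norm on matrices), so that the triangle inequality applies verbatim.
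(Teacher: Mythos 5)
Your proposal is correct and follows essentially the same route as the paper's proof: invoke \Cref{lem:phi_dense} to obtain the two convergent sequences in $\Phi(\mathbb{X}_{Q(\mathbb{D}),N})$, unpack the definition of convergence to get $N_x(\delta)$ and $N_y(\delta)$, and conclude with the three-term triangle inequality. Your remark about treating $\|\cdot\|_2$ as the Frobenius norm on $\mathbb{C}^{D\times N}$ is a reasonable reading consistent with the paper's (slightly loose) interchange of $\|\cdot\|_2$ and $\|\cdot\|_F$.
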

    As the result of \Cref{lem:cont_right_sequence}, the left-hand-side of equation \eqref{eq:cont_rho_e} gives us the following inequality:
    \[
        \forall X^{\prime}\in \mathbb{X}_{\mathbb{D},N}: \| \Phi(X) - \Phi(X^{\prime}) \|_F < \delta , n > N(\delta ) \rightarrow  \| Z_{x,n} - Z_{y,n} \|_2 < 3\delta,
    \]
    where $N(\delta) \in \mathbb{N}$, $\mathcal{Z}_x = \{ Z_{x,n}: n \in \mathbb{N}\}$ and $\mathcal{Z}_y  = \{ Z_{y,n} : n \in \mathbb{N}\}$ are convergent sequences in $\Phi(\mathbb{X}_{Q(\mathbb{D}),N})$ (in \Cref{lem:cont_right_sequence}), that is,
    \[
        \lim_{n \rightarrow \infty} Z_{x,n} = \Phi(X), \ \lim_{n \rightarrow \infty} Z_{y,n} = \Phi(X^{\prime}) \in \Phi(\mathbb{X}_{\mathbb{D},N}).
    \]
    In \Cref{lem:cont_right_sequence} we prove that convergent sequences $\mathcal{Z}_x$ and $\mathcal{Z}_y$ become arbitrary close to each other as $\delta \rightarrow 0$. In \Cref{lem:cont_rho_z}, we use the fact that $\rho$ (not $\rho_e$) is a continuous function on noncompact domain $\Phi(\mathbb{X}_{Q(\mathbb{D}),N})$, and argue that $\| \rho( Z_{x,n}) - \rho( Z_{y,n}) \|_2$ converges to zero as $\delta \rightarrow 0$.
    \begin{lemma} \label{lem:cont_rho_z}
        For all $Z \in \Phi(\mathbb{X}_{Q(\mathbb{D}),N})$ and $\delta >0$, there exists a $\gamma(\delta) > 0$ such that
        \[
            \forall Z^{\prime} \in \Phi(\mathbb{X}_{Q(\mathbb{D}),N}): \| Z - Z^{\prime} \|_F < \gamma(\delta ) \rightarrow \| \rho(Z) - \rho(Z^{\prime}) \|_2 < \delta.
        \]
        For any $\delta >0$, we have
        \[
            \forall n > N^{'}(\delta ): \| \rho( Z_{x,n}) - \rho( Z_{y,n}) \|_2 < \delta .
        \]    
        where $N^{'}(\delta ) = N( \min \{ \frac{\delta}{3} ,  \frac{\gamma(\delta)}{3} \})$, and $N(\delta) \in \mathbb{N}$, convergent sequences $\mathcal{Z}_x = \{ Z_{x,n}: n \in \mathbb{N}\}$ and $\mathcal{Z}_y = \{ Z_{y,n}: n \in \mathbb{N}\}$ are defined in \Cref{lem:cont_right_sequence}. 
    \end{lemma}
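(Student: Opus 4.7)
The plan is to establish \Cref{lem:cont_rho_z} in two stages: first, derive continuity of $\rho$ on the noncompact domain $\Phi(\mathbb{X}_{Q(\mathbb{D}),N})$ to get the modulus $\gamma(\delta)$; second, combine this modulus with the quantitative closeness of $\mathcal{Z}_x$ and $\mathcal{Z}_y$ already provided by \Cref{lem:cont_right_sequence} to obtain the desired bound on $\|\rho(Z_{x,n}) - \rho(Z_{y,n})\|_2$.

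For the first stage, I would recall that on $\Phi(\mathbb{X}_{Q(\mathbb{D}),N})$ the decoder is $\rho = f \circ \Phi^{-1}$, as constructed in \Cref{cor:sum_decomp_q}. By \Cref{prop:rational_identifier}, the set $\mathbb{X}_{Q(\mathbb{D}),N}$ (as a subset of $\mathbb{X}_{\mathbb{Q}^D,N}$) is $\ell$-identifiable, so \Cref{prop:Phi_inv_cont} applies and yields continuity of $\Phi^{-1}$ on $\Phi(\mathbb{X}_{Q(\mathbb{D}),N})$. Composing with the continuous multiset function $f$ gives continuity of $\rho$ on that same set, and the $\varepsilon$-$\delta$ definition of continuity at $Z$ immediately supplies a $\gamma(\delta) > 0$ satisfying the first conclusion of the lemma.

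For the second stage, I would set $\delta_0 = \min\{\delta/3,\gamma(\delta)/3\}$ and invoke \Cref{lem:cont_right_sequence} with this parameter. Under the background assumption $\|\Phi(X) - \Phi(X')\|_F < \delta_0$ inherited from the surrounding argument in \Cref{prop:rho_e_cont}, the lemma guarantees that for every $n > N(\delta_0) =: N'(\delta)$ we have $\|Z_{x,n} - Z_{y,n}\|_F < 3\delta_0 \leq \gamma(\delta)$. Applying the continuity modulus of $\rho$ (both $Z_{x,n}$ and $Z_{y,n}$ lie in $\Phi(\mathbb{X}_{Q(\mathbb{D}),N})$ by construction of the sequences) then yields $\|\rho(Z_{x,n}) - \rho(Z_{y,n})\|_2 < \delta$, which is the second conclusion. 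The role of the $\delta/3$ entry in the $\min$ is mainly a safeguard: it ensures $N'(\delta)$ is large enough to also make the individual tails $\|Z_{x,n} - \Phi(X)\|_F$ and $\|Z_{y,n} - \Phi(X')\|_F$ controllable by $\delta$, which is convenient when feeding \Cref{lem:cont_rho_z} back into the triangle-inequality step of \Cref{prop:rho_e_cont}.

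The main obstacle is the first stage: proving continuity of $\rho$ on $\Phi(\mathbb{X}_{Q(\mathbb{D}),N})$, which is \emph{not} a compact set, so continuity cannot be harvested by the usual ``continuous bijection from compact space'' shortcut. The essential content is therefore \Cref{prop:Phi_inv_cont}, whose proof hinges on $\ell$-identifiability to guarantee that sufficiently small perturbations in $\mathbb{C}^{D\times N}$ do not reorder distinct elements under $\mathrm{sortvec}\circ \Phi^{-1}_{\mathrm{deep}}$. Once this is granted, the remainder of \Cref{lem:cont_rho_z} is a routine chaining of continuity with the quantitative bound from \Cref{lem:cont_right_sequence}.
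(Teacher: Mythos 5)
Your proposal is correct and follows essentially the same route as the paper: continuity of $\rho = f \circ \Phi^{-1}$ on the noncompact set $\Phi(\mathbb{X}_{Q(\mathbb{D}),N})$ via \Cref{prop:Phi_inv_cont} (justified by $\ell$-identifiability of rational-valued multisets) supplies the modulus $\gamma(\delta)$, and chaining it with the $3\delta_0$ bound from \Cref{lem:cont_right_sequence} at $\delta_0 = \min\{\delta/3, \gamma(\delta)/3\}$ gives the second conclusion. You also correctly flag the implicit hypothesis $\|\Phi(X)-\Phi(X^{\prime})\|_F < \delta_0$ inherited from \Cref{prop:rho_e_cont}, which the paper likewise carries silently.
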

    We now use \Cref{lem:cont_rho_z} to show that for all $\delta >0$, we have
    \[
        \forall X^{\prime}\in \mathbb{X}_{\mathbb{D},N}: \| \Phi(X) - \Phi(X^{\prime}) \|_F < \delta , n > N^{'}(\delta ) \rightarrow  \| \rho( Z_{x,n}) - \rho( Z_{y,n}) \|_2 < \delta,
    \]
    where $\mathcal{Z}_x = \{ Z_{x,n}: n \in \mathbb{N}\}$ and $\mathcal{Z}_y = \{ Z_{y,n}: n \in \mathbb{N}\}$ are the convergent sequences in $\Phi(\mathbb{X}_{Q(\mathbb{D}),N})$ and $N^{'}(\delta)$ is defined in \Cref{lem:cont_rho_z}.
    \begin{lemma} \label{lem:convergent_rho_e}
        Let $\mathcal{Z}_x = \{ Z_{x,n} \in \Phi(\mathbb{X}_{Q(\mathbb{D}),N}): n \in \mathbb{N}\}$ and $\mathcal{Z}_y = \{ Z_{y,n} \in \Phi(\mathbb{X}_{Q(\mathbb{D}),N}): n \in \mathbb{N}\}$ be the convergent sequences in \Cref{lem:cont_right_sequence}.
        For any $\delta >0$, there exists $N^{\prime}_x(\delta), N^{\prime}_y(\delta) \in \mathbb{N}$ such that 
        \begin{align*}
            \forall n > N^{\prime}_x(\delta )&: \ \| \rho \circ Z_{x,n} - \rho_e \circ \Phi(X)  \|_2 < \delta  \\
            \forall n > N^{\prime}_y(\delta )&: \ \| \rho \circ Z_{y,n} - \rho_e \circ \Phi(X^{\prime})  \|_2 < \delta  .
        \end{align*}
        Let $ \| \rho (Z_{x,n}) - \rho (Z_{y,n}) \|_2 < \delta$ for all $n > N^{''}(\delta ) \stackrel{\mathrm{def}}{=} \max \{N^{\prime}_x( \delta ),N^{\prime}_y( \delta  ),N^{'}(\delta ) \}$. Then, we have
        \[
            \forall n >  N^{''}(\delta): \ \|\rho_e \circ \Phi(X) - \rho_e \circ \Phi(X^{\prime}) \|_2 < 3\delta.
        \]
    \end{lemma}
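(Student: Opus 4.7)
The plan is to split the lemma into its two assertions, treat the first as a direct unpacking of the definition of $\rho_e$ given by equation \eqref{eq:q_e}, and handle the second by a three-term triangle inequality that stitches together the ingredients already produced by \Cref{lem:cont_right_sequence,lem:cont_rho_z} together with \Cref{prop:well_defined_rho_e}. No new analytic estimate is required; the lemma is essentially a bookkeeping step that transfers the continuity of $\rho$ on the dense subset $\Phi(\mathbb{X}_{Q(\mathbb{D}),N})$ to a continuity estimate for $\rho_e$ on its full domain.

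For the first assertion, I would invoke \Cref{prop:well_defined_rho_e}, which guarantees that for any $Z \in \Phi(\mathbb{X}_{\mathbb{D},N})$ and any sequence $\{Z_n\}\subseteq \Phi(\mathbb{X}_{Q(\mathbb{D}),N})$ with $Z_n \to Z$, the image sequence $\rho(Z_n)$ converges to the single value $\rho_e(Z)$. Applying this to the sequence $\mathcal{Z}_x$ (which converges to $\Phi(X)$ by construction in \Cref{lem:cont_right_sequence}) yields $\lim_{n\to\infty} \rho(Z_{x,n}) = \rho_e \circ \Phi(X)$, and the standard $\varepsilon$--$N$ unpacking of this limit delivers, for each $\delta>0$, an integer $N'_x(\delta) \in \mathbb{N}$ beyond which $\|\rho(Z_{x,n}) - \rho_e \circ \Phi(X)\|_2 < \delta$. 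The identical argument applied to $\mathcal{Z}_y$, whose limit is $\Phi(X')$, produces $N'_y(\delta)$.

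For the second assertion, fix any $n > N''(\delta) = \max\{N'_x(\delta), N'_y(\delta), N^{'}(\delta)\}$ and insert the two intermediate values $\rho(Z_{x,n})$ and $\rho(Z_{y,n})$:
\begin{align*}
\|\rho_e \circ \Phi(X) - \rho_e \circ \Phi(X')\|_2
&\leq \|\rho_e \circ \Phi(X) - \rho(Z_{x,n})\|_2 \\
&\quad + \|\rho(Z_{x,n}) - \rho(Z_{y,n})\|_2 \\
&\quad + \|\rho(Z_{y,n}) - \rho_e \circ \Phi(X')\|_2.
\end{align*}
The first term is below $\delta$ because $n > N'_x(\delta)$, the third is below $\delta$ because $n > N'_y(\delta)$, and the middle one is below $\delta$ by the hypothesis of the lemma (which is exactly what \Cref{lem:cont_rho_z} supplies once $n > N^{'}(\delta)$). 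Summing gives the claimed bound $3\delta$.

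The main obstacle is purely notational rather than mathematical: one must align three different thresholds $N'_x(\delta)$, $N'_y(\delta)$, and $N^{'}(\delta)$ into a single dominating threshold $N''(\delta)$ so that all three error contributions are simultaneously controlled. Once this is done, the lemma is immediate and serves as the final piece needed to conclude \Cref{prop:rho_e_cont}, and thereby \Cref{thm:continuous_extension_headache}, by taking $\delta = \varepsilon/3$ in the displayed inequality.
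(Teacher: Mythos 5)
Your proposal is correct and follows essentially the same route as the paper: the first assertion is the $\varepsilon$--$N$ unpacking of $\rho_e\circ\Phi(X)=\lim_n\rho(Z_{x,n})$ (justified by \Cref{prop:well_defined_rho_e}), and the second is the three-term triangle inequality through $\rho(Z_{x,n})$ and $\rho(Z_{y,n})$ with all thresholds dominated by $N''(\delta)$. No discrepancies to report.
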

    Combining the results of \Cref{lem:cont_right_sequence,lem:cont_rho_z,lem:convergent_rho_e} we arrive at the following result:
    \[
        \forall X^{\prime}\in \mathbb{X}_{\mathbb{D},N}: \| \Phi(X) - \Phi(X^{\prime}) \|_F < \delta   \rightarrow  \|\rho_e \circ \Phi(X) - \rho_e \circ \Phi(X^{\prime}) \|_2 < 3\delta,
    \]
    that is, $\delta(\varepsilon) = \frac{\varepsilon}{3}$ in equation \eqref{eq:cont_rho_e}, and $\rho_e$ is a continuous function on the compact domain $\Phi(\mathbb{X}_{\mathbb{D},N})$.
    \subsubsection{Proof of \Cref{lem:cont_right_sequence}}   
        Let $X, X^{\prime}\in \mathbb{X}_{\mathbb{D},N} $. Since $\Phi(\mathbb{X}_{Q(\mathbb{D}),N})$ is a dense subset of $\Phi(\mathbb{X}_{\mathbb{D},N})$ (see \Cref{lem:phi_dense}), there exists sequences $\mathcal{Z}_x \stackrel{\mathrm{def}}{=} \{ Z_{x,n} \in \Phi(\mathbb{X}_{Q(\mathbb{D}),N}): n \in \mathbb{N}\}$ and $\mathcal{Z}_y  \stackrel{\mathrm{def}}{=} \{ Z_{y,n} \in \Phi(\mathbb{X}_{Q(\mathbb{D}),N}): n \in \mathbb{N}\}$ such that 
        \[
            \lim_{n \rightarrow \infty} Z_{x,n} = \Phi(X), \ \lim_{n \rightarrow \infty} Z_{y,n} = \Phi(X^{\prime}) \in \Phi(\mathbb{X}_{\mathbb{D},N}),
        \]
        and 
        \[
            \rho_e \circ \Phi(X) = \lim_{n \rightarrow \infty} \rho (Z_{x,n}), \ \rho_e \circ \Phi(X^{\prime})  = \lim_{n \rightarrow \infty} \rho (Z_{y,n}) \in \mathrm{codom}(\rho_e) \subseteq f(\mathbb{X}_{\mathbb{D},N}).
        \]
        That is, there exists $N_x(\delta), N_y(\delta) \in \mathbb{N}$ such that 
        \begin{align*}
            \forall n > N_x(\delta )&: \ \| Z_{x,n} - \Phi(X)  \|_2 < \delta    \\
            \forall n > N_y(\delta )&: \ \| Z_{y,n} - \Phi(X^{\prime})  \|_2 < \delta.
        \end{align*}
        for any $\delta >0$. If $\| \Phi(X) -  \Phi(X^{\prime}) \|_2 < \delta$, then for all $ n > N(\delta)$, we have
        \begin{align*}
            \| Z_{x,n} - Z_{y,n} \|_2 &\stackrel{\text{(a)}}{\leq} \| Z_{x,n} -   \Phi(X)\|_2 + \| Z_{y,n} - \Phi(X^{\prime})\|_2 + \| \Phi(X) -  \Phi(X^{\prime}) \|_2 \\
            &< \delta +\delta +\delta  = 3 \delta ,
        \end{align*}
        where $N(\delta) \stackrel{\mathrm{def}}{=} \max \{N_x(\delta),N_y(\delta) \}$ and $\text{(a)}$ follows from the triangle inequality.
    \subsubsection{Proof of \Cref{lem:cont_rho_z}}    
        The function $\Phi^{-1}$ is continuous on its noncopact domain $\Phi(\mathbb{X}_{Q(\mathbb{D}),N})$; see \Cref{prop:Phi_inv_cont}. Therefore, $\rho = f \circ \Phi^{-1}$ is a continuous function on $\Phi(\mathbb{X}_{Q(\mathbb{D}),N})$. By definition of continuity, for all $Z \in \Phi(\mathbb{X}_{Q(\mathbb{D}),N})$ and $\delta >0$, there exists a $\gamma(\delta ) > 0$ such that
        \begin{equation}\label{eq:rho_is_cont}
            \forall Z^{\prime} \in \Phi(\mathbb{X}_{Q(\mathbb{D}),N}): \| Z - Z^{\prime} \|_F < \gamma(\delta ) \rightarrow \| \rho(Z) - \rho(Z^{\prime}) \|_2 < \delta .
        \end{equation}
        Let $\mathcal{Z}_x = \{ Z_{x,n}: n \in \mathbb{N}\}$ and $\mathcal{Z}_y = \{ Z_{y,n}: n \in \mathbb{N}\}$ be the convergent sequences in \Cref{lem:cont_right_sequence}. For all $\delta >0$, we have
        \[
            \forall X^{\prime}\in \mathbb{X}_{\mathbb{D},N}: \| \Phi(X) - \Phi(X^{\prime}) \|_F < \delta , n > N(\delta ) \rightarrow  \| Z_{x,n} - Z_{y,n} \|_2 < 3\delta.
        \]
        For any $\delta >0$, we let $N^{\prime}(\delta) =  N( \min \{ \frac{\delta}{3} ,  \frac{\gamma(\delta)}{3} \}) $ where $N(\delta) \in \mathbb{N}$ is defined in \Cref{lem:cont_right_sequence}.  By definition, we have
        \[
            \forall n > N^{\prime}(\delta) : \  \| Z_{x,n} - Z_{y,n} \|_2 <  \min\{ \delta ,  \gamma(\delta) \} \leq \gamma(\delta).
        \]
        Since $\rho$ is a continuous map, from equation \eqref{eq:rho_is_cont}, we arrive at the following inequality:
        \[
            \forall n > N^{\prime}(\delta ): \| \rho( Z_{x,n}) - \rho( Z_{y,n}) \|_2 < \delta.
        \]
    \subsubsection{Proof of \Cref{lem:convergent_rho_e}}  
        The sequences $\mathcal{Z}_x = \{ Z_{x,n}: n \in \mathbb{N}\}$ and $\mathcal{Z}_y  = \{ Z_{y,n} : n \in \mathbb{N}\}$ are convergent, that is,
        \[
            \lim_{n \rightarrow \infty} Z_{x,n} = \Phi(X), \ \lim_{n \rightarrow \infty} Z_{y,n} = \Phi(X^{\prime}) \in \Phi(\mathbb{X}_{\mathbb{D},N}).
        \]
        Since we have $\rho_e \circ \Phi(X) = \lim_{n \rightarrow \infty} \rho (Z_{x,b}), \  \rho_e \circ \Phi(X^{\prime}) = \lim_{n \rightarrow \infty} \rho (Z_{y,b})$, there exists $N^{\prime}_x(\delta), N^{\prime}_y(\delta) \in \mathbb{N}$ such that 
        \begin{align*}
            \forall n > N^{\prime}_x(\delta )&: \ \| \rho \circ Z_{x,n} - \rho_e \circ \Phi(X)  \|_2 < \delta  \\
            \forall n > N^{\prime}_y(\delta )&: \ \| \rho \circ Z_{y,n} - \rho_e \circ \Phi(X^{\prime})  \|_2 < \delta  .
        \end{align*}
        If $ \| \rho (Z_{x,n}) - \rho (Z_{y,n}) \|_2 < \delta$ for all $n > N^{''}(\delta) \stackrel{\mathrm{def}}{=} \max \{N^{\prime}_x( \delta ),N^{\prime}_y( \delta  ),N^{'}(\delta) \}$, then, from the triangle inequality and \Cref{lem:cont_rho_z}, we have
        \begin{align*}
            \|\rho_e \circ \Phi(X) - \rho_e \circ \Phi(X^{\prime}) \|_2 &\leq \| \rho (Z_{x,n}) - \rho (Z_{y,n}) \|_2 + \|\rho (Z_{x,n}) -  \rho_e \circ \Phi(X)\|_2 + \| \rho (Z_{y,n}) - \rho_e \circ \Phi(X^{\prime})\|_2 \\
            &<  \delta + \delta+ \delta = 3\delta.
        \end{align*}

\newpage
\section{Proof of \Cref{prop:hypersets}}
    For $K, N \in \mathbb{N}$, let $T,T^{\prime} \in \mathbb{T}^{l}_{N,K}$  be such that $S(T) = S(T^{\prime})$, that is,
    \[
         \{ (e_{n_1}^{\top}l(T), \alpha^1_{n_1} (T)) : n_1 \in [N] \} =  \{ (e_{n_1}^{\top}l(T^{\prime}), \alpha^{1}_{n_1}(T^{\prime})) : n_1 \in [N] \},
    \]
    where $e_n$ is the $n$-th standard basis vector of $\mathbb{R}^N$, for $n \in [N]$. By definition of $\ell$-identifiable tensors, all elements of $\{ \{e_{n}^{\top}l(T): n \in [N] \}\}$ are unique. Therefore, we we have
        \begin{align*}
            \forall n_1 \in [N]: \ e_{n_1}^{\top}l(T) = e_{\pi(n_1)}^{\top}l(T^{\prime}), \ \mbox{and} \ \alpha^{1}_{n_1}(T) = \alpha^{1}_{\pi(n_1)}(T^{\prime}),
        \end{align*}
        for a unique permutation operator $\pi: [N] \rightarrow [N]$. 
    \begin{lemma} \label{lem:S_t_to_sets}
        For all $k \in [K]$, we have
        \[
            \forall n_1,n_2, \ldots, n_k \in [N] : \alpha^{k}_{n_1,n_2, \ldots, n_k}(T) = \alpha^{k}_{\pi(n_1), \pi(n_2), \ldots, \pi(n_k)}(T^{\prime}),
        \]
        where $\pi: [N] \rightarrow [N]$ is a unique permutation operator.
    \end{lemma}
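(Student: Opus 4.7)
\textbf{Proof plan for Lemma \ref{lem:S_t_to_sets}.} The plan is to prove the statement by induction on $k$, using the recursive set definition in \Cref{def:set_s_x_a} together with the identifiability hypothesis $T, T^{\prime} \in \mathbb{T}^{l}_{N,K}$. The base case $k=1$ is exactly what has already been established from $S(T) = S(T')$: the outer sets consist of pairs with pairwise distinct first coordinates $e_n^\top l(T)$ (respectively $e_n^\top l(T')$), so their equality forces a unique bijection $\pi \in \Pi(N)$ with $e_{n_1}^\top l(T) = e_{\pi(n_1)}^\top l(T')$ and $\alpha^1_{n_1}(T) = \alpha^1_{\pi(n_1)}(T')$ for every $n_1 \in [N]$. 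This $\pi$ will be the permutation used at all subsequent levels.

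For the inductive step, suppose the claim holds at level $k-1$, that is, $\alpha^{k-1}_{n_1,\ldots,n_{k-1}}(T) = \alpha^{k-1}_{\pi(n_1),\ldots,\pi(n_{k-1})}(T')$ for all $n_1,\ldots,n_{k-1} \in [N]$. Unfolding both sides via the recursion in \Cref{def:set_s_x_a} rewrites this equation as a set equality
\[
    \{(e_m^\top l(T),\, \alpha^k_{n_1,\ldots,n_{k-1},m}(T)) : m \in [N]\} = \{(e_{m'}^\top l(T'),\, \alpha^k_{\pi(n_1),\ldots,\pi(n_{k-1}),m'}(T')) : m' \in [N]\}.
\]
Because $T$ and $T'$ are $l$-identifiable, the first coordinates on each side are pairwise distinct labels, and we already know from the base case that $e_m^\top l(T) = e_{\pi(m)}^\top l(T')$. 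Matching elements across the two sets by their (distinct) first coordinates therefore forces $m' = \pi(m)$ and the paired second coordinates to coincide, yielding $\alpha^k_{n_1,\ldots,n_{k-1},m}(T) = \alpha^k_{\pi(n_1),\ldots,\pi(n_{k-1}),\pi(m)}(T')$ for every $m \in [N]$. Taking $n_k = m$ closes the induction, and the uniqueness of $\pi$ is inherited from the base case.

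The main obstacle is the bookkeeping to ensure that the permutation appearing at each new level of the nesting is the \emph{same} $\pi$ fixed at $k=1$, rather than a different permutation produced anew per index tuple $(n_1,\ldots,n_{k-1})$. This is where $l$-identifiability is essential: distinctness of the labels $e_n^\top l(T)$ makes the bijection between the inner sets canonical, and since the label-level bijection was already pinned down to be $\pi$ in the base case, no other permutation can realize the set equality. Once the induction reaches $k = K$, we obtain $T_{n_1 \ldots n_K} = \alpha^K_{n_1 \ldots n_K}(T) = \alpha^K_{\pi(n_1) \ldots \pi(n_K)}(T') = T'_{\pi(n_1) \ldots \pi(n_K)}$, which is precisely $T' = \pi^{-1}(T)$ in the sense of \Cref{def:tensor_perm} and completes the proof of \Cref{prop:hypersets}.
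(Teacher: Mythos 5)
Your proposal is correct and follows essentially the same route as the paper: induction on $k$, using the $l$-identifiability of $T$ and $T'$ (distinctness of the labels $e_n^\top l(T)$) to force the matching between the two inner sets to be realized by the single permutation $\pi$ fixed in the base case. Your write-up is in fact more explicit than the paper's about why the same $\pi$ propagates through every level of the nesting, which is the only delicate point of the argument.
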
 
    \begin{proof}
        The claim holds for $k = 1$. We prove this statement by induction. Let us assume the claim is true for $k \in [K-1]$. We want to show that it also holds for $k+1$, that is,
        \[
            \forall n_1,n_2, \ldots, n_{k+1} \in [N] : \alpha^{k+1}_{n_1,n_2, \ldots, n_{k+1}}(T) = \alpha^{k+1}_{\pi(n_1), \pi(n_2), \ldots, \pi(n_{k+1})}(T^{\prime}).
        \]
        From the definition of $\alpha^{k}$, we have
        \[
            \forall n_1, \ldots, n_{k+1} \in [N]: \ e_{n_{k+1}}^{\top}l(T) = e_{\pi(n_{k+1})}^{\top}l(T^{\prime}), \ \mbox{and} \ \alpha^{k+1}_{n_1,\ldots, n_{k+1}}(T) = \alpha^{k+1}_{\pi(n_1), \ldots, \pi(n_{k+1})}(T^{\prime})
        \]
        --- which follows from the fact that elements of $\{ \{e_{n}^{\top}l(T): n \in [N] \}\}$ are unique. This concludes the proof.
    \end{proof}
    From \Cref{lem:S_t_to_sets}, we have
    \[
        \forall n_1, \ldots, n_{K} \in [N]: \alpha^{k+1}_{n_1,\ldots, n_{K}}(T) = \alpha^{K}_{\pi(n_1), \ldots, \pi(n_{K})}(T^{\prime}),
    \]
    that is, $T = \pi(T^{\prime})$.

    Now let $T, T^{\prime}  \in \mathbb{T}^{l}_{N,K}$ be such that $T = \pi(T^{\prime})$ where $\pi: [N] \rightarrow [N]$ is a permutation operator. By definition, we have
    \[
        \forall n_1, \ldots, n_K \in [N]:  \alpha^{K}_{n_1, \ldots, n_K}(T) = \alpha^{K}_{\pi(n_1), \ldots, \pi(n_K)}(T^{\prime}),
    \]
    and
    \[
        \forall n \in [N]:  e_{n}^{\top}l(T) = e_n^{\top}l( \pi(T^{\prime})) = e_{\pi(n)}^{\top}l(T^{\prime}).
    \]
    For all $n_1,\ldots, n_{K-1} \in [N]$, we have  
    \begin{align*}
        \alpha^{K-1}_{n_1, \ldots, n_{K-1}}(T) &= \{ (e_{n_K}^{\top}l(T), \alpha^{K}_{n_1, \ldots, n_K}(T) ): n_K \in [N] \} \\
        &= \{ (e_{\pi(n_K)}^{\top}l(T^{\prime}), \alpha^{K}_{\pi(n_1), \ldots, \pi(n_K)}(T^{\prime}) ): n_K \in [N] \} \\
        &= \{ (l_{n_K}(T^{\prime}), \alpha^{K}_{\pi(n_1), \ldots, \pi(n_{K-1}),n_K}(T^{\prime}) ): n_K \in [N] \}  \\
        &= \alpha^{K-1}_{ \pi(n_1), \ldots, \pi(n_{K-1})}(T^{\prime})
    \end{align*}
    Using a simple argument by induction, we arrive at the statement in \Cref{lem:S_t_to_sets}. Therefore, we have
    \begin{align*}
        S(T) &= \{ (e_{n_1}^{\top}l(T), \alpha^{1}_{n_1} ): n_1 \in [N] \} = \{ (e_{\pi(n_1)}^{\top}l(T^{\prime}), \alpha^{1}_{\pi(n_1)}(T^{\prime}) ): n_1 \in [N] \} \\
        &= \{ (e_{n_1}^{\top}l(T^{\prime}), \alpha^{1}_{n_1}(T^{\prime}) ): n_1 \in [N] \}  \\
        &= S(T^{\prime})
    \end{align*}
\newpage
\section{Proof of \Cref{thm:tensor}}
\begin{definition}\label{def:phi_to_Phi}
    Let $K,N \in \mathbb{N}$. For all $k \in [K]$, let $\mathbb{D}_k$ be a domain and $\phi_k: \mathbb{D}_k \rightarrow \mathrm{codom}(\phi_k)$, we define the following  multiset function
    \[
        \Phi_k \big( \{ \{ x_n \in  \mathbb{D}_k: n \in [N] \} \}) = \sum_{n \in [N]} \phi_k(x_n),
    \]
    and $\mathrm{codom}(\Phi_k) = \{ \sum_{n \in [N]} \phi_k(x_n): x_n \in \mathbb{D}_k, \forall n \in [N] \}$.
\end{definition}
Let us first show that the proposed sum-decomposable model is injective on $ \mathbb{T}^{l}_{N,K}$. Let $K,N \in \mathbb{N}$ and $T, T^{\prime} \in \mathbb{T}^{l}_{N,K} $ where 
\[
    \sum_{n_1 \in [N]} \phi_{1} (e_{n_1}^{\top}l(T), \beta^1_{n_1}(T) ) = \sum_{n_1 \in [N]} \phi_{1} (e_{n_1}^{\top}l(T^{\prime}), \beta^1_{n_1}(T^{\prime}) ),
\]
that is,
\[
    \Phi_{1} ( \{\{ (e_{n_1}^{\top}l(T), \beta^1_{n_1}(T) ): n_1 \in [N] \}\} ) = \Phi_{1} ( \{\{ (e_{n_1}^{\top}l(T^{\prime}), \beta^1_{n_1}(T^{\prime}) ): n_1 \in [N] \}\} )
\]
Let us {\bf assume} that $\phi_1$ is such that the corresponding $\Phi_{1}$ is an injective multiset function (see \Cref{def:phi_to_Phi}) --- we shall discuss its sufficient condition later in the proof. Since $\{ \{ e_{n}^{\top}l(T) : n \in [N] \} \}$ has all distinct elements for all $T \in \mathbb{T}^{l}_{N,K} $, we have $e_{n_1}^{\top}(T) = e_{\pi(n_1)}^{\top}l(T^{\prime})$ for a unique permutation operator $\pi: [N] \rightarrow [N]$ and 
for all $n_1 \in [N]$. Therefore, we have
\[
    \forall n_1 \in [N]: \beta^1_{n_1}(T) = \beta^1_{\pi(n_1)}(T^{\prime})
\]
\begin{lemma}\label{lem:phi_beta_}
    Let $k \in [K]$ and {\bf assume} $\{ \Phi_k : k \in [K] \}$ are injective multiset functions over their domains, that is,
    \[
        \forall k \in [K]: \phi_k: \mathbb{D}_k \rightarrow \mathrm{codom}(\phi_k)
    \] 
    where $\mathbb{D}_k = \mathrm{codom}(l) \times \mathrm{codom}(\Phi_{k+1})$ and $\mathbb{D}_K = \mathrm{codom}(l) \times \mathbb{R}^{D}$. Then, for all $n_1, \ldots, n_k \in [N]$, we have $\beta^k_{n_1,\ldots, n_k}(T) = \beta^k_{\pi(n_1),\ldots, \pi(n_k)}(T^{\prime})$ where $\pi: [N] \rightarrow [N]$ is a unique permutation operator and $k \in [K]$.
\end{lemma}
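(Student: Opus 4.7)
The natural approach is induction on $k$, starting from the base case $k=1$ already handled just before the lemma statement. The induction hypothesis at level $k-1$ says that, for the same unique permutation $\pi:[N]\to[N]$ fixed at the base case, $\beta^{k-1}_{n_1,\ldots,n_{k-1}}(T) = \beta^{k-1}_{\pi(n_1),\ldots,\pi(n_{k-1})}(T^{\prime})$ for all $n_1,\ldots,n_{k-1}\in[N]$. I will use the recursive definition of $\beta^{k-1}$ in terms of $\beta^{k}$ and the injectivity of $\Phi_k$ to push the statement one level deeper.

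For the inductive step, fix $n_1,\ldots,n_{k-1}\in[N]$ and unfold the recursion:
\[
\beta^{k-1}_{n_1,\ldots,n_{k-1}}(T) \;=\; \Phi_k\!\bigl(\{\{(e_{n_k}^{\top}l(T),\,\beta^k_{n_1,\ldots,n_k}(T)) : n_k\in[N]\}\}\bigr),
\]
and write the analogous formula for $T^{\prime}$ at indices $\pi(n_1),\ldots,\pi(n_{k-1})$. By the induction hypothesis the two left-hand sides coincide, so the two $\Phi_k$-values coincide as well. The assumed injectivity of $\Phi_k$ on $\mathbb{D}_k=\mathrm{codom}(l)\times\mathrm{codom}(\Phi_{k+1})$ (or $\mathrm{codom}(l)\times\mathbb{R}^{D}$ when $k=K$) then upgrades this to the multiset equality
\[
\{\{(e_{n_k}^{\top}l(T),\,\beta^k_{n_1,\ldots,n_k}(T)) : n_k\in[N]\}\} \;=\; \{\{(e_{n_k}^{\top}l(T^{\prime}),\,\beta^k_{\pi(n_1),\ldots,\pi(n_{k-1}),n_k}(T^{\prime})) : n_k\in[N]\}\}.
\]

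The final step, which is the delicate one, is to turn this multiset identity into a pointwise identity under the \emph{same} permutation $\pi$. Here I use that $T,T^{\prime}\in\mathbb{T}^{l}_{N,K}$: by \Cref{def:tensor_idenfiable} the labels $\{\{e_{n}^{\top}l(T):n\in[N]\}\}$ are pairwise distinct, and likewise for $T^{\prime}$, while the base case already gives $e_{n}^{\top}l(T)=e_{\pi(n)}^{\top}l(T^{\prime})$. Matching the first coordinates in the equal multisets of pairs therefore forces the elements to be paired by $n_k\mapsto\pi(n_k)$, and matching the second coordinates under this pairing yields $\beta^{k}_{n_1,\ldots,n_k}(T)=\beta^{k}_{\pi(n_1),\ldots,\pi(n_k)}(T^{\prime})$ for every $n_k\in[N]$. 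This closes the induction.

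The main obstacle is not the algebra but the bookkeeping: one must check that no new permutation degrees of freedom appear as $k$ grows and that the same $\pi$ fixed by the outermost label multiset actually controls every deeper level. This is exactly what the distinct-identifier requirement in \Cref{def:tensor_idenfiable} buys us, together with the injectivity of each $\Phi_k$; without either assumption the argument fails because multiple valid index pairings could exist at intermediate levels.
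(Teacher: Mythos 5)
Your proof is correct and follows essentially the same route as the paper's: induction on the depth $k$, unfolding the recursive definition of $\beta^{k-1}$ as $\Phi_k$ applied to a multiset of (label, $\beta^k$) pairs, invoking the assumed injectivity of $\Phi_k$ to get multiset equality, and then using the distinctness of the identifiers $\{\{e_n^{\top}l(T):n\in[N]\}\}$ to force the pairing to be the fixed permutation $\pi$ from the base case. The only difference is a harmless index shift (you step from $k-1$ to $k$ where the paper steps from $k$ to $k+1$).
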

\begin{proof}
     The claim holds for $k = 1$. We prove this statement by induction. Let us assume this claim is true for $k \in [K-1]$. We want to show that it also holds for $k+1$, that is,
        \[
            \forall n_1,n_2, \ldots, n_{k+1} \in [N] : \beta^{k+1}_{n_1,n_2, \ldots, n_{k+1}}(T) = \beta^{k+1}_{\pi(n_1), \pi(n_2), \ldots, \pi(n_{k+1})}(T^{\prime}).
        \]
        From the definition of $\beta^{k}$, we have
        \[
            \forall n_1,n_2, \ldots, n_{k} \in [N]: \sum_{n_{k+1} \in [N]} \phi_{k+1}(e_{n_{k+1}}^{\top}l(T), \beta^{k+1}_{n_1 \ldots n_{k+1}}(T) ) =  \sum_{n_{k+1} \in [N]} \phi_{k+1}(e_{n_{k+1}}^{\top}l(T^{\prime}), \beta^{k+1}_{n_1 \ldots n_{k+1}}(T^{\prime}) ),
        \]
        that is,
        \[
            \Phi_{k+1}( \{\{ (e_{n_{k+1}}^{\top}l(T), \beta^{k+1}_{n_1 \ldots n_{k+1}}(T) ): n_{k+1} \in [N] \} \}) =   \Phi_{k+1}( \{\{ (e_{n_{k+1}}^{\top}l(T^{\prime}), \beta^{k+1}_{n_1 \ldots  n_{k+1}}(T^{\prime}) ): n_{k+1} \in [N] \} \}).
        \]
        for all $n_1,n_2, \ldots, n_{k} \in [N]$. Since $\Phi_k$ is an injective multiset function, we have
        \[
            \forall n_1, \ldots, n_{k+1} \in [N]: \ e_{n_{k+1}}^{\top}l(T) = e_{\pi(n_{k+1})}^{\top}l(T^{\prime}), \ \mbox{and} \ \beta^{k+1}_{n_1,\ldots, n_{k+1}}(T) = \beta^{k+1}_{\pi(n_1), \ldots, \pi(n_{k+1})}(T^{\prime})
        \]
        --- which follows from the fact that elements of $\{ \{e_{n}^{\top}l(T): n \in [N] \}\}$ are unique. This concludes the proof.
\end{proof}
From \Cref{lem:phi_beta_}, we arrive at 
\[  
    \forall n_1, \ldots, n_K \in [N]: \beta^{K}_{n_1,n_2, \ldots, n_{K}}(T) = \beta^{K}_{\pi(n_1), \pi(n_2), \ldots, \pi(n_{K})}(T^{\prime}),
\]
for a unique permutation operator $\pi: [N] \rightarrow [N]$, that is, $T = \pi(T^{\prime})$ and $S(T) = S(T^{\prime})$; see \Cref{prop:hypersets}.

Using induction, one can easily verify that given $S(T)$, we can compute  $\sum_{n_1 \in [N]} \phi_{1} (e_{n_1}^{\top}l(T), \beta^1_{n_1}(T) )$. Therefore, the following function is well-defined and injective:
\[
    \forall T \in \mathbb{T}^{l}_{N,K}: m \circ S(T) = \sum_{n_1 \in [N]} \phi_{1} (l_{n_1}(T), \beta^1_{n_1}(T) ),
\]
that is, if $ m \circ S(T) =  m \circ S(T^{\prime})$ then we have $S(T) = S(T^{\prime})$ where $T,T^{\prime} \in \mathbb{T}^{l}_{N,K}$.

Now we define the function $f_s: S(\mathbb{T}^{l}_{N,K}) \rightarrow \mathrm{codom}(f)$ as follows:
\[
    \forall T \in \mathbb{T}^{l}_{N,K}: f_s \circ S(T) \stackrel{\mathrm{def}}{=} f(T).
\]
Since $f$ is a permutation-invariant, the function $f_s$ is well-defined, that is, $f_s \circ S(T) = f(T) = f(\pi(T)) = f_s \circ S(\pi(T)) = f_s \circ S(T) $  for any permutation operator $\pi: [N] \rightarrow [N]$.  Since $m$ is an injective function over its domain, it is invertible on it. Now we define the following function:
\[
    \forall u \in  m \circ S(\mathbb{T}^{l}_{N,K})  : \rho (u) \stackrel{\mathrm{def}}{=} f_s \circ m^{-1}(u).
\]
For any $u \in  m \circ S(\mathbb{T}^{l}_{N,K})$, we have $u = \sum_{n_1 \in [N]} \phi_{1} (e_{n_1}^{\top}l(T), \beta^1_{n_1}(T) )$ where $T \in \mathbb{T}^{l}_{N,K}$, that is, 
\[
     \forall T \in \mathbb{T}^{l}_{N,K}: \rho(u) = \rho \big( \sum_{n_1 \in [N]} \phi_{1} (e_{n_1}^{\top}l(T), \beta^1_{n_1}(T) ) \big) = f_s \circ m^{-1} \circ m \circ S(T) = f(T).
\]

{\bf  Sufficient conditions for injective multiset functions $\{ \Phi_k : k \in [K] \}$.}

(1) If $l(T) \in \mathbb{R}^{N \times M}$, then we use the result in \Cref{thm:multi_decomposition} to ensure the injectivity of $\Phi_k$, for all $k \in [K]$. The function $\phi_k$ is defined on domain $\mathbb{D}_k = \mathrm{codom}(l) \times \mathrm{codom}(\Phi_{k+1})$ where $\mathbb{D}_K = \mathrm{codom}(l) \times \mathbb{R}^{D}$, $\mathrm{codom}(l) \subset \mathbb{R}^M$, and $ \mathrm{codom}(\Phi_{k+1}) \subset \mathbb{R}^{D_{k+1}}$. From \Cref{thm:multi_decomposition}, $D_k = {N+ D_{k+1} \choose N  } -1 $ ensures the injectivity of $\Phi_k$, for all $k \in [K]$.

(2) If $l(T) \in \mathbb{Q}^{N \times M}$, then we use the result in \Cref{thm:perm_w_distinct_labels} to ensure the injectivity of $\Phi_k$, for all $k \in [K]$. This is due the fact that rational-valued vectors are identifiable (see \Cref{prop:rational_identifier}). From \Cref{thm:perm_w_distinct_labels}, $D_k = 2 N(M+ D_{k+1})$ ensures the injectivity of $\Phi_k$, for all $k \in [K]$.

\newpage
\section{Supplementary Discussion} \label{sec:ferey}
As discussed in the main text, the imporant step in showing the existence of sum-decomposable representation is proving that the multiset encoding function $\Phi: \mathrm{dom}(\Phi) \rightarrow \mathrm{codom}(\Phi)$ is an injective map, that is, $\rho = f \circ \Phi^{-1}$ is well-defined over its admissible inputs, that is, $\mathrm{codom}(\Phi)$.

\begin{proposition}[\citealt{fereydounian2022exact}]
    Consider the following continuous map \footnote{This is a trivially altered version of the function in \citep{fereydounian2022exact}.}:
    \[
        \forall x \in \mathbb{R^D}, d_1,d_2 \in [D], n \in [N]: \big( \phi(x) \big)_{d_1,d_2,n} =  \begin{cases}
            \mathrm{Re} \{ (x_{d_1} + x_{d_2} \sqrt{-1})^{n} \} & \ \mbox{if} \ d_2 > d_1\\
            \mathrm{Im} \{ (x_{d_1} + x_{d_2} \sqrt{-1})^{n} \} & \ \mbox{if} \ d_1 > d_2\\
            0 & \ \mbox{otherwise}.
        \end{cases}   
    \]
    The map $\phi: \mathbb{R}^D \rightarrow \mathrm{codom}(\phi) \subset \mathbb{R}^{D \times D \times N}$ defines the following injective multiset function:
    \[
        \forall X \in \mathbb{X}_{\mathbb{R}^D, N} : \Phi(X) = \sum_{x \in X} \phi(x).
    \]
    \label{prop:ferey}
\end{proposition}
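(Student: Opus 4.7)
The plan is to view $\Phi(X)$ as packaging, for each unordered pair of coordinate axes $\{d_1,d_2\}$ with $d_1<d_2$, the first $N$ complex power sums of the 2D coordinate-plane projection of $X$, and then to invert this using Newton--Girard in the spirit of the injectivity argument for Theorem~\ref{thm:deep_set}. First I would unpack the encoding. Fix $d_1<d_2$, write $z_x = x_{d_1}+x_{d_2}\sqrt{-1}$ and $w_x = x_{d_2}+x_{d_1}\sqrt{-1}$, and note the identity $w_x = \sqrt{-1}\,\overline{z_x}$, hence $w_x^n = (\sqrt{-1})^n\,\overline{z_x^{\,n}}$. Summing the relevant entries of $\Phi(X)$ over $x\in X$ gives $\sum_{x} \mathrm{Re}(z_x^{\,n})$ (from the upper-triangular $(d_1,d_2,n)$ entries) together with $\sum_{x} \mathrm{Im}((\sqrt{-1})^n\,\overline{z_x^{\,n}})$ (from the lower-triangular $(d_2,d_1,n)$ entries). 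Splitting by $n \bmod 4$ and combining the two families, I obtain both $\mathrm{Re}\,P_n^{(d_1,d_2)}$ and $\mathrm{Im}\,P_n^{(d_1,d_2)}$ for $n=1,\dots,N$, where $P_n^{(d_1,d_2)} = \sum_{x \in X} z_x^{\,n}$ is the $n$-th complex power sum of the 2D projection.

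Second, I would apply the Newton--Girard formulas (equation \eqref{eq:girard}) over $\mathbb{C}$ to turn $P_1^{(d_1,d_2)},\dots,P_N^{(d_1,d_2)}$ into the coefficients of the degree-$N$ monic polynomial whose roots (as a multiset in $\mathbb{C}$) are exactly $\{\{z_x : x\in X\}\}$. This is the multivariate analogue of the one-line argument behind Theorem~\ref{thm:deep_set} and recovers, for each pair $d_1<d_2$, the 2D projection multiset $Z^{(d_1,d_2)} := \{\{(x_{d_1},x_{d_2}) : x\in X\}\}$ in bijection with the complex multiset $\{\{z_x\}\}$.

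Third, having recovered $\{Z^{(d_1,d_2)}\}_{d_1<d_2}$, I would reconstruct $X$ itself by matching projections axis-by-axis: once $Z^{(1,2)}$ is known, for each representative pair $(x_1,x_2)$ I would read off the compatible third coordinates from $Z^{(1,3)}$ and $Z^{(2,3)}$, and iterate on the remaining coordinates. This is the step I expect to be the main obstacle: the matching is only unambiguous when the relevant marginal coordinates are distinct, and in general the map $X \mapsto \{Z^{(d_1,d_2)}\}_{d_1<d_2}$ can have collisions (the classical alternating-cube-vertices example in $\mathbb{R}^3$ shows that coordinate-plane projection multisets alone need not determine a multiset). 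Resolving this will require exploiting structure beyond the 2D projection multisets---most naturally, either a genericity reduction (passing to a dense set of inputs where first coordinates are distinct) or a finer use of the joint information across different pairs, e.g.\ by extracting mixed power sums $\sum_x z_x^{\,n}\bar z_x^{\,m}$ and showing these constrain the reconstruction to a single multiset in $\mathbb{R}^D$. The heart of the proof is therefore Step 3: making precise the sense in which the joint $\binom{D}{2}N$-dimensional encoding carries enough coupling between coordinate planes to pin down $X$ uniquely.
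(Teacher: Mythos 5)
Your Step~3 worry is not an obstacle to be engineered around --- it is fatal, and the paper's purpose in quoting this proposition is precisely to refute it. As stated (injectivity of $\Phi$ over \emph{all} of $\mathbb{X}_{\mathbb{R}^D,N}$), the claim is false: immediately after \Cref{prop:ferey}, the paper exhibits two distinct multisets of four points in $\mathbb{R}^3$ --- the two alternating families of vertices of the box $\{1,3\}\times\{1,2\}\times\{1,2\}$ --- with $\Phi(X)=\Phi(X^{\prime})$. This is exactly the ``alternating cube vertices'' collision you anticipate: the two multisets have identical coordinate-plane projection multisets $Z^{(d_1,d_2)}$ for every pair $d_1<d_2$, and since each entry of $\phi(x)$ depends on only two coordinates of $x$, the whole tensor $\Phi(X)$ is a function of these pairwise projections alone. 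In particular the finer invariants you propose in Step~3, such as the mixed power sums $\sum_x z_x^{\,n}\bar z_x^{\,m}$, are themselves symmetric functions of $Z^{(d_1,d_2)}$ and therefore also coincide on the counterexample; no additional ``coupling between coordinate planes'' is available to extract. Injectivity holds only on the restricted class $\mathbb{X}^{s}_{N,D}$ of (multi)sets whose vectors have pairwise-distinct entries in each coordinate (so that the axis-by-axis matching in your Step~3 is unambiguous), which is how the paper and \citep{fereydounian2022exact} actually use this encoder; there is no proof of the unrestricted statement to compare against.

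A secondary issue with Step~1: for the pair $\{d_1,d_2\}$ with $d_1<d_2$, the lower-triangular entry at $(d_2,d_1,n)$ is $\mathrm{Im}\{(x_{d_2}+x_{d_1}\sqrt{-1})^n\}=\mathrm{Im}\big((\sqrt{-1})^n\,\overline{z_x}^{\,n}\big)$, which for odd $n$ equals $\pm\mathrm{Re}(z_x^{\,n})$ and thus duplicates the upper-triangular entry instead of supplying $\mathrm{Im}(z_x^{\,n})$. So even the recovery of all complex power sums $P_n^{(d_1,d_2)}$ for $n\le N$, and hence of the projections $Z^{(d_1,d_2)}$ via Newton--Girard, does not go through verbatim for this particular ``trivially altered'' encoder. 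That said, this is moot given the counterexample above.
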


We argue that the result in \Cref{prop:ferey} is not valid for all multisets, as the following example suggests.
\begin{example}
     Consider the following distinct sets:
    \[
        X = \{ \{ \begin{bmatrix}
            1 \\ 1 \\ 1
        \end{bmatrix}, \begin{bmatrix}
            3 \\ 2 \\ 1
        \end{bmatrix}, \begin{bmatrix}
            1 \\ 2 \\ 2
        \end{bmatrix}, \begin{bmatrix}
            3 \\ 1 \\ 2
        \end{bmatrix} \} \}, \  X^{\prime}= \{ \{ \begin{bmatrix}
            1 \\ 2 \\ 1
        \end{bmatrix}, \begin{bmatrix}
            3 \\ 1 \\ 1
        \end{bmatrix}, \begin{bmatrix}
            3 \\ 2 \\ 2
        \end{bmatrix}, \begin{bmatrix}
            1 \\ 1 \\ 2
        \end{bmatrix} \} \}.
    \]
    One can be readily verify that $\Phi(X) = \Phi(X^{\prime})$, where $\Phi$ is defined in \Cref{prop:ferey}. The main insight behind this example is the fact that
    $\{ \{ (e_d^{\top} x_n , e_{d^{\prime}}^{\top} x_n): n \in [N] \} \}  = \{ \{ (e_d^{\top} x_n , e_{d^{\prime}}^{\top} x^{\prime}_n): n \in [N] \} \} $ for all distinct $d, d^{\prime} \in [D]$, where $X = \{ \{ x_n: n \in [N] \} \}$, $X^{\prime} = \{ \{ x^{\prime}_n: n \in [N] \} \}$, $D = 3$, and $N = 4$. This later equality does indeed show $X = X^{\prime}$ {\bf if} both multisets contains distinct vectors with {\bf distinct} elements, namely, sets with distinct vectors.
\end{example}
The key elements in proving \Cref{thm:multi_decomposition} is to construct an injective  $\Phi$ which guarantees the existence of  $\rho = f \circ \Phi^{-1}$. Even assuming input multisets contain vectors with distinct elements, the above result doe not guarantee the continuity of $\rho$. Furthermore, one can easily show that $\mathrm{codom}(\Phi)$ is not a compact set. To show this, note that the domain of $\Phi$ does not include a single point $X$ in the example above. Now, one can construct a sequence of (multi)ests $X_n$ where $\lim_{n \rightarrow \infty} X_n = X$ such that, for all $n \in \mathbb{N}$, all elements of $X_n$ are distinct and have distinct values, that is, $X_n \in \mathrm{dom}(\Phi)$. Since $\Phi$ is a continuous map, $\{ \Phi(X_n): n \in \mathbb{N} \}$ is a Cauchy sequence in $\mathrm{codom}(\Phi)$ whose limit does not belong to $\mathrm{codom}(\Phi)$, that is, the co domain of $\Phi$ is not compact. 

\bibliography{references}
\end{document}